%%%%%%%% ICML 2021 EXAMPLE LATEX SUBMISSION FILE %%%%%%%%%%%%%%%%%

\documentclass[11pt,twoside]{article} % For LaTeX2e

\setlength{\textwidth}{\paperwidth}
\addtolength{\textwidth}{-6cm}
\setlength{\textheight}{\paperheight}
\addtolength{\textheight}{-4cm}
\addtolength{\textheight}{-1.1\headheight}
\addtolength{\textheight}{-\headsep}
\addtolength{\textheight}{-\footskip}
\setlength{\oddsidemargin}{0.5cm}
\setlength{\evensidemargin}{0.5cm}

% Optional math commands from https://github.com/goodfeli/dlbook_notation.
\usepackage{eqnarray,amsmath}
\usepackage[utf8]{inputenc} % allow utf-8 input
\usepackage[T1]{fontenc}    % use 8-bit T1 fonts
       % simple URL typesetting
\usepackage{booktabs}       % professional-quality tables
\usepackage{amsfonts}       % blackboard math symbols
\usepackage{nicefrac}       % compact symbols for 1/2, etc.
\usepackage{microtype}      % microtypography
\usepackage{xcolor}         % colors
\usepackage{subcaption}
\expandafter\def\csname 
ver@subfig.sty\endcsname{}
\usepackage{eqnarray,amsmath}
\usepackage{epsf}
\usepackage{epsfig}
\usepackage{fancyhdr}
\usepackage{graphics}
\usepackage{graphicx}
\usepackage{psfrag}
\usepackage{fullpage}
\usepackage{pdfpages}

\usepackage{url}% for url's in bib
% for theorem hyperlink colors
% \usepackage[colorlinks,linkcolor=magenta,citecolor=blue, pagebackref=true]{hyperref}
% % \renewcommand*{\backref}[1]{\ifx#1\relax \else Page #1 \fi}
% \renewcommand*{\backrefalt}[4]{%
%     \ifcase #1 \footnotesize{(Not cited.)}%
%     \or        \footnotesize{(Cited on page~#2.)}%
%     \else      \footnotesize{(Cited on pages~#2.)}%
%     \fi}
% hyperref makes hyperlinks in the resulting PDF.
% If your build breaks (sometimes temporarily if a hyperlink spans a page)
% please comment out the following usepackage line and replace
%\usepackage[capitalize,noabbrev]{cleveref}

\usepackage{color}

\usepackage{amsthm}
\usepackage{amsmath}
\usepackage{amssymb,bbm}
\usepackage[skip=0pt]{caption}  % Adjust 2pt as needed
\usepackage{algorithmic}
\usepackage{algorithm}
\usepackage{textcomp}
\usepackage{siunitx}
\usepackage{wrapfig}
\usepackage{algorithmic}
\usepackage{algorithm}
\usepackage{multirow}
\usepackage{multicol}% colors
\usepackage{makecell}
\usepackage{colortbl} % For \rowcolor
\usepackage{changepage}
% \usepackage{tocloft}
% \addtocontents{toc}{\protect\setcounter{tocdepth}{-1}}

\definecolor{customyellow}{HTML}{fedf8a}
\usepackage{eqnarray,amsmath}
\usepackage{epsf}
\usepackage{epsfig}
\usepackage{fancyhdr}
\usepackage{graphics}
\usepackage{graphicx}
\usepackage{psfrag}
\usepackage{fullpage}
\usepackage{pdfpages}
\usepackage{ragged2e}

\newtheorem{assumption}{Assumption}
%%%%%%%%%%%%%%%%%%%%%%%%%%%%%%%%%%%%%%%%%%%%%%%%%
%\setlength{\textwidth}{\paperwidth}
%\addtolength{\textwidth}{-6cm}
%\setlength{\textheight}{\paperheight}
%\addtolength{\textheight}{-4cm}
%\addtolength{\textheight}{-1.1\headheight}
%\addtolength{\textheight}{-\headsep}
%\addtolength{\textheight}{-\footskip}
%\setlength{\oddsidemargin}{0.5cm}
%\setlength{\evensidemargin}{0.5cm}

% baselinestretch trick to save some space
 %   \renewcommand{\baselinestretch}{0.99}

%%%%%%%%%%%%%%%%%%%%%%%%%%%%%%%%%%%%%%%%%%%%%%%%%
\newtheorem{lemma}{Lemma}

\newtheorem{theorem}{Theorem}
\newtheorem{proposition}{Proposition}
\newtheorem{definition}{Definition}
\newtheorem{corollary}{Corollary}%opening
%\newtheorem{assumption}{Assumption}
%\newtheorem{conjecture}{Conjecture}

%\def\argmin{\textnormal{arg} \min}

%\newcommand{\norm}[1]{\left\lVert#1\right\rVert}
%\renewcommand{\algorithmicrequire}{\textbf{Input:}}
%\renewcommand{\algorithmicensure}{\textbf{Output:}}

% Attempt to make hyperref and algorithmic work together better:

%%%%% NEW MATH DEFINITIONS %%%%%

% hyperref makes hyperlinks in the resulting PDF.
% If your build breaks (sometimes temporarily if a hyperlink spans a page)
% please comment out the following usepackage line and replace
%\usepackage[capitalize,noabbrev]{cleveref}

\usepackage{color}

\usepackage{amsthm}
\usepackage{amsmath}
\usepackage{amssymb,bbm}
\usepackage{caption}
\usepackage{algorithmic}
\usepackage{algorithm}
\usepackage{textcomp}
\usepackage{siunitx}
\usepackage{wrapfig}
\usepackage{algorithmic}
\usepackage{algorithm}
\usepackage{multirow}
\usepackage{multicol}% colors
\usepackage{mathtools}

% Mark sections of captions for referring to divisions of figures

% Highlight a newly defined term

% Figure reference, lower-case.
% \def\figref#1{figure~\ref{#1}}
% % Figure reference, capital. For start of sentence
% \def\Figref#1{Figure~\ref{#1}}
% \def\twofigref#1#2{figures \ref{#1} and \ref{#2}}
% \def\quadfigref#1#2#3#4{figures \ref{#1}, \ref{#2}, \ref{#3} and \ref{#4}}
% % Section reference, lower-case.
% \def\secref#1{section~\ref{#1}}
% % Section reference, capital.
% \def\Secref#1{Section~\ref{#1}}
% % Reference to two sections.
% \def\twosecrefs#1#2{sections \ref{#1} and \ref{#2}}
% % Reference to three sections.
% \def\secrefs#1#2#3{sections \ref{#1}, \ref{#2} and \ref{#3}}
% % Reference to an equation, lower-case.
% \def\eqref#1{equation~\ref{#1}}
% % Reference to an equation, upper case
% \def\Eqref#1{Equation~\ref{#1}}
% % A raw reference to an equation---avoid using if possible
% \def\plaineqref#1{\ref{#1}}
% % Reference to a chapter, lower-case.
% \def\chapref#1{chapter~\ref{#1}}
% % Reference to an equation, upper case.
% \def\Chapref#1{Chapter~\ref{#1}}
% % Reference to a range of chapters
% \def\rangechapref#1#2{chapters\ref{#1}--\ref{#2}}
% % Reference to an algorithm, lower-case.
% \def\algref#1{algorithm~\ref{#1}}
% % Reference to an algorithm, upper case.
% \def\Algref#1{Algorithm~\ref{#1}}
% \def\twoalgref#1#2{algorithms \ref{#1} and \ref{#2}}
% \def\Twoalgref#1#2{Algorithms \ref{#1} and \ref{#2}}
% % Reference to a part, lower case
% \def\partref#1{part~\ref{#1}}
% % Reference to a part, upper case
% \def\Partref#1{Part~\ref{#1}}
% \def\twopartref#1#2{parts \ref{#1} and \ref{#2}}

\def\1{\bm{1}}

% Random variables

% rm is already a command, just don't name any random variables m

% Random vectors

% Elements of random vectors

% Random matrices

% Elements of random matrices

% Vectors

% Elements of vectors

% Matrix

% Tensor
\DeclareMathAlphabet{\mathsfit}{\encodingdefault}{\sfdefault}{m}{sl}
\SetMathAlphabet{\mathsfit}{bold}{\encodingdefault}{\sfdefault}{bx}{n}

% Graph

% Sets

% Don't use a set called E, because this would be the same as our symbol
% for expectation.

% Entries of a matrix

% entries of a tensor
% Same font as tensor, without \bm wrapper

% The true underlying data generating distribution

% The empirical distribution defined by the training set

% The model distribution

% Stochastic autoencoder distributions

 % Laplace distribution

\newcommand{\softmax}{\mathrm{softmax}}

% Wolfram Mathworld says $L^2$ is for function spaces and $\ell^2$ is for vectors
% But then they seem to use $L^2$ for vectors throughout the site, and so does
% wikipedia.

 % See usage in notation.tex. Chosen to match Daphne's book.

%%%%%%%%%%%%%%%%%%%%%%%%%%%%%%%%%%%%%%%%

% \newcommand{\a}{a}

% \newcommand{\b}{b}

\DeclareMathOperator*{\argmax}{arg\,max}
\DeclareMathOperator*{\argmin}{arg\,min}

\usepackage{eqnarray,amsmath}

\usepackage{amsthm}
\usepackage{amsmath}
\usepackage{amssymb,bbm}

\allowdisplaybreaks
% %%%%%%%%%%%%%%%%%%%%%%%%%%%%%%%%%%%%%%%%%%%%%%%%%

% %%%%%%%%%%%%%%%%%%%%%%%%%%%%%%%%%%%%%%%%%%%%%%%%%
% \newtheorem{corollary}{Corollary}%opening
% %\newtheorem{assumption}{Assumption}
% %\newtheorem{conjecture}{Conjecture}
% \newenvironment{assumptionprime}[1]
%   {\renewcommand{\theassumption}{\ref{#1}$'$}%
%    \addtocounter{assumption}{-1}%
%    \begin{assumption}}
%   {\end{assumption}}

\newcommand{\bbE}{\mathbb{E}}

\newcommand{\topK}{\mathrm{Top}_{K}}
\newcommand{\topKbar}{\mathrm{Top}_{\bar{K}}}

\newcommand{\dint}{\mathrm{d}}

% Definition of \textcolor{red}{}

% Some abbreviations

% Parameter Estimation
\newcommand{\dboijn}{\Delta \beta_{1ij}^{n}}
\newcommand{\dbzijn}{\Delta \beta_{0ij}^{n}}

\newcommand{\deijn}{\Delta \eta_{ij}^{n}}
\newcommand{\deoijn}{\Delta \eta_{1ij}^{n}}
\newcommand{\dezijn}{\Delta \eta_{0ij}^{n}}
\newcommand{\dnuijn}{\Delta \nu_{ij}^{n}}
\newcommand{\dtijn}{\Delta \tau_{ij}^{n}}
\newcommand{\dkijn}{\Delta \kappa_{ij}^{n}}
\newcommand{\dkoijn}{\Delta \kappa_{1ij}^{n}}
\newcommand{\dkzijn}{\Delta \kappa_{0ij}^{n}}

\newcommand{\cdboin}{\Delta \check{\beta}_{1i}^{n}}
\newcommand{\cdbzin}{\Delta \check{\beta}_{0i}^{n}}
\newcommand{\cdein}{\Delta \check{\eta}_{i}^{n}}
\newcommand{\cdnuin}{\Delta \check{\nu}_{i}^{n}}

\newcommand{\boin}{\beta_{1i}^n}
\newcommand{\bzin}{\beta_{0i}^n}

\newcommand{\ein}{\eta_i^n}
\newcommand{\eoin}{\eta_{1i}^n}
\newcommand{\ezin}{\eta_{0i}^n}
\newcommand{\nuin}{\nu_i^n}
\newcommand{\oin}{\omega_i^n}
\newcommand{\kin}{\kappa_i^n}
\newcommand{\koin}{\kappa_{1i}^n}
\newcommand{\kzin}{\kappa_{0i}^n}
\newcommand{\tin}{\tau_i^n}

\newcommand{\boj}{\beta_{1j}^*}
\newcommand{\bzj}{\beta_{0j}^*}

\newcommand{\ej}{\eta_j^*}
\newcommand{\eoj}{\eta_{1j}^*}
\newcommand{\ezj}{\eta_{0j}^*}
\newcommand{\nuj}{\nu_j^*}
\newcommand{\oj}{\omega_j^*}
\newcommand{\kj}{\kappa_j^*}
\newcommand{\koj}{\kappa_{1j}^*}
\newcommand{\kzj}{\kappa_{0j}^*}
\newcommand{\tj}{\tau_j^*}

\newcommand{\boi}{\beta_{1i}^*}
\newcommand{\bzi}{\beta_{0i}^*}

\newcommand{\cboi}{\check{\beta}_{1i}}
\newcommand{\cbzi}{\check{\beta}_{0i}}
\newcommand{\cei}{\check{\eta}_i}
\newcommand{\cnui}{\check{\nu}_i}

\newcommand{\brj}{r_{1,j}}
\newcommand{\trj}{r_{2,j}}

\begin{document}

\begin{center}

{\bf{\LARGE{On DeepSeekMoE: Statistical Benefits of Shared Experts
and Normalized Sigmoid Gating}}}
  
\vspace*{.2in}
{\large{
\begin{tabular}{cccccc}
Huy Nguyen$^{\dagger}$ & Thong T. Doan$^{\diamond}$ & Quang Pham$^{\ddagger}$ \\
Nghi D. Q. Bui$^{\diamond}$ & Nhat Ho$^{\dagger,\star}$ & Alessandro Rinaldo$^{\dagger,\star}$
\end{tabular}
}}

\vspace*{.2in}

\begin{tabular}{cc}
$^{\dagger}$The University of Texas at Austin\\
$^{\diamond}$FPT Software AI Center\\
$^{\ddagger}$Salesforce AI Research \\
\end{tabular}

\vspace*{.2in}
\today

%\vspace*{.2in}

\begin{abstract}
    Mixture of experts (MoE) methods  
   are a key component in most large language model architectures, including the recent series of DeepSeek models. Compared to other MoE implementations, DeepSeekMoE stands out because of two unique features: the deployment of a shared expert strategy and of the normalized sigmoid gating mechanism. Despite the prominent role of DeepSeekMoE in the success of the DeepSeek series of models, there have been only a few attempts to justify theoretically the value of the shared expert strategy, while its normalized sigmoid gating has remained unexplored. 
  To bridge this gap, we undertake a comprehensive theoretical study of these two features of DeepSeekMoE from a statistical perspective. We perform a convergence analysis of the expert estimation task to highlight the gains in sample efficiency for both the shared expert strategy and the normalized sigmoid gating, offering useful insights into the design of expert and gating structures. To verify empirically  our theoretical findings, we carry out several experiments on both synthetic data and real-world datasets for (vision) language modeling tasks. Finally, we conduct an extensive empirical analysis of the router behaviors, ranging from router saturation, router change rate, to expert utilization. 
\end{abstract}
\end{center}
\let\thefootnote\relax\footnotetext{$\star$ Co-last authors.}

\section{Introduction}
\label{sec:introduction}

The recent years have witnessed a dramatic increase in the use and success of of deep learning models, leading to remarkable advances in a variety of fields, namely natural language processing \cite{jiang2024mixtral,Du_Glam_MoE,fedus2022switch,lepikhin_gshard_2021}, computer vision \cite{Riquelme2021scalingvision,liang_m3vit_2022}, multimodal learning \cite{han2024fusemoe,yun2024flexmoe},  and reinforcement learning \cite{ceron2024rl,chow_mixture_expert_2023}. However, this trend has also introduced several challenges in terms of computational efficiency. One common approach to tackle this challenge is to leverage Mixture-of-Experts (MoE) architecture, which allows to scale up the model capacity without a proportional increase in computation. 

\vspace{0.5 em}
\noindent
Originally proposed by \cite{Jacob_Jordan-1991}, MoE has been known as a form of ensemble learning that combines the power of several individual models through an adaptive gating network.
% , making MoE more flexible and adaptive than other ensemble methods \cite{}. 
In particular, these individual models are termed experts and can be formulated as classifiers \cite{chen2022theory,nguyen2024general}, regression models \cite{faria2010regression,kwon_em_2020}, or feed-forward networks (FFNs) \cite{shazeer2017topk,dai2024deepseekmoe}. Meanwhile, the gating network is responsible for dynamically assigning input-dependent softmax weights to experts based on their specialization in the input domain. 
%Intuitively, this mechanism softly divides the input space into multiple regions where the opinions of some experts are more trustworthy than others. 
Then, to improve the scalability of MoE, \cite{shazeer2017topk} have recently introduced a sparse version of MoE which activates only a subset of specialized experts per input, allowing to increase the number of trainable parameters while keeping the computation overhead nearly unchanged. As a result, there has been a surge of interest in employing the sparse MoE architecture in several large-scale applications, particularly language modeling \cite{deepseekv3,grattafiori2024llama3,geminiteam2024gemini15,qwen2025}.

\vspace{0.5 em}
\noindent
Despite their widespread use in large language models, the sparse MoE architecture faces the challenge of knowledge redundancy, that is, multiple experts may end up acquiring overlapping knowledge, leading to the redundancy of expert parameters. In response to this issue, \cite{dai2024deepseekmoe} have come up with a novel DeepSeekMoE framework (see Figure~\ref{fig:deepseek_architecture}) that divides the set of experts into two disjoint subsets. Experts in the first subset are referred to as shared experts and are always activated to capture common knowledge across different domains. On the other hand, only few experts in the second subset,  called routed experts, are activated, typically via a sparse softmax gating mechanism to learn specialized knowledge. This shared expert strategy helps enhance expert specialization by encouraging experts to specialize in distinctive aspects of the data, thereby alleviating the parameter redundancy problem. The new DeepSeekMoE architecture has been adopted as a vital component in the series of high-performance DeepSeek language models, most notably DeepSeek-V2 \cite{deepseekv2} and DeepSeek-V3 \cite{deepseekv3}. Another innovative aspect of the DeepSeekMoE framework lies in the choice of gating functions. In particular, the DeepSeek-V2 language model still uses a traditional softmax gating function to determine expert weights, the DeepSeek-V3 version employs a new normalized sigmoid gating function, which partly helps the latter model outperforms the former one. Given the success of DeepSeekMoE, it is surprising that the shared expert strategy has only been briefly investigated in \cite{dai2024deepseekmoe} from the perspective of expert specialization without any rigorous exploration, while there have been no studies on the benefits of the normalized sigmoid gating in the literature.

\begin{figure}[t!]
    \centering
    \includegraphics[width=.5\linewidth]{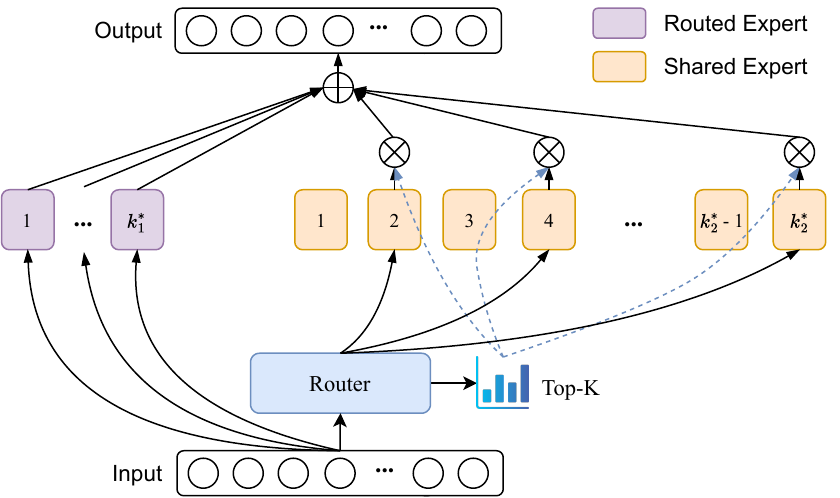}
    \caption{Illustration of the DeepSeekMoE architecture.}
    \label{fig:deepseek_architecture}
    \vspace{-0.8em}
\end{figure}
\vspace{0.5 em}
\noindent
\textbf{Contributions.} The primary goal of this paper is to provide a comprehensive theoretical study of these two distinguishing features of DeepSeekMoE. Below we perform a convergence analysis of expert estimation in order to examine the sample efficiency of the shared expert strategy, that is the rate, as a function of the number of data points, at which each expert to specialize in some aspects of the data. Furthermore, we also compare the sample efficiency of the normalized sigmoid gating used in the DeepSeek-V3 model with that of the softmax gating used in the DeepSeek-V2 model. 
%Furthermore, we will also present several empirical results on both synthetic and real-world datasets to justify our theory. 
Our contributions are threefold and can be summarized as follows.

\vspace{0.5em}
\emph{(1) Sample complexity of estimating experts when using the shared expert strategy.} Our analysis in Section~\ref{sec:shared_expert_strategy} 
reveals that shared experts admit significantly faster convergence rates than routed experts and experts in MoE models without the shared expert strategy, whose rates depend in a complicated manner on the solvability of certain systems of polynomial equations as well as the number of fitted experts (see Table~\ref{table:expert_rates_softmax}). As a result, a smaller amount of data are required to approximate shared experts compared to non-shared experts in DeepSeekMoE and standard MoE models to achieve the same level of statistical accuracy. %Thus, the shared expert strategy improves sample efficiency for expert estimation.

\vspace{0.5em}
\emph{(2) Sample complexity of estimating experts when using the normalized sigmoid gating.} In Section~\ref{sec:normalized_sigmoid_gating}, when using the normalized sigmoid gating instead of the softmax gating, the convergence rates of routed experts no longer hinge on the solvability of a system of polynomial equations and, therefore, are provably faster than those of shared experts, which remain unchanged in this setting (see also Table~\ref{table:expert_rates_softmax}). Thus, the amount of data required to estimate routed experts within a given error decreases substantially, demonstrating the sample efficiency of the normalized sigmoid gating over the standard softmax gating. 

\vspace{0.5em}
\emph{(3) Empirical validation.} To validate our theoretical findings, we conduct extensive numerical experiments on simulated and real-world data. The experimental results on synthetic data are in very close agreement with our theoretical findings about the convergence rates of the shared expert strategy and the normalized sigmoid gating; see Section~\ref{subsec:numerical_exp} for detailed results. The experiments on language modeling and vision-language modeling in Sections~\ref{subsec:lm_modeling} and \ref{subsec:vlm_modeling} further demonstrate the applicability of our theoretical insights in real-world scenarios. Finally,  we perform a comprehensive router analysis in Section~\ref{sec:router_analysis}, including the router saturation, router change rate, and expert utilization.

% \textbf{Organization.} We organize the remainder of the paper as follows. In Section~\ref{sec:shared_expert_strategy}, we study the sample efficiency of the shared expert strategy through an expert convergence analysis. To justify our theoretical results, we then perform several experiments on both synthetic and real-world datasets in Section~\ref{sec:experiments}. After that, we conduct a router analysis in Section~\ref{sec:router_analysis} before concluding the paper in Section~\ref{sec:conclusion}. Rigorous proofs and additional experimental details are provided in the Appendices.
\vspace{0.5 em}
\noindent
\textbf{Notation.} For any $n\in\mathbb{N}$, we let $[n] = \{1,2,\ldots,n\}$. For any vectors $v:=(v_i)_{i=1}^{d} \in \mathbb{R}^{d}$ and $\alpha:=(\alpha_i)_{i=1}^{d}\in\mathbb{N}^d$, we denote $v^{\alpha}:=\prod_{i=1}^{d}v_{i}^{\alpha_i}$, $|v|:=\sum_{i=1}^{d}v_{i}$ and $\alpha!:=\prod_{i=1}^{d}\alpha_{i}!$, while $\|v\|$ represents the $\ell_2$-norm of $v$. The cardinality of a set $S$ is denoted with $|S|$. Finally, for any two positive sequences $(a_n)_{n\geq 1}$ and $(b_n)_{n\geq 1}$, we write $a_n = \mathcal{O}(b_n)$ or $a_{n} \lesssim b_{n}$ if $a_n \leq C b_n$ for all $ n\in\mathbb{N}$, for some constant $C > 0$. For a sequence $(A_n)_{n\geq 1}$ of positive random variables, the notation $A_{n} = \mathcal{O}_{P}(b_{n})$ signifies $A_{n}/b_{n}$ is stochastically bounded, that is, for any $\epsilon>0$, there exists an $M>0$ such that $\mathbb{P}( A_{n}/b_{n} > M) < \epsilon $ for all $n$ large enough. We further write $A_n = \widetilde{\mathcal{O}}_{P}(b_n)$ when $A_n = \mathcal{O}_{P}(b_n \log^c(b_n))$, for some $c > 0$. Finally, for two Lebesgue probability densities on $\mathbb{R}^d$, $f_1$ and $f_2$,  $V(f_1,f_2):=\frac{1}{2}\int|f_1(y)-f_2(y)|dy$ denotes  their total variation distance. 
%\textcolor{violet}{Shoule we introduce the notation $\tilde{\mathcal{O}}_{P}(b_n)$ to indicate logarithmic terms in $b_n$?}

\begin{table*}[t!]
\caption{\small Summary of expert estimation rates in DeepSeek-V2's MoE with softmax gating (Section~\ref{sec:shared_expert_strategy}) and DeepSeek-V3's MoE with normalized sigmoid gating (Section~\ref{sec:normalized_sigmoid_gating}). Below, the function $r_{2}$ stands for the solvability of certain systems  of polynimial equations specified in Appendix~\ref{appendix:system}, while the notation $\mathcal{V}_{2,j}$ denotes a Voronoi cell defined in equation~\eqref{eq:Voronoi_cells}. For the normalized sigmoid gating setting, we consider two complementary parameter settings, namely sparse regime and dense regime (see Section~\ref{sec:normalized_sigmoid_gating} for further details).
\vspace{0.5em}
}
%\textbf{}\\
\centering
\begin{tabular}{| c | c | c |} 
\hline
% \multicolumn{6}{|c|}{\textbf{Dense-to-sparse gating Gaussian MoE}}\\
% \hline
DeepSeek-V2's MoE &\textbf{GELU FFN Experts} & \textbf{Linear Experts}\\
% \multirow{2}{4em}{}& \multicolumn{2}{c|}{\textbf{GELU FFN Experts}} &\multicolumn{2}{c|}{\textbf{Linear Experts}}\\ \cline{2-5}
%\hline 
 % &Sparse Regime& Dense Regime &Sparse Regime& Dense Regime  \\
\hline
{\textbf{Shared Experts}}  & $\widetilde{\mathcal{O}}_{P}(n^{-1/4})$ & { $\widetilde{\mathcal{O}}_{P}(n^{-1/4})$}   \\
\hline
{\textbf{Routed Experts}}  & $\widetilde{\mathcal{O}}_{P}(n^{-1/4})$ & { $\widetilde{\mathcal{O}}_{P}(n^{-1/r_{2}(|\mathcal{V}_{2,j}|)})$}   \\
\hline
\end{tabular}
\label{table:expert_rates_softmax}
%\vspace{-0.5 em}
\end{table*}

\begin{table*}[t!]
% \caption{Summary of expert estimation rates when using the sigmoid gating.}
% \textbf{}\\
\centering
\resizebox{\textwidth}{!}{
\begin{tabular}{| m{8em} | c | c | c | c |} 
% \hline
% \multicolumn{5}{|c|}{\textbf{Dense-to-sparse gating Gaussian MoE}}\\
\hline
\multirow{2}{8em}{DeepSeek-V3's MoE}& \multicolumn{2}{c|}{\textbf{GELU FFN Experts}} &\multicolumn{2}{c|}{\textbf{Linear Experts}}\\ \cline{2-5}
%\hline 
 &Sparse Regime& Dense Regime &Sparse Regime& Dense Regime  \\
\hline
{\textbf{Shared Experts}}  & \multicolumn{2}{c|}{$\widetilde{\mathcal{O}}_{P}(n^{-1/4})$}  & \multicolumn{2}{c|}{$\widetilde{\mathcal{O}}_{P}(n^{-1/4})$}  \\
\hline
% {\textbf{Routed Experts}} & \multicolumn{2}{c|}{$\widetilde{\mathcal{O}}_{P}(n^{-1/4})$} & \multicolumn{2}{c|}{$\widetilde{\mathcal{O}}_{P}(1/\log(n))$}   \\
{\textbf{Routed Experts}}  & $\widetilde{\mathcal{O}}_{P}(n^{-1/4})$ & { $\widetilde{\mathcal{O}}_{P}(n^{-1/2})$} & $\widetilde{\mathcal{O}}_{P}(n^{-1/r_{2}(|\mathcal{V}_{2,j}|)})$ & { $\widetilde{\mathcal{O}}_{P}(n^{-1/2})$}  \\
\hline  
\end{tabular}}
%\label{table:expert_rates_sigmoid}
%\vspace{-0.5 em}
\end{table*}

\section{Related Work}
\label{appendix:related_work}
There have been two primary lines of works on understanding MoE models in the literature. 

\vspace{0.5 em}
\noindent
From a statistical perspective, \cite{zeevi1998approximation} investigated the representation power of a mixture of generalized linear experts when using this model to approximate target functions belonging to a Sobolev class. Next, \cite{mendes2011convergence} performed a convergence analysis of MLE under the MoE with
experts being polynomial regression models, offering an important insight for finding the optimal configuration of the number of experts and their sizes. After that, considering data generated from a Gaussian MoE with covariate-free gating, \cite{ho2022gaussian} established an \emph{algebraic independence} condition on the location and scale functions of the Gaussian density to characterize which choices of this pair will lead to faster convergence rates of parameter estimation. Then, this analysis was extended to more practical yet challenging settings of dense and sparse softmax gating Gaussian MoE in \cite{nguyen2023demystifying} and \cite{nguyen2024statistical}, respectively. These works demonstrated that parameter and expert estimation rates hinged on the solvability of some systems of polynomial equations and became significantly slow as the number of experts increased. Lastly, \cite{nguyen2025convergence} considered a MoE-based regression framework where the regression function took the form of MoE with standard softmax gating, dense-to-sparse gating, and hierarchical softmax gating, respectively. Their convergence analysis of least squares estimation provided critical implications on the design of expert structures. In particular, it indicated that feed-forward expert
networks equipped with the sigmoid function or the Gaussian linear error unit (GELU) activation function admitted estimation rates of polynomial orders, while experts of polynomial forms had much slower estimation rates, of exponential orders.

\vspace{0.5 em}
\noindent
From a deep learning perspective, \cite{chen2022theory} took into account a classification problem with cluster structures using MoE models. In particular, they justified the ability of the gating network to learn the cluster-center features, enabling the model to separate a big complex problem into simpler ones, each of which will be handled by the corresponding specialized experts. Next, \cite{boixadsera2025granularity} studied the effects of the number of active experts on the expressivity of sparse MoE models, while \cite{wang2025expressivepower} investigated the expressive power of MoE in modeling complex tasks. Furthermore, theories for applications of MoE in continual learning \cite{li2025cl,le2024mixture}, domain adaptation \cite{nguyen2025cosine,chi_representation_2022}, and language modeling \cite{pham2024competesmoe,diep2025zero} have also been extensively explored in the literature. Interestingly, self-attention mechanism in the Transformers architecture \cite{vaswani2017attention} has recently been shown to be represented by a mixture of linear experts with quadratic softmax gating \cite{akbarian2024quadratic,yan2025sigmoid}, leading to numerous advances in parameter-efficient fine-tuning methods \cite{truong2025replora,le2025revisiting}.

\vspace{0.5 em}
\noindent
However, to the best of our knowledge, no prior work has been done to identify the theoretical properties of the DeepSeekMoE architecture. Therefore, we aim to provide a comprehensive study on the sample complexity of estimating experts when adopting two key ingredients of DeepSeekMoE, including the shared expert strategy in Section~\ref{sec:shared_expert_strategy} and the normalized sigmoid gating in Section~\ref{sec:normalized_sigmoid_gating}.
%\vspace{-0.5em}
\section{On Shared Expert Strategy}
\label{sec:shared_expert_strategy}

To begin with, we analyze the effects of the shared expert strategy on the sample complexity of estimating experts in the DeepSeek-V2's MoE. For ease of presentation, we will focus here on the dense DeepSeekMoE case, and defer the analysis of the sparse DeepSeekMoE setting to Appendix~\ref{appendix:sparse_gating}. In the sequel, after formally introducing the settings, we formulate a \emph{strong identifiability} condition on the expert functions ensuring fast expert convergence rates in Section~\ref{sec:strongly_identifiable}. We then turn to linear experts, which violate the strong identifiability condition, and prove that, in fact, they exhibit slow rates of convergence in Section~\ref{sec:linear_experts}. 

\vspace{0.5 em}
\noindent
\textbf{Problem setting.} Assume that $(X_{1}, Y_{1}),(X_{2}, Y_{2}), \ldots, (X_{n}, Y_{n}) \in \mathbb{R}^{d} \times \mathbb{R}$ are i.i.d. samples drawn from a Gaussian DeepSeekMoE model, whose conditional density function $f_{G^{*}_1,G^*_2}(y|x)$ is given by
\begin{align}
    \label{eq:density}
    f_{G^*_1,G^*_2}(y|x):= & \frac{1}{2}\sum_{i=1}^{k^*_1}\omega^*_{i}\pi(Y|h_1(x,\kappa^*_{i}),\tau^*_{i})+\frac{1}{2}\sum_{i = 1}^{k^*_2} \frac{\exp((\beta_{1i}^{*})^{\top} x + \beta_{0i}^{*})}{\sum_{j = 1}^{k^*_2} \exp((\beta_{1j}^{*})^{\top} x + \beta_{0j}^{*})}\pi(y|h_2(x,\eta^*_{i}), \nu_{i}^{*}). 
\end{align}
Above, $\pi(\cdot|\mu, \nu)$ denotes the Gaussian density function with mean $\mu$ and variance $\nu$,  $h_1(\cdot,\kappa^*_i)$ and $h_2(\cdot,\eta^*_i)$ are real-valued functions on $\mathbb{R}^d$ referred to as {\it shared} and {\it routed} experts, respectively.
The weight parameters $\omega^*_1,\omega^*_2,\ldots,\omega^*_{k^*_1}$ are positive and satisfy $\sum_{i=1}^{k^*_1}\omega^*_{i}=1$.  
We conveniently represent all the model parameters with the \emph{mixing measures} $G^*_1 : = \sum_{i=1}^{k^*_1}\omega^*_i\delta_{(\kappa^*_{i},\tau^*_{i})}$ and $G^*_2:=\sum_{i = 1}^{k^*_2} \exp(\beta_{0i}^{*}) \delta_{(\beta_{1i}^{*}, \eta_{i}^{*}, \nu_{i}^{*})}$, a combination of Dirac $\delta$-measures with mass on the unknown true parameters $\theta^*_{1i}:=(\omega^*_i,\kappa^*_i,\tau^*_i)$ in $\Theta_1\subseteq\mathbb{R}\times\mathbb{R}^{d_1}\times\mathbb{R}_+$ and $\theta^*_{2i}:=(\beta^*_{0i},\beta^*_{1i},\eta^*_{i},\nu^*_{i})$ in $\Theta_2\subseteq\mathbb{R}\times\mathbb{R}^{d}\times\mathbb{R}^{d_2}\times\mathbb{R}_+$, respectively. 
Thus, our goal is to estimate the pair of ground-truth mixing measures $(G^*_1,G^*_2)$. %is to estimate the  ground-truth conditional density $f_{G^*_1,G^*_2}$ as well as the experts $\{ h_1(\cdot,\kappa^*_i), i \in [k_1^*]\}$ and $\{ h_2(\cdot,\eta^*_i), i \in [k_2^*]\}$ or, equivalently, their parameters $(\kappa^*_i)_{i=1}^{k_1^*}$ and $(\eta^*_i)_{i=1}^{k_2^*}$.

\vspace{0.5 em}
\noindent
\textbf{Maximum likelihood estimation (MLE).} %To estimate %the ground-truth parameters $(\theta^*_{1i},\theta^*_{2i})$ in the parameter space $\Theta:=\Theta_1\times\Theta_2$, or equivalently, to estimate 
%the pair of ground-truth mixing measures $(G^*_1,G^*_2)$, we deploy the following maximum likelihood strategy. 
As the numbers  $k^*_1$ and  $k^*_2$ of shared and routed experts are unknown, we consider the ground-truth model~\eqref{eq:density} with up to $k_1>k^*_1$ shared experts and $k_2>k^*_2$ routed experts. Towards that goal, we let $\mathcal{G}_{k_1,k_2}(\Theta):=\mathcal{G}_{k_1}(\Theta_1)\times\mathcal{G}_{k_2}(\Theta)$ stands for the set of mixing measure pairs $(G_1,G_2)$ with at most $k_1$ and $k_2$ atoms, respectively; that is $\mathcal{G}_{k_1}(\Theta_1):=\Big\{G_1=\sum_{i=1}^{k'_1}\omega_i\delta_{(\kappa_i,\tau_i)}:1\leq k'_1\leq k_1\Big\}$ and $\mathcal{G}_{k_2}(\Theta_2):=\Big\{G_2=\sum_{i=1}^{k'_2}\exp(\beta_{0i})\delta_{(\beta_{1i},\eta^*_{i},\nu^*_{i})}:1\leq k'_2\leq k_2\Big\}$.
Our final estimator is the MLE over $\mathcal{G}_{k_1,k_2}(\Theta)$, i.e.
\begin{align}
    \label{eq:MLE}
    (\widehat{G}^n_1,\widehat{G}^n_2)\in\argmax_{(G_1,G_2)\in\mathcal{G}_{k_1,k_2}(\Theta)}\frac{1}{n}\sum_{i=1}^{n}\log(f_{G_1,G_2}(Y_i|X_i)),
\end{align}
% \begin{align}
%     \label{eq:def_G1}
%     \mathcal{G}_{k_1}(\Theta_1)&:=\Big\{G_1=\sum_{i=1}^{k'_1}\omega_i\delta_{(\kappa_i,\tau_i)}:1\leq k'_1\leq k_1\Big\},\\
%     \label{eq:def_G2}
%     \mathcal{G}_{k_2}(\Theta_2)&:=\Big\{G_2=\sum_{i=1}^{k'_2}\exp(\beta_{0i})\delta_{(\beta_{1i},\eta^*_{i},\nu^*_{i})}:1\leq k'_2\leq k_2\Big\}.
% \end{align}
\textbf{Universal assumptions.} For our theoretical analysis, we impose the following three mild assumptions on the ground-truth parameters throughout the paper. 

\vspace{0.5em}
\emph{(A.1) The parameter space $\Theta$ is compact with fixed dimension, while the input space $\mathcal{X}$ is bounded.}

\vspace{0.5em}
\emph{(A.2) The last pair of gating parameters vanish, that is, $\beta^*_{1k^*_2}=0_d$ and $\beta^*_{0k^*_2}=0$ (to avoid non-identifiability due to invariance to translation of the softmax gating function). In addition, at least one among parameters $\{ \beta^*_{1i}, i\in[k^*_2]\}$, is non-zero (to maintain the dependence of the gating on the input value).}

\vspace{0.5em}
\emph{(A.3) The expert parameters $(\kappa^*_i)_{i=1}^{k^*_1}$ and $(\eta^*_i)_{i=1}^{k^*_2}$ are distinct. Meanwhile, the expert functions $h_1(\cdot,\kappa)$ and $h_2(\cdot,\eta)$ are bounded and Lipschitz continuous w.r.t $\kappa$ and $\eta$.}

\vspace{0.5 em}
\noindent
Equipped with these assumptions, we are now ready to give our first consistency result for the ground-truth conditional density $f_{G^*_1,G^*_2}$. 
\begin{proposition}
    \label{prop:density_rate}
    The maximum likelihood density estimator $f_{\widehat{G}^n_1,\widehat{G}^n_2}(Y|X)$ converges to the true density $f_{G^*_1,G^*_2}(Y|X)$ in total variation distance at the  rate
    \begin{align*}
        \bbE_X[V(f_{\widehat{G}^n_1,\widehat{G}^n_2}(\cdot|X),f_{G^*_1,G^*_2}(\cdot|X))]=\mathcal{O}_P([\log(n)/n]^{\frac{1}{2}}).
    \end{align*}
  %  where, for two Lebesgue probability densities, $f_1$ and $f_2$,  $V(f_1,f_2):=\frac{1}{2}\int|f_1(y)-f_2(y)|dy$ denotes their total vriation distance.
\end{proposition}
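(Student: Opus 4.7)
The plan is to follow the classical bracketing-entropy route to MLE convergence in Hellinger (and hence total variation) distance, as developed by van de Geer and adapted to mixture/MoE models in prior literature. Concretely, define the family of conditional densities
\begin{align*}
\mathcal{P}_{k_1,k_2}(\Theta):=\Bigl\{f_{G_1,G_2}(y\mid x):(G_1,G_2)\in\mathcal{G}_{k_1,k_2}(\Theta)\Bigr\},
\end{align*}
and let $\bar{h}(f,g):=\bigl(\mathbb{E}_X h^2(f(\cdot|X),g(\cdot|X))\bigr)^{1/2}$ denote the averaged Hellinger distance, where $h$ is the ordinary Hellinger distance between densities in $y$. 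The strategy is to show that the bracketing entropy of $\mathcal{P}_{k_1,k_2}(\Theta)$ with respect to $\bar{h}$ is of logarithmic order in $1/\varepsilon$, and then invoke the standard MLE rate theorem of van de Geer (see, e.g., Theorem 7.4 in \emph{Empirical Processes in M-Estimation}) to conclude $\bar{h}(f_{\widehat{G}^n_1,\widehat{G}^n_2},f_{G^*_1,G^*_2})=\mathcal{O}_P(\sqrt{\log(n)/n})$, which by the inequality $V\leq h\cdot\sqrt{2}$ applied pointwise in $x$ and Jensen's inequality gives the stated result for the expected total variation distance.

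The first step is to establish a uniform Lipschitz bound of $f_{G_1,G_2}(y|x)$ with respect to the parameters $(\omega_i,\kappa_i,\tau_i,\beta_{0i},\beta_{1i},\eta_i,\nu_i)$. Using (A.1), the boundedness of $\mathcal{X}$, the Lipschitz continuity of $h_1,h_2$ in their expert parameters from (A.3), the smoothness of the softmax in $(\beta_{0i},\beta_{1i})$ on a compact set, and the smoothness of the Gaussian density $\pi(y|\mu,\nu)$ in $(\mu,\nu)$ on a compact range (with tails controlled by an envelope of the form $C(1+|y|^p)e^{-cy^2}$), one obtains a pointwise Lipschitz bound of the form $|f_{G_1,G_2}-f_{G_1',G_2'}|\leq L(y)\,\|\theta-\theta'\|$ for an envelope $L$ with finite $L^2$-norm.

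The second step is the bracketing-number calculation. Covering the compact parameter space $\Theta_1^{k_1}\times\Theta_2^{k_2}$ by an $\varepsilon$-net of polynomial cardinality in $1/\varepsilon$, and translating this via the Lipschitz bound above into $\varepsilon$-brackets in $\bar{h}$, yields $H_B(\varepsilon,\mathcal{P}_{k_1,k_2}(\Theta),\bar{h})\lesssim \log(1/\varepsilon)$. Plugging this into the entropy integral
\begin{align*}
J_B(\delta):=\int_{0}^{\delta}\sqrt{H_B(u,\mathcal{P}_{k_1,k_2}(\Theta),\bar{h})}\,du
\end{align*}
and solving $\sqrt{n}\,\delta_n^2\gtrsim J_B(\delta_n)$ in the usual manner gives $\delta_n\asymp\sqrt{\log(n)/n}$, which is the MLE rate in $\bar{h}$.

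The main obstacle I anticipate is ensuring that the Lipschitz/envelope control of $f_{G_1,G_2}$ is uniform over the entire class, in particular that the Gaussian components do not blow up as $\nu$ approaches the boundary of its compact range and that the softmax denominator stays bounded away from zero — both handled by compactness of $\Theta$ and boundedness of $\mathcal{X}$. Once the bracketing-entropy bound $H_B(\varepsilon,\mathcal{P}_{k_1,k_2}(\Theta),\bar{h})\lesssim\log(1/\varepsilon)$ is in hand, the rest is a direct application of the standard MLE convergence theorem, and the conversion from Hellinger to total variation is immediate via $V(f,g)\leq h(f,g)$, yielding $\mathbb{E}_X[V(f_{\widehat{G}^n_1,\widehat{G}^n_2}(\cdot|X),f_{G^*_1,G^*_2}(\cdot|X))]=\mathcal{O}_P(\sqrt{\log(n)/n})$.
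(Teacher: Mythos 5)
Your proposal is correct and takes essentially the same route as the paper: a logarithmic bracketing-entropy bound for the class of conditional densities (compact parameter space covered by a polynomial-cardinality net, Lipschitz control of $f_{G_1,G_2}$ in its parameters, and a Gaussian envelope handling the tails and the variance/softmax boundedness), followed by van de Geer's Theorem 7.4 with $\Psi(\delta)=\delta\sqrt{\log(1/\delta)}$ and the Hellinger-to-total-variation conversion. The only detail the paper adds is working with the convexified class $\frac{1}{2}f_{G_1,G_2}+\frac{1}{2}f_{G^*_1,G^*_2}$ and its square roots so that Theorem 7.4 applies verbatim, a routine reduction (Lemma 4.2 of van de Geer) that does not alter your argument.
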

\noindent
The above result, whose proof can be found in Appendix~\ref{appendix:density_rate}, shows that the true density function $f_{G^*_1,G^*_2}(y|x)$ can be estimated at a rate that is nearly parametric. Following a strategy used in the analysis of MoE models \cite{nguyen2023demystifying}, if one can exhibit an appropriate loss function over the mixing measures, say $\mathcal{D}((G_1,G_2),(G^*_2,G^*_2))$, that, up to constant, is a lower bound on $\bbE_X[V(f_{\widehat{G}^n_1,\widehat{G}^n_2}(\cdot|X),f_{G^*_1,G^*_2}(\cdot|X))]$, Proposition~\ref{prop:density_rate} will then imply a near parametric rate also for the parameters and expert functions themselves. 

\vspace{0.5em}
\noindent
\textbf{Technical challenges.} However, the derivation of this lower bound is challenging. Specifically, a key step in establishing the aforementioned lower bound is to decompose the difference $f_{\widehat{G}^n_1,\widehat{G}^n_2}(y|x)-f_{G^*_1,G^*_2}(y|x)$ through a series of Taylor expansions of the functions $\pi(y|h_1(x,\kappa),\tau)$ and $ F(y|x;\beta_1,\eta,\nu):=\exp(\beta_1^{\top}x)\pi(y|h_2(x,\eta),\nu)$ w.r.t their parameters $(\kappa,\tau)$ and $(\beta_1,\eta,\nu)$, respectively. When the difference of the densities converges to zero (as ensured by Proposition~\ref{prop:density_rate}), then one may expect the coefficients of this Taylor expansions, which correspond to the difference between the true and estimated parameters, will also vanish. However, this is true only provided that these functions and their partial derivatives arising from the Taylor expansions remain linearly independent. To ensure that this property holds,  we formulate a new, non-trivial condition, called \emph{strong identifiability} for the expert functions $h_1$ and $h_2$.

%\subsection{Strongly Identifiable Experts}
%\label{sec:strongly_identifiable_experts}

\begin{definition}[Strong Identifiability]
    \label{def:strong_identifiability}
    We say that expert functions $x\mapsto h_1(x,\kappa)$ and $x\mapsto h_2(x,\eta)$ are strongly identifiable if they are twice differentiable w.r.t $\kappa$ and $\eta$, respectively, and if for any $k_1,k_2\geq 1$ and distinct parameters $\kappa_1,\ldots,\kappa_{k_1}$ and $\eta_1,\ldots,\eta_{k_2}$, each of the sets
    \begin{align*}
        &\Bigg\{\frac{\partial h_1}{\partial\kappa^{(u_1)}}(x,\kappa_i):i\in[k_1], \ u_1\in[d_1]\Bigg\}, \Bigg\{\frac{\partial h_1}{\partial\kappa^{(u_1)}}(x,\kappa_i)\frac{\partial h_1}{\partial\kappa^{(v_1)}}(x,\kappa_i),1 :i\in[k_1], \ u_1,v_1\in[d_1]\Bigg\},\\
        &\Bigg\{\frac{\partial h_2}{\partial\eta^{(u_2)}}(x,\eta_j), \ \frac{\partial^2h_2}{\partial\eta^{(u_2)}\partial\eta^{(v_2)}}(x,\eta_j), \ x^{(u)}\frac{\partial h_2}{\partial\eta^{(v_2)}}(x,\eta_j):j\in[k_2], \ u_2,v_2\in[d_2], \ u\in[d]\Bigg\}
    \end{align*}
    consists of linearly independent functions (in $x$).
\end{definition}
\noindent
\textbf{Examples.} Two-layer FFNs $h_1(x,(\kappa_2,\kappa_1,\kappa_0)):=\kappa_2\mathrm{GELU}(\kappa_1^{\top}x+\kappa_0)$ and $h_2(x,(\eta_2,\eta_1)):=\eta_2\mathrm{GELU}(\eta_1^{\top}x)$ are strongly identifiable. The same claim holds when replacing the $\mathrm{GELU}$ function with other activation functions such as $\mathrm{sigmoid}$ and $\tanh$. On the other hand, linear experts $h_1(x,(\kappa_1,\kappa_0)):=\kappa_1^{\top}x+\kappa_0$ and $h_2(x,(\eta_1,\eta_0)):=\eta_1^{\top}x+\eta_0$ fail to satisfy the strong identifiability condition because $\frac{\partial h_1}{\partial \kappa_0}\frac{\partial h_1}{\partial \kappa_0}=1$ and $\frac{\partial h_2}{\partial \eta_1}=x\frac{\partial h_2}{\partial \eta_0}$ for all $x$. 
%\vspace{-0.3em}
\subsection{Strongly Identifiable Experts}
\label{sec:strongly_identifiable}
%\vspace{-0.3em}
We first analyze the convergence behavior of strongly identifiable experts. For that purpose, it is necessary to construct a loss over pairs of mixing measures $(G_1,G_2)$ and $(G^*_1,G^*_2)$. To this end, let us revisit the concepts of Voronoi cells and Voronoi loss functions presented in \cite{manole22refined}.

\vspace{0.5em}
\noindent
\textbf{Voronoi loss.} For any pair of mixing measures $(G_1,G_2)$ with $k'_1\leq k_1$ and $k'_2\leq k_2$ atoms, we distribute their atoms to the Voronoi cells $\mathcal{V}_{1,j_1}\equiv\mathcal{V}_{1,j_1}(G)$ and $\mathcal{V}_{2,j_2}\equiv\mathcal{V}_{2,j_2}(G)$, defined as
\begin{align}
    \label{eq:Voronoi_cells}
    \mathcal{V}_{1,j_1}:=\{i_1\in[k'_1]:\|\xi_{i_1}-\xi^*_{j_1}\|\leq\|\xi_{i_1}-\xi^*_{\ell_1}\|, \ \forall \ell_1\neq j_1\},\nonumber\\
    \mathcal{V}_{2,j_2}:=\{i_2\in[k'_2]:\|\zeta_{i_2}-\zeta^*_{j_2}\|\leq\|\zeta_{i_2}-\zeta^*_{\ell_2}\|, \ \forall \ell_2\neq j_2\},
\end{align}
where we denote $\xi_{i_1}:=(\kappa_{i_1},\tau_{i_1})$, $\xi^*_{j_1}:=(\kappa^*_{j_1},\tau^*_{j_1})$ for all $j_1\in[k^*_1]$, and $\zeta_{i_2}:=(\beta_{1i_2},\eta_{i_2},\nu_{i_2})$, $\zeta^*_{j_2}:=(\beta^*_{1j_2},\eta^*_{j_2},\nu^*_{j_2})$ for all $j_2\in[k^*_2]$. Then, the proposed Voronoi loss over mixing measures is given by
\begin{align}
    \label{eq:loss_1}
    &\mathcal{D}_{1}((G_1,G_2),(G^*_1,G^*_2)):=\sum_{j=1}^{k^*_1}\Big|\sum_{i\in\mathcal{V}_{1,j}}\omega_{i}-\oj\Big|+\sum_{j=1}^{k^*_2}\Big|\sum_{i\in\mathcal{V}_{2,j}}\exp(\beta_{0i})-\exp(\bzj)\Big|\nonumber\\
    &+\sum_{\substack{j\in[k^*_1],\\|\mathcal{V}_{1,j}|=1}}\sum_{i\in\mathcal{V}_{1,j}}\omega_{i}(\|\Delta\kappa_{ij}\|+|\Delta\tau_{ij}|)+\sum_{\substack{j\in[k^*_2],\\|\mathcal{V}_{2,j}|=1}}\sum_{i\in\mathcal{V}_{2,j}}\exp(\beta_{0i})(\|\Delta\beta_{1ij}\|+\|\Delta\eta_{ij}\|+|\Delta\nu_{ij}|)\nonumber\\
    &+\sum_{\substack{j\in[k^*_1],\\|\mathcal{V}_{1,j}|>1}}\sum_{i\in\mathcal{V}_{1,j}}\omega_{i}(\|\Delta\kappa_{ij}\|^2+|\Delta\tau_{ij}|^2)+\sum_{\substack{j\in[k^*_2],\\|\mathcal{V}_{2,j}|>1}}\sum_{i\in\mathcal{V}_{2,j}}\exp(\beta_{0i})(\|\Delta\beta_{1ij}\|^2+\|\Delta\eta_{ij}\|^2+|\Delta\nu_{ij}|^2),
\end{align}
where we denote $\Delta\kappa_{ij}:=\kappa_{i}-\kj$, $\Delta\tau_{ij}:=\tau_{i}-\tj$, $\Delta\beta_{1ij}:=\beta_{1i}-\boj$, $\Delta\eta_{ij}:=\eta_{i}-\ej$, and $\Delta\nu_{ij}:=\nu_{i}-\nuj$. It is clear that convergence of the mixing measures in the $\mathcal{D}_1$ loss is equivalent to convergence of their respective parameters. Thus, though not a metric over mixing measures, the $\mathcal{D}_1$ loss can be used to characterize parameter and expert estimation rates.%Note that $\mathcal{D}_{1}((G_1,G_2),(G^*_1,G^*_2)) = 0$ if and only if $(G_1,G_2) = (G^*_1,G^*_2)$, implying that when the value of $\mathcal{D}_1((G_1,G_2),(G^*_2,G^*_2))$ is sufficiently small, then the aforementioned parameter differences also become small. Thus, this is an appropriate metric to characterize parameter and expert estimation rates. 
\begin{theorem}
    \label{theorem:strongly_identifiable_experts}
    Assume that the expert functions $h_1$ and $h_2$ are strongly identifiable. Then, the lower bound $\bbE_X[V(f_{G_1,G_2}(\cdot|X),f_{G^*_1,G^*_2}(\cdot|X))]\gtrsim \mathcal{D}_1((G_1,G_2),(G^*_1,G^*_2))$ holds for all pairs of mixing measures $(G_1,G_2)\in\mathcal{G}_{k_1,k_2}(\Theta)$. As a consequence, we obtain 
    \begin{align}
        \label{eq:bound_D_1}
        \mathcal{D}_1((\widehat{G}^n_1,\widehat{G}^n_2),(G^*_1,G^*_2))=\mathcal{O}_P([\log(n)/n]^{\frac{1}{2}}).
    \end{align}
\end{theorem}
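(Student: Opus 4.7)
The plan is to establish the lower bound $\mathbb{E}_X[V(f_{G_1,G_2}(\cdot|X),f_{G^*_1,G^*_2}(\cdot|X))]\gtrsim \mathcal{D}_1((G_1,G_2),(G^*_1,G^*_2))$ in two stages---local and global---and then deduce \eqref{eq:bound_D_1} by combining this inequality with Proposition~\ref{prop:density_rate}. The local stage works on a sufficiently small $\mathcal{D}_1$-neighborhood of $(G_1^*, G_2^*)$ via a proof by contradiction based on a careful Taylor expansion together with strong identifiability, while the global stage handles the complement by a compactness argument.

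For the local step, I argue by contradiction. Suppose there exists a sequence $(G_1^n,G_2^n) \in \mathcal{G}_{k_1,k_2}(\Theta)$ with $\mathcal{D}_1^n := \mathcal{D}_1((G_1^n,G_2^n),(G_1^*,G_2^*)) \to 0$ and $\mathbb{E}_X[V(f_{G_1^n,G_2^n}(\cdot|X), f_{G_1^*,G_2^*}(\cdot|X))]/\mathcal{D}_1^n \to 0$. To linearize the softmax gating, I multiply the pointwise density difference by the product of the two softmax normalizers $Q^n(x)Q^*(x)$, where $Q^n(x):=\sum_\ell \exp((\beta_{1\ell}^n)^\top x+\beta_{0\ell}^n)$ and $Q^*$ is defined analogously; the result is polynomial in the ``raw'' parameters $(\omega_i,\kappa_i,\tau_i)$ and $(\exp(\beta_{0i}),\beta_{1i},\eta_i,\nu_i)$. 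I then partition the atoms of $(G_1^n,G_2^n)$ according to the Voronoi cells $\mathcal{V}_{1,j}$ and $\mathcal{V}_{2,j}$, and Taylor-expand $\kappa\mapsto \pi(y|h_1(x,\kappa),\tau)$, $\tau\mapsto \pi(y|h_1(x,\kappa),\tau)$, and $(\beta_1,\eta,\nu)\mapsto \exp(\beta_1^\top x)\pi(y|h_2(x,\eta),\nu)$ around the corresponding true atom---to first order on singleton cells and to second order on non-singleton cells. Via the Gaussian heat-equation identity $\partial_\mu^2\pi=2\partial_\nu\pi$ and the chain rule, the second derivatives in $\kappa$ and $\eta$ collapse into linear combinations of the products $\partial h_1/\partial \kappa^{(u_1)}\cdot\partial h_1/\partial \kappa^{(v_1)}$, $\partial h_2/\partial \eta^{(u_2)}\cdot\partial h_2/\partial \eta^{(v_2)}$, $\partial^2 h_2/\partial \eta^{(u_2)}\partial \eta^{(v_2)}$, while the cross terms in $\beta_1$ produce monomials of the form $x^{(u)}\partial h_2/\partial \eta^{(v_2)}$---precisely the families appearing in Definition~\ref{def:strong_identifiability}.

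Dividing the expanded difference by $\mathcal{D}_1^n$ produces a linear combination of partial derivatives whose coefficient vector has unit $\ell_\infty$ norm up to an absolute constant, by construction of $\mathcal{D}_1$. By compactness of $\Theta$, I extract along a subsequence a limit in which not all coefficients vanish. On the other hand, the assumption $\mathbb{E}_X[V]/\mathcal{D}_1^n\to 0$ combined with Fatou's lemma forces the limiting combination to vanish for almost every $(x,y)$. Separating the $y$-dependence via the classical linear independence of Gaussian densities with distinct $(\mu,\nu)$ pairs reduces this identity to a linear relation in $x$ among exactly the functions listed in Definition~\ref{def:strong_identifiability}; strong identifiability then forces every limit coefficient to be zero, the desired contradiction.

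For the global step, I show that $\inf \mathbb{E}_X[V]/\mathcal{D}_1$ taken over $(G_1,G_2)\in \mathcal{G}_{k_1,k_2}(\Theta)$ outside a $\mathcal{D}_1$-ball of fixed radius around $(G_1^*,G_2^*)$ is strictly positive, using compactness of $\Theta$, continuity of both maps in $(G_1,G_2)$, and the identifiability-of-mixtures property (which ensures $\mathbb{E}_X[V]=0$ only at $(G_1^*,G_2^*)$). Patching the local and global bounds yields the claimed inequality, and plugging in the near-parametric density rate from Proposition~\ref{prop:density_rate} delivers \eqref{eq:bound_D_1}. The main obstacle will be the bookkeeping in the Taylor expansion on non-singleton Voronoi cells---pairing each second-order residual with the corresponding squared slot of $\mathcal{D}_1$---while simultaneously disentangling the cross terms arising from the softmax parameters $\beta_1$ from those arising from the expert parameters $\eta$; the strong identifiability condition in Definition~\ref{def:strong_identifiability} is engineered precisely to decouple these contributions.
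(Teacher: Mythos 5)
Your overall strategy matches the paper's exactly: split into a local part (small $\mathcal{D}_1$-neighborhood, contradiction via Taylor expansion and strong identifiability) and a global part (compactness plus identifiability of the mixture), then plug in Proposition~\ref{prop:density_rate}. The decision to expand to first order on singleton Voronoi cells and to second order on non-singleton cells, to separate the $Y$-dependence using linear independence of Gaussian derivatives, and to invoke Definition~\ref{def:strong_identifiability} to kill the residual $X$-dependence is also exactly the paper's route. One minor cosmetic difference: you multiply the density difference by $Q^n(x)Q^*(x)$, the product of both softmax normalizers, whereas the paper multiplies only by the true normalizer $Q^*(x)$ and absorbs the fitted normalizer into an auxiliary function $H(Y|X;\beta_1) := \exp(\beta_1^\top X)\,p_{G_2^n}(Y|X)$, which it then also Taylor-expands. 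Both work; the paper's bookkeeping is somewhat leaner because it avoids the double sum of cross-products your version would generate.

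The one place where your sketch has a genuine gap is the normalization step before passing to the limit. You assert that dividing the Taylor-expanded difference by $\mathcal{D}_1^n$ yields a coefficient vector with ``unit $\ell_\infty$ norm up to an absolute constant,'' and then extract a convergent subsequence. That is not true uniformly: on a non-singleton Voronoi cell $\mathcal{V}_{1,j}$ the loss $\mathcal{D}_1$ charges the squared quantity $\sum_{i\in\mathcal{V}_{1,j}}\omega_i\|\Delta\kappa_{ij}\|^2$, while a first-order Taylor coefficient like $\sum_{i\in\mathcal{V}_{1,j}}\omega_i(\Delta\kappa_{ij})^{(u_1)}$ is of the order of $(\mathcal{D}_1^n)^{1/2}$ by Cauchy--Schwarz, so the ratio to $\mathcal{D}_1^n$ can diverge. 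Thus the coefficient vector (normalized only by $\mathcal{D}_1^n$) is in general unbounded and no convergent subsequence exists. The paper handles this in two steps: first it argues by a separate contradiction (Stage~2 in Appendix~\ref{appendix:strongly_identifiable_experts}) that not all of these coefficients can tend to zero (summing the relevant ones reproduces $\mathcal{D}_{1n}/\mathcal{D}_{1n}=1$); it then defines $m_n$ as the maximum absolute coefficient, notes $1/m_n \not\to\infty$, and renormalizes the whole expression by $m_n\mathcal{D}_{1n}$ before extracting a subsequential limit and applying Fatou's lemma. You need this double normalization; without it, the ``not all limiting coefficients vanish'' claim and the subsequential extraction are not justified, and the contradiction argument does not close.
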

\noindent
The combination of Theorem~\ref{theorem:strongly_identifiable_experts}, whose proof is in Appendix~\ref{appendix:strongly_identifiable_experts}, and of the form of the loss $\mathcal{D}_1$ leads to various estimation rates. Below we say that a parameter is {\it exactly-specified} or {\it over-specified} depending on whether the associated Voronoi cell has one or more elements, respectively. 

\vspace{0.5em}
\emph{(i) Shared experts.} For shared experts, we see that  the estimation rate for exactly-specified parameters $\kappa^*_j$, $\tau^*_j$, %that is, for $j\in[k^*_1]:|\mathcal{V}_{1,j}|=1$, 
is nearly parameteric, i.e. of order $\widetilde{\mathcal{O}}_P( n^{-1/2})$. On the other hand, over-specified parameters $\kappa^*_j$, $\tau^*_j$, %that is, for $j\in[k^*_1]:|\mathcal{V}_{1,j}|>1$, 
admit slightly slower estimation rates, of order $\widetilde{\mathcal{O}}_P(n^{-1/4})$. As for the expert estimation rates, since the shared expert function $h_1(\cdot,\kappa)$ is Lipschitz continuous with respect to its paramter $\kappa$, we have that $|h_1(x,\hat{\kappa}^n_i)-h_1(x,\kappa^*_j)| \lesssim \|\hat{\kappa}^n_i-\kappa^*_j\|$ for almost every $x$.
It then follows that the estimation rates for exactly-specified and over-specified shared experts $h_1(x,\kappa^*_j)$ are also of the order $\widetilde{\mathcal{O}}_P(n^{-1/2})$ and $\widetilde{\mathcal{O}}_P(n^{-1/4})$, respectively. Thus, polynomially many data points of orders $\mathcal{O}(\epsilon^{-2})$ and $\mathcal{O}(\epsilon^{-4})$ are needed to estimate these experts within a error $\epsilon>0$.

\vspace{0.5em}
\emph{(ii) Routed experts.} Likewise, exactly-specified and over-specified parameters $\beta^*_{1j}$, $\eta^*_{j}$, $\nu^*_{j}$, for $j\in[k^*_2]$, have estimation rates of order $\widetilde{\mathcal{O}}_P(n^{-1/2})$ and $\widetilde{\mathcal{O}}_P(n^{-1/4})$, respectively. As the routed expert function $h_2(\cdot,\eta)$ is Lipschitz continuous, we deduce that the rates for estimating routed experts $h_2(x,\eta^*_j)$ also vary between $\widetilde{\mathcal{O}}_P(n^{-1/2})$ and $\widetilde{\mathcal{O}}_P(n^{-1/4})$ depending on the cardinality of the corresponding Voronoi cell $\mathcal{V}_{2,j}$. In summary, when both shared and routed expert functions are strongly identifiable, they enjoy the same estimation rates. %As a consequence, estimating them with the same error requires roughly the same amount of data. 
%\vspace{-0.3em}
\subsection{Linear Experts}
\label{sec:linear_experts}
%\vspace{-0.3em}
In this section, we consider linear expert functions of the form $h_1(x,(\kappa_1,\kappa_0)):=\kappa_1^{\top}x+\kappa_0$ and $h_2(x,(\eta_1,\eta_0)):=\eta_1^{\top}x+\eta_0$. 
Then, 
% the conditional density function in equation~\eqref{eq:density} turns into
% \begin{align}
%     \label{eq:density_linear}
%     &f_{G^*_1,G^*_2}(Y|X):=\frac{1}{2}\sum_{i=1}^{k^*_1}\omega^*_{i}\pi(Y|(\kappa^*_{1i})^{\top}X+\kappa^*_{0i},\tau^*_{i})\nonumber\\
%     &\hspace{4cm}+\frac{1}{2}\sum_{i = 1}^{k^*_2} \frac{\exp((\beta_{1i}^{*})^{\top} X 
%     + \beta_{0i}^{*})}{\sum_{j = 1}^{k^*_2} \exp((\beta_{1j}^{*})^{\top} X + \beta_{0j}^{*})}\cdot \pi(Y|(\eta^*_{1i})^{\top}X+\eta^*_{0i}, \nu_{i}^{*}), 
% \end{align}
% while 
the pair of ground-truth mixing measures $(G^*_1,G^*_2)$ become $G^*_1 : = \sum_{i=1}^{k^*_1}\omega^*_i\delta_{(\kappa^*_{1i},\kappa^*_{0i},\tau^*_{i})}$ and $G^*_2:=\sum_{i = 1}^{k^*_2} \exp(\beta_{0i}^{*}) \delta_{(\beta_{1i}^{*}, \eta_{1i}^{*}, \eta^*_{0i}, \nu_{i}^{*})}$. 

\vspace{0.5em}
\noindent
\textbf{Parameter interaction issues.} As discussed below Definition~\ref{def:strong_identifiability}, %in Section~\ref{sec:strongly_identifiable_experts}, 
linear experts violate the strong identifiability condition due to the PDEs
\begin{align*}
    \frac{\partial h_1}{\partial \kappa_0}(x,(\kappa_1,\kappa_0))\cdot\frac{\partial h_1}{\partial \kappa_0}(x,(\kappa_1,\kappa_0))=1, \quad \frac{\partial h_2}{\partial \eta_1}(x,(\eta_1,\eta_0))=x\cdot\frac{\partial h_2}{\partial \eta_0}(x,(\eta_1,\eta_0)).
\end{align*}
Furthermore, these PDEs lead to linear dependencies among the partial derivatives of the Gaussian density function $\pi(y|h_1(x,(\kappa_1,\kappa_0)),\tau)$ and of the function $F(y|x;\beta_1,\eta,\nu)=\exp(\beta_1^{\top}x)\pi(y|h_2(x,(\eta_1,\eta_0)),\nu)$, given by
\begin{align*}
    \frac{\partial^2\pi}{\partial \kappa_0^2}(y|h_1(x,(\kappa_1,\kappa_0)),\tau)&=2\frac{\partial\pi}{\partial\tau}(y|h_1(x,(\kappa_1,\kappa_0)),\tau),\\
    \frac{\partial F}{\partial \eta_1}(y|x;\beta_1,\eta,\nu)&=\frac{\partial^2F}{\partial \beta_1\partial \eta_0}(y|x;\beta_1,\eta,\nu). 
\end{align*}
These delicate relationships, which can be intuitively interpreted as interactions between the parameters $\kappa_0$ and $\tau$, and among the parameters $\eta_1$, $\beta_1$ and $\eta_0$,
%. These interactions hinder the decomposition of the density difference $f_{\widehat{G}^n_1,\widehat{G}^n_2}(Y|X)-f_{G^*_1,G^*_2}(Y|X)$ into linearly independent terms and, thus, 
negatively affect the parameter and expert estimation rates. To overcome this issue, we consider instead a new Voronoi loss, given by
\begin{align}
    \label{eq:loss_2}
    &\mathcal{D}_{2}((G_1,G_2),(G^*_1,G^*_2)):=\sum_{j=1}^{k^*_1}\Big|\sum_{i\in\mathcal{V}_{1,j}}\omega_{i}-\oj\Big|+\sum_{j=1}^{k^*_2}\Big|\sum_{i\in\mathcal{V}_{2,j}}\exp(\beta_{0i})-\exp(\bzj)\Big|\nonumber\\
    &+\sum_{\substack{j\in[k^*_1],\\|\mathcal{V}_{1,j}|=1}}\sum_{i\in\mathcal{V}_{1,j}}\omega_{i}(\|\Delta\kappa_{1ij}\|+|\Delta\kappa_{0ij}|+|\Delta\tau_{ij}|)+\sum_{\substack{j\in[k^*_1],\\|\mathcal{V}_{1,j}|>1}}\sum_{i\in\mathcal{V}_{1,j}}\omega_{i}(\|\Delta\kappa_{ij}\|^2+|\Delta\kappa_{0ij}|^{r_{1,j}}+|\Delta\tau_{ij}|^{r_{1,j}/2})\nonumber\\
    &+\sum_{\substack{j\in[k^*_2]:|\mathcal{V}_{2,j}|=1}}\sum_{i\in\mathcal{V}_{2,j}}\exp(\beta_{0i})(\|\Delta\beta_{1ij}\|+\|\Delta\eta_{1ij}\|+|\Delta\eta_{0ij}|+|\Delta\nu_{ij}|)\nonumber\\
    &+\sum_{\substack{j\in[k^*_2]:|\mathcal{V}_{2,j}|>1}}\sum_{i\in\mathcal{V}_{2,j}}\exp(\beta_{0i})(\|\Delta\beta_{1ij}\|^{r_{2,j}}+\|\Delta\eta_{1ij}\|^{r_{2,j}/2}+|\Delta\eta_{0ij}|^{r_{2,j}}+|\Delta\nu_{ij}|^{r_{2,j}/2}),
\end{align}
%\vspace{-0.3em}
where we denote $\Delta\kappa_{1ij}:=\kappa_{1i}-\kappa^*_{1j}$, $\Delta\kappa_{0ij}:=\kappa_{0i}-\kappa^*_{0j}$, $\Delta\eta_{1ij}:=\eta_{1i}-\eta^*_{1j}$ and $\Delta\eta_{0ij}:=\eta_{0i}-\eta^*_{0j}$. In addition, we define $r_{1,j}:=r_{1}(|\mathcal{V}_{1,j}|)$ and $r_{2,j}:=r_{2}(|\mathcal{V}_{2,j}|)$, where the functions $r_1$ and $r_2$ stand for the solvability of polynomial equation systems specified in Appendix~\ref{appendix:system}. %We involve these systems in the Voronoi loss $\mathcal{D}_{2}$ to capture the aforementioned parameter interactions. 
In particular, we have $r_1(2)=r_2(2)=4$, $r_1(3)=r_2(3)=6$, and $r_1(m),r_2(m)\geq 7$ for all $m\geq 4$. Intuitively, these functions are involved to capture the parameter interactions expressed in the language of PDEs.
%Given the above Voronoi loss, we are now ready to present the main result of this section in Theorem~\ref{theorem:linear_experts}, whose proof is in Appendix~\ref{appendix:linear_experts}.
\begin{theorem}
    \label{theorem:linear_experts}
   Assume that the expert functions $h_1$ and $h_2$ take linear forms. Then, the lower bound $\bbE_X[V(f_{G_1,G_2}(\cdot|X),f_{G^*_1,G^*_2}(\cdot|X))]\gtrsim \mathcal{D}_2((G_1,G_2),(G^*_1,G^*_2))$ holds for any pair of mixing measures $(G_1,G_2)\in\mathcal{G}_{k_1,k_2}(\Theta)$. As a consequence, we obtain
    \begin{align}
        \mathcal{D}_2(\widehat{G}^n_1,\widehat{G}^n_2),(G^*_1,G^*_2))=\mathcal{O}_P([\log(n)/n]^{\frac{1}{2}}).
    \end{align}    
\end{theorem}
\noindent
By comparing the  Voronoi losses $\mathcal{D}_1$ and $\mathcal{D}_2$, we see that the estimation rates for exactly-specified shared and routed experts remain of parametric order $\widetilde{\mathcal{O}}_P(n^{-1/2})$. By contrast, there are changes in the estimation rates for the over-specified experts.

\vspace{0.5em}
\emph{(i) Shared experts.} The estimation rates for over-specified parameters $\kappa^*_{1j}$, $\kappa^*_{0j}$, $\tau^*_{j}$ are heterogeneous, of orders $\widetilde{\mathcal{O}}_P(n^{-1/4})$, $\widetilde{\mathcal{O}}_P(n^{-1/2r_{1,j}})$, $\widetilde{\mathcal{O}}_P(n^{-1/r_{1,j}})$, respectively. Since the input space is bounded, we have $|(\hat{\kappa}^n_{1i})^{\top}x+\hat{\kappa}^n_{0i}-(\kappa^*_{1j})^{\top}x-\kappa^*_{0j}|\lesssim\|\hat{\kappa}^n_{1i}-\kappa^*_{1j}\|+|\hat{\kappa}^n_{0i}-\kappa^*_{0j}|$. Then, it follows that the shared experts $(\kappa^*_{1j})^{\top}x+\kappa^*_{0j}$ admit estimation rates of orders $\widetilde{\mathcal{O}}_P(n^{-1/2r_{1,j}})$. However, note that the rates for estimating their input-dependent terms $(\kappa^*_{1j})^{\top}x$ are much faster, of order $\widetilde{\mathcal{O}}_P(n^{-1/4})$.

\vspace{0.5em}
\emph{(ii) Routed experts.} The estimation rates for over-specified parameters $\eta^*_{1j}$, $\nu^*_{j}$ are of orders $\widetilde{\mathcal{O}}_P(n^{-1/r_{2,j}})$, while those for $\beta^*_{1j}$, $\eta^*_{0j}$ are slower, of orders $\widetilde{\mathcal{O}}_P(n^{-1/2r_{2,j}})$. By arguing similarly to the case of shared experts, the rates for estimating the routed experts $(\eta^*_{1j})^{\top}x+\eta^*_{0j}$ and their input-dependent terms $(\eta^*_{1j})^{\top}x$ depend on the parameter $r_2$ (related to the solvability of a certain system of polynomial equations) and are  of orders $\widetilde{\mathcal{O}}_P(n^{-1/2r_{2,j}})$ and $\widetilde{\mathcal{O}}_P(n^{-1/r_{2,j}})$, respectively. Notably, these rates become increasingly slow with the cardinality of the corresponding Voronoi cell $\mathcal{V}_{2,j}$. In particular, when $|\mathcal{V}_{2,j}|=3$, they become $\widetilde{\mathcal{O}}_P(n^{-1/12})$ and $\widetilde{\mathcal{O}}_P(n^{-1/6})$, respectively.

\vspace{0.5em}
\emph{(iii) Sample complexity of estimating experts when using the shared expert strategy.} From the above observations, we see that shared experts have faster estimation rates than routed experts, i.e., $\widetilde{\mathcal{O}}_P(n^{-1/4})$ compared to $\widetilde{\mathcal{O}}_P(n^{-1/r_{2,j}})$. Furthermore, the estimation rates for shared experts in DeepSeekMoE are also faster than those for experts in MoE models without the shared expert strategy \cite{nguyen2023demystifying}, which are also of the order $\widetilde{\mathcal{O}}_P(n^{-1/r_{2,j}})$. As a result, for a given approximation error $\epsilon>0$, we only need $\mathcal{O}(\epsilon^{-4})$ data points to estimate shared experts, while the number of data points required to estimate either routed experts or experts in the standard MoE is of order $\mathcal{O}(\epsilon^{-r_{2,j}})$, which would become significantly large of order $\mathcal{O}(\epsilon^{-12})$ when the corresponding Voronoi cell $\mathcal{V}_{2,j}$ contains 3 elements. The punchline is that fewer data points are needed to estimate shared experts. %than other types of experts. %, highlighting the sample efficiency of the shared expert strategy.   
%\vspace{-0.3em}

\section{On Normalized Sigmoid Gating}
\label{sec:normalized_sigmoid_gating}
In this section, we conduct a convergence analysis of expert estimation in DeepSeek-V3's MoE to investigate the sample efficiency of the normalized sigmoid gating. For that purpose, we will reuse the problem setting for analyzing the shared expert strategy in Section~\ref{sec:shared_expert_strategy}, which is formally stated as follows.

\vspace{0.5em}
\noindent
\textbf{Problem setting.} Assume that $(X_{1}, Y_{1}), (X_{2}, Y_{2}), \ldots, (X_{n}, Y_{n}) \in \mathbb{R}^{d} \times \mathbb{R}$ are i.i.d. samples drawn from the Gaussian DeepSeek-V3's MoE whose conditional density function $g_{G_{*}}(y|x)$ is given by:
\begin{align}
    \label{eq:density_sigmoid}
    g_{G^*_1,G^*_2}(y|x):=\frac{1}{2}&\sum_{i=1}^{k^*_1}\omega^*_{i}\pi(y|h_1(x,\kappa^*_{i}),\tau^*_{i})\nonumber\\
    &+\frac{1}{2}\sum_{i = 1}^{k^*_2} \frac{\sigma((\beta_{1i}^{*})^{\top} x + \beta_{0i}^{*})}{\sum_{j = 1}^{k^*_2} \sigma((\beta_{1j}^{*})^{\top} x + \beta_{0j}^{*})}\cdot \pi(y|h_2(x,\eta^*_{i}), \nu_{i}^{*}), 
\end{align}
where $\sigma:\mathbb{R}\to(0,\infty)$ stands for the sigmoid function, that is, $\sigma(z):=\frac{1}{1+\exp(-z)}$, for all $z\in\mathbb{R}$. By abuse of notations, we define the pair of ground-truth mixing measures $(G^*_1,G^*_2)$ under this setting as $G^*_1 : = \sum_{i=1}^{k^*_1}\omega^*_i\delta_{(\kappa^*_{i},\tau^*_{i})}$ and $G^*_2:=\sum_{i = 1}^{k^*_2} \sigma(\beta_{0i}^{*}) \delta_{(\beta_{1i}^{*}, \eta_{i}^{*}, \nu_{i}^{*})}$. Here, we still leverage all the assumptions presented in Section~\ref{sec:shared_expert_strategy} for this analysis.

\vspace{0.5em}
\noindent
\textbf{Maximum likelihood estimation (MLE).} Under the above setting, the MLE defined in equation~\eqref{eq:MLE} is rewritten as 
\begin{align}
    \label{eq:MLE_sigmoid}
    (\widetilde{G}^n_1,\widetilde{G}^n_2)\in\argmax_{(G_1,G_2)\in\mathcal{G}_{k_1,k_2}(\Theta)}\frac{1}{n}\sum_{i=1}^{n}\log(g_{G_1,G_2}(Y_i|X_i)),
\end{align}
where $\mathcal{G}_{k_1,k_2}(\Theta):=\mathcal{G}_{k_1}(\Theta_1)\times\mathcal{G}_{k_2}(\Theta)$ denotes the set of mixing measure pairs $(G_1,G_2)$ with at most $k_1$ and $k_2$ atoms, respectively, that is,
\begin{align*}
    \mathcal{G}_{k_1}(\Theta_1)&:=\Big\{G_1=\sum_{i=1}^{k'_1}\omega_i\delta_{(\kappa_i,\tau_i)}:1\leq k'_1\leq k_1\Big\},\\
    \mathcal{G}_{k_2}(\Theta_2)&:=\Big\{G_2=\sum_{i=1}^{k'_2}\sigma(\beta_{0i})\delta_{(\beta_{1i},\eta^*_{i},\nu^*_{i})}:1\leq k'_2\leq k_2\Big\}.
\end{align*}
Given the MLE $(\widetilde{G}^n_1,\widetilde{G}^n_2)$ in equation~\eqref{eq:MLE_sigmoid}, we proceed to establish the convergence rate of density estimation $g_{\widetilde{G}^n_1,\widetilde{G}^n_2}$. However, there are some changes in the gating convergence behavior compared to that in DeepSeekMoE due to the structure of the sigmoid function.

\vspace{0.5em}
\noindent
\textbf{The convergence behavior of normalized sigmoid gating.} Recall that  we fit the ground-truth DeepSeek-V3's MoE model~\eqref{eq:density_sigmoid} with a mixture of $k_1>k^*_1$ shared experts and $k_2>k^*_2$ routed experts. Then, there must be some gorund-truth routed experts approximated by more than one fitted routed experts. As a result, the sum of weights of these fitted routed experts is expected to converge to the weight of the ground-truth routed experts, for example,
\begin{align}
     \label{eq:gating_convergence}
    \sum_{i=1}^{2}\frac{\sigma((\hat{\beta}^n_{1i})^{\top}x+\hat{\beta}^n_{0i})}{\sum_{j=1}^{k^n_2}\sigma((\hat{\beta}^n_{1j})^{\top}x+\hat{\beta}^n_{0j})}\to\frac{\sigma((\beta^*_{11})^{\top}x+\beta^*_{01})}{\sum_{j=1}^{k^*_2}\sigma((\beta^*_{1j})^{\top}x+\beta^*_{0j})},
\end{align}
for almost every $x$. Since the denominator $\sum_{j=1}^{k^n_2}\sigma((\hat{\beta}^n_{1j})^{\top}x+\hat{\beta}^n_{0j})$ should converge to its counterpart $\sum_{j=1}^{k^*_2}\sigma((\beta^*_{1j})^{\top}x+\beta^*_{0j})$. Then, it must hold that 
% ; we will refer to the corresponding parameters $\beta^*_{1i}$ as {\it over-specified parameters}. To illustrate, suppose that $(\hat{\beta}^n_{1i},\hat{\eta}^n_{i})\to(\beta^*_{11},\eta^*_{1})$ for $i\in\{1,2\}$, in probability. Then, to ensure  convergence of $f_{\widehat{G}_n}$ to $f_{G_*}$ in the $L^{2}(\mu)$-norm we must have that, for $i=1,2$,
\begin{align*}
    \sum_{i=1}^{2}\sigma((\hat{\beta}^n_{1i})^{\top}x+\hat{\beta}^n_{0i})\to\sigma((\beta^*_{11})^{\top}x+\beta^*_{01}),
\end{align*}
as $n\to\infty$, for almost every $x$. This result occurs only if $\beta^*_{11}= 0_{d}$. Therefore, we will divide our analysis into two complement regimes for the over-specified parameters $\beta^*_{1i}$:
\begin{itemize}
    \item \textbf{Sparse regime.} All over-specified parameters $\beta^*_{1i}$ equal zero vector; 
    \item \textbf{Dense regime.} Not all over-specified parameters $\beta^*_{1i}$ equal zero vector.
\end{itemize}
Note that the sparse regime of parameters rarely occurs in practice. This is because when all the over-specified parameters $\beta^*_{1i}$ vanish, the corresponding mixture weights become static (input-independent) rather than dynamic (input-dependent) as in the concept of MoE.
However, for completeness, we will perform the convergence analysis of expert estimation under both the sparse and dense regimes in Section~\ref{appendix:sparse_regime} and Section~\ref{appendix:dense_regime}, respectively.
\subsection{Sparse Regime}
\label{appendix:sparse_regime}
To begin with, let us derive the density estimation rate for the sparse regime in Proposition~\ref{prop:density_rate_sigmoid}.
\begin{proposition}
    \label{prop:density_rate_sigmoid}
    Under the sparse regime, the density estimation $g_{\widetilde{G}^n_1,\widetilde{G}^n_2}(Y|X)$ converges to the true density $g_{G^*_1,G^*_2}(Y|X)$ at the following rate:
    \begin{align*}
        \bbE_X[V(g_{\widetilde{G}^n_1,\widetilde{G}^n_2}(\cdot|X),g_{G^*_1,G^*_2}(\cdot|X))]=\mathcal{O}_P([\log(n)/n]^{\frac{1}{2}}).
    \end{align*}
\end{proposition}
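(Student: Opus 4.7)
The plan is to follow the same sieve maximum-likelihood framework that underlies Proposition~\ref{prop:density_rate}, adapted to the normalized sigmoid gating. Concretely, I would bound the bracketing entropy of the conditional-density class $\mathcal{F}_{k_1,k_2} := \{g_{G_1,G_2}(\cdot|\cdot) : (G_1,G_2)\in\mathcal{G}_{k_1,k_2}(\Theta)\}$ under the Hellinger metric, and then invoke a standard sieve-MLE convergence theorem (for instance, Theorem~7.4 in van de Geer, or a Wong--Shen inequality) to extract the Hellinger rate $\mathcal{O}_P(\sqrt{\log n / n})$ for $g_{\widetilde{G}^n_1,\widetilde{G}^n_2}$. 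The final conversion to the expected total-variation distance in the statement then uses the classical inequality $V(p,q)\le d_H(p,q)$, applied pointwise in $X$ and integrated in $X$.

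The key technical step is verifying that the map $(G_1,G_2)\mapsto g_{G_1,G_2}(y|x)$ is Lipschitz in a natural parameterization of $\mathcal{G}_{k_1,k_2}(\Theta)$. Assumption (A.1) makes $\Theta$ compact and $\mathcal{X}$ bounded, so every argument of $\sigma$ appearing in the gating lies in a bounded interval $[-M,M]$. Since $\sigma$ is smooth with bounded derivative on $[-M,M]$ and is uniformly bounded below by $\sigma(-M)>0$, both the numerator $\sigma((\beta_1)^\top x+\beta_0)$ and the denominator $\sum_j \sigma((\beta_{1j})^\top x+\beta_{0j})$ are Lipschitz and bounded away from zero; hence every normalized sigmoid weight is Lipschitz in its parameters, uniformly in $x$. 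Combining this with the boundedness and Lipschitz continuity of $h_1,h_2$ from (A.3), and with standard uniform Lipschitz bounds for the Gaussian density $\pi(y|\mu,\nu)$ in $(\mu,\nu)$ over compact ranges (with a $y$-integrable envelope furnished by the Gaussian tail), yields a bracketing-entropy estimate of the parametric form $H_B(\epsilon,\mathcal{F}_{k_1,k_2},d_H)\lesssim \log(1/\epsilon)$, which is precisely the order that drives the $\sqrt{\log n/n}$ rate.

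The main obstacle, as is typical in this line of work, is obtaining a uniform positive lower bound on $g_{G_1,G_2}(y|x)$ so that $\log g_{G_1,G_2}$ has a manageable modulus of continuity and the Wong--Shen/van de Geer machinery applies. In the sigmoid setting this is handled cleanly by the uniform floor $\sigma(-M)/k_2$ on each normalized routed weight, which prevents any mixture component from being arbitrarily down-weighted; combined with the Gaussian envelope this delivers the envelope integrability needed to control the basic empirical-process inequality. I want to emphasize that the sparse-regime hypothesis (all over-specified $\beta^*_{1i}$ vanishing) plays no role at the density level: both the entropy computation and the resulting near-parametric rate are insensitive to whether the ground-truth gating parameters vanish or not. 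That hypothesis becomes relevant only later when one converts the density rate into an expert- or parameter-estimation rate via a Voronoi-type loss, which is outside the scope of this proposition.
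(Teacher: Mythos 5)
Your proposal follows exactly the route the paper takes: the paper itself reduces Proposition~\ref{prop:density_rate_sigmoid} to the argument of Proposition~\ref{prop:density_rate}, noting only that the sigmoid is Lipschitz, and that argument is precisely the van-de-Geer sieve-MLE machinery (Lemma~\ref{lemma:vandegeer}) driven by the parametric bracketing-entropy bound (Lemma~\ref{lemma:bracketing_entropy_bound}), followed by the pointwise-in-$X$ comparison of total variation to Hellinger distance. Your observation that the sparse-regime hypothesis does not affect the entropy computation or the rate is also correct -- it only affects what the estimator converges \emph{to}, which is why the dense regime is treated separately in Corollary~\ref{corollary:over_regression_regime2}.

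One clarification, though, on the step you identify as ``the main obstacle.'' You assert that what is needed is a uniform positive lower bound on $g_{G_1,G_2}(y|x)$, and you propose to obtain it from the floor $\sigma(-M)/k_2$ on the normalized sigmoid weights. That argument cannot succeed as stated: for each fixed $x$ the conditional density $g_{G_1,G_2}(\cdot|x)$ is a finite Gaussian mixture and tends to zero as $|y|\to\infty$, so no uniform positive lower bound on the density exists, and a floor on the gating weights does nothing to change this. What Theorem~7.4 of van de Geer actually requires -- and what the paper's Appendix~\ref{appendix:density_rate} explicitly uses -- is the convexification device: the analysis is run over the class $\widetilde{\mathcal{F}}_{k_1,k_2}(\Theta)$ whose elements have the form $\tfrac{1}{2}g_{G_1,G_2}+\tfrac{1}{2}g_{G^*_1,G^*_2}$, which are bounded below by $\tfrac{1}{2}g_{G^*_1,G^*_2}$ so that the relevant log-likelihood ratio is bounded below by $-\log 2$. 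The weight floor you mention does play a role, but in the bracketing-entropy estimate (it keeps the normalized gate $\sigma_i/\sum_j\sigma_j$ Lipschitz, uniformly in $x$, by bounding the denominator away from zero), not in supplying a pointwise floor on the density. This is a misattribution rather than a fatal flaw -- the theorem, applied as the paper applies it, does not need the floor you were trying to manufacture -- but the sentence as written would not survive scrutiny.
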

\noindent
Since the sigmoid function is Lipschitz continuous, the proof of this proposition can be done similarly to that of Proposition~\ref{prop:density_rate}, which is provided in Appendix~\ref{appendix:density_rate}. The result of Proposition~\ref{prop:density_rate_sigmoid} indicates that the density estimation $g_{\widetilde{G}^n_1,\widetilde{G}^n_2}$ converges to the ground-truth density $g_{G^*_1,G^*_2}$ under the Total Variation distance at the parametric rate of order $\widetilde{\mathcal{O}}_P(n^{-1/2})$. 

\vspace{0.5em}
\noindent
\textbf{Voronoi loss.} Next, we construct Voronoi loss tailored to the sparse regime as 
\begin{align}
    \label{eq:loss_3}
    &\mathcal{D}_{3}((G_1,G_2),(G^*_1,G^*_2)):=\sum_{j=1}^{k^*_1}\Big|\sum_{i\in\mathcal{V}_{1,j}}\omega_{i}-\oj\Big|+\sum_{j\in[k^*_2]:|\mathcal{V}_{2,j}|>1}\Big|\sum_{i\in\mathcal{V}_{2,j}}\sigma(\beta_{0i})-\sigma(\bzj)\Big|\nonumber\\
    &+\sum_{\substack{j\in[k^*_1],\\|\mathcal{V}_{1,j}|=1}}\sum_{i\in\mathcal{V}_{1,j}}\omega_{i}(\|\Delta\kappa_{ij}\|+|\Delta\tau_{ij}|)+\sum_{\substack{j\in[k^*_2],\\|\mathcal{V}_{2,j}|=1}}\sum_{i\in\mathcal{V}_{2,j}}(\|\Delta\beta_{1ij}\|+|\Delta\beta_{0ij}|+\|\Delta\eta_{ij}\|+|\Delta\nu_{ij}|)\nonumber\\
    &+\sum_{\substack{j\in[k^*_1],\\|\mathcal{V}_{1,j}|>1}}\sum_{i\in\mathcal{V}_{1,j}}\omega_{i}(\|\Delta\kappa_{ij}\|^2+|\Delta\tau_{ij}|^2)+\sum_{\substack{j\in[k^*_2],\\|\mathcal{V}_{2,j}|>1}}\sum_{i\in\mathcal{V}_{2,j}}(\|\Delta\beta_{1ij}\|^2+\|\Delta\eta_{ij}\|^2+|\Delta\nu_{ij}|^2),
\end{align}
where we denote $\Delta\beta_{0ij}:=\beta_{0i}-\bzj$. Given the above loss function, we are now able to capture parameter and expert estimation rates under the sparse regime in the following theorem.
\begin{theorem}
    \label{theorem:strongly_identifiable_experts_sigmoid}
    Suppose that the expert functions $h_1$ and $h_2$ are strongly identifiable. Then, the lower bound $\bbE_X[V(g_{G_1,G_2}(\cdot|X),g_{G^*_1,G^*_2}(\cdot|X))]\gtrsim \mathcal{D}_3((G_1,G_2),(G^*_1,G^*_2))$ holds for any $(G_1,G_2)\in\mathcal{G}_{k_1,k_2}(\Theta)$. As a consequence, we have
    \begin{align*}
        \mathcal{D}_3(\widetilde{G}^n_1,\widetilde{G}^n_2),(G^*_1,G^*_2))=\mathcal{O}_P([\log(n)/n]^{\frac{1}{2}}).
    \end{align*}
\end{theorem}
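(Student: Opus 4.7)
The second assertion follows directly from the lower bound $\bbE_X[V(g_{G_1,G_2}(\cdot|X),g_{G^*_1,G^*_2}(\cdot|X))]\gtrsim \mathcal{D}_3((G_1,G_2),(G^*_1,G^*_2))$ combined with Proposition~\ref{prop:density_rate_sigmoid}, so the task reduces to establishing this inequality. The plan is to follow the familiar local--global decomposition used for Theorem~\ref{theorem:strongly_identifiable_experts}. First I would prove the local inequality
\begin{equation*}
\lim_{\epsilon\to 0}\inf_{\substack{(G_1,G_2)\in\mathcal{G}_{k_1,k_2}(\Theta)\\ \mathcal{D}_3((G_1,G_2),(G^*_1,G^*_2))\leq\epsilon}}\frac{\bbE_X[V(g_{G_1,G_2}(\cdot|X),g_{G^*_1,G^*_2}(\cdot|X))]}{\mathcal{D}_3((G_1,G_2),(G^*_1,G^*_2))}>0,
\end{equation*}
and then extend to a global bound via a standard compactness and Fatou argument showing the ratio is bounded away from zero on the complement of any shrinking neighborhood of $(G^*_1,G^*_2)$.

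For the local inequality I would argue by contradiction. Assume a sequence $(G_1^n,G_2^n)$ with $\mathcal{D}_3((G_1^n,G_2^n),(G^*_1,G^*_2))\to 0$ along which the displayed ratio vanishes. After passing to a subsequence, the atoms of $G_i^n$ organize themselves inside the Voronoi cells $\mathcal{V}_{i,j}$ and converge to the atoms of $G^*_i$. Since the denominator $\sum_{j}\sigma((\beta^n_{1j})^\top x+\beta^n_{0j})$ is uniformly bounded and bounded away from zero on the compact input space and converges to its ground-truth counterpart, I would multiply through by a common denominator and decompose $g_{G_1^n,G_2^n}(y|x)-g_{G^*_1,G^*_2}(y|x)$ as a sum, cell by cell, of Taylor expansions of $\pi(y|h_1(x,\kappa),\tau)$ (first order when $|\mathcal{V}_{1,j}|=1$, second order otherwise) and of $\sigma(\beta_1^\top x+\beta_0)\pi(y|h_2(x,\eta),\nu)$ (first order when $|\mathcal{V}_{2,j}|=1$, second order otherwise) about the ground-truth atoms. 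The sparse regime enters here essentially: because $\beta^*_{1j}=0$ for every over-specified $j\in[k^*_2]$, the weight $\sigma((\beta_{1i})^\top x+\beta_{0i})$ of an over-specified atom contributes a leading term $\sigma(\beta_{0i})$ that is constant in $x$ and thus naturally aggregates into the collective mass $\sum_{i\in\mathcal{V}_{2,j}}\sigma(\beta_{0i})$ appearing in $\mathcal{D}_3$, while $\|\Delta\beta_{1ij}\|$ only surfaces at second order, exactly matching the groupings in the loss.

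The crux is the final linear-independence step. After dividing by $\mathcal{D}_3((G^n_1,G^n_2),(G^*_1,G^*_2))$ and letting $n\to\infty$, the expansion yields an almost-sure identity $\sum_{\alpha} c_\alpha\,\varphi_\alpha(x,y)=0$, where the $\varphi_\alpha$ range over the first- and second-order partial derivatives in $(\kappa,\tau)$ and $(\beta_1,\beta_0,\eta,\nu)$ of $\pi(y|h_1(x,\kappa),\tau)$ and of $\sigma(\beta_1^\top x+\beta_0)\pi(y|h_2(x,\eta),\nu)$ at the ground-truth parameters, and the $c_\alpha$ are limits of normalized $\Delta$-coefficients whose absolute values sum to one by construction. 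The linear independence of the Gaussian derivatives $\partial^{|\alpha|}\pi/\partial\mu^{\alpha_1}\partial\nu^{\alpha_2}$ as $y$-parametric functions, together with the strong identifiability hypothesis on $h_1$ and $h_2$ applied to the $x$-coefficients, then forces every $c_\alpha=0$, contradicting $\sum_{\alpha}|c_\alpha|=1$. The main obstacle I anticipate is cleanly disentangling the cross-terms introduced by the shared sigmoid denominator, which couples atoms from different Voronoi cells; I would handle them by expanding $1/\sum_{j}\sigma((\beta_{1j})^\top x+\beta_{0j})$ about its limit and absorbing the higher-order remainders into already-controlled terms, so that the leading system of independent functions matches, cell by cell, the groupings encoded in $\mathcal{D}_3$.
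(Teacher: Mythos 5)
Your proposal follows essentially the same route as the paper's proof of Theorem~\ref{theorem:strongly_identifiable_experts_sigmoid}: the local/global reduction, the contradiction sequence, multiplication by the gating denominator, cell-by-cell first/second-order Taylor expansions that exploit $\beta^*_{1j}=0_d$ so the over-specified masses aggregate as $\sum_{i\in\mathcal{V}_{2,j}}\sigma(\beta_{0i})$, the linear-independence step via strong identifiability of $h_1,h_2$ and Gaussian derivatives, and Fatou's lemma, with the second claim indeed following from Proposition~\ref{prop:density_rate_sigmoid}. The only point to tighten is your "absolute values sum to one by construction" normalization: one must first prove the Taylor coefficients cannot all be $o(\mathcal{D}_3)$ (the paper's Stage~2, which uses the non-cancelling diagonal second-order coefficients for over-specified cells), since otherwise the normalizer could vanish and the Fatou limit would no longer be zero.
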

\noindent
The proof of Theorem~\ref{theorem:strongly_identifiable_experts_sigmoid} is provided in Appendix~\ref{appendix:strongly_identifiable_experts_sigmoid}. From the formulations of Voronoi losses $\mathcal{D}_1$ and $\mathcal{D}_3$ in equations~\eqref{eq:loss_1} and \eqref{eq:loss_3}, respectively, we observe that shared experts and routed experts which satisfy the strong identifiability condition admit the same estimation rates as those in Theorem~\ref{theorem:strongly_identifiable_experts}. In particular, the rates for estimating both types of experts are of orders $\widetilde{\mathcal{O}}_P(n^{-1/2})$ and $\widetilde{\mathcal{O}}_P(n^{-1/4})$ when they are exactly-specified and over-specified, respectively. In other words, the normalized sigmoid gating does not have clear advantages over the standard softmax gating under the sparse regime. However, it should be noted that the sparse regime is less likely to occur in practice than the dense regime. Thus, we continue to compare the two gatings under the dense regime in the next section.

\subsection{Dense Regime}
\label{appendix:dense_regime}
Recall that under the dense regime, at least one among over-specified parameters $\beta^*_{1j}$ is non-zero. Then, the convergence of the normalized sigmoid gating in equation~\eqref{eq:gating_convergence}, which occurs only if $\beta^*_{1j}=0$, becomes invalid. Therefore, the ground-truth model is misspecified, that is, the density estimation $g_{\widetilde{G}^n_1,\widetilde{G}^n_2}$ converges to the missepcified density function $g_{G^*_1,\check{G}_2}$, where $\check{G}_2\in\overline{\mathcal{G}}_{k_2}(\Theta_2):=\argmin_{G_2\in\mathcal{G}_{k_2}(\Theta_2)\setminus\mathcal{G}_{k^*_2}(\Theta_2)}\mathrm{KL}(g_{G^*_1,G_2}\|g_{G^*_1,G^*_2})$ \cite{dwivedi2018misspecified}, rather than the ground-truth density $g_{G^*_1,G^*_2}$, where $\mathrm{KL}$ denotes the Kullback-Leibler divergence.
Following the result of Proposition~\ref{prop:density_rate_sigmoid}, we are also able to establish the parametric density estimation rate under the dense regime in the following corollary.
\begin{corollary}
    \label{corollary:over_regression_regime2}
     Under the dense regime, the density estimation $g_{\widetilde{G}^n_1,\widetilde{G}^n_2}$ converges to the density $g_{G^*_1,\check{G}_2}$ at the rate: $\inf_{\check{G}_2\in\overline{\mathcal{G}}_{k_2}(\Theta_2)}\bbE_X[V(g_{\widetilde{G}^n_1,\widetilde{G}^n_2}(\cdot|X),g_{G^*_1,\check{G}_2}(\cdot|X))]=\mathcal{O}_{P}([\log(n)/n]^{\frac{1}{2}})$.
\end{corollary}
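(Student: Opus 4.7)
The plan is to reduce the corollary to the well-specified density convergence result of Proposition~\ref{prop:density_rate_sigmoid} by recasting the dense regime as an MLE problem for a \emph{misspecified} target density. The MLE $(\widetilde{G}^n_1,\widetilde{G}^n_2)$ is, by definition, also the maximizer of the empirical likelihood over $\mathcal{G}_{k_1,k_2}(\Theta)$ when the data are viewed through the Kullback--Leibler best approximating element $g_{G^*_1,\check{G}_2}$, so the standard Van de Geer / Wong--Shen machinery for MLE-to-KL-projection convergence will apply verbatim once the entropy of the fitted density class is controlled.

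First, I would verify that the family $\mathcal{P}_{k_1,k_2}:=\{g_{G_1,G_2}(\cdot|\cdot):(G_1,G_2)\in\mathcal{G}_{k_1,k_2}(\Theta)\}$ satisfies a bracketing entropy bound of the form $\log N_{[\,]}(\delta,\mathcal{P}_{k_1,k_2},\|\cdot\|_{L_2(\mu)})\lesssim \log(1/\delta)$. This is the same bound that underlies Proposition~\ref{prop:density_rate_sigmoid} and follows from compactness of $\Theta$, boundedness of $\mathcal{X}$, Lipschitz continuity of $h_1,h_2$ in their parameters (Assumption A.3), and Lipschitz continuity of the sigmoid gate together with the Gaussian emission. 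The switch from softmax to normalized sigmoid does not affect the entropy order because $\sigma$ is bounded and Lipschitz on compact sets.

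Second, given this entropy bound, a direct application of Theorem 7.4 of Van de Geer (2000) for misspecified MLE (or equivalently the version stated in Nguyen (2013, Theorem 3.2) for mixture models) yields the Hellinger rate
\begin{align*}
\inf_{\check{G}_2\in\overline{\mathcal{G}}_{k_2}(\Theta_2)} H\!\left(g_{\widetilde{G}^n_1,\widetilde{G}^n_2}(\cdot|X),\,g_{G^*_1,\check{G}_2}(\cdot|X)\right)=\mathcal{O}_P\!\left([\log(n)/n]^{1/2}\right),
\end{align*}
where the infimum accounts for the fact that $\overline{\mathcal{G}}_{k_2}(\Theta_2)$ may contain more than one KL-minimizer. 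Converting from Hellinger to total variation via the inequality $V\leq H$ and taking the expectation over $X$ gives the claimed bound. The only non-routine step is checking that $\overline{\mathcal{G}}_{k_2}(\Theta_2)$ is nonempty and compact so that the infimum is attained; this follows from compactness of $\Theta_2$ and continuity of $G_2\mapsto\mathrm{KL}(g_{G^*_1,G^*_2}\|g_{G^*_1,G_2})$.

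The main obstacle is purely notational rather than technical: one must carefully handle the identifiability of the weight parametrization $\sigma(\beta_{0i})$ in $G_2$ so that the set $\overline{\mathcal{G}}_{k_2}(\Theta_2)$ is well-defined as a subset of $\mathcal{G}_{k_2}(\Theta_2)\setminus\mathcal{G}_{k^*_2}(\Theta_2)$, and not artificially reduced to an element of $\mathcal{G}_{k^*_2}(\Theta_2)$ by weight collapse. Once this is addressed, the rest is a direct transcription of the proof of Proposition~\ref{prop:density_rate_sigmoid} with $g_{G^*_1,G^*_2}$ replaced by $g_{G^*_1,\check{G}_2}$.
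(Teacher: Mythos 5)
Your overall strategy — control the bracketing entropy of the fitted density class exactly as in the proof of Proposition~\ref{prop:density_rate_sigmoid}, invoke an MLE-convergence theorem, and then pass from Hellinger to total variation — is the right skeleton, and it is the argument the paper implicitly has in mind when it writes that the corollary "follows from" Proposition~\ref{prop:density_rate_sigmoid}. Your remarks about the entropy order being insensitive to the gating choice and about taking an infimum over the (possibly non-singleton) set $\overline{\mathcal{G}}_{k_2}(\Theta_2)$ are both correct observations.

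That said, two points in the middle step deserve scrutiny. First, the theorem you cite is not what you need: Theorem 7.4 of van de Geer (2000), which the paper packages as Lemma~\ref{lemma:vandegeer}, is stated for \emph{well-specified} MLE, i.e.\ it bounds the Hellinger distance between the MLE density and the \emph{true} density when that density lies in the fitted class. For the "MLE-to-KL-projection" statement you describe, the relevant result is the misspecified version (van de Geer's Theorem 10.13 / Section 10.4, or its relatives), which has a slightly different statement and hypotheses; you should not attribute it to Theorem 7.4. Second, and more substantively, the "misspecified MLE" framing is conceptually awkward here and should be flagged: the MLE in equation~\eqref{eq:MLE_sigmoid} is taken over all of $\mathcal{G}_{k_1,k_2}(\Theta)$, and since $k^*_1<k_1$ and $k^*_2<k_2$, the true pair $(G^*_1,G^*_2)$ is \emph{inside} this set, so the fitted class does contain the data-generating density $g_{G^*_1,G^*_2}$ and the model is not misspecified in the textbook sense. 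The set $\overline{\mathcal{G}}_{k_2}(\Theta_2)$ is a KL-projection onto the \emph{restricted} subclass $\mathcal{G}_{k_2}(\Theta_2)\setminus\mathcal{G}_{k^*_2}(\Theta_2)$, and that restriction is what creates the discrepancy; you cannot simply declare the problem misspecified and apply the standard theorem verbatim, because the standard theorem targets the KL-minimizer over the \emph{whole} fitted class, not over a strict subclass that excludes the truth. A correct write-up must either (a) run the well-specified entropy argument of Proposition~\ref{prop:density_rate_sigmoid} unchanged — which already gives $\bbE_X[V(g_{\widetilde{G}^n_1,\widetilde{G}^n_2},g_{G^*_1,G^*_2})]=\mathcal{O}_P([\log n/n]^{1/2})$ regardless of the sparse/dense distinction, since the entropy bound does not use the regime — and then relate this to the stated infimum over $\overline{\mathcal{G}}_{k_2}(\Theta_2)$, or (b) explicitly constrain the estimator to $\mathcal{G}_{k_2}(\Theta_2)\setminus\mathcal{G}_{k^*_2}(\Theta_2)$ before invoking misspecified-MLE theory. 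Your proposal gestures at both routes without committing to either, and the reference error compounds the imprecision. You also do not address that the paper's $\overline{\mathcal{G}}_{k_2}(\Theta_2)$ is defined via a forward KL $\mathrm{KL}(g_{G^*_1,G_2}\|g_{G^*_1,G^*_2})$, whereas the MLE projection naturally targets the reverse KL; this mismatch should at least be noted before asserting that "the standard machinery applies verbatim."
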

\noindent
Subsequently, we focus on characterizing parameter and expert estimation rates under the dense regime by establishing the Total Variation lower bound 
\begin{align*}
    \inf_{(G^*_1,\check{G}_2)\in\overline{\mathcal{G}}_{k_1,k_2}(\Theta)}\bbE_X[V(g_{G_1,G_2}(\cdot|X),g_{G^*_1,\check{G}_2}(\cdot|X))]\gtrsim \mathcal{D}_4((G_1,G_2),(G^*_1,\check{G}_2)),
\end{align*}
where $\mathcal{D}_4$ is a Voronoi loss that will be defined later in equation~\eqref{eq:loss_4}. Recall that a key step in deriving this lower bound is to decompose the density difference $g_{\widetilde{G}^n_1,\widetilde{G}^n_2}(Y|X)-g_{G^*_1,\check{G}_2}(Y|X)$ into linearly independent terms using Taylor expansions to the functions $x\mapsto\pi(Y|h_1(x,\kappa),\tau)$ and $x\mapsto \sigma(\beta_1^{\top}x+\beta_0)\pi(Y|h_2(x,\eta),\nu)$ w.r.t their parameters $(\kappa,\tau)$ and $(\beta_1,\beta_0,\eta,\nu)$, respectively. Due to the gating change, it is necessary to introduce a new condition on the routed expert function $h_2$ to ensure linear independence among terms in the Taylor expansions. 
\begin{definition}[Weak Identifiability]
    \label{def:weak_identifiability}
    We say that a routed expert function $x\mapsto h_2(x,\eta)$ is weakly identifiable if it is differentiable w.r.t its parameter $\eta$, and if for any $k_2\geq 1$ and distinct parameters $\eta_1,\eta_2,\ldots,\eta_{k_2}$, the following set is linearly independent w.r.t $x$:
    \begin{align*}
        \Bigg\{\frac{\partial h_2}{\partial\eta^{(u_2)}}(x,\eta_i) :i\in[k_2], \ u_2\in[d_2]\Bigg\}.
    \end{align*}
\end{definition}
\noindent
\textbf{Examples.} 
It can be validated that even linear experts of the form $h_2(x,(\eta_1,\eta_0)):=\eta_1^{\top}x+\eta_0$ satisfy the weak identifiability condition. Note that the strong identifiability condition in Definition~\ref{def:strong_identifiability} implies the weak identifiability condition. Therefore,
two-layer FFNs $h_2(x,(\eta_2,\eta_1,\eta_0)):=\eta_2\mathrm{GELU}(\eta_1^{\top}x+\eta_0)$ are also weakly identifiable. On the other hand, input-free experts $h_2(x,\eta)=c(\eta)$ does not meet the weak identifiability condition.

\vspace{0.5em}
\noindent
\textbf{Voronoi loss.} Now, we build a Voronoi loss to capture parameter estimation rates under the dense regime, which is given by
\begin{align}
    \label{eq:loss_4}
    &\mathcal{D}_{4}((G_1,G_2),(G^*_1,\check{G}_2)):=\sum_{j=1}^{k^*_1}\Big|\sum_{i\in\mathcal{V}_{1,j}}\omega_{i}-\oj\Big|+\sum_{j\in[k^*_1]:|\mathcal{V}_{1,j}|=1}\sum_{i\in\mathcal{V}_{1,j}}\omega_{i}(\|\Delta\kappa_{ij}\|+|\Delta\tau_{ij}|)\nonumber\\
    &+\sum_{\substack{j\in[k^*_1]:|\mathcal{V}_{1,j}|>1}}\sum_{i\in\mathcal{V}_{1,j}}\omega_{i}(\|\Delta\kappa_{ij}\|^2+|\Delta\tau_{ij}|^2)+\sum_{j=1}^{k^*_2}\sum_{i\in\mathcal{V}_{2,j}}(\|\beta_{1i}-\check{\beta}_{1j}\|+|\beta_{0i}-\check{\beta}_{0j}|\nonumber\\
    &\hspace{9cm}+\|\eta_{i}-\check{\eta}_{j}\|+|\nu_{i}-\check{\nu}_{j}|).
\end{align}
Given the above loss, we are now ready to present results for the convergence rates of parameter estimation and expert estimation in Theorem~\ref{theorem:weakly_identifiable_experts_sigmoid}, whose proof can be found in Appendix~\ref{appendix:weakly_identifiable_experts_sigmoid}.
\begin{theorem}
    \label{theorem:weakly_identifiable_experts_sigmoid}
    Suppose that the shared expert function $h_1$ is strongly identifiable, while the routed expert function $h_2$ is weakly identifiable. Then, the lower bound
    \begin{align*}
        \inf_{(G^*_1,\check{G}_2)\in\overline{\mathcal{G}}_{k_1,k_2}(\Theta)}\bbE_X[V(g_{G_1,G_2}(\cdot|X),g_{G^*_1,\check{G}_2}(\cdot|X))]\gtrsim \mathcal{D}_4((G_1,G_2),(G^*_1,\check{G}_2))
    \end{align*}
    holds for any $(G_1,G_2)\in\mathcal{G}_{k_1,k_2}(\Theta)$. As a consequence, we have
    \begin{align*}
        \inf_{(G^*_1,\check{G}_2)\in\overline{\mathcal{G}}_{k_1,k_2}(\Theta)}\mathcal{D}_4(\widetilde{G}^n_1,\widetilde{G}^n_2),(G^*_1,\check{G}_2))=\mathcal{O}_P([\log(n)/n]^{\frac{1}{2}}).
    \end{align*}
\end{theorem}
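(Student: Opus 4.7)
My plan is to follow the three-stage template (local Total Variation lower bound $\Rightarrow$ global lower bound $\Rightarrow$ stochastic rate) that was used in the proofs of Theorems~\ref{theorem:strongly_identifiable_experts} and~\ref{theorem:linear_experts}, adapted to the misspecified setting where the target is $(G^*_1,\check{G}_2)$ rather than $(G^*_1,G^*_2)$. I would first reduce the inequality
$$\inf_{\check{G}_2\in\overline{\mathcal{G}}_{k_2}(\Theta_2)}\bbE_X[V(g_{G_1,G_2}(\cdot|X),g_{G^*_1,\check{G}_2}(\cdot|X))]\gtrsim \inf_{\check{G}_2\in\overline{\mathcal{G}}_{k_2}(\Theta_2)}\mathcal{D}_4((G_1,G_2),(G^*_1,\check{G}_2))$$
to its local version (the ratio stays bounded away from $0$ as $\mathcal{D}_4\to 0$), then extend to the global statement by the usual compactness/continuity argument on the compact parameter space $\Theta$. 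Combining the global inequality with Corollary~\ref{corollary:over_regression_regime2} immediately yields the claimed $\widetilde{\mathcal{O}}_P(n^{-1/2})$ rate for the MLE, so the real substance lies in the local bound.

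\textbf{Local argument by contradiction.} Assume the local inequality fails. Then one can extract sequences $(G_1^n,G_2^n)$ and $\check{G}_2^n\in\overline{\mathcal{G}}_{k_2}(\Theta_2)$ with $\mathcal{D}_4\to 0$ but with the TV-to-$\mathcal{D}_4$ ratio going to zero. Passing to subsequences, I may assume that the Voronoi-cell cardinalities $|\mathcal{V}_{1,j}|,|\mathcal{V}_{2,j}|$ stabilise and that $\check{G}_2^n\to\check{G}_2$ with $\check{G}_2\notin\mathcal{G}_{k^*_2}(\Theta_2)$. The next step is a Taylor expansion of the difference $g_{G_1^n,G_2^n}(y|x)-g_{G^*_1,\check{G}_2^n}(y|x)$, split into the shared and routed blocks. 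For the shared block I would use a first-order Taylor expansion of $\pi(y|h_1(x,\kappa),\tau)$ in exactly-specified cells and a second-order expansion in over-specified cells, exactly as in Theorem~\ref{theorem:strongly_identifiable_experts}. For the routed block, distinctness of the atoms of $\check{G}_2$ means that every Voronoi cell $\mathcal{V}_{2,j}$ w.r.t.\ $\check{G}_2^n$ has cardinality one in the limit, so only a first-order Taylor expansion of $(\beta_1,\beta_0,\eta,\nu)\mapsto\sigma(\beta_1^\top x+\beta_0)\pi(y|h_2(x,\eta),\nu)$ is required, which is exactly why only linear terms in the routed parameters appear in $\mathcal{D}_4$.

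\textbf{Extracting limiting coefficients.} Dividing the expansion by $\mathcal{D}_4((G_1^n,G_2^n),(G^*_1,\check{G}_2^n))$ produces a linear combination $\sum_\alpha c_n^\alpha\Phi_\alpha(x,y)+o(1)$ of a finite family of basis functions $\Phi_\alpha$ (derivatives of $\pi(y|h_1(x,\kappa),\tau)$ up to order $2$ together with first-order derivatives of $\sigma(\beta_1^\top x+\beta_0)\pi(y|h_2(x,\eta),\nu)$ w.r.t.\ $(\beta_1,\beta_0,\eta,\nu)$, all evaluated at the limit atoms). By construction of $\mathcal{D}_4$ each $c_n^\alpha$ is bounded and at least one has modulus bounded below. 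A further subsequence then gives limits $c^\alpha$ not all zero, and Fatou's lemma combined with the vanishing of the ratio forces $\sum_\alpha c^\alpha\Phi_\alpha(x,y)=0$ for almost every $(x,y)$. Strong identifiability of $h_1$ kills the shared-block coefficients exactly as before; for the routed block I would combine weak identifiability of $h_2$ (Definition~\ref{def:weak_identifiability}), distinctness of the limit atoms of $\check{G}_2$, the $x$-dependent sigmoid factors $\sigma'(\cdot)$ and $x^{(u)}\sigma'(\cdot)$, and the linear independence in $y$ of the Hermite-type derivatives of the Gaussian density to conclude that all remaining $c^\alpha$ vanish, contradicting the non-triviality of $(c^\alpha)_\alpha$.

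\textbf{Main obstacle.} The hardest step will be rigorously justifying the linear-independence claim for the routed block under only the weak identifiability assumption on $h_2$. Unlike the softmax case, the normalised sigmoid gating has no translation invariance (no condition of the form $\beta^*_{1k^*_2}=0$ is imposed) and is genuinely nonlinear, so it does not create the PDE identities that complicated the analysis of Theorem~\ref{theorem:linear_experts}; on the other hand, one loses the algebraic cancellations that softmax normalisation provides. The obstacle therefore reduces to an algebraic decoupling argument combining (i) distinctness of the atoms of $\check{G}_2$ guaranteed by $\check{G}_2\notin\mathcal{G}_{k^*_2}(\Theta_2)$, (ii) the $x$-separability of the $\partial h_2/\partial\eta$ terms provided by weak identifiability, and (iii) orthogonality in $y$ of the Hermite derivatives of $\pi$, to rule out all spurious linear dependencies among the routed basis functions. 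Once this is in hand, the remainder of the proof is routine bookkeeping.
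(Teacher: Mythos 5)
Your proposal is correct and follows essentially the same route as the paper's proof: a local-then-global Total Variation lower bound established by contradiction, with first-order Taylor expansions in the routed block (justified, as in the paper, by the fact that the routed Voronoi cells relative to $\check{G}_2$ are singletons), second-order expansions only in over-specified shared cells, Fatou's lemma to pass to limiting coefficients, and strong/weak identifiability plus linear independence in $Y$ of the Gaussian derivatives to force all coefficients to vanish, after which the rate follows from Corollary~\ref{corollary:over_regression_regime2}. The only cosmetic difference is that you let $\check{G}_2^n$ vary along the sequence and pass to a limit, whereas the paper fixes an arbitrary $\check{G}_2$ and proves the bound uniformly; this does not change the substance of the argument.
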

\noindent
A few comments regarding the results of the above theorem are in order.

\vspace{0.5em}
\noindent
\emph{(i) Shared experts:} It can be seen from the formulation of the Voronoi loss $\mathcal{D}_{4}$ that the estimation rates for shared experts remain unchanged compared to those in Theorem~\ref{theorem:strongly_identifiable_experts_sigmoid}, which are of the orders $\widetilde{\mathcal{O}}_P(n^{-1/2})$ for exactly-specified ones and $\widetilde{\mathcal{O}}_P(n^{-1/4})$ for over-specified ones. However, there are changes in the estimation rates for routed experts.

\vspace{0.5em}
\noindent
\emph{(ii) Routed experts:} In particular, the convergence rates of parameter estimation $\widetilde{\eta}^n_{i}$ are of parametric order $\widetilde{\mathcal{O}}_P(n^{-1/2})$. Since the routed expert function $h_2(x,\eta)$ is Lipschitz continuous w.r.t its parameter $\eta$, then the rates for estimating both exactly-specified and over-specified routed experts are of order $\widetilde{\mathcal{O}}_P(n^{-1/2})$. These rates are substantially faster than those when using the standard softmax gating in Theorem~\ref{theorem:strongly_identifiable_experts} and Theorem~\ref{theorem:linear_experts}, which are of orders $\widetilde{\mathcal{O}}_P(n^{-1/4})$ and $\widetilde{\mathcal{O}}_P(n^{-1/r_2(|\mathcal{V}_{2,j}|)})$, respectively.

\vspace{0.5em}
\noindent
\emph{(iii) Sample efficiency of the normalized sigmoid gating:} As a result, when using the normalized sigmoid gating, then we need only $\mathcal{O}(\epsilon^{-2})$ to approximate routed experts with a given error $\epsilon$, even if they are of linear form. On the other hand, when using the softmax gating, it requires $\mathcal{O}(\epsilon^{-4})$ data points to estimate strongly identifiable experts. Furthermore, if the routed experts are of linear form, then we need $\mathcal{O}(\epsilon^{-r_2(|\mathcal{V}_{2,j}|)})$ data points to estimate, which is equivalent to $\mathcal{O}(\epsilon^{-12})$ when these routed experts have three fitted experts, that is, $|\mathcal{V}_{2,j}|=3$. Hence, the key finding is that when using the normalized sigmoid gating, fewer data points are needed to estimate routed experts.

\section{Experiments} \label{sec:experiments}
\vspace{-0.3em}

In this section, we empirically validate the theoretical findings in the previous section. Using synthetic data, we demonstrate the convergence behavior of the maximum likelihood estimator $(\widehat{G}^n_1,\widehat{G}^n_2)$  towards the true mixing measure $(G^*_1,G^*_2)$ (Section~\ref{subsec:numerical_exp}). In real-world scenarios, we evaluate our methodology on language modeling tasks using the SlimPajama corpus \cite{cerebras2023slimpajama} (Section~\ref{subsec:lm_modeling}), and extend our evaluation to vision-language modeling benchmarks using the LLaVA architecture \cite{liu2023llava} integrated within the LIBMoE framework \cite{nguyen2024libmoe} (Section~\ref{subsec:vlm_modeling}). Our empirical study compares four model configurations: Vanilla SMoE, DeepSeek-V3 (shared experts combined with normalized sigmoid gating), DeepSeek-V2 (shared experts with softmax routing), and SMoE Sigmoid Gating (normalized sigmoid gating without shared experts).

\subsection{Numerical Experiments}  \label{subsec:numerical_exp}

\begin{figure}[t!]
    \centering

  % First sub-figure
  \begin{subfigure}{0.49\textwidth}
    \centering
    \includegraphics[width=\linewidth]{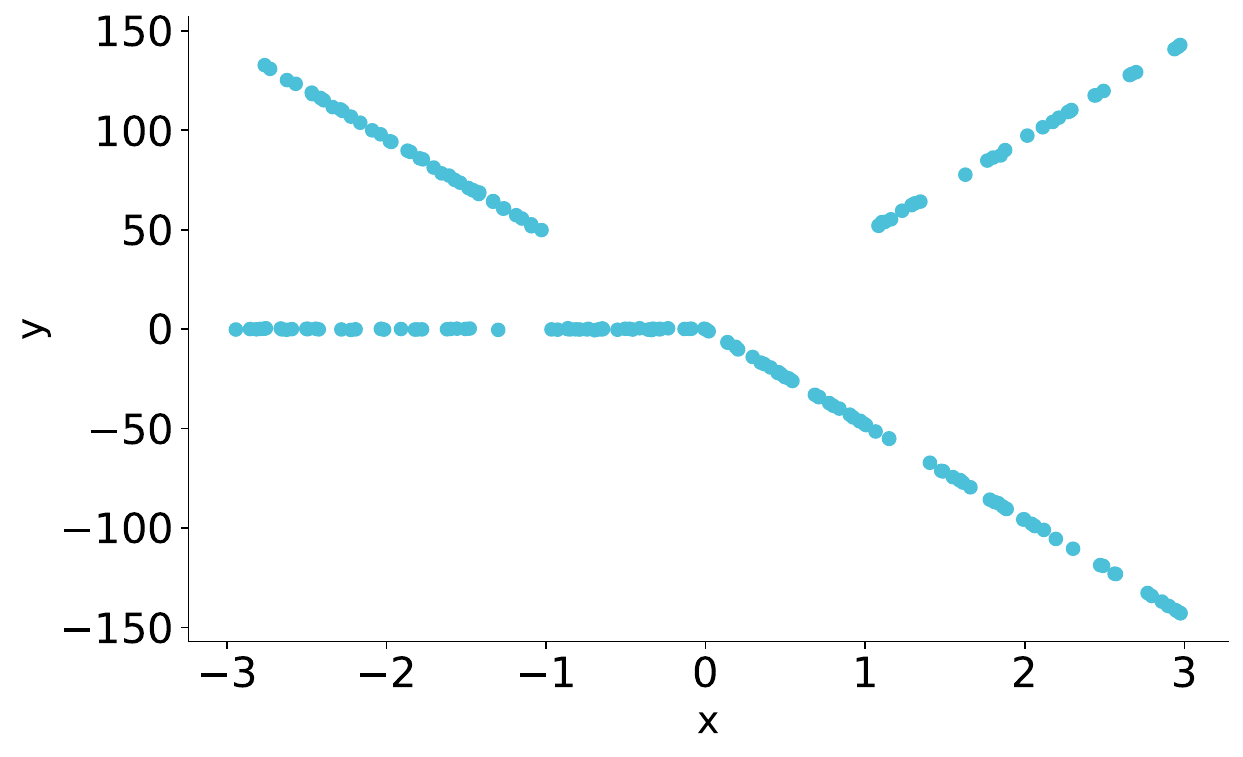}
    \caption{Theorem 1}
    \label{fig:data_viz_theorem1}
  \end{subfigure}
  \hfill
  % Second sub-figure
  \begin{subfigure}{0.49\textwidth}
    \centering
    \includegraphics[width=\linewidth]{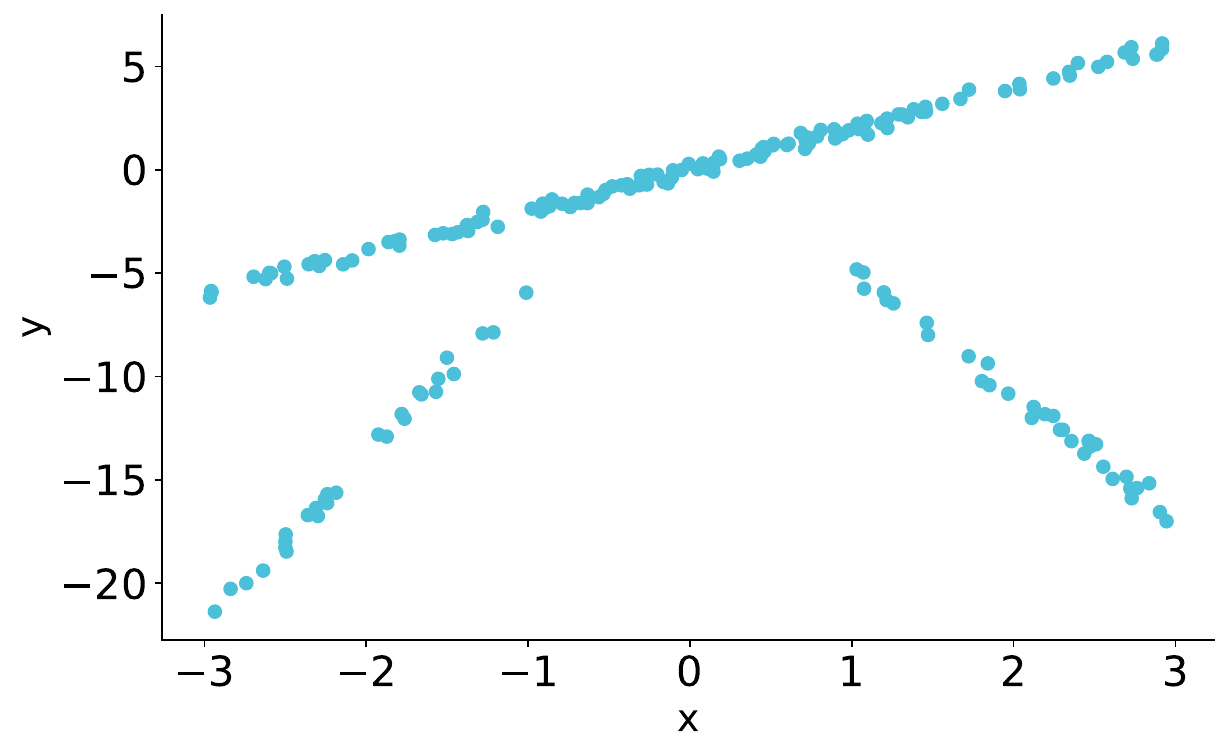}
    \caption{Theorem 2}
    \label{fig:data_viz_theorem2}
  \end{subfigure}

  % First sub-figure
  \begin{subfigure}{0.49\textwidth}
    \centering
    \includegraphics[width=\linewidth]{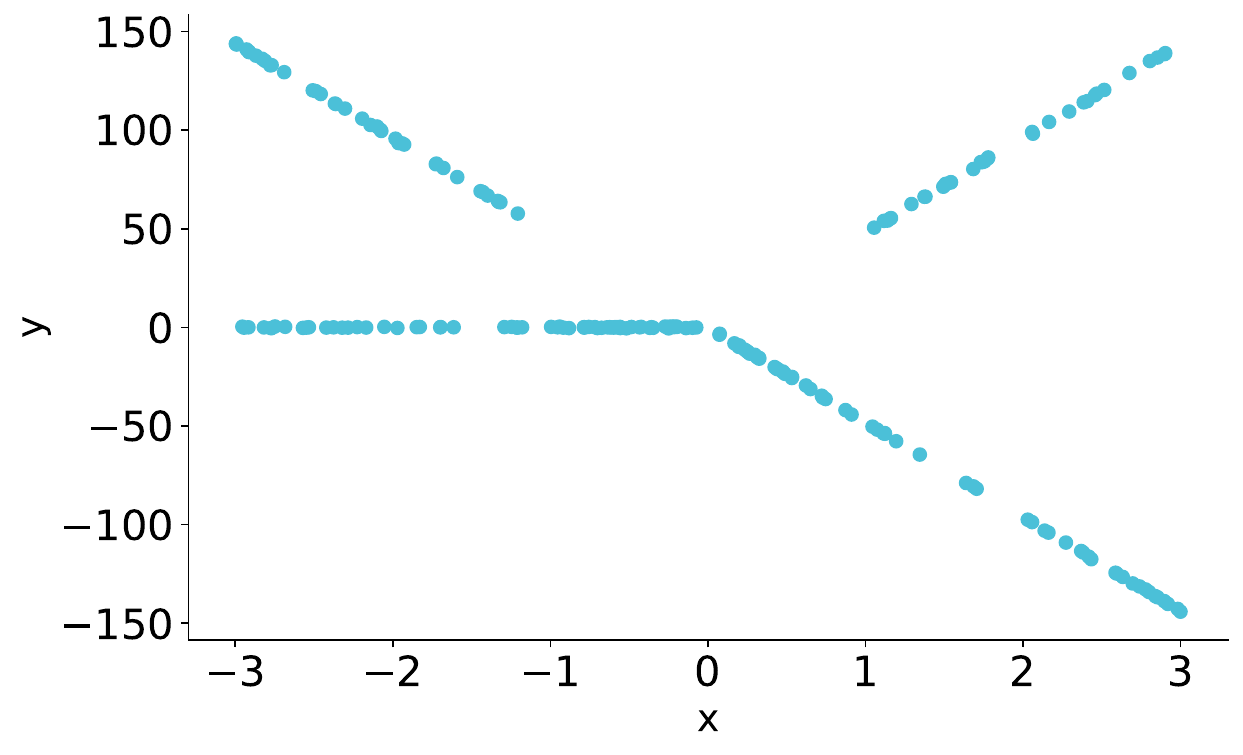}
    \caption{Theorem 3}
    \label{fig:data_viz_theorem3}
  \end{subfigure}
  \hfill
  % Second sub-figure
  \begin{subfigure}{0.49\textwidth}
    \centering
    \includegraphics[width=\linewidth]{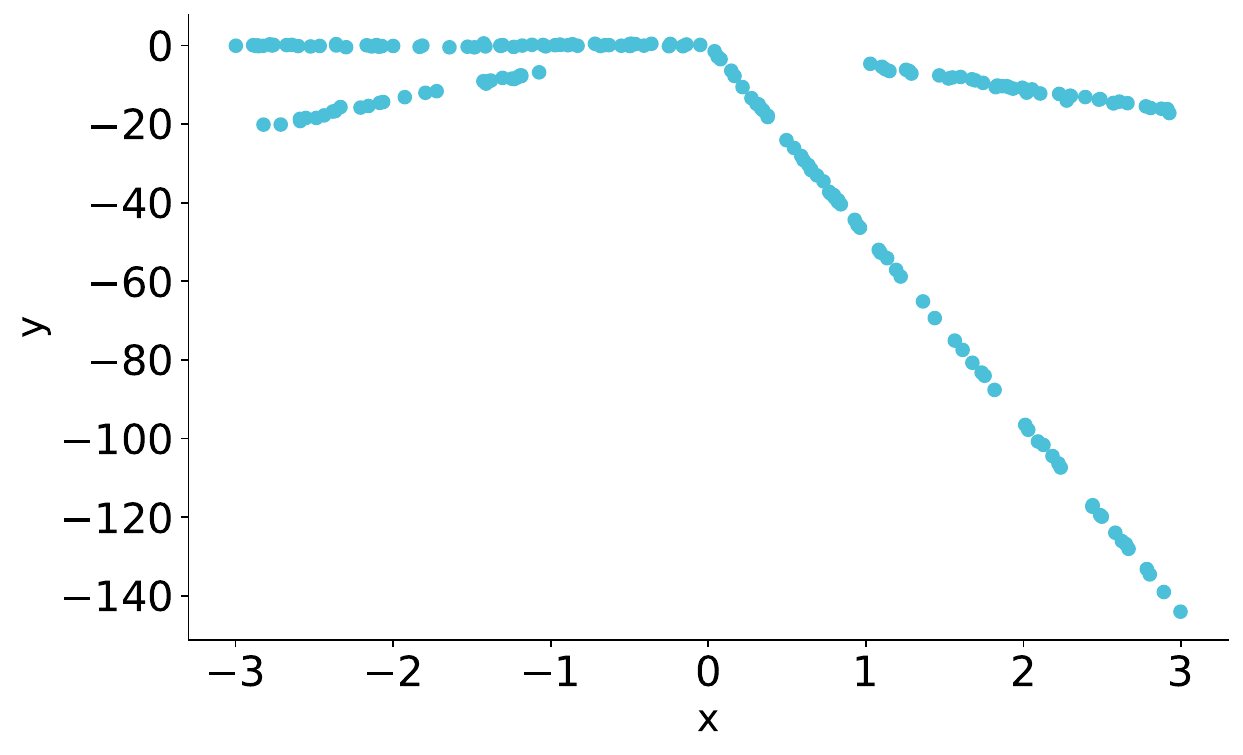}
    \caption{Theorem 4}
    \label{fig:data_viz_theorem4}
  \end{subfigure}

    \caption{Empirical illustration of the input - output relationship $(X, Y)$ under synthetic conditions for each theoretical result. Each subplot corresponds to a different theoretical setting: (a) Theorem 1, (b) Theorem 2, (c) Theorem 3, and (d) Theorem 4.}
    \label{fig:data_viz_numerical_exp}
\end{figure}

% We conduct numerical experiments on synthetic datasets to empirically validate our theoretical findings. Initially, we generate datasets by sampling covariates $X_i$ from a normal distribution, subsequently drawing responses $Y_i$ from the associated true conditional density $g_{G^*}(Y, X)$. For each generated dataset, we compute the Maximum Likelihood Estimation (MLE) $\hat{G}n$ utilizing the Expectation-Maximization (EM) algorithm \cite{dempster1977maximum}. However, since there are not any closed-form expressions for updating the gating parameters $\beta_{0i}$, $\beta_{1i}$ in the maximization steps, we have to leverage an EM-based numerical scheme, which was previously used in \cite{chamroukhi2009time}. Our empirical study investigates four distinct experimental setups, each aligned with the theoretical scenarios detailed in our main paper. Each configuration includes 40 independent sample generations over a comprehensive range of sample sizes $n$, specifically $n \in [10^2, 10^5]$. Across all settings, we maintain a consistent structure with one shared expert ($k_1^ = 1$) and two routed experts ($k_2^* = 2$).

\subsubsection{Experimental Setup}

\textbf{Synthetic Data.} For each sample size $n$, we generate i.i.d samples $\{(X_i, Y_i)\}^n_{i=1}$ by first sampling $X_i$’s from the uniform distribution $\text{Uniform}[-3, 3]$ and then sampling $Y_i$’s from the true conditional density $f_{G^*_1,G^*_2}(Y|X)$ or $g_{G^*_1,G^*_2}(Y|X)$ of Gaussian mixture of experts (MoE) model setting of each theorem configuration. Figure~\ref{fig:data_viz_numerical_exp} shows the visualization of the relationship between $X$ and $Y$ in each experiment.

\vspace{0.5em}
\noindent
\textbf{Maximum Likelihood Estimation (MLE).} A popular approach to determining the MLE $(\widehat{G}^n_1,\widehat{G}^n_2)$ for each set of samples is to use the Expectation-Maximization (EM) algorithm \cite{dempster1977maximum}. However, since there are not any closed-form expressions for updating the gating parameters $\beta_{0i}$, $\beta_{1i}$ in the maximization steps, we have to leverage an EM-based numerical scheme, which was previously used in \cite{chamroukhi2009time}. We select the convergence criterion of $\epsilon = 10^{-6}$ and run a maximum of 1000 EM iterations.

\vspace{0.5em}
\noindent
\textbf{Experiment Design. }  Our empirical investigation systematically examines four experimental configurations, each precisely corresponding to the theoretical scenarios elaborated in our main paper. Each configuration includes 40 independent sample generations over a comprehensive range of sample sizes $n$, specifically $n \in [10^2, 10^5]$. To ensure consistency and comparative clarity across experiments, we uniformly adopt an architecture consisting of one shared expert ($k_1^* = 1$) complemented by two routed experts ($k_2^* = 2$), where we fit two shared experts ($k_1 = 2$) and three routed experts ($k_2 = 3$) in our experiment settings.

\subsubsection{Theorem 1}

The problem setting is defined in equation~\eqref{eq:density}, where we choose expert functions $h_1$ and $h_2$ to satisfy the strong identifiability condition, specifically $h_1(x,(\kappa_2,\kappa_1,\kappa_0)):=\kappa_2\mathrm{GELU}(\kappa_1^{\top}x+\kappa_0)$ and $h_2(x,(\eta_2,\eta_1)):=\eta_2\mathrm{GELU}(\eta_1^{\top}x)$.  The ground-truth parameters employed in our experiments are presented as follows:

\begin{align*}
\omega^* &= 1.0, & \kappa_0^* &= 0, & \kappa_1^* &= 6, & \kappa_2^* &= -8, & \tau^* &= 0.25, \\
\beta_{01}^* &= -0.5, & \beta_{11}^* &= 5, & \eta_{11}^* &= -12, & \eta_{21}^* &= 4, & \nu_1^* &= 0.4, \\
\beta_{02}^* &= 0.5, & \beta_{12}^* &= 5, & \eta_{12}^* &= 12, & \eta_{22}^* &= 4, & \nu_2^* &= 0.4, \\
\end{align*}

\noindent
As illustrated in Figure~\ref{fig:numerical_exp_theorem1}, the MLE $(\widehat{G}^n_1,\widehat{G}^n_2)$  exhibits empirical convergence to the ground-truth counterpart $(G^*_1,G^*_2)$ under the Voronoi metric $\mathcal{D}_1$ (see equation~\eqref{eq:loss_1}) at the rate of order $\mathcal{O}(n^{-0.45})$. This empirically observed rate closely matches the theoretical convergence rate of order $\mathcal{O}_P([\log(n)/n]^{1 / 2})$ established in Theorem~\ref{theorem:strongly_identifiable_experts}, thus validating our theoretical results under the assumptions of Theorem~\ref{theorem:strongly_identifiable_experts}.

\begin{figure}[t!]
    \centering

  % First sub-figure
  \begin{subfigure}{0.49\textwidth}
    \centering
    \includegraphics[width=\linewidth]{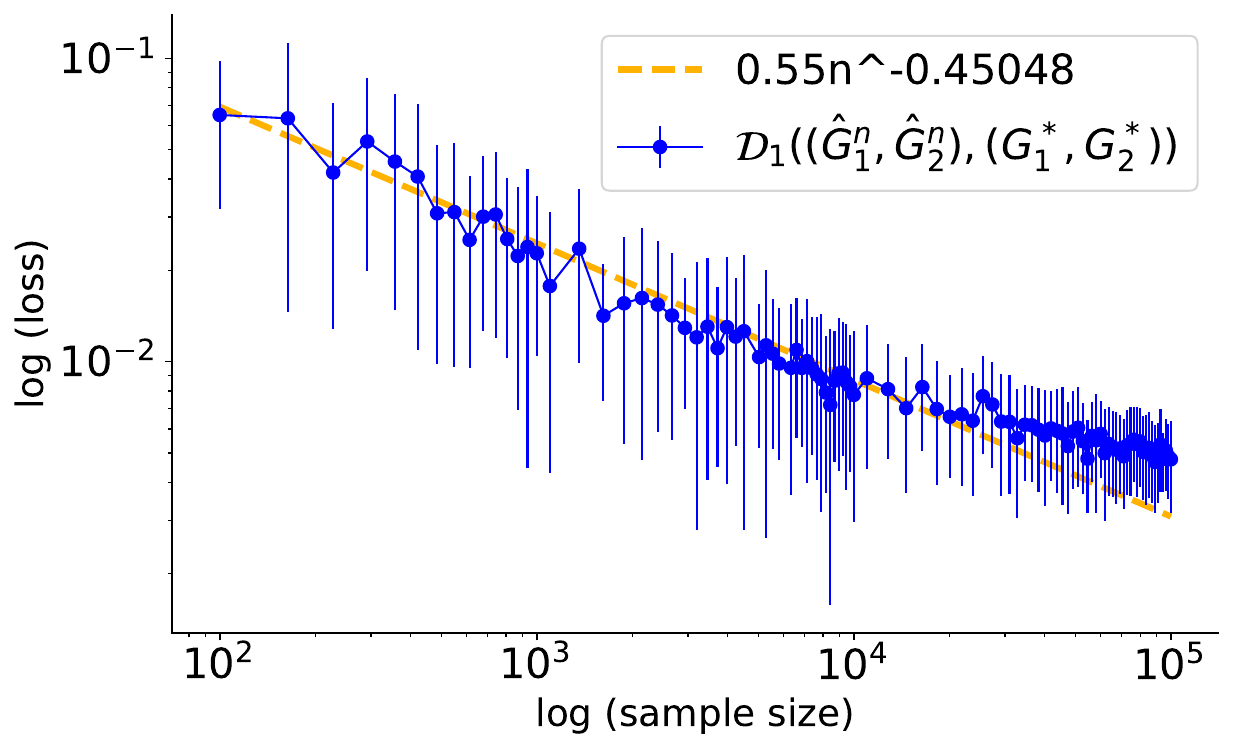}
    \caption{Theorem 1}
    \label{fig:numerical_exp_theorem1}
  \end{subfigure}
  \hfill
  % Second sub-figure
  \begin{subfigure}{0.49\textwidth}
    \centering
    \includegraphics[width=\linewidth]{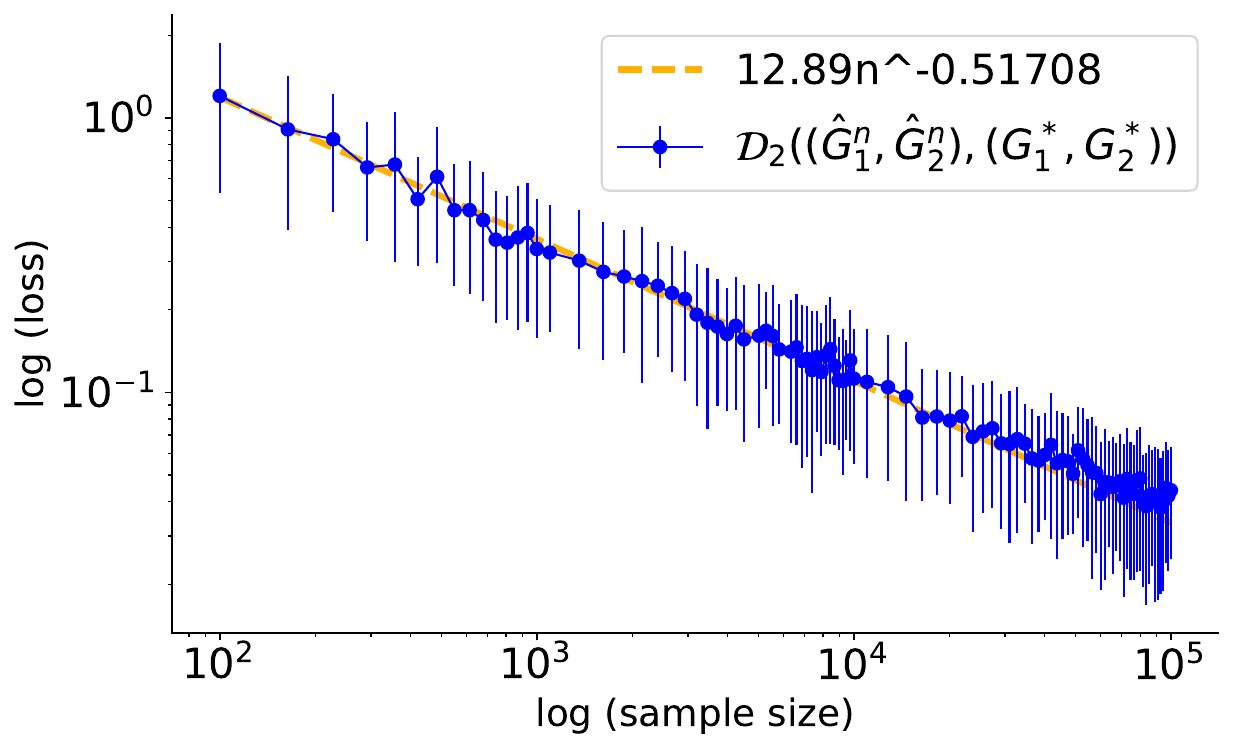}
    \caption{Theorem 2}
    \label{fig:numerical_exp_theorem2}
  \end{subfigure}

  % First sub-figure
  \begin{subfigure}{0.49\textwidth}
    \centering
    \includegraphics[width=\linewidth]{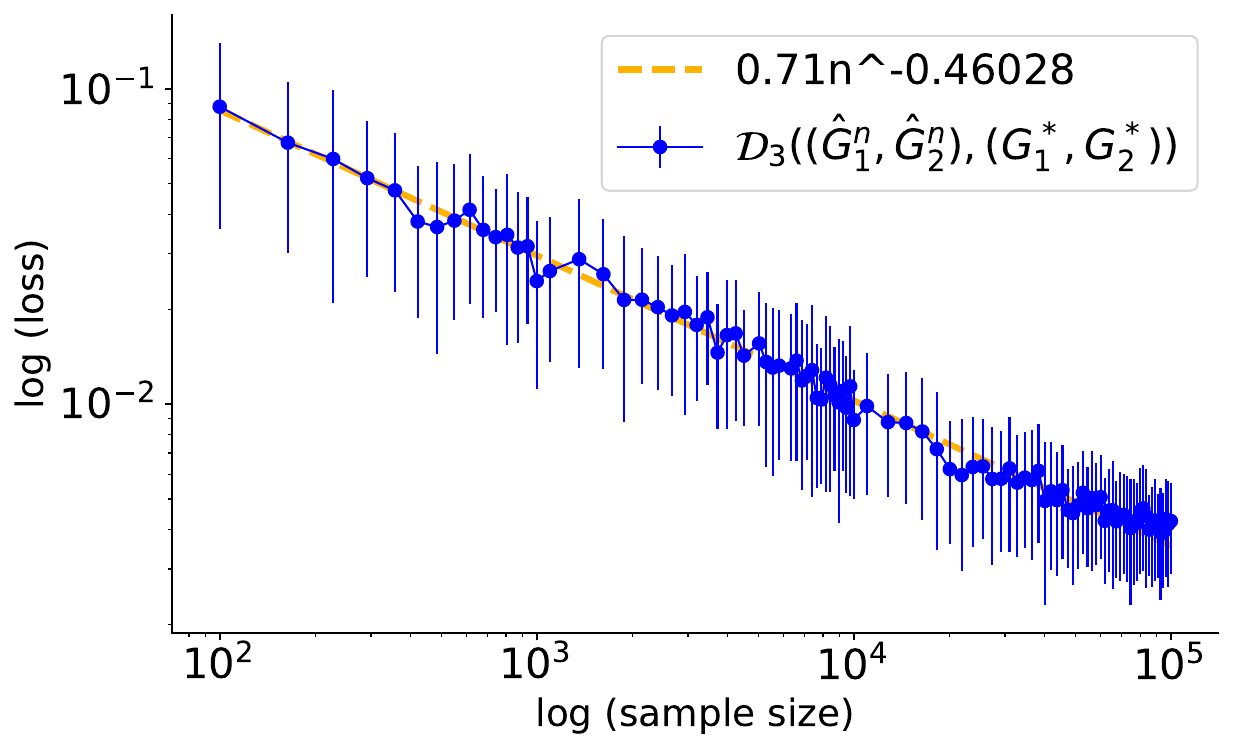}
    \caption{Theorem 3}
    \label{fig:numerical_exp_theorem3}
  \end{subfigure}
  \hfill
  % Second sub-figure
  \begin{subfigure}{0.49\textwidth}
    \centering
    \includegraphics[width=\linewidth]{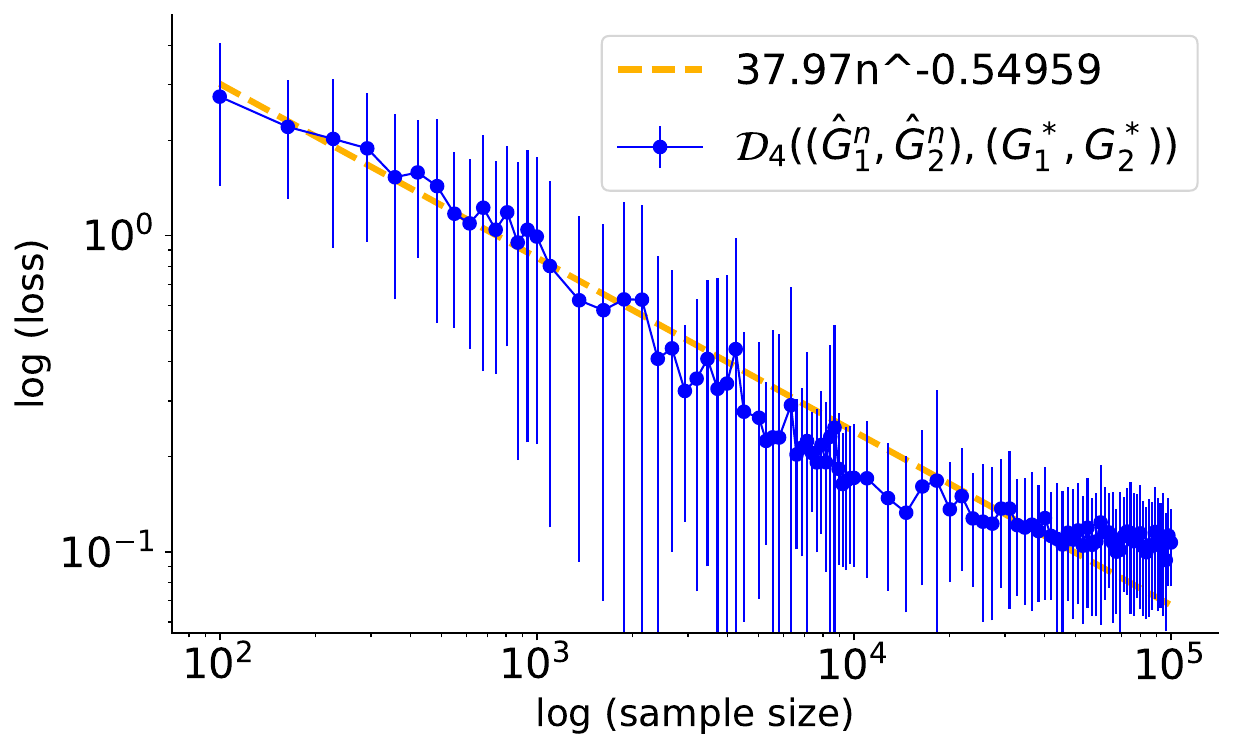}
    \caption{Theorem 4}
    \label{fig:numerical_exp_theorem4}
  \end{subfigure}

    \caption{Log-log scaled plots illustrating simulation results with different model settings. The blue curves depict the mean discrepancy between the MLE $(\widehat{G}^n_1,\widehat{G}^n_2)$  and the true mixing measure $(G^*_1,G^*_2)$ accompanied by error bars representing the standard deviation over 40 times of experiments for each sample size $n$. Additionally, an orange dash-dotted line represents the least-squares fitted linear regression line for these data points.}
    \label{fig:numerical_exp}
\end{figure}

\subsubsection{Theorem 2}

In this experiment, we consider expert functions $h_1$ and $h_2$ of linear forms as in Section~\ref{sec:linear_experts}. Specifically, we set $h_1(x,(\kappa_1,\kappa_0)):=\kappa_1^{\top}x+\kappa_0$ and $h_2(x,(\eta_1,\eta_0)):=\eta_1^{\top}x+\eta_0$, with the associated ground-truth parameters defined as follows:

\begin{align*}
\omega^* &= 1.0, & \kappa_0^* &= 0, & \kappa_1^* &= 2, & \tau^* &= 0.2, \\
\beta_{01}^* &= -0.5, & \beta_{11}^* &= 5, & \eta_{11}^* &= 8, & \eta_{01}^* &= 2, & \nu_1^* &= 0.4, \\
\beta_{02}^* &= 0.5, & \beta_{12}^* &= 5, & \eta_{12}^* &= -6, & \eta_{02}^* &= 1, & \nu_2^* &= 0.4, \\
\end{align*}

\noindent
The result is shown in Figure~\ref{fig:numerical_exp_theorem2}. Under the linear expert setting and the Voronoi loss $\mathcal{D}_2$ (see equation~\eqref{eq:loss_2}), the MLE admits the empirical convergence rate of order $\mathcal{O}(n^{-0.517})$, which totally aligns with the theoretical rate $\mathcal{O}_P([\log(n)/n]^{1 / 2})$ in Theorem~\ref{theorem:linear_experts}.

\subsubsection{Theorem 3}

This experiment is designed to empirically validate Theorem~\ref{theorem:strongly_identifiable_experts_sigmoid} under the setting of normalized sigmoid gating in Section~\ref{sec:normalized_sigmoid_gating}. Under the sparse regime, we set all over-specified parameters $\beta^*_{1i}$ to be zero vectors. Here, we select strongly identifiable experts $h_1(x,(\kappa_2,\kappa_1,\kappa_0)):=\kappa_2\mathrm{GELU}(\kappa_1^{\top}x+\kappa_0)$ and $h_2(x,(\eta_2,\eta_1)):=\eta_2\mathrm{GELU}(\eta_1^{\top}x)$ as required in Theorem~\ref{theorem:strongly_identifiable_experts_sigmoid}. The complete set of ground-truth parameters used in this experiment is given by

\begin{align*}
\omega^* &= 1.0, & \kappa_0^* &= 0, & \kappa_1^* &= 6, & \kappa_2^* &= -8, & \tau^* &= 0.25, \\
\beta_{01}^* &= -0.5, & \beta_{11}^* &= 0, & \eta_{11}^* &= -12, & \eta_{21}^* &= 4, & \nu_1^* &= 0.4, \\
\beta_{02}^* &= 0.5, & \beta_{12}^* &= 0, & \eta_{12}^* &= 12, & \eta_{22}^* &= 4, & \nu_2^* &= 0.4, \\
\end{align*}

% Figure~\ref{fig:numerical_exp_theorem3} show the result of our experiment. Under the sparse regime and normalized sigmoid gating, the result shows the close convergence rate of MLE $\hat{G}_n$ to the true mixing measure $G^*$ with $\mathcal{O}_P([\log(n)/n]^{0.46})$ under the Voronoi metric $\mathcal{D}_3$ (equation~\eqref{eq:loss_3}). This result aligns with the theoretical finding in Theorem~\ref{theorem:strongly_identifiable_experts_sigmoid}, where the normalized sigmoid gating does not have clear advantages over the standard softmax gating under the sparse regime.

\noindent
Figure~\ref{fig:numerical_exp_theorem3} presents the experimental results for the convergence analysis under the sparse regime utilizing normalized sigmoid gating. The MLE $(\widehat{G}^n_1,\widehat{G}^n_2)$  empirically converges to the true mixing measure $(G^*_1,G^*_2)$ at a rate of $\mathcal{O}(n^{-0.46})$ under the Voronoi loss $\mathcal{D}_3$ (see equation~\eqref{eq:loss_3}). This empirical convergence rate is closely aligned with the theoretical rate of order $\mathcal{O}_P([\log(n)/n]^{1/2})$  presented in Theorem~\ref{theorem:strongly_identifiable_experts_sigmoid}. 
%Consistent with the theorem's implications, our experimental results suggest that under the sparse regime, normalized sigmoid gating does not exhibit significant advantages in terms of convergence speed compared to standard softmax gating mechanisms.

\subsubsection{Theorem 4}

In this experiment, we adopt the same problem setting of Theorem~\ref{theorem:weakly_identifiable_experts_sigmoid}. With the normalized sigmoid gating under the dense regime, we chosse the shared expert function $h_1$ to be strongly identifiable, while the routed expert function $h_2$ is weakly identifiable. Specifically, we set $h_1(x,(\kappa_2,\kappa_1,\kappa_0)):=\kappa_2\mathrm{GELU}(\kappa_1^{\top}x+\kappa_0)$ and $h_2(x,(\eta_1,\eta_0)):=\eta_1^{\top}x+\eta_0$. The complete set of ground-truth parameters used in this experiment is detailed below:

\begin{align*}
\omega^* &= 1.0, & \kappa_0^* &= 0, & \kappa_1^* &= 6, & \kappa_2^* &= -8, & \tau^* &= 0.25, \\
\beta_{01}^* &= -0.5, & \beta_{11}^* &= 5, & \eta_{11}^* &= 8, & \eta_{01}^* &= 2, & \nu_1^* &= 0.4, \\
\beta_{02}^* &= 0.5, & \beta_{12}^* &= 5, & \eta_{12}^* &= -6, & \eta_{02}^* &= 1, & \nu_2^* &= 0.4, \\
\end{align*}

% {\textcolor{red}{TODO}}
\noindent
Figure~\ref{fig:numerical_exp_theorem4} presents the numerical results corresponding to Theorem~\ref{theorem:weakly_identifiable_experts_sigmoid}. Under the dense regime, the MLE achieves an empirical convergence rate of order $\mathcal{O}(n^{-0.55})$, which matches the theoretical rate of order $\mathcal{O}_P([\log(n)/n]^{1/2})$. This empirical evidence substantiates the results of  Theorem~\ref{theorem:weakly_identifiable_experts_sigmoid}. 
%suggesting that the use of normalized sigmoid gating contributes to improved sample efficiency in DeepSeekMoE.

%\input{Experiments/g_additional_exp_results/language_modeling}
%\input{Experiments/g_additional_exp_results/vision_language_modeling}

\renewcommand{\arraystretch}{1}
\begin{table}[t!]
\centering
\small

\caption{\small Performance comparisons of different Sparse Mixture of Experts (SMoE) models on subsets of the SlimPajama dataset using a small-scale model with 158M parameters and large-scale model with 679M parameters. (SMoE-SG refers to SMoE Sigmoid Gating).  PPL indicates the perplexity score.}

\resizebox{\textwidth}{!}{
\begin{tabular}{l cccc| cccc}
\toprule
 & \multicolumn{4}{c}{\textbf{Small Models (158M)}} & \multicolumn{4}{c}{\textbf{Large Models (679M)}}\\
\cmidrule(lr){2-5}\cmidrule(lr){6-9}

 & \makecell{\textbf{SMoE}} & \makecell{\textbf{DeepSeek-V3}} & \makecell{\textbf{DeepSeek-V2}} & \makecell{\textbf{SMoE-SG}} & \makecell{\textbf{SMoE}} & \makecell{\textbf{DeepSeek-V3}} & \makecell{\textbf{DeepSeek-V2}} & \makecell{\textbf{SMoE-SG}} \\

% PPL $\downarrow$ & 13.63 & 13.42 & 13.61 & 13.49 & 13.63 & 13.42 & 13.61 & 13.49 \\
\midrule
\textbf{PPL} $\downarrow$ & 13.63 & \textbf{13.42} & \underline{13.49} & 13.61 &  9.51 & \underline{9.49} & 9.52 & \textbf{9.46} \\
\midrule
\textbf{LAMBADA} & 25.27\% & 25.49\% & 25.29\% & 25.43\% & 37.13\% & 36.88\% & 37.11\% & 37.56\% \\
\textbf{BLiMP} & 77.71\% & 77.20\% & 77.37\% & 77.38\% & 80.47\% & 81.28\% & 80.98\% & 81.08\% \\
\textbf{CBT} & 84.18\% & 84.40\% & 84.33\% & 84.23\% & 89.83\% & 89.65\% & 89.93\% & 89.57\% \\
\textbf{HellaSwag} & 29.43\% & 29.38\% & 29.38\% & 29.13\% & 37.49\% & 37.32\% & 37.14\% & 37.52\% \\
\textbf{PIQA} & 57.94\% & 59.14\% & 60.17\% & 58.92\% & 64.36\% & 65.72\% & 64.36\% & 64.91\% \\
% \textbf{ARC-Challenge} & 21.20\% & 21.63\% & 20.52\% & 21.37\% & 23.09\% & 23.95\% & 24.21\% & 23.09\% \\
\textbf{ARC-Easy} & 32.68\% & 32.52\% & 33.83\% & 32.73\% & 38.22\% & 38.86\% & 38.06\% & 39.15\% \\
\textbf{RACE} & 30.11\% & 30.60\% & 31.02\% & 31.05\% & 33.03\% & 33.12\% & 33.17\% & 32.68\% \\
\textbf{SIQA} & 35.62\% & 35.57\% & 34.90\% & 34.90\% & 37.41\% & 38.59\% & 36.95\% & 37.67\% \\
\textbf{CommonSenseQA} & 24.65\% & 25.47\% & 24.98\% & 24.90\% & 26.54\% & 28.09\% & 27.35\% & 28.50\% \\
\midrule
\rowcolor{customyellow!80} % This highlights the entire row
\textbf{Average} & 44.18\% & \underline{44.42\%} & \textbf{44.66\%} & 44.30\% & 49.39\% & \textbf{49.95\%} & 49.45\%  & \underline{49.85\%}  \\
\bottomrule
\end{tabular}}

\label{tab:lm_result}
\end{table}

\begin{figure}[t!]
    \centering
    \includegraphics[width=.9\linewidth]{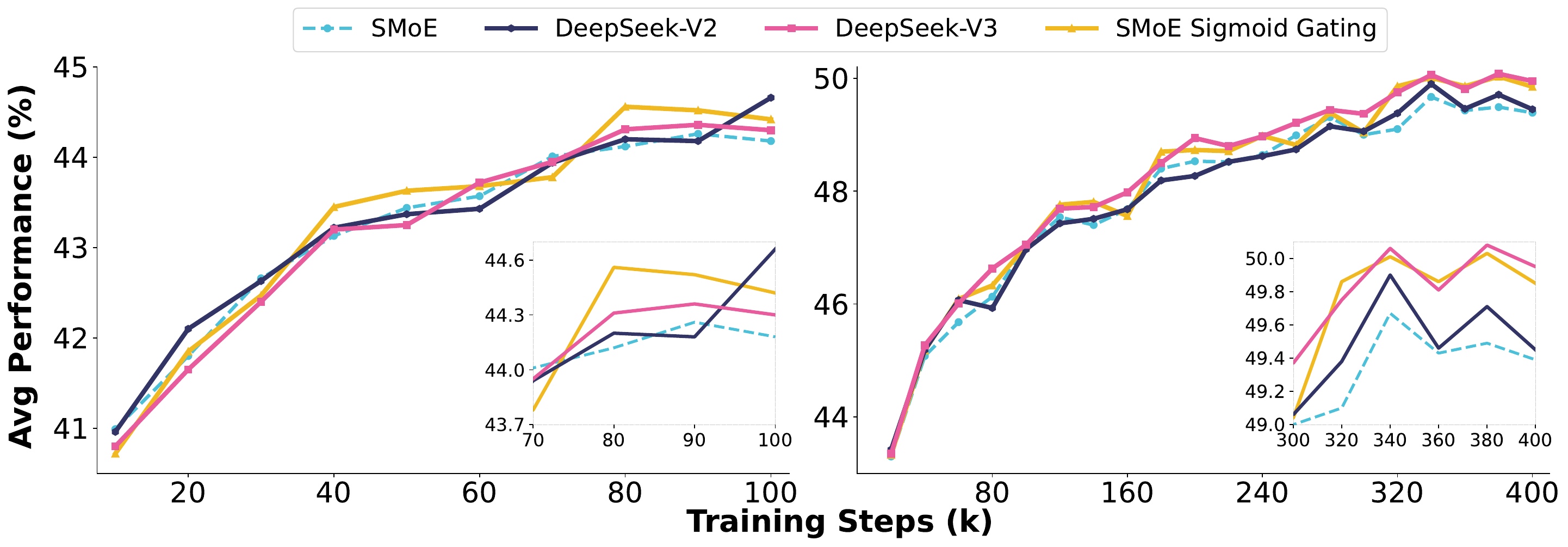}
    \caption{\small Average performance (\%) over training steps in language modeling tasks. \textbf{Left:} Model with 158M parameters; \textbf{Right:} Model with 679M parameters.}
    \label{fig:lm_overtime}
    \vspace{-0.8em}
\end{figure}

\subsection{Language Modeling}  \label{subsec:lm_modeling}

\noindent
\textbf{Experimental Setup.} We conduct the experiments on language modeling using subsets of the popular SLimPajama \cite{cerebras2023slimpajama} dataset using Switch Transformer \cite{fedus2022switch} baseline in two scales: small (158M parameters trained on 6.5B tokens) and large (679M parameters trained on 26.2B tokens). The models are configured with 66 total experts, utilizing top-8 expert routing in the baseline and a top-6 plus 2 shared experts routing scheme in the DeepSeek variants. We measure model performance in terms of perplexity and zero-shot accuracy across nine diverse downstream evaluation tasks \cite{paperno2016lambada,warstadt2020blimp,hill2015goldilocks,zellers2019hellaswag,bisk2020piqa,clark2018think,lai-etal-2017-race,sap2019socialiqa,talmor2018commonsenseqa}. Full experimental details are provided in Appendix \ref{appx:lm_setting}.

% LAMBADA \cite{paperno2016lambada}, BLiMP \cite{warstadt2020blimp}, Children’s Book Test \cite{hill2015goldilocks}, HellaSwag \cite{zellers2019hellaswag}, PIQA\cite{bisk2020piqa},  ARC-Challenge \cite{clark2018think}, RACE \cite{lai-etal-2017-race}, SIQA \cite{sap2019socialiqa} and CommonSenseQA \cite{talmor2018commonsenseqa}. Full experimental details are provided in Appendix \ref{appx:lm_setting}.

\vspace{0.5em}
\noindent 
\textbf{Zero-shot performance on downstream tasks.} Table~\ref{tab:lm_result} summarizes our primary experimental results for two model sizes trained on the SlimPajama dataset \cite{cerebras2023slimpajama}. The results clearly demonstrate that both DeepSeek-V3 and DeepSeek-V2 consistently outperform the Vanilla SMoE baseline, achieving lower perplexity (PPL) scores and higher average accuracy across various downstream tasks for both model scales. Additionally, we integrated the normalized sigmoid gating into the Vanilla SMoE architecture and observed that the SMoE Sigmoid Gating achieves superior performance compared to the Vanilla SMoE and, in some benchmarks, even surpasses the DeepSeek variants.

\vspace{0.5em}
\noindent 
\textbf{Convergence Rate.} Figure \ref{fig:lm_overtime} presents the average performance across various downstream tasks for DeepSeek-V3 and DeepSeek-V2 compared to the Vanilla SMoE. Across both model sizes, the DeepSeek variants demonstrate substantially faster convergence.  Specifically, in both 158M and 679M parameter scales, DeepSeek-V3 and DeepSeek-V2 consistently reach the final performance of Vanilla SMoE using only 70-80\% of the total training steps. Notably, DeepSeek-V3, which incorporates normalized sigmoid gating, demonstrates marginal improvements over DeepSeek-V2 in both convergence speed and final task performance. These results highlight the efficiency gains introduced by the shared expert and normalized sigmoid gating mechanisms and provide empirical support for our theoretical findings.

\begin{figure}[ht]
    \centering
    \includegraphics[width=\linewidth]{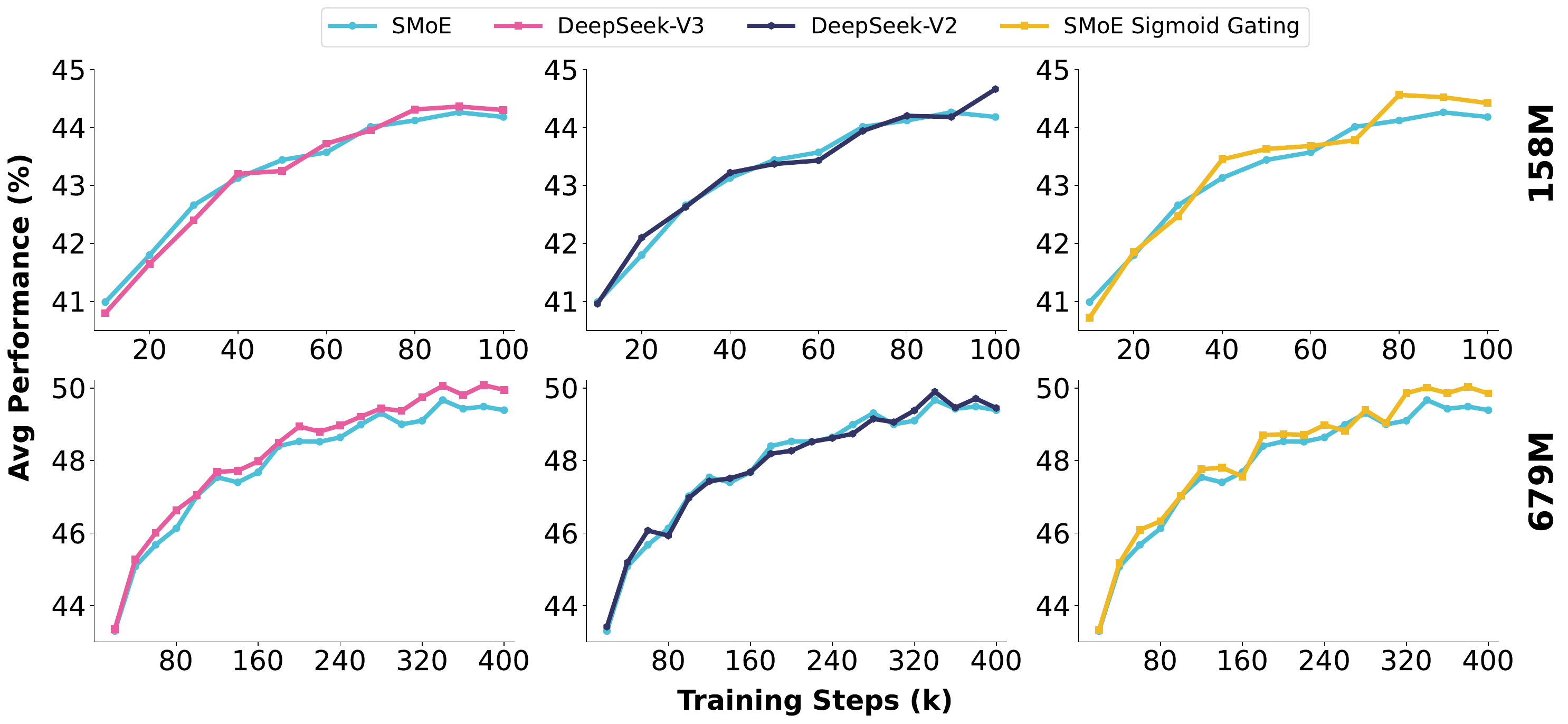}
    \caption{Average performance (\%) compared in pairs with Vanilla SMoE across three model settings over training steps on language modeling tasks. \textbf{Left:} Vanilla SMoE vs. DeepSeek-V3; \textbf{Center:} Vanilla SMoE vs. DeepSeek-V2; \textbf{Right:} Vanilla SMoE vs. SMoE Sigmoid Gating.}
    \label{fig:lm_performancel_pair}
\end{figure}

\vspace{0.5em}
\noindent
To further substantiate these observations, Figure~\ref{fig:lm_performancel_pair} presents a pairwise comparison between DeepSeek-V3, DeepSeek-V2, SMoE Sigmoid Gating, and the baseline Vanilla SMoE. Remarkably, across both model scales,  by integrating normalized sigmoid gating into SMoE, SMoE Sigmoid Gating yields a substantial improvement in convergence rate compared to the softmax-gated baseline. Notably, in several training trajectories, SMoE Sigmoid Gating achieves a convergence rate comparable to that of DeepSeek-V2.  For a more detailed examination, we provide the full training benchmark curves for both the 158M and 679M parameter language modeling settings in Figure~\ref{fig:lm_each_benchmark_158m} and Figure~\ref{fig:lm_each_benchmark_679m}, respectively.

% {\color{red} Please connect the following paragraph with the previous part.}
%\input{Experiments/3_experiment_result/3_2_vlm_exp}
\subsection{Vision-Language Modeling}   \label{subsec:vlm_modeling}

\textbf{Experimental Setup.} We conduct experiments on the visual instruction tuning tasks \cite{liu2024improved} using the popular LLaVA architecture \cite{liu2023llava}. Building upon the LIBMoE framework \cite{nguyen2024libmoe}, we adopt Phi3.5-mini \cite{abdin2024phi} as the language model and SigLIP \cite{zhai2023sigmoid} as the vision encoder. Unlike LIBMoE, we sparse-upcycled~\cite{komatsuzaki2022sparse} only the MLP Connector into 8 experts, employing a top-4 expert routing strategy, while the DeepSeek variants adopt a top-3 expert routing scheme with an additional shared expert, making our model approximately 4.4B parameters. To compare different SMoE algorithms, we use a subset of the LLaVA 1.5 dataset~\cite{liu2024improved} (332K samples and 287M tokens) to train the models in the Visual Instruction Tuning (VIT) stage. Evaluation covers diverse benchmarks containing various vision-language capabilities, including perception, reasoning, OCR, instruction following, and more \cite{kembhavi2016diagram,chen2024we,li2023evaluating,lu2022learn,singh2019towards,hudson2019gqa,zhang2024mme,yue2024mmmu,liu2024ocrbench}. See Appendix \ref{appx:vlm_setting}.

% Our model is evaluated across a diverse set of benchmark datasets encompassing various vision-language capabilities, including perception, reasoning, OCR, instruction following, and more \cite{kembhavi2016diagram,chen2024we,li2023evaluating,lu2022learn,singh2019towards,hudson2019gqa,zhang2024mme,yue2024mmmu,liu2024ocrbench}. See Appendix \ref{appx:vlm_setting}.

\renewcommand{\arraystretch}{1.1}
\begin{table}[t!]
\caption{\small Vision-language model performance across benchmarks. (SMoE-SG refers to SMoE Sigmoid Gating)}
\centering
\small

\resizebox{\textwidth}{!}{
\begin{tabular}{l ccccccccc >{\columncolor{customyellow!80}}c}
\toprule
 & \makecell{\textbf{AI2D}} & \makecell{\textbf{MMStar}} & \makecell{\textbf{POPE}} & \makecell{\textbf{Science}\\ \textbf{QA}} & \makecell{\textbf{TextVQA}} & \makecell{\textbf{GQA}} & \makecell{\textbf{MME-RW}\\ \textbf{-Lite}} & \makecell{\textbf{MMMU}\\ \textbf{Pro-S}} & \makecell{\textbf{OCR}\\ \textbf{Bench}} & \makecell{\textbf{Average}} \\
 \midrule

\textbf{SMoE} & 64.90\% & 41.66\% & 85.67\% & 81.61\% & 40.92\% & 60.19\% & 31.79\% & 25.61\% & 30.90\% & 51.47\% \\
\textbf{DeepSeek-V3} & 65.45\% & 41.40\% & 85.44\% & 81.94\% & 40.69\% & 60.01\% & 32.20\% & 26.01\% & 32.60\% & \textbf{51.75\%} \\
\textbf{DeepSeek-V2} & 64.70\% & 41.55\% & 85.80\% & 82.20\% & 40.51\% & 60.15\% & 31.11\% & 25.72\% & 31.00\% & 51.41\%   \\
\textbf{SMoE-SR} & 64.64\% & 41.51\% & 85.87\% & 82.17\% & 40.54\% & 60.07\% & 31.68\% & 25.95\% & 31.00\% & \underline{51.49\%} \\
\bottomrule
\end{tabular}}

\label{tab:vlm_result}
\end{table}

\begin{figure}[t!]
    \centering
            \includegraphics[width=\linewidth]{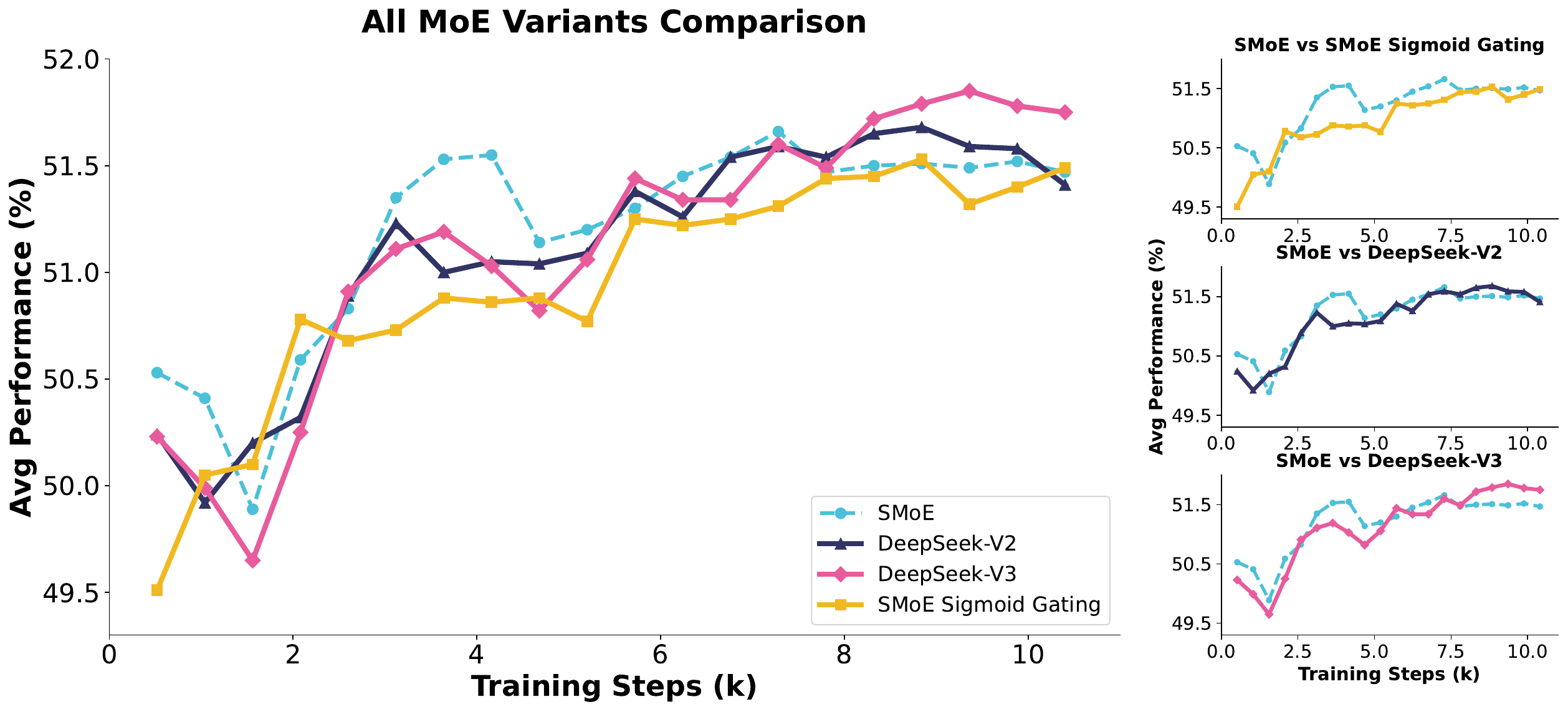}
    \caption{\small Average performance (\%) over training steps on vision-language pretraining tasks. \textbf{Left:} Full comparison among Vanilla SMoE, SMoE with Sigmoid Gating, DeepSeek-V2, and DeepSeek-V3; \textbf{Right:} Pairwise comparison with Vanilla SMoE across three settings.}
    \label{fig:vlm_overtime}
    \vspace{-0.5em}
\end{figure}

% These benchmarks include AI2D \cite{kembhavi2016diagram}, MMStar \cite{chen2024we}, POPE \cite{li2023evaluating}, ScienceQA \cite{lu2022learn}, TextVQA \cite{singh2019towards}, GQA \cite{hudson2019gqa}, MME-RealWorld-Lite \cite{zhang2024mme}, and MMMU-Pro (Standard) \cite{yue2024mmmu}. Full experimental configurations are detailed in Appendix \ref{appx:vlm_setting}.

\vspace{0.5em}
\noindent
\textbf{Performance.} As summarized in Table~\ref{tab:vlm_result}, DeepSeek-V3 achieves the highest average score (51.75\%), outperforming the Vanilla SMoE (51.47\%) and other model variants. Although DeepSeek-V2 yields slightly lower performance compared to other models, the difference remains marginal. Consistent with observations from language modeling experiments, additional evaluation conducted with Vanilla SMoE and the SMoE Sigmoid Gating reveals a similar pattern, confirming that the normalized sigmoid routing mechanism consistently enhances the performance of the standard SMoE architecture.

\vspace{0.5em}
\noindent
\textbf{Convergence Rate.} Figure~\ref{fig:vlm_overtime} (\textit{left}) illustrates the performance progression over training steps, where both DeepSeek variants exhibit faster and more stable convergence compared to Vanilla SMoE. Notably, both DeepSeek-V2 and DeepSeek-V3 demonstrate accelerated convergence during the final stages of training. These results suggest that both shared expert integration and normalized routing significantly contribute to faster learning in vision-language pretraining.

\vspace{0.5em}
\noindent
Complementing this analysis, Figure~\ref{fig:vlm_overtime} (\textit{right}) presents a pairwise comparison among DeepSeek-V3, DeepSeek-V2, SMoE with Sigmoid Gating, and the baseline Vanilla SMoE. On vision-language pretraining tasks, SMoE Sigmoid Gating exhibits a comparable convergence rate and final performance to the Vanilla SMoE. However, similar to the DeepSeek variants, it demonstrates faster convergence during the later stages of training and achieves greater training stability. To facilitate a finer-grained analysis, we provide benchmark-specific performance trajectories in Figure~\ref{fig:vlm_each_benchmark}.
\subsection{Router Analysis} \label{sec:router_analysis}
% {\color{red} Thong, please help bring the router analysis in the appendix here.}

In this section, we now explore the router behavior by empirically examining router behavior: Router Saturation (Section~\ref{subsec:router_saturation}), Router Change Rate (Section~\ref{subsec:router_change_rate}), and Expert Utilization (Section~\ref{subappx:expert_utilization}). For consistency, all router statistics are computed on the validation set using naturally occurring contextual sequences rather than isolated tokens. All analytical results are derived over the full 8,000-token vocabulary, following the evaluation protocol of OLMoE~\cite{muennighoff2024olmoe}.

% \noindent
% We now explore the router behavior by empirically examining the router saturation and change rate.
%Router Saturation (Section~\ref{subsec:router_saturation}) and Router Change Rate (Section~\ref{subsec:router_change_rate}).

%\subsection{Router Saturation} %\label{subsec:router_saturation}

\begin{figure}[ht!]
    \centering
    \includegraphics[width=\linewidth]{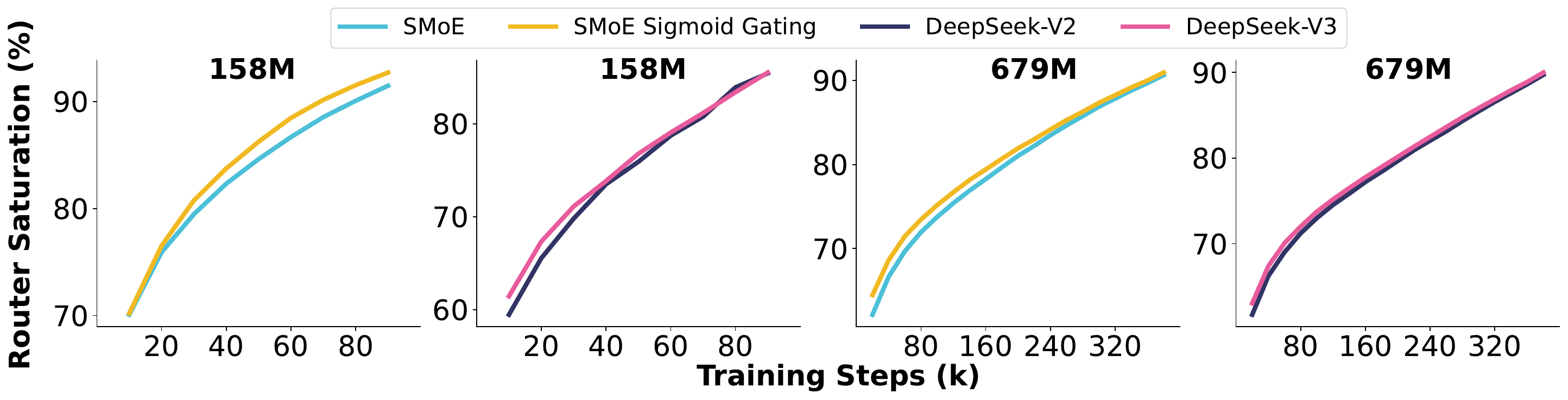}
    \caption{\small Evolution of router saturation (averaged across all layers) during training for language-modeling tasks with 158M (left) and 679 M (right) parameter models. We compute saturation by comparing the routing to the top-8 experts with SMoE and SMoE Sigmoid Gating, and the top-6 experts with DeepSeek variants.}
    \label{fig:router_saturation}
\end{figure}

\subsubsection{Router Saturation}   \label{subsec:router_saturation}

Router Saturation, first introduced in OLMoE \cite{muennighoff2024olmoe}, measures the degree of overlap in activated experts between an intermediary checkpoint at time $t$ and the final checkpoint $T$. It quantifies the router’s convergence during training, where a higher saturation value indicates greater consistency in expert selection. Formally, router saturation is defined as the proportion of expert activations at checkpoint $t$ that match those at $T$ over the same dataset:

\begin{align*}
    \text{Router Saturation}(t) = \frac{1}{N} \sum_{i=1}^{N} \frac{\left| \mathcal{E}_i^{(t)} \cap \mathcal{E}_i^{(T)} \right|}{k},
\end{align*}
where
\begin{itemize}
    \item $N$: The total number of tokens in the dataset;
    \item $k$: The number of top-k experts activated per input token;
    \item $\mathcal{E}_i^{(t)}$: The set of $k$ experts activated for the $i$-th token at the $t$-th checkpoint;
    \item $\mathcal{E}_i^{(T)}$: The set of $k$ experts activated for the $i$-th token at the final checkpoint $T$;
    \item $\left|\mathcal{E}_i^{(t)} \cap \mathcal{E}_i^{(T)}\right|$: The number of common experts activated for the $i$-th token between the $t$-th and final checkpoints.
\end{itemize}
\noindent
Router saturation provides a quantitative measure of how early the routing decisions converge during training. A saturation value of 100\% indicates that the router at an intermediate checkpoint routes to the same set of experts as at the final checkpoint. High saturation values at early checkpoints reflect early convergence in expert selection, indicating that the router has rapidly settled into a stable assignment pattern. In contrast, low saturation values suggest ongoing exploration or adaptation in expert allocations, signaling that the routing mechanism is still undergoing significant adjustments.

\vspace{0.5em}
\noindent
Figure \ref{fig:router_saturation} shows that, after 5\% of training, up to $\sim 60\%$ of router decisions have already saturated. This early saturation aligns with prior findings in OLMoE \cite{muennighoff2024olmoe} and OpenMoE \cite{xue2024openmoe}, supporting the validity of our experimental setup. When comparing model configurations, we observe that models equipped with normalized sigmoid gating achieve noticeably faster saturation than those using softmax gating. In particular, the SMoE Sigmoid Gating exhibits consistently steeper saturation curves compared to Vanilla SMoE, reflecting more rapid convergence in expert selection. A similar pattern is observed in the comparison between DeepSeek-V3 and DeepSeek-V2 under the shared expert configuration. These findings highlight the effectiveness of normalized sigmoid gating in accelerating router convergence, potentially reducing the training time required for convergence.

\begin{figure}[ht!]
    \centering
    \includegraphics[width=\linewidth]{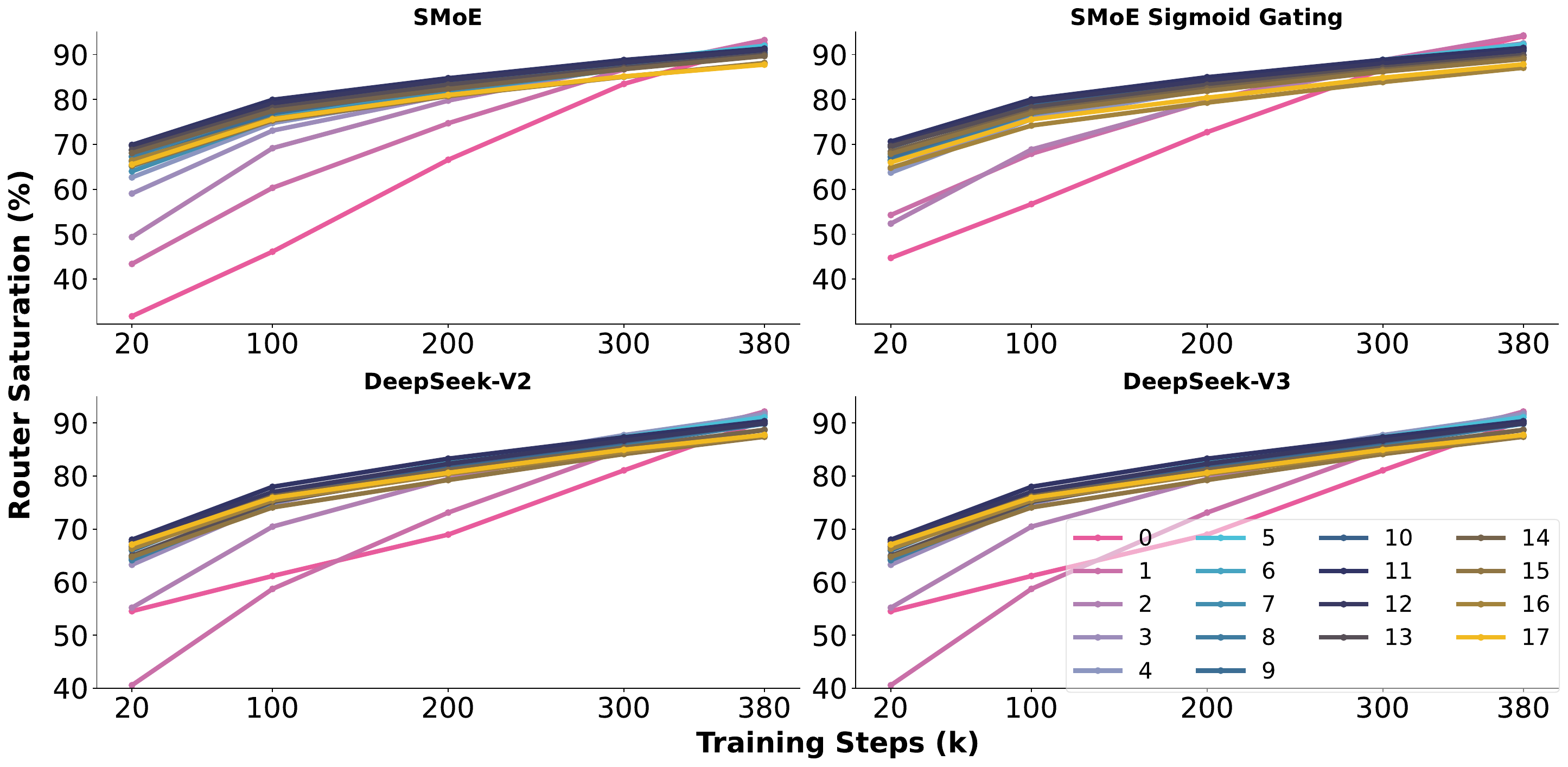}
    \caption{Router saturation across layers for 679M-parameter model in language modeling tasks. We compute saturation by comparing the routing to the top-8 experts with SMoE and SMoE Sigmoid Gating, and the top-6 experts with DeepSeek variants.}
    \label{fig:router_saturation_each_layer_679m}
\end{figure}

\begin{figure}[t!]
    \centering
    \includegraphics[width=\linewidth]{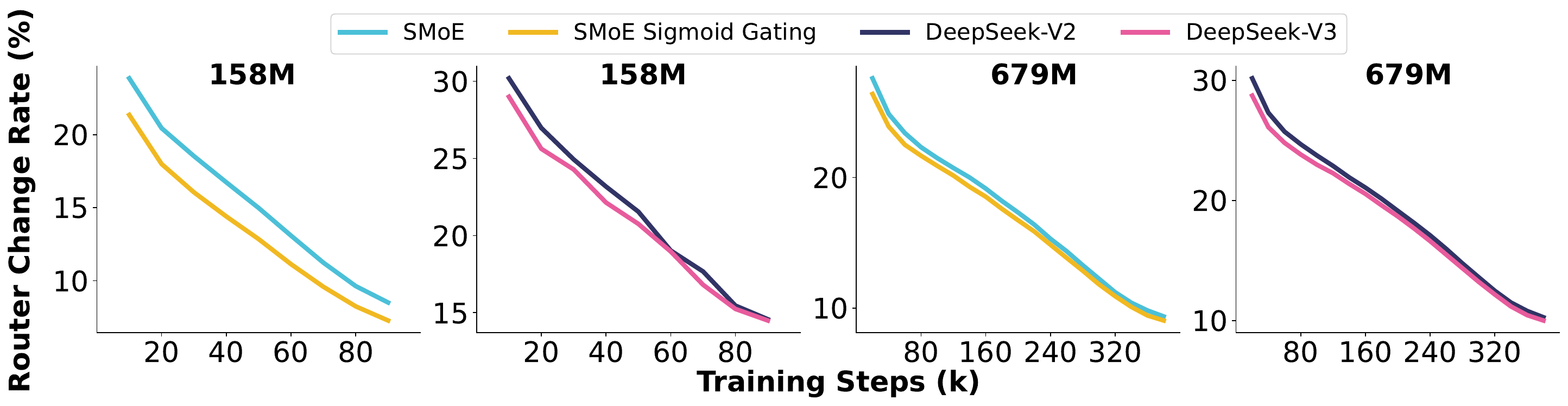}
    \caption{\small Router Change Rate (averaged across all layers) during training for language-modeling tasks with 158M (left) and 679M (right) parameter models. We compute router change rate by comparing the routing to the top-8 experts with SMoE and SMoE Sigmoid Gating, and the top-6 experts with DeepSeek variants.}
    \vspace{-0.8em}
    \label{fig:router_change_rate}
\end{figure}

% \noindent
% Figure~\ref{fig:router_saturation_each_layer_679m} presents the layer-wise router saturation dynamics for the 679M-parameter model.
\noindent
The layer-wise router saturation dynamics for the 679M-parameter model is presented in Figure~\ref{fig:router_saturation_each_layer_679m}. The result shows that the later layer tends to saturate earlier during training, where layer 0 is an outlier and saturates significantly slower than the others. Additionally, we observe that in shared layer settings (DeepSeek-V2 and DeepSeek-V3), the gap between saturation of different layers is smaller than SMoE and SMoE Sigmoid Gating. When comparing gating mechanisms, the model with normalized sigmoid gating exhibits a more uniform saturation profile across layers than the softmax-gated counterpart, demonstrating that normalized sigmoid gating promotes more balanced expert utilization and faster router stabilization.

\subsubsection{Router Change Rate} \label{subsec:router_change_rate}

% {\bf Router Change Rate.}
To assess the stability of the routing mechanism in MoE during training, we introduce the Router Change Rate metric. This metric quantifies the proportion of expert activation decisions that differ between consecutive checkpoints, providing a direct measure of gating fluctuation over time. A lower router change rate indicates more consistent routing behavior, reflecting improved training stability. Formally, the router change rate at step $t$ is defined as:

\begin{align*}
    \text{Router Change Rate}(t) = \frac{1}{N} \sum_{i=1}^{N} \frac{\left| \mathcal{E}_i^{(t + 1)} \backslash \mathcal{E}_i^{(t)} \right|}{k},
\end{align*}
where
\begin{itemize}
    \item $N$: The total number of tokens in the dataset;
    \item $k$: The number of top-k experts activated per input token;
    \item $\mathcal{E}_i^{(t)}$: The set of $k$ experts activated for the $i$-th token at the $t$-th checkpoint;
    \item $\mathcal{E}_i^{(t + 1)}$: The set of $k$ experts activated for the $i$-th token at the $(t+1)$-th checkpoint;
    \item $\left|\mathcal{E}_i^{(t + 1)} \backslash \mathcal{E}_i^{(t)}\right|$: The number of non-intersecting experts activated for the $i$-th token between the $(t+1)$-th and the $t$-th checkpoint.
\end{itemize}
\noindent
Router Change Rate is a quantitative metric to measure the stability of routing mechanism in MoE during training. Unlike router saturation, which assesses convergence towards a final routing decision, the router change rate evaluates fluctuations between consecutive checkpoints. A low router change rate indicates stable routing decisions across training intervals, implying that the gating mechanism has achieved consistent expert assignments, minimizing disruptions and promoting steady specialization of experts. Conversely, a high router change rate suggests volatility in routing decisions, reflecting ongoing exploration or adjustment, potentially introducing training inefficiencies and hindering expert specialization. Thus, monitoring the router change rate provides valuable insights into the dynamics of expert allocation stability, enabling deeper understanding and optimization of the routing strategy in MoE architectures.

\vspace{0.5em}
\noindent
Figure~\ref{fig:router_change_rate} presents the router change rate comparison of different model configurations. We find that models employing normalized sigmoid gating have significantly lower change rates in both non-shared and shared expert settings. These findings underscore the efficiency of normalized sigmoid gating in stabilizing routing decisions throughout training. By reducing the routing fluctuation problem \cite{dai2022stablemoe}, this mechanism promotes a more consistent expert specialization, indicating that stable routing is critical in enhancing both optimization efficiency and final model performance.

\begin{figure}[t!]
    \centering
    \includegraphics[width=\linewidth]{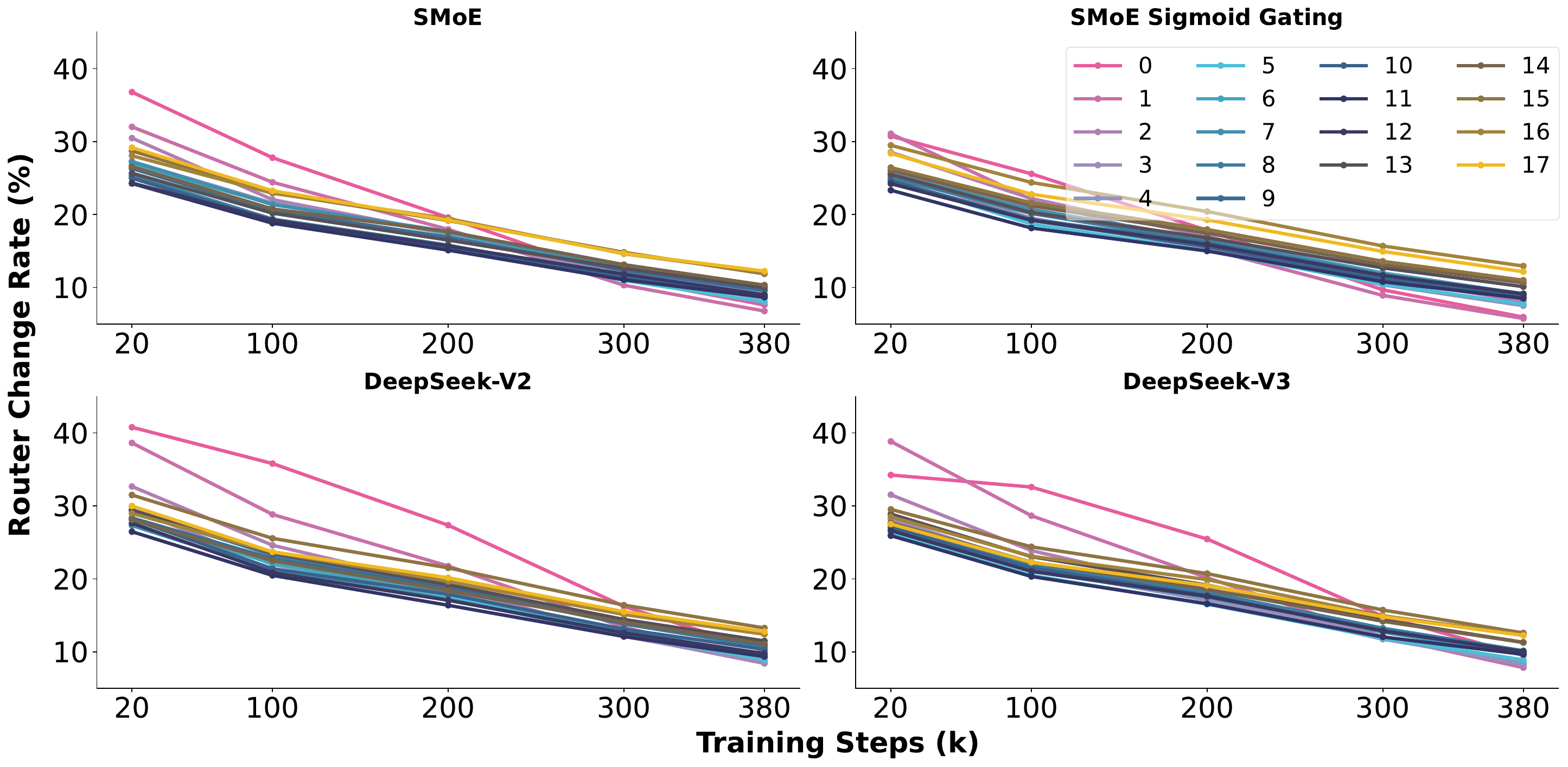}
    \caption{Router change rate across layers for 679M-parameter models in language modeling tasks. We compute router change rate by comparing the routing to the top-8 experts with SMoE and SMoE Sigmoid Gating, and the top-6 experts with DeepSeek variants.}
    \label{fig:router_change_rate_each_layer_679m}
\end{figure}

% Figure~\ref{fig:router_change_rate_each_layer_679m} presents the router change rate for each layer of the 679M-parameter model.
\vspace{0.5em}
\noindent
Router change rate for each layer for each layer of the 679M-parameter model is shown at Figure~\ref{fig:router_change_rate_each_layer_679m}
Similar to router saturation, later layers show more stability with lower router change rate. However, the router change rate between layers show more consistency compared to router saturation. While layer 0 still deviates slightly, its difference remains modest, suggesting that despite its slower saturation, it maintains stable routing behavior during training. Across different model configurations, those employing normalized sigmoid gating (SMoE Sigmoid Gating and DeepSeek-V3) consistently demonstrate lower and more uniform router change rates compared to their softmax-gated counterparts (SMoE and DeepSeek-V2). These results confirm that normalized sigmoid gating contributes to improved routing stability throughout training.

\subsubsection{Expert Utilization} \label{subappx:expert_utilization}

\begin{figure}[h!]
    \centering
    \includegraphics[width=\linewidth]{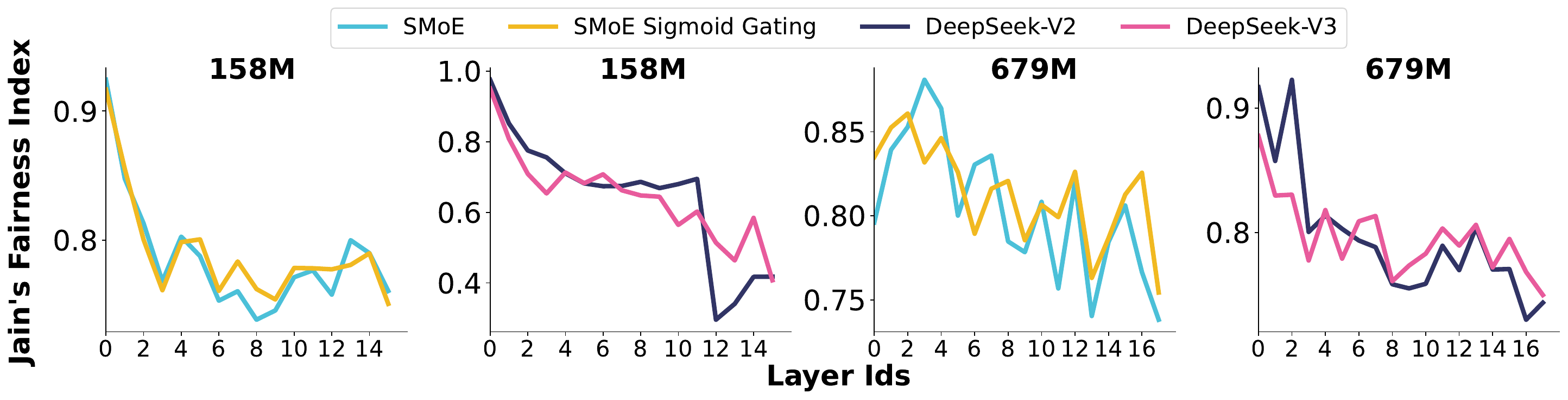}
    \caption{Jain's Fairness Index across MoE layers for language-modeling tasks with 158M (left) and 679M (right) parameter models.}
    \label{fig:jain_fairness_index}
\end{figure}

\noindent
To quantify the fairness of expert utilization in the MoE model, we apply Jain's Fairness Index to the router’s resource allocation across $n$ experts. Let $R = (r_1, r_2, \dots, r_n)$ denote the utilization vector, where $r_i \geq 0$ represents the proportion of input tokens (or total routing weight) assigned to expert $i$ over a given evaluation window. The Jain's Fairness Index $J(R)$ is computed as:

\begin{align*}
    J(R) = J(r_1, r_2, ..., r_n) = \frac{(\sum_{i=1}^n r_i)^2}{n\sum_{i=1}^n r_i^2},
\end{align*}
This index ranges from $[1/n, 1]$, where $J(R) = 1$ indicates perfectly uniform expert usage, (i.e., all experts are used equally), where $J(R) = 1/n$ signifies complete imbalance, with only one expert active.  Thus, higher values of $J(R)$ correspond to fairer and more evenly distributed expert selection.

\vspace{0.5em}
\noindent
Figure~\ref{fig:jain_fairness_index} presents a comparison of Jain’s Fairness Index \cite{jain1984quantitative} across different MoE model configurations and scales. Across both 158M and 679M parameter models, all configurations exhibit a consistent pattern: fairness in expert utilization is highest in the initial layers and declines in subsequent layers, suggesting that earlier layers facilitate broader expert utilization. Notably, models employing normalized sigmoid gating (SMoE Sigmoid Gating and DeepSeek-V3) maintain a higher fairness index, especially in the later layers, indicating better expert utilization. These results highlight the efficacy of normalized sigmoid gating in promoting more balanced expert utilization throughout the network.

%\vspace{-0.5em}
\section{Discussion}
\label{sec:conclusion}
%\vspace{-0.5em}
In this paper, we have presented an extensive study on the benefits of two fundamental ingredients of DeepSeekMoE architecture, namely the shared expert strategy and the normalized sigmoid gating mechanism. From the theoretical side, we perform a convergence analysis of expert estimation to investigate differences in sample efficiency. Our analysis reveals that the shared expert strategy leads to faster estimation rates for shared experts compared to routed experts and experts in the standard MoE. Furthermore, the estimation rates for routed experts become dramatically faster when replacing the softmax gating with the normalized sigmoid gating in DeepSeekMoE. Therefore, the incorporation of these two key factors into DeepSeekMoE significantly reduces the overall sample complexity for the expert estimation tasks. %amount of data required  to approximate both shared and routed experts with a given error. 

\vspace{0.5em}
\noindent
From the empirical side, we validate our theoretical findings through extensive experiments and analysis on both synthetic and real-world datasets. Our results consistently demonstrate that both the shared experts strategy and the normalized sigmoid gating mechanism substantially improve the sample efficiency of estimating experts and the performance of (vision)-language models in real-world scenarios. Moreover, these two ingredients also yield substantial gains in router convergence, routing stability, and expert utilization. Overall, our work provides both a principled understanding and robust empirical evidence for the effectiveness of these two components, offering valuable guidance for the design of future sparse MoE models. %architectures.

\vspace{0.5em}
\noindent
Nevertheless, there is still an open problem of model selection for DeepSeekMoE that we have not explored in this work. In particular, although our analysis confirms that the usage of shared experts improves the performance and sample efficiency of MoE models, it does not indicate how many shared experts should be employed to achieve optimal performance
given a fixed computational budget. The design choice directly affects the balance between generalization and specialization - using too few shared experts may lead to insufficient knowledge sharing and the reduction of model generalization across different tasks, whereas using too many of them will lose the benefits of sparsity and cause the redundancy of routed experts. In addition, the optimal configuration also depends on factors such as data heterogeneity, model scale, and routing sparsity. Therefore, a potential approach to this problem is to establish a scaling law involving these quantities through several extensive experiments as done in \cite{ludziejewski2024scaling}. Since this research direction goes beyond the scope of our work, we leave it for future development.

% \textcolor{blue}{Determining the optimal number of shared experts in mixture-of-experts architectures presents a nontrivial design challenge. Although our analysis confirms that using shared experts improves the sample complexity, it also reduces model specialization and routing diversity, leading to diminished efficiency gains. Indeed, our research does not indicate how many shared experts should be employed to achieve the optimal configuration given a fixed computational budget. A potential approach to this problem is to derive a scaling law involving these quantities induced from extensive experiments as in \cite{ludziejewski2024scaling}. However, since this direction goes beyond the scope of our work, we leave it for future development.}

% Although our analysis confirms that using shared experts improves the sample complexity, it does not indicate how many shared experts should be employed to achieve the optimal configuration given a fixed computational budget. A potential approach to this problem is to derive a scaling law involving these quantities induced from extensive experiments as in \cite{ludziejewski2024scaling}. However, since this direction goes beyond the scope of our work, we leave it for future development.

\newpage
\appendix
\addtocontents{toc}{\protect\setcounter{tocdepth}{10}}
\begin{center}
{}\textbf{\Large{Appendices for \\ \vspace{.2em}
``On DeepSeekMoE: Statistical Benefits of Shared Experts \\ \vspace{.2em}
and Normalized Sigmoid Gating''}}
\end{center}
%{\color{blue} Description of the Appendix.}
%\etocsettocdepth{2}
\tableofcontents
%\newpage

\section{Systems of Polynomial Equations}
\label{appendix:system}
In this appendix, we will provide a formal definition of the functions $r_1$ and $r_2$ involved in the Voronoi loss $\mathcal{D}_2$ defined in equation~\eqref{eq:loss_2}.

\vspace{0.5em}
\noindent
\textbf{Definition of the function $r_1$.} To capture estimation rates for shared expert parameters in Section~\ref{sec:linear_experts}, it is necessary to consider the solvability of a system of polynomial equations previously studied in \cite{ho_convergence_2016}. More specifically, for each $m\geq 2$, let $r_1(m)$ be the smallest natural number $r$ such that the system:
\begin{align}
\label{eq:system_r_1}
\sum_{i=1}^{m}\sum_{\substack{n_1,n_2\in\mathbb{N}:\\n_1+2n_2=\ell}}\dfrac{s^2_{3i}~s^{n_1}_{1i}~s^{n_2}_{2i}}{n_1!~n_2!}=0, \quad \ell=1,2,\ldots,r,
\end{align}
does not admit any non-trivial solutions for the unknown variables $\{s_{1i},s_{2i},s_{3i}\}_{i=1}^{m}$. Here, we call a solution non-trivial if all the values of $s_{3i}$ are non-zero, whereas at least one among $s_{1i}$ is different from zero. In the following proposition, we provide the values of the function $r_1$ at some specific points $m\in\mathbb{N}$.
\begin{proposition}[Proposition 2.1, \cite{ho_convergence_2016}]
    \label{lemma_r_bar}
    For $m=2$, we get $r_1(m)=4$, while for $m=3$, we have $r_1(m)=6$. When $m\geq 4$, we have $r_1(m)\geq 7$.
\end{proposition}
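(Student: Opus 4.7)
The plan is to reformulate system \eqref{eq:system_r_1} as a vanishing condition on Taylor coefficients of an exponential generating function, then handle the upper and lower bounds separately. Setting $c_i := s_{3i}^2 > 0$, $a_i := s_{1i}$, $b_i := s_{2i}$, I would observe that $\sum_{n_1 + 2n_2 = \ell} \frac{a^{n_1} b^{n_2}}{n_1! n_2!}$ is precisely the coefficient of $t^\ell$ in the formal expansion of $\exp(a t + b t^2)$. Consequently, the system is equivalent to the Taylor condition $F^{(\ell)}(0) = 0$ for $\ell = 1, \ldots, r$, where $F(t) := \sum_{i=1}^m c_i \exp(a_i t + b_i t^2)$, and non-triviality asks that $c_i > 0$ for every $i$ and that at least one $a_i \neq 0$.

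For the \emph{lower bounds} $r_1(2) > 3$, $r_1(3) > 5$, and $r_1(m) > 6$ for $m \ge 4$, the strategy is to exhibit explicit non-trivial solutions with the required number of vanishing derivatives. For $m = 2$, the symmetric choice $c_1 = c_2 = 1$, $a_1 = -a_2 = 1$, $b_1 = b_2 = -\tfrac12$ gives $F(t) = 2 e^{-t^2/2} \cosh(t)$, whose expansion $F(t) = 2 - \tfrac{1}{6} t^4 + O(t^5)$ shows that the first three equations vanish while the fourth does not. For larger $m$ I would exploit a pairing trick: taking atoms in conjugate pairs $(a_j, b_j), (-a_j, b_j)$ makes $F(t)$ even, so all odd-indexed equations vanish automatically; the remaining even-indexed equations are then cancelled by tuning the $(a_j, b_j, c_j)$. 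When $m \ge 4$, the $3m \ge 12$ free parameters against only $6$ equations leave ample freedom to realize such a construction.

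For the \emph{upper bounds} $r_1(2) \le 4$ and $r_1(3) \le 6$, the plan is algebraic elimination. In the $m = 2$ case, Eq.~1 gives $a_2 = -(c_1/c_2) a_1$; assuming $a_1 \neq 0$ and writing $\mu = c_1, \nu = c_2$, Eqs.~2--3 become a linear system in $(b_1, b_2)$ with unique solution $b_1 = -\tfrac{(2\mu+\nu)}{6\nu} a_1^2$ and $b_2 = -\tfrac{\mu(\mu+2\nu)}{6\nu^2} a_1^2$. Substituting into Eq.~4 and simplifying produces the identity
\[
-\,\frac{\mu(\mu+\nu)(\mu^2 + \mu\nu + \nu^2)}{36\,\nu^3}\, a_1^4 \;=\; 0,
\]
whose left-hand side has a strictly positive coefficient for $\mu, \nu > 0$, forcing $a_1 = 0$ and, via Eq.~1, $a_2 = 0$. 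The $m = 3$ argument proceeds analogously: Eq.~1 eliminates one of the $a_i$, Eqs.~2--4 form a linear system in $(b_1, b_2, b_3)$ with invertible coefficient matrix, and substituting the resulting expressions into Eqs.~5--6 yields a polynomial identity in the remaining unknowns whose only solution compatible with $c_i > 0$ is $a_1 = a_2 = a_3 = 0$.

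The \emph{main obstacle} is the $m = 3$ elimination. After substituting the $b_i$'s into Eqs.~5 and 6, one obtains a bihomogeneous polynomial system in $(a_1, a_2)$ with coefficients that are rational symmetric functions of $(c_1, c_2, c_3)$, and exhibiting an explicit positive-definite factor (the analogue of the factor $\mu^2 + \mu\nu + \nu^2$ appearing above) requires careful bookkeeping — either via symmetric-function identities in the $c_i$, or via a resultant computation to certify the factorization. Since this calculation is classical and appears already in \cite{ho_convergence_2016}, the proof would consist of adapting the computations there and verifying that they specialize correctly to the present parametrization, together with the explicit lower-bound constructions described above.
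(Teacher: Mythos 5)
Your generating-function reformulation is exactly the right lens: the inner sum over $n_1+2n_2=\ell$ is indeed the $t^\ell$-coefficient of $\exp(a_it+b_it^2)$, so the system asks that $F(t)=\sum_i c_i\exp(a_it+b_it^2)$ with $c_i>0$ have vanishing Taylor coefficients up to order $r$ — this is the same mechanism underlying the cited source. Note, however, that the paper itself offers no proof of this proposition: it is imported verbatim as Proposition 2.1 of Ho and Nguyen (2016), with the proof explicitly delegated to that reference. Measured against a self-contained argument, your proposal is strongest where you actually compute: I verified your $m=2$ case in full — the example $2e^{-t^2/2}\cosh t = 2-\tfrac16 t^4+O(t^6)$ correctly certifies $r_1(2)>3$, and your elimination $b_1=-\tfrac{2\mu+\nu}{6\nu}a_1^2$, $b_2=-\tfrac{\mu(\mu+2\nu)}{6\nu^2}a_1^2$ does reduce the fourth equation to $-\tfrac{\mu(\mu+\nu)(\mu^2+\mu\nu+\nu^2)}{36\nu^3}a_1^4=0$, forcing $a_1=0$ (and note Eq.~1 with $c_i>0$ gives $a_1=0\iff a_2=0$, so non-triviality is genuinely contradicted).

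Two places remain where the sketch is not yet a proof. First, for $m\ge 4$ the claim that ``$3m\ge 12$ parameters against $6$ equations leave ample freedom'' is only a heuristic: real polynomial systems with the sign constraints $c_i>0$ are not guaranteed solvable by dimension counting (indeed your own $m=2$ computation shows three unconstrained-looking equations in five parameters already force triviality at order $4$). The pairing construction you describe (conjugate pairs $(\pm a_j,b_j)$ killing all odd orders, then solving the even orders) is the right idea and can be made explicit, but it must actually be exhibited. Second, the $m=3$ upper bound — the genuinely hard computation — is deferred entirely to Ho and Nguyen. Since the paper does exactly the same, this is a defensible stopping point, but it means your proposal, like the paper, ultimately rests on the external reference for the $m=3$ case rather than constituting an independent proof.
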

\noindent
The proof of Proposition~\ref{lemma_r_bar} can be found in \cite{ho_convergence_2016}.

\vspace{0.5em}
\noindent
\textbf{Definition of the function $r_2$.} To characterize estimation rates for routed expert parameters in Section~\ref{sec:linear_experts}, we need to take into account the solvability of another system of polynomial equations studied in \cite{nguyen2023demystifying}, which is given by  
\begin{align}
\label{eq:new_system}
\sum_{i=1}^{m}\sum_{\alpha\in\mathcal{I}_{\ell_1,\ell_2}}\dfrac{t_{5i}^2~t_{1i}^{\alpha_1}~t_{2i}^{\alpha_2}~t_{3i}^{\alpha_3}~t_{4i}^{\alpha_4}}{\alpha_1!~\alpha_2!~\alpha_3!~\alpha_4!}=0,
\end{align}
for all $\ell_1,\ell_2\geq 0$ satisfying $1\leq \ell_1+\ell_2\leq r$, where
\begin{align*}
    \mathcal{I}_{\ell_1,\ell_2}:=\{\alpha=(\alpha_i)_{i=1}^{4}\in\mathbb{N}^{d}\times\mathbb{N}^{d}\times\mathbb{N}\times\mathbb{N}:\alpha_1+\alpha_2=\ell_1, \alpha_3+2\alpha_4=\ell_2-|\alpha_2|\}.
\end{align*}
Then, we define $r_2(m)$ as the smallest natural number $r$ such that the system in equation~\eqref{eq:new_system} has no non-trivial solutions for the unknown variables $\{t_{5i},t_{1i},t_{2i},t_{3i},t_{4i}\}_{i=1}^m$. Here, a solution is called non-trivial if all the values of $t_{5i}$ are different from, while at least one among $t_{4i}$ is non-zero. The following proposition provides a relation between the two functions $r_1$ and $r_2$ as well as specify the values of $r_2(m)$ at some points $m\in\mathbb{N}$.
\begin{proposition}[Lemma 1, \cite{nguyen2023demystifying}]
    \label{lemma_r_tilde}
    The function $r_2$ is upper bounded by the function $r_1$, that is, $r_2(m)\leq r_1(m)$, for all $m\in\mathbb{N}$. In addition, we have $r_2(2)=4$, $r_2(3)=6$ and $r(m)\geq 7$ when $m\geq 4$. 
\end{proposition}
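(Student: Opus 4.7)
The statement to establish has two components: the inequality $r_2(m) \le r_1(m)$ valid for every $m \in \mathbb{N}$, and the explicit evaluations $r_2(2) = 4$, $r_2(3) = 6$, together with the lower bound $r_2(m) \ge 7$ for $m \ge 4$. My plan is to derive the inequality first via an embedding argument, and then pair it with Proposition~\ref{lemma_r_bar} and a dual lifting construction to pin down the explicit values.

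For the inequality I will argue contrapositively: starting from a non-trivial solution $\{(t_{5i},t_{1i},t_{2i},t_{3i},t_{4i})\}_{i=1}^m$ of the $r_2$-system at order $r$, I will extract a non-trivial solution of the $r_1$-system at the same order. The central observation is that restricting the $r_2$-system to $\ell_1 = 0$ forces $\alpha_1 = \alpha_2 = 0$ (since $|\alpha_1|+|\alpha_2|=\ell_1=0$), so those equations collapse to $\sum_{i=1}^m \sum_{\alpha_3+2\alpha_4=\ell_2} \tfrac{t_{5i}^2\, t_{3i}^{\alpha_3}\, t_{4i}^{\alpha_4}}{\alpha_3!\,\alpha_4!}=0$ for $\ell_2 = 1, \ldots, r$, which is the $r_1$-system under the relabeling $(s_{3i},s_{1i},s_{2i}) := (t_{5i},t_{3i},t_{4i})$. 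The crux is the non-triviality of the induced $r_1$-solution, which requires some $t_{3i} \neq 0$. I would prove this by contradiction: if all $t_{3i}$ vanished, the $\ell_1 = 0$ equations would reduce to a Vandermonde-type system in $\{t_{4i}\}$ with strictly positive weights $t_{5i}^2$, and combining these with residual $r_2$-equations at $\ell_1 \ge 1$ (e.g.\ $(\ell_1,\ell_2) = (1,0)$ gives the linear constraint $\sum_i t_{5i}^2 t_{1i} = 0$) would force all $t_{4i} = 0$ as well, contradicting $r_2$-non-triviality. Hence the induced $r_1$-solution is genuinely non-trivial, yielding $r_2(m) \le r_1(m)$.

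For the explicit values, the upper bounds $r_2(2) \le 4$ and $r_2(3) \le 6$ follow immediately from the inequality together with Proposition~\ref{lemma_r_bar}. To obtain the matching lower bounds and $r_2(m) \ge 7$ for $m \ge 4$, I will construct explicit non-trivial $r_2$-solutions at orders $r = 3$ (for $m = 2$), $r = 5$ (for $m = 3$), and $r = 6$ (for $m = 4$) by reversing the embedding: take a non-trivial $r_1$-solution at order $r$, guaranteed by Proposition~\ref{lemma_r_bar} since $r < r_1(m)$, with some $s_{2i} \neq 0$ (automatic when $r \ge m$, since the degenerate case $s_{2i} \equiv 0$ reduces the $r_1$-system to a pure Vandermonde with positive coefficients admitting only trivial solutions). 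Setting $t_{1i} = t_{2i} = 0$, $t_{3i} = s_{1i}$, $t_{4i} = s_{2i}$, $t_{5i} = s_{3i}$, every $r_2$-equation with $\ell_1 \ge 1$ vanishes termwise (each summand carries a factor $t_{1i}^{\alpha_1} t_{2i}^{\alpha_2} = 0$) while the $\ell_1 = 0$ equations coincide with the $r_1$-equations and are therefore satisfied. Non-triviality of the lift follows from $t_{5i} \neq 0$ and $t_{4i} = s_{2i} \neq 0$ for some $i$. Finally, a monotonicity argument — appending spurious indices with $t_{1i}=t_{2i}=t_{3i}=t_{4i}=0$ and any $t_{5i}\neq 0$ leaves every equation unaffected — promotes $r_2(4) \ge 7$ to $r_2(m) \ge 7$ for all $m \ge 4$.

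The main obstacle throughout is the asymmetry between the two non-triviality conditions: the $r_1$-system requires some $s_{1i} \neq 0$ whereas the $r_2$-system requires some $t_{4i} \neq 0$, and neither directly entails the other. Both directions of the transfer therefore hinge on auxiliary Vandermonde-type arguments that rule out the degenerate configurations in which the relevant parameters all vanish; once these are secured, the remaining work is routine bookkeeping on the two sub-systems indexed by $\ell_1 = 0$ and $\ell_1 \ge 1$.
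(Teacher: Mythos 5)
The paper itself supplies no proof of this lemma --- it simply cites \cite{nguyen2023demystifying} --- so yours is a genuinely independent contribution: a self-contained argument built on restriction and lifting between the two polynomial systems. The structural skeleton is sound. Restricting the $r_2$-system to the slice $\ell_1 = 0_d$ recovers the $r_1$-system under the identification $(s_{3i}, s_{1i}, s_{2i}) = (t_{5i}, t_{3i}, t_{4i})$, and conversely, setting $t_{1i} = t_{2i} = 0$ embeds any $r_1$-solution into the $r_2$-system, since every term in an equation with $|\ell_1| \geq 1$ then carries a vanishing factor $t_{1i}^{\alpha_1} t_{2i}^{\alpha_2}$. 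Your observation that a non-trivial $r_1$-solution at order $r \geq 2$ must have some $s_{2i} \neq 0$ --- because $s_{2i} \equiv 0$ reduces the $\ell = 2$ equation to $\sum_i s_{3i}^2 s_{1i}^2 / 2 = 0$, which forces $s_{1i} \equiv 0$ by positivity of $s_{3i}^2$ --- is exactly what is needed to make the lifted $r_2$-solution non-trivial (since $t_{4i} = s_{2i}$). The monotonicity step via padding with zero parameters is also correct.

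There is, however, a real gap in the forward direction, in the step asserting that if all $t_{3i} = 0$ then all $t_{4i} = 0$. Two problems. First, the residual equations at $|\ell_1| \geq 1$ that you invoke do not touch $t_{4i}$ at all: the equation at $(\ell_1, \ell_2) = (e_u, 0)$ reads $\sum_i t_{5i}^2 t_{1i}^{(u)} = 0$ and constrains $t_{1i}$, and more generally, once $t_{3i} \equiv 0$ the cross terms mixing gating and variance parameters either vanish or carry factors of $t_{1i}$ or $t_{2i}$; they cannot force $t_{4i} = 0$. Second, the Vandermonde argument on the slice $\ell_1 = 0_d$ produces the power sums $\sum_i t_{5i}^2 t_{4i}^k = 0$ only for $2k \leq r$, and needs $\lfloor r/2 \rfloor \geq m$ to invert. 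This holds for $m = 2$ ($r_1(2) = 4$) and $m = 3$ ($r_1(3) = 6$), but for $m \geq 4$ the only guarantee is $r_1(m) \geq 7 < 2m$, so the Vandermonde system is underdetermined and the step does not close. The repair is simpler and uniform in $m$: with $t_{3i} \equiv 0$ the single equation at $(\ell_1, \ell_2) = (0_d, 4)$ becomes $\sum_i t_{5i}^2 t_{4i}^2 / 2 = 0$, and since $t_{5i}^2 > 0$ and $t_{4i}^2 \geq 0$ this forces $t_{4i} = 0$ for every $i$ outright. This positivity step requires only $r \geq 4$, which is always available at $r = r_1(m)$ because $r_1(m) \geq 4$ for all $m \geq 2$. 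Replacing the Vandermonde-plus-residual argument with this one line makes the proof complete.
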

\noindent
The proof of Lemma~\ref{lemma_r_tilde} can be found in \cite{nguyen2023demystifying}.

\section{Proof of Main Results}
\label{appendix:proof_main_results}
\subsection{Proof of Theorem~\ref{theorem:strongly_identifiable_experts}}
\label{appendix:strongly_identifiable_experts}
\textbf{Proof overview.} Recall that our goal is to demonstrate that the following lower bound holds for any $G\in\mathcal{G}_{k_1,k_2}(\Theta)$:
\begin{align}
    \label{eq:target_bound}
    \bbE_X[V(f_{G_1,G_2}(\cdot|X),f_{G^*_1,G^*_2}(\cdot|X))]\gtrsim \mathcal{D}_1((G_1,G_2),(G^*_1,G^*_2)).
\end{align}
Our proof will be divided into two main parts. Firstly, we aim to establish the local part of the bound~\eqref{eq:target_bound}, that is,
\begin{align}
    \label{eq:local_part}
    \lim_{\varepsilon\to0}\inf_{(G_1,G_2)\in\mathcal{G}_{k_1,k_2}(\Theta):\mathcal{D}_1((G_1,G_2),(G^*_1,G^*_2))\leq\varepsilon}\dfrac{\bbE_X[V(f_{G_1,G_2}(\cdot|X),f_{G^*_1,G^*_2}(\cdot|X))]}{\mathcal{D}_1((G_1,G_2),(G^*_1,G^*_2))}>0.
\end{align}
The above result implies that there exists a positive constant $\varepsilon'$ such that
\begin{align*}
    \inf_{(G_1,G_2)\in\mathcal{G}_{k_1,k_2}(\Theta):\mathcal{D}_1((G_1,G_2),(G^*_1,G^*_2))\leq\varepsilon'}\dfrac{\bbE_X[V(f_{G_1,G_2}(\cdot|X),f_{G^*_1,G^*_2}(\cdot|X))]}{\mathcal{D}_1((G_1,G_2),(G^*_1,G^*_2))}>0.
\end{align*}
Then, we complete the proof by deriving the following global part of the bound~\eqref{eq:target_bound}:
\begin{align}
    \label{eq:global_part}
    \inf_{(G_1,G_2)\in\mathcal{G}_{k_1,k_2}(\Theta):\mathcal{D}_1((G_1,G_2),(G^*_1,G^*_2))>\varepsilon'}\dfrac{\bbE_X[V(f_{G_1,G_2}(\cdot|X),f_{G^*_1,G^*_2}(\cdot|X))]}{\mathcal{D}_1((G_1,G_2),(G^*_1,G^*_2))}>0.
\end{align}
\textbf{Proof for the local part~\eqref{eq:local_part}:} Assume by contrary that the claim in equation~\eqref{eq:local_part} does not hold. Then, we can find a sequence of mixing measure pairs $(G^n_1,G^n_2)$ taking the form $G^n_1:=\sum_{i=1}^{k^n_1}\oin\delta_{(\kin,\tin)}$, $G^n_2:=\sum_{i=1}^{k^n_2}\exp(\bzin)\delta_{(\boin,\ein,\nuin)}$ for $n\in\mathbb{N}$ such that $\mathcal{D}_{1n}:=\mathcal{D}_{1}((G^n_1,G^n_2),(G^*_1,G^*_2))\to0$ and
\begin{align}
    \label{eq:expectation_zero}
    \bbE_X[V(f_{G^n_1,G^n_2}(\cdot|X),f_{G^*_1,G^*_2}(\cdot|X))]/\mathcal{D}_{1n}\to0,
\end{align}
as $n\to\infty$. As our proof argument is asymptotic, we may assume that the number of shared and routed experts $k^n_1,k^n_2$ do not vary with the sample size $n$. In addition, we also assume that Voronoi cells are independent of $n$, that is, $\mathcal{V}_{1,j_1}=\mathcal{V}_{1,j_1}(G^n_1)$ and $\mathcal{V}_{2,j_2}=\mathcal{V}_{2,j_2}(G^n_2)$, for all $j_1\in[k^*_1]$ and $j_2\in[k^*_2]$. Then, we can represent the Voronoi loss $\mathcal{D}_{1n}$ as
\begin{align}
    \label{eq:loss_1n}
    &\mathcal{D}_{1n}=\sum_{j=1}^{k^*_1}\Big|\sum_{i\in\mathcal{V}_{1,j}}\oin-\oj\Big|+\sum_{j=1}^{k^*_2}\Big|\sum_{i\in\mathcal{V}_{2,j}}\exp(\bzin)-\exp(\bzj)\Big|\nonumber\\
    &+\sum_{j\in[k^*_1]:|\mathcal{V}_{1,j}|=1}\sum_{i\in\mathcal{V}_{1,j}}\oin(\|\dkijn\|+|\dtijn|)+\sum_{j\in[k^*_2]:|\mathcal{V}_{2,j}|=1}\sum_{i\in\mathcal{V}_{2,j}}\exp(\bzin)(\|\dboijn\|+\|\deijn\|+|\dnuijn|)\nonumber\\
    &+\sum_{j\in[k^*_1]:|\mathcal{V}_{1,j}|>1}\sum_{i\in\mathcal{V}_{1,j}}\oin(\|\dkijn\|^2+|\dtijn|^2)+\sum_{j\in[k^*_2]:|\mathcal{V}_{2,j}|>1}\sum_{i\in\mathcal{V}_{2,j}}\exp(\bzin)(\|\dboijn\|^2+\|\deijn\|^2+|\dnuijn|^2),
\end{align}
where we denote $\dkijn:=\kappa^n_{i}-\kj$, $\dtijn:=\tau^n_{i}-\tj$, $\dboijn:=\beta^n_{1i}-\boj$, $\deijn:=\eta^n_{i}-\ej$, and $\dnuijn:=\nu^n_{i}-\nuj$. Recall that $\mathcal{D}_{1n}\to0$ as $n\to\infty$, then it follows that $\sum_{i\in\mathcal{V}_{1,j}}\oin\to\oj$, $(\kin,\tin)\to(\kj,\tj)$ as $n\to\infty$ for all $i\in\mathcal{V}_{1,j}$ and $j\in[k^*_1]$. Furthermore, we also have $\sum_{i\in\mathcal{V}_{2,j}}\exp(\bzin)-\exp(\bzj)$, $(\boin,\ein,\nuin)\to(\boj,\ej,\nuj)$ as $n\to\infty$ for all $i\in\mathcal{V}_{2,j}$ and $j\in[k^*_2]$.

Subsequently, we partition the rest of this proof into three main stages:

\textbf{Stage 1 - Density Decomposition:} In this stage, we focus on decomposing the density difference $f_{G^n_1,G^n_2}(Y|X)-f_{G^*_1,G^*_2}(Y|X)$. For ease of presentation, let us denote
\begin{align*}
    q_{G^n_1}(Y|X)&:=\sum_{i=1}^{k^n_1}\omega^n_{i}\pi(Y|h_1(X,\kappa^n_{i}),\tau^n_{i}),\\
    q_{G^*_1}(Y|X)&:=\sum_{i=1}^{k^*_1}\omega^*_{i}\pi(Y|h_1(X,\kappa^*_{i}),\tau^*_{i}),\\
    p_{G^n_2}(Y|X)&:=\sum_{i = 1}^{k^n_2} \frac{\exp((\beta_{1i}^{n})^{\top} X + \beta_{0i}^{n})}{\sum_{j = 1}^{k^n_2} \exp((\beta_{1j}^{n})^{\top} X + \beta_{0j}^{n})}\cdot \pi(Y|h_2(X,\eta^n_{i}), \nu_{i}^{n}),\\
    p_{G^*_2}(Y|X)&:=\sum_{i = 1}^{k^*_2} \frac{\exp((\beta_{1i}^{*})^{\top} X + \beta_{0i}^{*})}{\sum_{j = 1}^{k^*_2} \exp((\beta_{1j}^{*})^{\top} X + \beta_{0j}^{*})}\cdot \pi(Y|h_2(X,\eta^*_{i}), \nu_{i}^{*}).
\end{align*}
Then, we have
\begin{align*}
    f_{G^n_1,G^n_2}(Y|X)-f_{G^*_1,G^*_2}(Y|X)=\frac{1}{2}\left[(q_{G^n_1}(Y|X)-q_{G^*_1}(Y|X))+(p_{G^n_2}(Y|X)-p_{G^*_2}(Y|X))\right].
\end{align*}
\textbf{Stage 1.1:} In this step, we decompose the term $q_{G^n_1}(Y|X)-q_{G^*_1}(Y|X)$ as
\begin{align*}
    q_{G^n_1}(Y|X)-q_{G^*_1}(Y|X)&=\sum_{j\in[k^*_1]:|\mathcal{V}_{1,j}|=1}\sum_{i\in\mathcal{V}_{1,j}}\oin[\pi(Y|h_1(X,\kin),\tin)-\pi(Y|h_1(X,\kj),\tj)]\\
    &+\sum_{j\in[k^*_1]:|\mathcal{V}_{1,j}|>1}\sum_{i\in\mathcal{V}_{1,j}}\oin[\pi(Y|h_1(X,\kin),\tin)-\pi(Y|h_1(X,\kj),\tj)]\\
    &+\sum_{j=1}^{k^*_1}\Big(\sum_{i\in\mathcal{V}_{1,j}}\oin-\oj\Big)\pi(Y|h_1(X,\kappa^*_j),\tau^*_j)\\
    &:=A_{n,1}(Y|X)+A_{n,2}(Y|X)+A_{n,0}(Y|X).
\end{align*}
By applying the first-order and second-order Taylor expansions to the function $\pi(Y|h_1(X,\kappa^n_i),\tau^n_i))$ around the point $(\kappa^*_j,\tau^*_j)$, respectively, we have
\begin{align*}
    A_{n,1}(Y|X)&=\sum_{j\in[k^*_1]:|\mathcal{V}_{1,j}|=1}\sum_{i\in\mathcal{V}_{1,j}}\oin\sum_{|\alpha|=1}\frac{1}{\alpha!}(\dkijn)^{\alpha_1}(\dtijn)^{\alpha_2}\cdot\frac{\partial \pi}{\partial\kappa^{\alpha_1}\partial\tau^{\alpha_2}}(Y|h_1(X,\kj),\tj)+R_{n,1}(Y|X),\\
    A_{n,2}(Y|X)&=\sum_{j\in[k^*_1]:|\mathcal{V}_{1,j}|>1}\sum_{i\in\mathcal{V}_{1,j}}\oin\sum_{|\alpha|=1}^{2}\frac{1}{\alpha!}(\dkijn)^{\alpha_1}(\dtijn)^{\alpha_2}\cdot\frac{\partial^{|\alpha|} \pi}{\partial\kappa^{\alpha_1}\partial\tau^{\alpha_2}}(Y|h_1(X,\kj),\tj)+R_{n,2}(Y|X),
\end{align*}
where $R_{n,1}(Y|X)$ and $R_{n,2}(Y|X)$ are the Taylor remainders such that $R_{n,1}(Y|X)/\mathcal{D}_{1n}\to0$ as $n\to\infty$. By the chain rule, the first-order derivatives of the function $\pi$ with respect to its parameters $\kappa$ and $\tau$ are given by
\begin{align*}
    \frac{\partial\pi}{\partial\kappa^{(u_1)}}(Y|h_1(X,\kj),\tj)&=\frac{\partial h_1}{\partial\kappa^{(u_1)}}(X,\kj)\frac{\partial\pi}{\partial h_1}(Y|h_1(X,\kj),\tj),\\
    \frac{\partial\pi}{\partial\tau}(Y|h_1(X,\kj),\tj)&=\frac{1}{2}\frac{\partial^2\pi}{\partial h_1^2}(Y|h_1(X,\kj),\tj),
\end{align*}
for all $u_1\in[d_1]$. Analogously, the second-order derivatives of the function $\pi$ w.r.t its parameters are calculated as
\begin{align*}
    \frac{\partial^2\pi}{\partial\kappa^{(u_1)}\partial\kappa^{(v_1)}}(Y|h_1(X,\kj),\tj)&=\frac{\partial^2 h_1}{\partial\kappa^{(u_1)}\partial\kappa^{(v_1)}}(X,\kj)\frac{\partial\pi}{\partial h_1}(Y|h_1(X,\kj),\tj)\\
    &\quad+\frac{\partial h_1}{\partial\kappa^{(u_1)}}(X,\kj)\frac{\partial h_1}{\partial\kappa^{(v_1)}}(X,\kj)\frac{\partial^2\pi}{\partial h_1^2}(Y|h_1(X,\kj),\tj),\\
    \frac{\partial^2\pi}{\partial\tau^2}(Y|h_1(X,\kj),\tj)&=\frac{1}{4}\frac{\partial^4\pi}{\partial h_1^4}(Y|h_1(X,\kj),\tj),\\
    \frac{\partial^2\pi}{\partial\kappa^{(u_1)}\partial\tau}(Y|h_1(X,\kj),\tj)&=\frac{1}{2}\frac{\partial h_1}{\partial\kappa^{(u_1)}}(X,\kj)\frac{\partial^3\pi}{\partial h_1^3}(Y|h_1(X,\kj),\tj),
\end{align*}
for all $u_1,v_1\in[d_1]$. Combine the above results, we can rewrite $A_{n,1}(Y|X)$ as
\begin{align*}
    A_{n,1}(Y|X)&=\sum_{j\in[k^*_1]:|\mathcal{V}_{1,j}|=1}\Big[A^{(j)}_{n,1,1}(X)\frac{\partial\pi}{\partial h_1}(Y|h_1(X,\kj),\tj)+A^{(j)}_{n,1,2}(X)\frac{\partial^2\pi}{\partial h_1^2}(Y|h_1(X,\kj),\tj)\Big]+R_{n,1}(Y|X),
\end{align*}
where we denote
\begin{align*}
    A^{(j)}_{n,1,1}(X)&:=\sum_{i\in\mathcal{V}_{1,j}}\oin\sum_{u_1=1}^{d_1}(\dkijn)^{(u_1)}\frac{\partial h_1}{\partial\kappa^{(u_1)}}(X,\kj),\\
    A^{(j)}_{n,1,2}(X)&:=\sum_{i\in\mathcal{V}_{1,j}}\oin\frac{1}{2}(\dtijn),
\end{align*}
for all $j\in[k^*_1]$ such that $|\mathcal{V}_{1,j}|=1$. Similarly, the quantity $A_{n,2}(Y|X)$ can be represented as
\begin{align*}
    A_{n,2}(Y|X)&=\sum_{j\in[k^*_1]:|\mathcal{V}_{1,j}|>1}\Big[~A^{(j)}_{n,2,1}(X)\frac{\partial\pi}{\partial h_1}(Y|h_1(X,\kj),\tj)+A^{(j)}_{n,2,2}(X)\frac{\partial^2\pi}{\partial h_1^2}(Y|h_1(X,\kj),\tj)\\
    &+A^{(j)}_{n,2,3}(X)\frac{\partial^3\pi}{\partial h_1^3}(Y|h_1(X,\kj),\tj)+A^{(j)}_{n,2,4}(X)\frac{\partial^4\pi}{\partial h_1^4}(Y|h_1(X,\kj),\tj)\Big]+R_{n,2}(Y|X),
\end{align*}
where we denote
\begin{align*}
    A^{(j)}_{n,2,1}(X)&:=\sum_{i\in\mathcal{V}_{1,j}}\oin\Big(\sum_{u_1=1}^{d_1}(\dkijn)^{(u_1)}\frac{\partial h_1}{\partial\kappa^{(u_1)}}(X,\kj)+\sum_{u_1,v_1=1}^{d_1}\frac{(\dkijn)^{(u_1)}(\dkijn)^{(v_1)}}{1+1_{\{u_1=v_1\}}}\frac{\partial^2h_1}{\partial\kappa^{(u_1)}\partial\kappa^{(v_1)}}(X,\kj)\Big),\\
    A^{(j)}_{n,2,2}(X)&:=\sum_{i\in\mathcal{V}_{1,j}}\oin\Big(\frac{1}{2}(\dtijn)+\sum_{u_1,v_1=1}^{d_1}\frac{(\dkijn)^{(u_1)}(\dkijn)^{(v_1)}}{1+1_{\{u_1=v_1\}}}\frac{\partial h_1}{\partial\kappa^{(u_1)}}(X,\kj)\frac{\partial h_1}{\partial\kappa^{(v_1)}}(X,\kj)\Big),\\
    A^{(j)}_{n,2,3}(X)&:=\sum_{i\in\mathcal{V}_{1,j}}\oin\sum_{u_1=1}^{d_1}\frac{1}{2}(\dkijn)^{(u_1)}(\dtijn)\frac{\partial h_1}{\partial\kappa^{(u_1)}}(X,\kj),\\
    A^{(j)}_{n,2,4}(X)&:=\sum_{i\in\mathcal{V}_{1,j}}\oin\frac{1}{8}(\dtijn)^2,
\end{align*}
for all $j\in[k^*_1]$ such that $|\mathcal{V}_{1,j}|>1$. 

\vspace{0.5 em}
\noindent
\textbf{Stage 1.2:} In this step, we decompose the term $Q_n(Y|X):=\Big[\sum_{j = 1}^{k^*_2} \exp((\beta_{1j}^{*})^{\top} X + \beta_{0j}^{*})\Big]\cdot[p_{G^n_2}(Y|X)-p_{G^*_2}(Y|X)]$. By denoting $F(Y|X;\beta_1,\eta,\nu):=\exp(\beta_{1}^{\top}X)\pi(Y|h_2(X,\eta),\nu)$ and $H(Y|X;\beta_1):=\exp(\beta_1^{\top}X)p_{G_2}(Y|X)$, we can represent $Q_n(Y|X)$ as
\begin{align*}
    Q_n(Y|X)&=\sum_{j=1}^{k^*_2}\sum_{i\in\mathcal{V}_{2,j}}\exp(\bzin)[F(Y|X;\boin,\ein,\nuin)-F(Y|X;\boj,\ej,\nuj)]\\
    &-\sum_{j=1}^{k^*_2}\sum_{i\in\mathcal{V}_{2,j}}\exp(\bzin)[H(Y|X;\boin)-H(Y|X;\boj)]\\
    &+\sum_{j=1}^{k^*_2}\Big(\sum_{i\in\mathcal{V}_{2,j}}\exp(\bzin)-\exp(\bzj)\Big)[F(Y|X;\boj,\ej,\nuj)-H(Y|X;\boj)]\\
    &:=B_{n}(Y|X)-C_{n}(Y|X)+E_{n}(Y|X).
\end{align*}
\textbf{Stage 1.2.1:} In this step, we decompose the term $B_{n}(Y|X)$:
\begin{align*}
    B_{n}(Y|X)&=\sum_{j\in[k^*_2]:|\mathcal{V}_{2,j}|=1}\sum_{i\in\mathcal{V}_{2,j}}\exp(\bzin)[F(Y|X;\boin,\ein,\nuin)-F(Y|X;\boj,\ej,\nuj)]\\
    &+\sum_{j\in[k^*_2]:|\mathcal{V}_{2,j}|>1}\sum_{i\in\mathcal{V}_{2,j}}\exp(\bzin)[F(Y|X;\boin,\ein,\nuin)-F(Y|X;\boj,\ej,\nuj)]\\
    &:=B_{n,1}(Y|X) + B_{n,2}(Y|X).
\end{align*}
By applying the first-order and second-order Taylor expansions to the function $F(Y|X;\boin,\ein,\nuin)$ around the point $(\boj,\ej,\nuj)$, we have
\begin{align*}
    B_{n,1}(Y|X)=\sum_{j\in[k^*_2]:|\mathcal{V}_{2,j}|=1}\sum_{i\in\mathcal{V}_{2,j}}\exp(\bzin)\sum_{|\alpha|=1}\frac{1}{\alpha!}(\dboijn)^{\alpha_1}(\deijn)^{\alpha_2}(\dnuijn)^{\alpha_3}\\
    \times \frac{\partial F}{\partial\beta_1^{\alpha_1}\partial\eta^{\alpha_2}\partial\nu^{\alpha_3}}(Y|X;\boj,\ej,\nuj)+R_{n,3}(Y|X),\\
    B_{n,2}(Y|X)=\sum_{j\in[k^*_2]:|\mathcal{V}_{2,j}|>1}\sum_{i\in\mathcal{V}_{2,j}}\exp(\bzin)\sum_{|\alpha|=1}^{2}\frac{1}{\alpha!}(\dboijn)^{\alpha_1}(\deijn)^{\alpha_2}(\dnuijn)^{\alpha_3}\\
    \times \frac{\partial^{|\alpha|} F}{\partial\beta_1^{\alpha_1}\partial\eta^{\alpha_2}\partial\nu^{\alpha_3}}(Y|X;\boj,\ej,\nuj)+R_{n,4}(Y|X),
\end{align*}
where $R_{n,3}(Y|X)$ and $R_{n,4}(Y|X)$ are the Taylor remainders such that $R_{n,3}(Y|X)/\mathcal{D}_{1n}\to0$ and $R_{n,4}(Y|X)/\mathcal{D}_{1n}\to0$ as $n\to\infty$. By means of the chain rule, the first-order derivatives of the function $F$ w.r.t its parameters $\beta_1,\eta,\nu$ are given by
\begin{align*}
    \frac{\partial F}{\partial\beta_1^{(u)}}(Y|X;\boj,\ej,\nuj)&=X^{(u)}\exp((\boj)^{\top}X)\pi(Y|h_2(X,\ej),\nuj),\\
    \frac{\partial F}{\partial\eta^{(u_2)}}(Y|X;\boj,\ej,\nuj)&=\frac{\partial h_2}{\partial\eta^{(u_2)}}(X,\ej)\exp((\boj)^{\top}X)\frac{\partial\pi}{\partial h_2}(Y|h_2(X,\ej),\nuj),\\
    \frac{\partial F}{\partial\nu}(Y|X;\boj,\ej,\nuj)&=\frac{1}{2}\exp((\boj)^{\top}X)\frac{\partial^2\pi}{\partial h_2^2}(Y|h_2(X,\ej),\nuj),
\end{align*}
for all $u_2\in[d_2]$. Similarly, we can derive the second-order derivatives of the function $F$ w.r.t its parameters as follows:
\begin{align*}
    \frac{\partial^2 F}{\partial\beta_1^{(u)}\partial\beta_1^{(v)}}(Y|X;\boj,\ej,\nuj)&=X^{(u)}X^{(v)}\exp((\boj)^{\top}X)\pi(Y|h_2(X,\ej),\nuj),\\
    \frac{\partial^2 F}{\partial\eta^{(u_2)}\partial\eta^{(v_2)}}(Y|X;\boj,\ej,\nuj)&=\frac{\partial^2 h_2}{\partial\eta^{(u_2)}\partial\eta^{(v_2)}}(X,\ej)\exp((\boj)^{\top}X)\frac{\partial\pi}{\partial h_2}(Y|h_2(X,\ej),\nuj)\\
    &+\frac{\partial h_2}{\partial\eta^{(u_2)}}(X,\ej)\frac{\partial h_2}{\partial\eta^{(v_2)}}(X,\ej)\exp((\boj)^{\top}X)\frac{\partial^2\pi}{\partial h_2^2}(Y|h_2(X,\ej),\nuj),\\
    \frac{\partial^2 F}{\partial\nu^2}(Y|X;\boj,\ej,\nuj)&=\frac{1}{4}\exp((\boj)^{\top}X)\frac{\partial^4\pi}{\partial h_2^4}(Y|h_2(X,\ej),\nuj),
\end{align*}
and
\begin{align*}
    \frac{\partial^2 F}{\partial\beta_1^{(u)}\partial\eta^{(v_2)}}(Y|X;\boj,\ej,\nuj)&=X^{(u)}\frac{\partial h_2}{\partial\eta^{(v_2)}}(X,\ej)\exp((\boj)^{\top}X)\frac{\partial\pi}{\partial h_2}(Y|h_2(X,\ej),\nuj),\\
    \frac{\partial^2 F}{\partial\beta_1^{(u)}\partial\nu}(Y|X;\boj,\ej,\nuj)&=\frac{1}{2}X^{(u)}\exp((\boj)^{\top}X)\frac{\partial^2\pi}{\partial h_2^2}(Y|h_2(X,\ej),\nuj),\\
    \frac{\partial^2 F}{\partial\eta^{(u_2)}\partial\nu}(Y|X;\boj,\ej,\nuj)&=\frac{1}{2}\frac{\partial h_2}{\partial\eta^{(u_2)}}(X,\ej)\exp((\boj)^{\top}X)\frac{\partial^3\pi}{\partial h_2^3}(Y|h_2(X,\ej),\nuj),
\end{align*}
for all $u_2,v_2\in[d_2]$. Putting the above results together, we can rewrite $B_{n,1}(Y|X)$ as 
\begin{align*}
    B_{n,1}(Y|X)=\sum_{j\in[k^*_2]:|\mathcal{V}_{2,j}|=1}\Big[B^{(j)}_{n,1,0}(X)\pi(Y|h_2(X,\ej),\nuj)+B^{(j)}_{n,1,1}(X)\frac{\partial\pi}{\partial h_2}(Y|h_2(X,\ej),\nuj)\\
    +B^{(j)}_{n,1,2}(X)\frac{\partial^2\pi}{\partial h_2^2}(Y|h_2(X,\ej),\nuj)\Big]+R_{n,3}(Y|X),
\end{align*}
where we denote
\begin{align*}
    B^{(j)}_{n,1,0}(X)&:=\sum_{i\in\mathcal{V}_{2,j}}\exp(\bzin)\sum_{u=1}^{d}(\dboijn)^{(u)}X^{(u)}\exp((\boj)^{\top}X),\\
    B^{(j)}_{n,1,1}(X)&:=\sum_{i\in\mathcal{V}_{2,j}}\exp(\bzin)\sum_{u_2=1}^{d_2}(\deijn)^{(u_2)}\frac{\partial h_2}{\partial\eta^{(u_2)}}(X,\ej)\exp((\boj)^{\top}X),\\
    B^{(j)}_{n,1,2}(X)&:=\sum_{i\in\mathcal{V}_{2,j}}\exp(\bzin)\frac{1}{2}(\dnuijn)\exp((\boj)^{\top}X),
\end{align*}
for all $j\in[k^*_2]$ such that $|\mathcal{V}_{2,j}|=1$. Analogously, we can represent the term $B_{n,2}(Y|X)$ as
\begin{align*}
    B_{n,2}(Y|X)=\sum_{j\in[k^*_2]:|\mathcal{V}_{2,j}|=1}\sum_{\rho=0}^{4}B^{(j)}_{n,2,\rho}(X)\frac{\partial^{\rho}\pi}{\partial h_2^{\rho}}(Y|h_2(X,\ej),\nuj)+R_{n,4}(Y|X),
\end{align*}
where we define
\begin{align*}
    B^{(j)}_{n,2,0}(X)&:=\sum_{i\in\mathcal{V}_{2,j}}\exp(\bzin)\Bigg[\sum_{u=1}^{d}(\dboijn)^{(u)}X^{(u)}+\sum_{u,v=1}^{d}\frac{(\dboijn)^{(u)}(\dboijn)^{(v)}}{1+1_{\{u=v\}}}X^{(u)}X^{(v)}\Bigg]\exp((\boj)^{\top}X),\\
    B^{(j)}_{n,2,1}(X)&:=\sum_{i\in\mathcal{V}_{2,j}}\exp(\bzin)\Bigg[\sum_{u_2=1}^{d_2}(\deijn)^{(u_2)}\frac{\partial h_2}{\partial\eta^{(u_2)}}(X,\ej)+\sum_{u_2,v_2=1}^{d_2}\frac{(\deijn)^{(u_2)}(\deijn)^{(v_2)}}{1+1_{\{u_2=v_2\}}}\frac{\partial^2h_2}{\partial\eta^{(u_2)}\partial\eta^{(v_2)}}(X,\ej)\\
    &+\sum_{u=1}^{d}\sum_{v_2=1}^{d_2}(\dboijn)^{(u)}(\deijn)^{(v_2)}X^{(u)}\frac{\partial h_2}{\partial\eta^{(v_2)}}(X,\ej)\Bigg]\exp((\boj)^{\top}X),\\
    B^{(j)}_{n,2,2}(X)&:=\sum_{i\in\mathcal{V}_{2,j}}\exp(\bzin)\Bigg[\frac{1}{2}(\dnuijn)+\sum_{u_2,v_2=1}^{d_2}\frac{(\deijn)^{(u_2)}(\deijn)^{(v_2)}}{1+1_{\{u_2=v_2\}}}\frac{\partial h_2}{\partial\eta^{(u_2)}}(X,\ej)\frac{\partial h_2}{\partial\eta^{(v_2)}}(X,\ej)\\
    &\hspace{4cm}+\sum_{u=1}^{d}\frac{1}{2}(\dboijn)^{(u)}(\dnuijn)X^{(u)}\Bigg]\exp((\boj)^{\top}X),\\
    B^{(j)}_{n,2,3}(X)&:=\sum_{i\in\mathcal{V}_{2,j}}\exp(\bzin)\sum_{u_2=1}^{d_2}\frac{1}{2}(\deijn)^{(u_2)}(\dnuijn)\frac{\partial h_2}{\partial\eta^{(u_2)}}(X,\ej)\exp((\boj)^{\top}X),\\
    B^{(j)}_{n,2,4}(X)&:=\sum_{i\in\mathcal{V}_{2,j}}\exp(\bzin)\frac{1}{8}(\dnuijn)^2\exp((\boj)^{\top}X),
\end{align*}
for all $j\in[k^*_2]$ such that $|\mathcal{V}_{2,j}|>1$.

\textbf{Stage 1.2.2:} In this step, we decompose the term $C_{n}(Y|X)$:
\begin{align*}
    C_{n}(Y|X)&=\sum_{j\in[k^*_2]:|\mathcal{V}_{2,j}|=1}\sum_{i\in\mathcal{V}_{2,j}}\exp(\bzin)[H(Y|X;\boin)-H(Y|X;\boj)]\\
    &+\sum_{j\in[k^*_2]:|\mathcal{V}_{2,j}|>1}\sum_{i\in\mathcal{V}_{2,j}}\exp(\bzin)[H(Y|X;\boin)-H(Y|X;\boj)]\\
    &:=C_{n,1}(Y|X) + C_{n,2}(Y|X).
\end{align*}
By means of the first-order and second-order Taylor expansions to the function $H(Y|X;\boin)$ around the point $\boj$, we get
\begin{align*}
    C_{n,1}(Y|X)&=\sum_{j\in[k^*_2]:|\mathcal{V}_{2,j}|=1}\sum_{i\in\mathcal{V}_{2,j}}\exp(\bzin)\sum_{u=1}^{d}(\dboijn)^{(u)}X^{(u)}H(Y|X;\boj)+R_{n,5}(Y|X),\\
    C_{n,2}(Y|X)&=\sum_{j\in[k^*_2]:|\mathcal{V}_{2,j}|>1}\sum_{i\in\mathcal{V}_{2,j}}\exp(\bzin)\Bigg[\sum_{u=1}^{d}(\dboijn)^{(u)}X^{(u)}H(Y|X;\boj)\\
    &\hspace{1.5cm}+\sum_{u,v=1}^{d}\frac{(\dboijn)^{(u)}(\dboijn)^{(v)}}{1+1_{\{u=v\}}}X^{(u)}X^{(v)}H(Y|X;\boj)\Bigg]+R_{n,6}(Y|X),
\end{align*}
where $R_{n,5}(Y|X)$ and $R_{n,6}(Y|X)$ are the Taylor remainders such that $R_{n,5}(Y|X)/\mathcal{D}_{1n}\to0$ and $R_{n,6}(Y|X)/\mathcal{D}_{1n}\to0$ as $n\to\infty$.

Putting the above decompositions together, we can view $A_{n,0}(Y|X)/\mathcal{D}_{1n}$, $[A_{n,1}(Y|X)-R_{n,1}(Y|X)]/\mathcal{D}_{1n}$, $[A_{n,2}(Y|X)-R_{n,2}(Y|X)]/\mathcal{D}_{1n}$, $[B_{n,1}(Y|X)-R_{n,3}(Y|X)]/\mathcal{D}_{1n}$, $[B_{n,2}(Y|X)-R_{n,4}(Y|X)]/\mathcal{D}_{1n}$, $[C_{n,1}(Y|X)-R_{n,5}(Y|X)]/\mathcal{D}_{1n}$, $[C_{n,2}(Y|X)-R_{n,6}(Y|X)]/\mathcal{D}_{1n}$, and $E_n(Y|X)/\mathcal{D}_{1n}$ as a combination of elements of the following sets
\begin{align*}
    \mathcal{S}_{0,j}&:=\{\pi(Y|h_1(X,\kj),\tj)\},\\
    \mathcal{S}_{1,j}&:=\Bigg\{\frac{\partial h_1}{\partial\kappa^{(u_1)}}(X,\kj)\frac{\partial\pi}{\partial h_1}(Y|h_1(X,\kj),\tj), \ \frac{\partial^2 h_1}{\partial\kappa^{(u_1)}\partial\kappa^{(v_1)}}(X,\kj)\frac{\partial\pi}{\partial h_1}(Y|h_1(X,\kj),\tj):u_1,v_1\in[d_1]\Bigg\},\\
    \mathcal{S}_{2,j}&:=\Bigg\{\frac{\partial^2\pi}{\partial h_1^2}(Y|h_1(X,\kj),\tj), \ \frac{\partial h_1}{\partial\kappa^{(u_1)}}(X,\kj)\frac{\partial h_1}{\partial\kappa^{(v_1)}}(X,\kj)\frac{\partial^2\pi}{\partial h_1^2}(Y|h_1(X,\kj),\tj):u_1,v_1\in[d_1]\Bigg\},\\
    \mathcal{S}_{3,j}&:=\Bigg\{ \frac{\partial h_1}{\partial\kappa^{(u_1)}}(X,\kj)\frac{\partial^3\pi}{\partial h_1^3}(Y|h_1(X,\kj),\tj):u_1,v_1\in[d_1]\Bigg\},\\
    \mathcal{S}_{4,j}&:=\Bigg\{ \frac{\partial^4\pi}{\partial h_1^4}(Y|h_1(X,\kj),\tj):u_1,v_1\in[d_1]\Bigg\},
\end{align*}
for all $j\in[k^*_1]$, and
\begin{align*}
    \mathcal{T}_{0,j}&:=\{F(Y|X;\boj,\ej,\nuj), \ X^{(u_2)}F(Y|X;\boj,\ej,\nuj), \ X^{(u_2)}X^{(v_2)}F(Y|X;\boj,\ej,\nuj):u_2,v_2\in[d_2]\},\\
    \mathcal{T}_{1,j}&:=\Bigg\{\frac{\partial h_2}{\partial\eta^{(u_2)}}(X,\ej)F_1(Y|X;\boj,\ej,\nuj), \ \frac{\partial^2 h_2}{\partial\eta^{(u_2)}\partial\eta^{(v_2)}}(X,\ej)F_1(Y|X;\boj,\ej,\nuj),\\
    &\hspace{1cm}X^{(u_2)}\frac{\partial h_2}{\partial\eta^{(u_2)}}(X,\ej)F_1(Y|X;\boj,\ej,\nuj):u_2,v_2\in[d_2]\Bigg\},\\
    \mathcal{T}_{2,j}&:=\Bigg\{F_2(Y|X;\boj,\ej,\nuj), \ \frac{\partial h_2}{\partial\eta^{(u_2)}}(X,\ej)\frac{\partial h_2}{\partial\eta^{(v_2)}}(X,\ej)F_2(Y|X;\boj,\ej,\nuj),\\
    &\hspace{1cm}X^{(u_2)}F_2(Y|X;\boj,\ej,\nuj):u_2,v_2\in[d_2]\Bigg\},\\
    \mathcal{T}_{3,j}&:=\Bigg\{\frac{\partial h_2}{\partial\eta^{(u_2)}}(X,\ej)F_3(Y|X;\boj,\ej,\nuj):u_2\in[d_2]\Bigg\},\\
    \mathcal{T}_{4,j}&:=\{F_4(Y|X;\boj,\ej,\nuj)\},\\
    \mathcal{T}_{5,j}&:=\{H(Y|X;\boj), \ X^{(u)}H(Y|X;\boj), \ X^{(u)}X^{(v)}H(Y|X;\boj):u,v\in[d]\},
\end{align*}
where we denote
\begin{align*}
    F_{\rho}(Y|X;\boj,\ej,\nuj):=\exp((\boj)^{\top}X)\frac{\partial^{\rho}\pi}{\partial h_1^{\rho}}(Y|h_2(X,\ej),\nuj),
\end{align*}
for all $\rho\in[4]$ and $j\in[k^*_2]$.

\vspace{0.5 em}
\noindent
\textbf{Stage 2 - Non-vanishing coefficients:} In this stage, we show by contradiction that not all the coefficients in the representations of $A_{n,0}(Y|X)/\mathcal{D}_{1n}$, $[A_{n,1}(Y|X)-R_{n,1}(Y|X)]/\mathcal{D}_{1n}$, $[A_{n,2}(Y|X)-R_{n,2}(Y|X)]/\mathcal{D}_{1n}$, $[B_{n,1}(Y|X)-R_{n,3}(Y|X)]/\mathcal{D}_{1n}$, $[B_{n,2}(Y|X)-R_{n,4}(Y|X)]/\mathcal{D}_{1n}$, $[C_{n,1}(Y|X)-R_{n,5}(Y|X)]/\mathcal{D}_{1n}$, $[C_{n,2}(Y|X)-R_{n,6}(Y|X)]/\mathcal{D}_{1n}$, and $E_n(Y|X)/\mathcal{D}_{1n}$ converge to zero as $n\to\infty$. In particular, we assume that all those coefficients go to zero. Then, by looking into the coefficients of the terms:
\begin{itemize}
    \item $\pi(Y|h_1(X,\kj),\tj)$ for $j\in[k^*_1]$, we have $\frac{1}{\mathcal{D}_{1n}}\cdot\sum_{j=1}^{k^*_1}\Big|\sum_{i\in\mathcal{V}_{1,j}}\oin-\oj\Big|\to0$;
    \item $\frac{\partial h_1}{\partial\kappa^{(u_1)}}(X,\kj)\frac{\partial\pi}{\partial h_1}(Y|h_1(X,\kj),\tj)$ for $j\in[k^*_1]:|\mathcal{V}_{1,j}|=1$ and $u_1\in[d_1]$, we have
    \begin{align*}
        \frac{1}{\mathcal{D}_{1n}}\cdot\sum_{j\in[k^*_1]:|\mathcal{V}_{1,j}|=1}\sum_{i\in\mathcal{V}_{1,j}}\oin\|\dkijn\|\to0;
    \end{align*}
    \item $\frac{\partial^2\pi}{\partial h_1^2}(Y|h_1(X,\kj),\tj)$ for $j\in[k^*_1]:|\mathcal{V}_{1,j}|=1$, we have
    \begin{align*}
        \frac{1}{\mathcal{D}_{1n}}\cdot\sum_{j\in[k^*_1]:|\mathcal{V}_{1,j}|=1}\sum_{i\in\mathcal{V}_{1,j}}\oin|\dtijn|\to0;
    \end{align*}
    \item $\big[\frac{\partial h_1}{\partial\kappa^{(u_1)}}(X,\kj)\big]^2\frac{\partial^2\pi}{\partial h_1^2}(Y|h_1(X,\kj),\tj)$ for $j\in[k^*_1]:|\mathcal{V}_{1,j}|>1$ and $u_1\in[d_1]$, we have
    \begin{align*}
        \frac{1}{\mathcal{D}_{1n}}\cdot\sum_{j\in[k^*_1]:|\mathcal{V}_{1,j}|>1}\sum_{i\in\mathcal{V}_{1,j}}\oin\|\dkijn\|^2\to0;
    \end{align*}
    \item $\frac{\partial^4\pi}{\partial h_1^4}(Y|h_1(X,\kj),\tj)$ for $j\in[k^*_1]:|\mathcal{V}_{1,j}|>1$ and $u_1\in[d_1]$, we have
    \begin{align*}
        \frac{1}{\mathcal{D}_{1n}}\cdot\sum_{j\in[k^*_1]:|\mathcal{V}_{1,j}|>1}\sum_{i\in\mathcal{V}_{1,j}}\oin|\dtijn|^2\to0;
    \end{align*}
    \item $F(Y|X;\boj,\ej,\nuj)$ for $j\in[k^*_2]$, we have $\frac{1}{\mathcal{D}_{1n}}\cdot\sum_{j=1}^{k^*_2}\Big|\sum_{i\in\mathcal{V}_{2,j}}\exp(\bzin)-\exp(\boj)\Big|\to0$;
    \item $X^{(u)}F(Y|X;\boj,\ej,\nuj)$ for $j\in[k^*_2]:|\mathcal{V}_{2,j}|=1$ and $u\in[d]$, we have
    \begin{align*}
        \frac{1}{\mathcal{D}_{1n}}\cdot\sum_{j\in[k^*_2]:|\mathcal{V}_{2,j}|=1}\sum_{i\in\mathcal{V}_{2,j}}\exp(\bzin)\|\dboijn\|\to0;
    \end{align*}
    \item $\frac{\partial h_2}{\partial\eta^{(u_2)}}F_1(Y|X;\boj,\ej,\nuj)$ for $j\in[k^*_2]:|\mathcal{V}_{2,j}|=1$ and $u_2\in[d_2]$, we have
    \begin{align*}
        \frac{1}{\mathcal{D}_{1n}}\cdot\sum_{j\in[k^*_2]:|\mathcal{V}_{2,j}|=1}\sum_{i\in\mathcal{V}_{2,j}}\exp(\bzin)\|\deijn\|\to0;
    \end{align*}
    \item $F_2(Y|X;\boj,\ej,\nuj)$ for $j\in[k^*_2]:|\mathcal{V}_{2,j}|=1$, we have
    \begin{align*}
        \frac{1}{\mathcal{D}_{1n}}\cdot\sum_{j\in[k^*_2]:|\mathcal{V}_{2,j}|=1}\sum_{i\in\mathcal{V}_{2,j}}\exp(\bzin)|\dnuijn|\to0;
    \end{align*}
    \item $X^{(u)}X^{(v)}F(Y|X;\boj,\ej,\nuj)$ for $j\in[k^*_2]:|\mathcal{V}_{2,j}|>1$ and $u,v\in[d]$, we have
    \begin{align*}
        \frac{1}{\mathcal{D}_{1n}}\cdot\sum_{j\in[k^*_2]:|\mathcal{V}_{2,j}|>1}\sum_{i\in\mathcal{V}_{2,j}}\exp(\bzin)\|\dboijn\|^2\to0;
    \end{align*}
    \item $\big[\frac{\partial^2 h_2}{\partial\eta^{(u_2)}}(X,\ej)\big]^2F_2(Y|X;\boj,\ej,\nuj)$ for $j\in[k^*_2]:|\mathcal{V}_{2,j}|>1$ and $u_2\in[d_2]$, we have
    \begin{align*}
        \frac{1}{\mathcal{D}_{1n}}\cdot\sum_{j\in[k^*_2]:|\mathcal{V}_{2,j}|>1}\sum_{i\in\mathcal{V}_{2,j}}\exp(\bzin)\|\deijn\|\to0;
    \end{align*}
    \item $F_4(Y|X;\boj,\ej,\nuj)$ for $j\in[k^*_2]:|\mathcal{V}_{2,j}|>1$, we have
    \begin{align*}
        \frac{1}{\mathcal{D}_{1n}}\cdot\sum_{j\in[k^*_2]:|\mathcal{V}_{2,j}|>1}\sum_{i\in\mathcal{V}_{2,j}}\exp(\bzin)|\dnuijn|^2\to0;
    \end{align*}
\end{itemize}
Taking the sum of the above limits, we deduce $1=\frac{1}{\mathcal{D}_{1n}}\cdot\mathcal{D}_{1n}\to0$ as $n\to\infty$, which is a contradiction. Thus, at least one among the coefficients in the representations of $A_{n,0}(Y|X)/\mathcal{D}_{1n}$, $[A_{n,1}(Y|X)-R_{n,1}(Y|X)]/\mathcal{D}_{1n}$, $[A_{n,2}(Y|X)-R_{n,2}(Y|X)]/\mathcal{D}_{1n}$, $[B_{n,1}(Y|X)-R_{n,3}(Y|X)]/\mathcal{D}_{1n}$, $[B_{n,2}(Y|X)-R_{n,4}(Y|X)]/\mathcal{D}_{1n}$, $[C_{n,1}(Y|X)-R_{n,5}(Y|X)]/\mathcal{D}_{1n}$, $[C_{n,2}(Y|X)-R_{n,6}(Y|X)]/\mathcal{D}_{1n}$, and $E_n(Y|X)/\mathcal{D}_{1n}$ does not converge to zero.

\vspace{0.5 em}
\noindent
\textbf{Stage 3 - Fatou's lemma contradiction:} In this stage, we use the Fatou's lemma to show a contradiction to the result of Stage 2. For that purpose, let us denote $m_n$ as the maximum of the absolute values of the coefficients in the representations of $A_{n,0}(Y|X)/\mathcal{D}_{1n}$, $[A_{n,1}(Y|X)-R_{n,1}(Y|X)]/\mathcal{D}_{1n}$, $[A_{n,2}(Y|X)-R_{n,2}(Y|X)]/\mathcal{D}_{1n}$, $[B_{n,1}(Y|X)-R_{n,3}(Y|X)]/\mathcal{D}_{1n}$, $[B_{n,2}(Y|X)-R_{n,4}(Y|X)]/\mathcal{D}_{1n}$, $[C_{n,1}(Y|X)-R_{n,5}(Y|X)]/\mathcal{D}_{1n}$, $[C_{n,2}(Y|X)-R_{n,6}(Y|X)]/\mathcal{D}_{1n}$, and $E_n(Y|X)/\mathcal{D}_{1n}$. It follows from the result of Stage 2 that $1/m_n\not\to\infty$ as $n\to\infty$. In addition, we also denote
\begin{align*}
    \frac{1}{m_n\mathcal{D}_{1n}}\cdot\sum_{i\in\mathcal{V}_{1,j}}\oin(\dkijn)^{(u_1)}\to s^{(u_1)}_{1,j},& \quad \frac{1}{m_n\mathcal{D}_{1n}}\cdot\sum_{i\in\mathcal{V}_{1,j}}\oin(\dtijn)\to s_{2,j},\\
    \frac{1}{m_n\mathcal{D}_{1n}}\cdot\sum_{i\in\mathcal{V}_{1,j}}\oin(\dkijn)^{(u_1)}(\dkijn)^{(v_1)}\to s^{(u_1v_1)}_{3,j},& \quad \frac{1}{m_n\mathcal{D}_{1n}}\cdot\sum_{i\in\mathcal{V}_{1,j}}\oin(\dtijn)^2\to s_{4,j},\\
    \frac{1}{m_n\mathcal{D}_{1n}}\cdot\sum_{i\in\mathcal{V}_{1,j}}\oin(\dkijn)^{(u_1)}(\dtijn)\to s^{(u_1)}_{5,j},& \quad \frac{1}{m_n\mathcal{D}_{1n}}\cdot\Big(\sum_{i\in\mathcal{V}_{1,j}}\oin-\oj\Big)\to s_{0,j},\\
\end{align*}
for all $j\in[k^*_1]$ and
\begin{align*}
    \frac{1}{m_n\mathcal{D}_{1n}}\cdot\Big(\sum_{i\in\mathcal{V}_{2,j}}\exp(\bzin)-\exp(\bzj)\Big)\to t_{0,j},& \quad \frac{1}{m_n\mathcal{D}_{1n}}\cdot\sum_{i\in\mathcal{V}_{2,j}}\exp(\bzin)(\dboijn)^{(u)}\to t^{(u)}_{1,j},\\
    \frac{1}{m_n\mathcal{D}_{1n}}\cdot\sum_{i\in\mathcal{V}_{2,j}}\exp(\bzin)(\deijn)^{(u_2)}\to t^{(u_2)}_{2,j},& \quad \frac{1}{m_n\mathcal{D}_{1n}}\cdot\sum_{i\in\mathcal{V}_{2,j}}\exp(\bzin)(\dnuijn)\to t_{3,j},\\
    \frac{1}{m_n\mathcal{D}_{1n}}\cdot\sum_{i\in\mathcal{V}_{2,j}}\exp(\bzin)(\dboijn)^{(u)}(\dboijn)^{(v)}\to t^{(uv)}_{4,j},& \quad \frac{1}{m_n\mathcal{D}_{1n}}\cdot\sum_{i\in\mathcal{V}_{2,j}}\exp(\bzin)(\deijn)^{(u_2)}(\deijn)^{(v_2)}\to t^{(u_2v_2)}_{5,j},\\
    \frac{1}{m_n\mathcal{D}_{1n}}\cdot\sum_{i\in\mathcal{V}_{2,j}}\exp(\bzin)(\dnuijn)^2\to t_{6,j},& \quad \frac{1}{m_n\mathcal{D}_{1n}}\cdot\sum_{i\in\mathcal{V}_{2,j}}\exp(\bzin)(\dboijn)^{(u)}(\deijn)^{(v_2)}\to t^{(uv_2)}_{7,j},\\
    \frac{1}{m_n\mathcal{D}_{1n}}\cdot\sum_{i\in\mathcal{V}_{2,j}}\exp(\bzin)(\dboijn)^{(u)}(\dnuijn)\to t^{(u)}_{8,j},& \quad \frac{1}{m_n\mathcal{D}_{1n}}\cdot\sum_{i\in\mathcal{V}_{2,j}}\exp(\bzin)(\deijn)^{(u_2)}(\dnuijn)\to t^{(u_2)}_{9,j},
\end{align*}
for all $j\in[k^*_2]$ as $n\to\infty$. Due to the result of Stage 2, at least one among the above limits is different from zero. Recall from equation~\eqref{eq:expectation_zero} that we have
\begin{align*}
    \bbE_X[V(f_{G^n_1,G^n_2}(\cdot|X),f_{G^*_1,G^*_2}(\cdot|X))]/\mathcal{D}_{1n}\to0,
\end{align*}
Furthermore, according to the Fatou's lemma, we get
\begin{align*}
    \lim_{n\to\infty}\dfrac{\bbE_X[V(f_{G^n_1,G^n_2}(\cdot|X),f_{G^*_1,G^*_2}(\cdot|X))]}{m_n\mathcal{D}_{1n}}\geq\int\liminf_{n\to\infty}\dfrac{|f_{G^n_1,G^n_2}(Y|X)-f_{G^*_1,G^*_2}(Y|X)|}{2m_n\mathcal{D}_{1n}}\dint (X,Y).
\end{align*}
Then, we deduce $[f_{G^n_1,G^n_2}(Y|X)-f_{G^*_1,G^*_2}(Y|X)]/[m_n\mathcal{D}_{1n}]\to0$ as $n\to\infty$ for almost surely $(X,Y)$. Since the input space is bounded and the parameter space is compact, the quantity $\sum_{j = 1}^{k^*_2} \exp((\beta_{1j}^{*})^{\top} X + \beta_{0j}^{*})$ is bounded. Thus, we also have
\begin{align*}
    \Big[\sum_{j = 1}^{k^*_2} \exp((\beta_{1j}^{*})^{\top} X + \beta_{0j}^{*})\Big][f_{G^n_1,G^n_2}(Y|X)-f_{G^*_1,G^*_2}(Y|X)]/[m_n\mathcal{D}_{1n}]\to0,
\end{align*}
implying that 
\begin{align*}
    \frac{1}{2}\Big[\sum_{j = 1}^{k^*_2} \exp((\beta_{1j}^{*})^{\top} X + \beta_{0j}^{*})\Big]\cdot\dfrac{q_{G^n_1}(Y|X)-q_{G^*_1}(Y|X)}{m_n\mathcal{D}_{1n}}+\frac{1}{2}\dfrac{Q_n(Y|X)}{m_n\mathcal{D}_{1n}}\to0.
\end{align*}
as $n\to\infty$ for almost surely $(X,Y)$. From the decomposition of the terms $q_{G^n_1}(Y|X)-q_{G^*_1}(Y|X)$ and $Q_n(Y|X)$ in Stage 1, we have
\begin{align}
    \frac{1}{2}\Big[\sum_{j = 1}^{k^*_2} \exp((\beta_{1j}^{*})^{\top} X + \beta_{0j}^{*})\Big]\cdot\dfrac{A_{n,2}(Y|X)+A_{n,1}(Y|X)+A_{n,0}(Y|X)}{m_n\mathcal{D}_{1n}}\nonumber\\
    \label{eq:zero_limit}
    +\frac{1}{2}\dfrac{B_{n,1}(Y|X)+B_{n,2}(Y|X)-C_{n,1}(Y|X)-C_{n,2}(Y|X)+E_{n}(Y|X)}{m_n\mathcal{D}_{1n}}\to0.
\end{align}
Denote $F_{\rho,j}(Y|X):=F_{\rho}(Y|X;\boj,\ej,\nuj)$ and $H_{j}(Y|X):=H(Y|X,\boj)$, we have
\begin{align*}
    &\lim_{n\to\infty}\frac{A_{n,0}(Y|X)}{m_n\mathcal{D}_{1n}}=\sum_{j=1}^{k^*_1}s_{0,j}\pi(Y|h_1(X,\kappa^*_j),\tau^*_j),\\
    &\lim_{n\to\infty}\frac{A_{n,1}(Y|X)}{m_n\mathcal{D}_{1n}}=\sum_{j\in[k^*_1]:|\mathcal{V}_{1,j}|=1}\Big[\sum_{u_1=1}^{d_1}s_{1,j}^{(u_1)}\frac{\partial h_1}{\partial\kappa^{(u_1)}}(X,\kj)\frac{\partial\pi}{\partial h_1}(Y|h_1(X,\kj),\tj)\\
    &\hspace{10cm}+\frac{1}{2}s_{2,j}\frac{\partial^2\pi}{\partial h_1^2}(Y|h_1(X,\kj),\tj)\Big],\\
    &\lim_{n\to\infty}\frac{A_{n,2}(Y|X)}{m_n\mathcal{D}_{1n}}=\sum_{j\in[k^*_1]:|\mathcal{V}_{1,j}|>1}\Big[\Big(\sum_{u_1=1}^{d_1}s_{1,j}^{(u_1)}\frac{\partial h_1}{\partial\kappa^{(u_1)}}(X,\kj)+\sum_{u_1,v_1=1}^{d_1}\frac{s_{3,j}^{(u_1v_1)}}{1+1_{\{u_1=v_1\}}}\frac{\partial^2h_1}{\partial\kappa^{(u_1)}\partial\kappa^{(v_1)}}(X,\kj)\Big)\\
    &\times \frac{\partial\pi}{\partial h_1}(Y|h_1(X,\kj),\tj)+\Big(\frac{1}{2}s_{2,j}+\sum_{u_1,v_1=1}^{d_1}\frac{s_{3,j}^{(u_1v_1)}}{1+1_{\{u_1=v_1\}}}\frac{\partial h_1}{\partial\kappa^{(u_1)}}(X,\kj)\frac{\partial h_1}{\partial\kappa^{(v_1)}}(X,\kj)\Big)\frac{\partial^2\pi}{\partial h_1^2}(Y|h_1(X,\kj),\tj)\\
    &\hspace{3cm}+\Big(\frac{1}{2}\sum_{u_1=1}^{d_1}s_{5,j}^{(u_1)}\frac{\partial h_1}{\partial\kappa^{(u_1)}}(X,\kj)\Big)\frac{\partial^3\pi}{\partial h_1^3}(Y|h_1(X,\kj),\tj)+\frac{1}{8}s_{4,j}\frac{\partial^4\pi}{\partial h_1^4}(Y|h_1(X,\kj),\tj)\Big],
\end{align*}
and
\begin{align*}
    &\lim_{n\to\infty}\frac{B_{n,1}(Y|X)}{m_n\mathcal{D}_{1n}}=\sum_{j\in[k^*_2]:|\mathcal{V}_{2,j}|=1}\Big[\sum_{u=1}^{d}t_{1,j}^{(u)}X^{(u)}F_{0,j}(Y|X)+\sum_{u_2=1}^{d_2}t_{2,j}^{(u_2)}\frac{\partial h_2}{\partial\eta^{(u_2)}}(X,\ej)F_{1,j}(Y|X)+\frac{1}{2}t_{3,j}F_{2,j}(Y|X)\Big],\\
    &\lim_{n\to\infty}\frac{B_{n,2}(Y|X)}{m_n\mathcal{D}_{1n}}=\sum_{j\in[k^*_2]:|\mathcal{V}_{2,j}|>1}\Big[\Big(\sum_{u=1}^{d}t_{1,j}^{(u)}X^{(u)}+\sum_{u,v=1}^{d}t_{4,j}^{(uv)}X^{(u)}X^{(v)}\Big)F_{0,j}(Y|X)\\
    &+\Big(\sum_{u_2=1}^{d_2}t_{2,j}^{(u_2)}\frac{\partial h_2}{\partial\eta^{(u_2)}}(X,\ej)+\sum_{u_2,v_2=1}^{d_2}t_{5,j}^{(u_2v_2)}\frac{\partial^2 h_2}{\partial\eta^{(u_2)}\partial\eta^{(v_2)}}(X,\ej)+\sum_{u=1}^{d}\sum_{v_2=1}^{d_2}t_{7,j}^{(uv_2)}X^{(u)}\frac{\partial h_2}{\partial\eta^{(v_2)}}(X,\ej)\Big)F_{1,j}(Y|X)\\
    &+\Big(\frac{1}{2}t_{3,j}+\sum_{u_2,v_2=1}^{d_2}t_{5,j}^{(u_2v_2)}\frac{\partial h_2}{\partial\eta^{(u_2)}}(X,\ej)\frac{\partial h_2}{\partial\eta^{(v_2)}}(X,\ej)+\sum_{u=1}^{d}\frac{1}{2}t_{8,j}^{(u)}X^{(u)}\Big)F_{2,j}(Y|X)\\
    &+\Big(\sum_{u_2=1}^{d_2}\frac{1}{2}t_{9,j}^{(u_2)}\frac{\partial h_2}{\partial\eta^{(u_2)}}(X,\ej)\Big)F_{3,j}(Y|X)+\frac{1}{8}t_{6,j}F_{4,j}(Y|X)\Big],
\end{align*}
and
\begin{align*}
    &\lim_{n\to\infty}\frac{C_{n,1}(Y|X)}{m_n\mathcal{D}_{1n}}=\sum_{j\in[k^*_2]:|\mathcal{V}_{2,j}|=1}\sum_{u=1}^{d}t_{1,j}^{(u)}X^{(u)}H_{j}(Y|X),\\
    &\lim_{n\to\infty}\frac{C_{n,2}(Y|X)}{m_n\mathcal{D}_{1n}}=\sum_{j\in[k^*_2]:|\mathcal{V}_{2,j}|>1}\Big(\sum_{u=1}^{d}t_{1,j}^{(u)}X^{(u)}+\sum_{u,v=1}^{d}t_{4,j}^{(uv)}X^{(u)}X^{(v)}\Big)H_{j}(Y|X),\\
    &\lim_{n\to\infty}\frac{E_{n}(Y|X)}{m_n\mathcal{D}_{1n}}=\sum_{j=1}^{k^*_2}t_{0,j}[F_{0,j}(Y|X)-H_{j}(Y|X)].
\end{align*}
It is worth noting that for almost every $X$, the set
\begin{align*}
    \Bigg\{\Big[\sum_{j = 1}^{k^*_2} \exp((\beta_{1j}^{*})^{\top} X + \beta_{0j}^{*})\Big]\frac{\partial^{\rho}\pi}{\partial h_1^{\rho}}(Y|h_1(X,\kj),\tj):0\leq\rho\leq 4, \  j\in[k^*_1]\Bigg\}\\
    \cup~\Big\{F_{\rho}(Y|X;\boj,\ej,\nuj), \ H(Y|X;\boj):0\leq\rho\leq 4, \  j\in[k^*_2]\Big\}
\end{align*}
is linearly independent w.r.t $Y$. Therefore, it follows that the coefficients of those terms in the limit in equation~\eqref{eq:zero_limit} become zero. 

For $j\in[k^*_1]$, by looking at the coefficient of the term $\Big[\sum_{j = 1}^{k^*_2} \exp((\beta_{1j}^{*})^{\top} X + \beta_{0j}^{*})\Big]\pi(Y|h_1(X,\kj),\tj)$, we have $s_{0,j}=0$.

For $j\in[k^*_1]$ such that $|\mathcal{V}_{1,j}|=1$, by considering the coefficients of 
\begin{itemize}
    % \item $\Big[\sum_{j = 1}^{k^*_2} \exp((\beta_{1j}^{*})^{\top} X + \beta_{0j}^{*})\Big]\pi(Y|h_1(X,\kj),\tj)$, we have $s_{0,j}=0$;
    \item $\Big[\sum_{j = 1}^{k^*_2} \exp((\beta_{1j}^{*})^{\top} X + \beta_{0j}^{*})\Big]\frac{\partial\pi}{\partial h_1}(Y|h_1(X,\kj),\tj)$, we have $\sum_{u_1=1}^{d_1}s_{1,j}^{(u_1)}\frac{\partial h_1}{\partial\kappa^{(u_1)}}(X,\kj)=0$ for almost every $X$. Since the expert function $h_1$ is strongly identifiable, we get $s_{1,j}^{(u_1)}=0$ for all $u_1\in[d_1]$;
    \item $\Big[\sum_{j = 1}^{k^*_2} \exp((\beta_{1j}^{*})^{\top} X + \beta_{0j}^{*})\Big]\frac{\partial^2\pi}{\partial h_1^2}(Y|h_1(X,\kj),\tj)$, we have $s_{2,j}=0$.
\end{itemize}
For $j\in[k^*_1]$ such that $|\mathcal{V}_{1,j}|>1$, by considering the coefficients of 
\begin{itemize}
    \item $\Big[\sum_{j = 1}^{k^*_2} \exp((\beta_{1j}^{*})^{\top} X + \beta_{0j}^{*})\Big]\frac{\partial\pi}{\partial h_1}(Y|h_1(X,\kj),\tj)$, we have
    \begin{align*}
        \sum_{u_1=1}^{d_1}s_{1,j}\frac{\partial h_1}{\partial\kappa^{(u_1)}}(X,\kj)+\sum_{u_1,v_1=1}^{d_1}\frac{s_{3,j}^{(u_1v_1)}}{1+1_{\{u_1=v_1\}}}\frac{\partial^2h_1}{\partial\kappa^{(u_1)}\partial\kappa^{(v_1)}}(X,\kj)=0,
    \end{align*}
    for almost every $X$. Since the expert function $h_1$ satisfies the strong identifiability condition, we get $s_{1,j}^{(u_1)}=s_{3,j}^{(u_1v_1)}=0$ for all $u_1,v_1\in[d_1]$;
    \item $\Big[\sum_{j = 1}^{k^*_2} \exp((\beta_{1j}^{*})^{\top} X + \beta_{0j}^{*})\Big]\frac{\partial^2\pi}{\partial h_1^2}(Y|h_1(X,\kj),\tj)$, we have
    \begin{align*}
        \frac{1}{2}s_{2,j}+\sum_{u_1,v_1=1}^{d_1}\frac{s_{3,j}^{(u_1v_1)}}{1+1_{\{u_1=v_1\}}}\frac{\partial h_1}{\partial\kappa^{(u_1)}}(X,\kj)\frac{\partial h_1}{\partial\kappa^{(v_1)}}(X,\kj)=0,
    \end{align*}
    for almost every $X$. Since $s_{3,j}^{(u_1v_1)}=0$ for all $u_1,v_1\in[d_1]$, we deduce $s_{2,j}=0$;
    \item $\Big[\sum_{j = 1}^{k^*_2} \exp((\beta_{1j}^{*})^{\top} X + \beta_{0j}^{*})\Big]\frac{\partial^3\pi}{\partial h_1^3}(Y|h_1(X,\kj),\tj)$, we have $\frac{1}{2}\sum_{u_1=1}^{d_1}s_{5,j}^{(u_1)}\frac{\partial h_1}{\partial\kappa^{(u_1)}}(X,\kj)=0$, for almost every $X$. As the expert function $h_1$ meets the strong identifiability condition, we get $s_{5,j}^{(u_1)}=0$ for all $u_1\in[d_1]$;
    \item $\Big[\sum_{j = 1}^{k^*_2} \exp((\beta_{1j}^{*})^{\top} X + \beta_{0j}^{*})\Big]\frac{\partial^4\pi}{\partial h_1^4}(Y|h_1(X,\kj),\tj)$, we have $s_{4,j}=0$.
\end{itemize}
% For $j\in[k^*_2]$, by looking at the coefficient of the term $F_{0,j}(Y|X)$, we have $t_{0,j}=0$.\\
% \noindent
For $j\in[k^*_2]$ such that $|\mathcal{V}_{2,j}|=1$, by considering the coefficients of 
\begin{itemize}
    \item $F_{0,j}(Y|X)$, we have $t_{0,j}+\sum_{u=1}^{d}t_{1,j}^{(u)}X^{(u)}=0$, for almost every $X$. Then, we deduce $t_{0,j}=t_{1,j}^{(u)}=0$ for all $u\in[d]$;
    \item $F_{1,j}(Y|X)$, we have $\sum_{u_2=1}^{d_2}t_{2,j}^{(u_2)}\frac{\partial h_2}{\partial\eta^{(u_2)}}(X,\ej)$, for almost every $X$. As the expert function $h_2$ is strongly identifiable, we get $t_{2,j}^{(u_2)}=0$ for all $u_2\in[d_2]$;
    \item $F_{2,j}(Y|X)$, we have $t_{3,j}=0$.
\end{itemize}
For $j\in[k^*_2]$ such that $|\mathcal{V}_{2,j}|>1$, by considering the coefficients of 
\begin{itemize}
    \item $F_{0,j}(Y|X)$, we have $t_{0,j}+\sum_{u=1}^{d}t_{1,j}^{(u)}X^{(u)}+\sum_{u,v=1}^{d}t_{4,j}^{(uv)}X^{(u)}X^{(v)}=0$, for almost surely $X$. Then, we get $t_{0,j}=t_{1,j}^{(u)}=t_{4,j}^{(uv)}$ for all $u,v\in[d]$.
    \item $F_{1,j}(Y|X)$, we have
    \begin{align*}
        \sum_{u_2=1}^{d_2}t_{2,j}^{(u_2)}\frac{\partial h_2}{\partial\eta^{(u_2)}}(X,\ej)+\sum_{u_2,v_2=1}^{d_2}t_{5,j}^{(u_2v_2)}\frac{\partial^2 h_2}{\partial\eta^{(u_2)}\partial\eta^{(v_2)}}(X,\ej)+\sum_{u=1}^{d}\sum_{v_2=1}^{d_2}t_{7,j}^{(uv_2)}X^{(u)}\frac{\partial h_2}{\partial\eta^{(v_2)}}(X,\ej)=0,
    \end{align*}
    for almost every $X$. As the expert function $h_2$ meets the strong identifiability condition, we get $t_{2,j}^{(u_2)}=t_{5,j}^{(u_2v_2)}=t_{7,j}^{(uv_2)}=0$ for all $u_2,v_2\in[d_2]$ and $u\in[d]$;
    \item $F_{2,j}(Y|X)$, we have 
    \begin{align*}
        \frac{1}{2}t_{3,j}+\sum_{u_2,v_2=1}^{d_2}t_{5,j}^{(u_2v_2)}\frac{\partial h_2}{\partial\eta^{(u_2)}}(X,\ej)\frac{\partial h_2}{\partial\eta^{(v_2)}}(X,\ej)+\sum_{u=1}^{d}\frac{1}{2}t_{8,j}^{(u)}X^{(u)}=0,
    \end{align*}
    for almost every $X$. Since $t_{5,j}^{(u_2v_2)}=0$ for all $u_2,v_2\in[d_2]$, we deduce $\frac{1}{2}t_{3,j}+\sum_{u=1}^{d}\frac{1}{2}t_{8,j}^{(u)}X^{(u)}=0$, for almost every $X$. Then, we get $t_{3,j}=t_{8,j}^{(u)}=0$ for all $u_2,v_2\in[d_2]$ and $u\in[d]$;
    \item $F_{3,j}(Y|X)$, we have $\sum_{u_2=1}^{d_2}\frac{1}{2}t_{9,j}^{(u_2)}\frac{\partial h_2}{\partial\eta^{(u_2)}}(X,\ej)=0$, for almost every $X$. As the expert function $h_2$ is strongly identifiable, we get $t_{9,j}^{(u_2)}$ for all $u_2\in[d_2]$;
    \item $F_{4,j}(Y|X)$, we have $t_{6,j}=0$.
\end{itemize}
Putting the above results together, we have (i) $s_{0,j}=s_{1,j}^{(u_1)}=s_{2,j}=s_{3,j}^{(u_1v_1)}=s_{4,j}=s_{5,j}^{(u_1)}=0$ for all $j\in[k^*_1]$ and $u_1,v_1\in[d_1]$; (ii) $t_{0,j}=t_{1,j}^{(u)}=t_{2,j}^{(u_2)}=t_{3,j}=t_{4,j}^{(uv)}=t_{5,j}^{(u_2v_2)}=t_{6,j}=t_{7,j}^{uv_2}=t_{8,j}^{(u)}=t_{9,j}^{(u_2)}=0$ for all $j\in[k^*_2]$, $u,v\in[d]$ and $u_2,v_2\in[d_2]$. This contradicts to the fact that at least one among them is non-zero. Consequently, we achieve the local part in equation~\eqref{eq:local_part}, that is,
\begin{align*}
    \lim_{\varepsilon\to0}\inf_{(G_1,G_2)\in\mathcal{G}_{k_1,k_2}(\Theta):\mathcal{D}_1((G_1,G_2),(G^*_1,G^*_2))\leq\varepsilon}\dfrac{\bbE_X[V(f_{G_1,G_2}(\cdot|X),f_{G^*_1,G^*_2}(\cdot|X))]}{\mathcal{D}_1((G_1,G_2),(G^*_1,G^*_2))}>0.
\end{align*}
The local part indicates that there exists a positive constant $\varepsilon'$ such that 
\begin{align*}
    \inf_{(G_1,G_2)\in\mathcal{G}_{k_1,k_2}(\Theta):\mathcal{D}_1((G_1,G_2),(G^*_1,G^*_2))\leq\varepsilon}\dfrac{\bbE_X[V(f_{G_1,G_2}(\cdot|X),f_{G^*_1,G^*_2}(\cdot|X))]}{\mathcal{D}_1((G_1,G_2),(G^*_1,G^*_2))}>0.
\end{align*}
\textbf{Proof for the global part~\eqref{eq:global_part}:} Thus, it is sufficient to establish the global part 
\begin{align*}
    \inf_{(G_1,G_2)\in\mathcal{G}_{k_1,k_2}(\Theta):\mathcal{D}_1((G_1,G_2),(G^*_1,G^*_2))>\varepsilon'}\dfrac{\bbE_X[V(f_{G_1,G_2}(\cdot|X),f_{G^*_1,G^*_2}(\cdot|X))]}{\mathcal{D}_1((G_1,G_2),(G^*_1,G^*_2))}>0.
\end{align*}
Suppose that the global part does not hold, then there exists a sequence of mixing measure pairs $(\tilde{G}^n_1,\tilde{G}^n_2)$ satisfying $\mathcal{D}_1((\tilde{G}^n_1,\tilde{G}^n_2),(G^*_1,G^*_2))>\varepsilon'$ and $\lim_{n\to\infty}\frac{\bbE_X[V(f_{\tilde{G}^n_1,\tilde{G}^n_2}(\cdot|X),f_{G^*_1,G^*_2}(\cdot|X))]}{\mathcal{D}_1((\tilde{G}^n_1,\tilde{G}^n_2),(G^*_1,G^*_2))}=0$. In other words, we have 
\begin{align*}
    \lim_{n\to\infty}\bbE_X[V(f_{\tilde{G}^n_1,\tilde{G}^n_2}(\cdot|X),f_{G^*_1,G^*_2}(\cdot|X))]=0.
\end{align*}
Recall that the parameter space $\Theta$ is compact, then we can replace the sequence $(\tilde{G}^n_1,\tilde{G}^n_2)$ by one of its subsequences which converges to some pair of mixing measures $(\tilde{G}_1,\tilde{G}_2)$. Due to the fact that $\mathcal{D}_1((\tilde{G}^n_1,\tilde{G}^n_2),(G^*_1,G^*_2))>\varepsilon'$, we get  $\mathcal{D}_1((\tilde{G}_1,\tilde{G}_2),(G^*_1,G^*_2))>\varepsilon'$. Next, by applying the Fatou's lemma, we have
\begin{align*}
    0=\lim_{n\to\infty}\bbE_X[V(f_{\tilde{G}^n_1,\tilde{G}^n_2}(\cdot|X),f_{G^*_1,G^*_2}(\cdot|X))]&\geq\frac{1}{2}\int\liminf_{n\to\infty}\Big|f_{\tilde{G}^n_1,\tilde{G}^n_2}(Y|X),f_{G^*_1,G^*_2}(Y|X)\Big|\dint(X,Y)\\
    &=\frac{1}{2}\int\Big|f_{\tilde{G}_1,\tilde{G}_2}(Y|X)-f_{G^*_1,G^*_2}(Y|X)\Big|\dint(X,Y).
\end{align*}
The above result implies that $f_{\tilde{G}_1,\tilde{G}_2}(Y|X)=f_{G^*_1,G^*_2}(Y|X)$ for almost surely $(X,Y)$. According to Proposition~\ref{prop:identifiability}, we deduce $(\tilde{G}_1,\tilde{G}_2)\equiv(G^*_1,G^*_2)$, indicating that $\mathcal{D}_1((\tilde{G}_1,\tilde{G}_2),(G^*_1,G^*_2))=0$. This contradicts the fact that $\mathcal{D}_1((\tilde{G}_1,\tilde{G}_2),(G^*_1,G^*_2))>\varepsilon'>0$. Hence, we obtain the global part~\eqref{eq:global_part} and complete the proof.

%============================

\subsection{Proof of Theorem~\ref{theorem:linear_experts}}
\label{appendix:linear_experts}
By employing arguments used in Appendix~\ref{appendix:strongly_identifiable_experts}, it is sufficient to establish the local part
\begin{align}
    \label{eq:local_part_linear}
    \lim_{\varepsilon\to0}\inf_{(G_1,G_2)\in\mathcal{G}_{k_1,k_2}(\Theta):\mathcal{D}_2((G_1,G_2),(G^*_1,G^*_2))\leq\varepsilon}\dfrac{\bbE_X[V(f_{G_1,G_2}(\cdot|X),f_{G^*_1,G^*_2}(\cdot|X))]}{\mathcal{D}_2((G_1,G_2),(G^*_1,G^*_2))}>0,
\end{align}
and the global part
\begin{align}
    \label{eq:global_part_linear}
    \inf_{(G_1,G_2)\in\mathcal{G}_{k_1,k_2}(\Theta):\mathcal{D}_2((G_1,G_2),(G^*_1,G^*_2))>\varepsilon'}\dfrac{\bbE_X[V(f_{G_1,G_2}(\cdot|X),f_{G^*_1,G^*_2}(\cdot|X))]}{\mathcal{D}_2((G_1,G_2),(G^*_1,G^*_2))}>0.
\end{align}
in this appendix. As the global part~\eqref{eq:global_part_linear} can be derived similarly to Appendix~\ref{appendix:strongly_identifiable_experts}, we omit its proof here. Thus, we will focus on showing only the local part~\eqref{eq:local_part_linear}. Assume by contrary that the local part is not true. Then, there exists a sequence of mixing measure pairs $(G^n_1,G^n_2)$ taking the form $G^n_1:=\sum_{i=1}^{k^n_1}\oin\delta_{(\koin,\kzin,\tin)}$, $G^n_2:=\sum_{i=1}^{k^n_2}\exp(\bzin)\delta_{(\boin,\eoin,\ezin,\nuin)}$ for $n\in\mathbb{N}$ such that $\mathcal{D}_{2n}:=\mathcal{D}_{2}((G^n_1,G^n_2),(G^*_1,G^*_2))\to0$ and
\begin{align}
    \label{eq:expectation_zero_linear}
    \bbE_X[V(f_{G^n_1,G^n_2}(\cdot|X),f_{G^*_1,G^*_2}(\cdot|X))]/\mathcal{D}_{2n}\to0,
\end{align}
as $n\to\infty$. Here, we may assume WLOG that the number of shared experts and routed experts $k^n_1$, $k^n_2$ and Voronoi cells $\mathcal{V}_{1,j}=\mathcal{V}_{1,j}(G^n_1)$, $\mathcal{V}_{2,j}=\mathcal{V}_{2,j}(G^n_2)$ do not change with the sample size $n$. Then, the Voronoi loss $\mathcal{D}_{2n}$ can be rewritten as
\begin{align}
    \label{eq:loss_2n}
    &\mathcal{D}_{2n}=\sum_{j=1}^{k^*_1}\Big|\sum_{i\in\mathcal{V}_{1,j}}\oin-\oj\Big|+\sum_{j\in[k^*_2]:|\mathcal{V}_{2,j}|>1}\Big|\sum_{i\in\mathcal{V}_{2,j}}\exp(\bzin)-\exp(\bzj)\Big|\nonumber\\
    &+\sum_{j\in[k^*_1]:|\mathcal{V}_{1,j}|=1}\sum_{i\in\mathcal{V}_{1,j}}\oin(\|\dkoijn\|+|\dkzijn|+|\dtijn|)\nonumber\\
    &+\sum_{j\in[k^*_1]:|\mathcal{V}_{1,j}|>1}\sum_{i\in\mathcal{V}_{1,j}}\oin(\|\dkoijn\|^2+|\dkzijn|^{\brj}+|\dtijn|^{\brj/2})\nonumber\\
    &+\sum_{j\in[k^*_2]:|\mathcal{V}_{2,j}|=1}\sum_{i\in\mathcal{V}_{2,j}}\exp(\bzin)(\|\dboijn\|+\|\deoijn\|+|\dezijn|+|\dnuijn|)\nonumber\\
    &+\sum_{j\in[k^*_2]:|\mathcal{V}_{2,j}|>1}\sum_{i\in\mathcal{V}_{2,j}}\exp(\bzin)(\|\dboijn\|^{\trj}+\|\deoijn\|^{\trj/2}+|\dezijn|^{\trj}+|\dnuijn|^{\trj/2}),
\end{align}
where we denote $\dkoijn:=\koin-\koj$, $\dkzijn:=\kzin-\kzj$,  $\deoijn:=\eoin-\eoj$, and $\dezijn:=\ezin-\ezj$. Since $\mathcal{D}_{2n}\to0$ as $n\to\infty$, then the above formulation indicates that as $n\to\infty$, we have
$\sum_{i\in\mathcal{V}_{1,j}}\oin\to\oj$, $(\koin,\kzin,\tin)\to(\koj,\kzj,\tj)$ as $n\to\infty$ for all $i\in\mathcal{V}_{1,j}$ and $j\in[k^*_1]$. Furthermore, we also have $\sum_{i\in\mathcal{V}_{2,j}}\exp(\bzin)-\exp(\bzj)$, $(\boin,\eoin,\ezin,\nuin)\to(\boj,\eoj,\ezj,\nuj)$ as $n\to\infty$ for all $i\in\mathcal{V}_{2,j}$ and $j\in[k^*_2]$.

\vspace{0.5 em}
\noindent
Next, we divide the rest of this proof into three main steps:

\vspace{0.5 em}
\noindent
\textbf{Stage 1 - Density Decomposition:} In this stage, we aim to decompose the density discrepancy $f_{G^n_1,G^n_2}(Y|X)-f_{G^*_1,G^*_2}(Y|X)$. For ease of presentation, we denote
\begin{align*}
    q_{G^n_1}(Y|X)&:=\sum_{i=1}^{k^n_1}\omega^n_{i}\pi(Y|(\koin)^{\top}X+\kzin,\tau^n_{i}),\\
    q_{G^*_1}(Y|X)&:=\sum_{i=1}^{k^*_1}\omega^*_{i}\pi(Y|(\koj)^{\top}X+\kzj,\tau^*_{i}),\\
    p_{G^n_2}(Y|X)&:=\sum_{i = 1}^{k^n_2} \frac{\exp((\beta_{1i}^{n})^{\top} X + \beta_{0i}^{n})}{\sum_{j = 1}^{k^n_2} \exp((\beta_{1j}^{n})^{\top} X + \beta_{0j}^{n})}\cdot \pi(Y|(\eoin)^{\top}X+\ezin, \nu_{i}^{n}),\\
    p_{G^*_2}(Y|X)&:=\sum_{i = 1}^{k^*_2} \frac{\exp((\beta_{1i}^{*})^{\top} X + \beta_{0i}^{*})}{\sum_{j = 1}^{k^*_2} \exp((\beta_{1j}^{*})^{\top} X + \beta_{0j}^{*})}\cdot \pi(Y|(\eoj)^{\top}X+\ezj, \nu_{i}^{*}).
\end{align*}
Given the above notations, we get
\begin{align*}
    f_{G^n_1,G^n_2}(Y|X)-f_{G^*_1,G^*_2}(Y|X)=\frac{1}{2}\left[(q_{G^n_1}(Y|X)-q_{G^*_1}(Y|X))+(p_{G^n_2}(Y|X)-p_{G^*_2}(Y|X))\right].
\end{align*}
\textbf{Stage 1.1:} Firstly, we decompose the term $q_{G^n_1}(Y|X)-q_{G^*_1}(Y|X)$ as
\begin{align*}
    q_{G^n_1}(Y|X)-q_{G^*_1}(Y|X)&=\sum_{j\in[k^*_1]:|\mathcal{V}_{1,j}|=1}\sum_{i\in\mathcal{V}_{1,j}}\oin[\pi(Y|(\koin)^{\top}X+\kzin,\tin)-\pi(Y|(\koj)^{\top}X+\kzj,\tj)]\\
    &+\sum_{j\in[k^*_1]:|\mathcal{V}_{1,j}|>1}\sum_{i\in\mathcal{V}_{1,j}}\oin[\pi(Y|(\koin)^{\top}X+\kzin,\tin)-\pi(Y|(\koj)^{\top}X+\kzj,\tj)]\\
    &+\sum_{j=1}^{k^*_1}\Big(\sum_{i\in\mathcal{V}_{1,j}}\oin-\oj\Big)\pi(Y|(\koj)^{\top}X+\kzj,\tau^*_j)\\
    &:=A_{n,1}(Y|X)+A_{n,2}(Y|X)+A_{n,0}(Y|X).
\end{align*}
By applying the first-order Taylor expansion to the function $\pi(Y|(\koin)^{\top}X+\kzin,\tau^n_i)$ around the point $(\koj,\kzj,\tj)$, the term $A_{n,1}(Y|X)$ is rewritten as
\begin{align*}
    A_{n,1}(Y|X)&=\sum_{j\in[k^*_1]:|\mathcal{V}_{1,j}|=1}\sum_{i\in\mathcal{V}_{1,j}}\sum_{|\alpha|=1}\frac{\oin}{\alpha!}(\dkoijn)^{\alpha_1}(\dkzijn)^{\alpha_2}(\dtijn)^{\alpha_3}\\
    &\hspace{4cm}\times \frac{\partial^{|\alpha_1|+\alpha_2+\alpha_3}\pi}{\partial\kappa_1^{\alpha_1}\partial\kappa_2^{\alpha_2}\partial\tau^{\alpha_3}}(Y|(\koj)^{\top}X+\kzj,\tj)+R_{n,1}(Y|X)\\
    &=\sum_{j\in[k^*_1]:|\mathcal{V}_{1,j}|=1}\sum_{i\in\mathcal{V}_{1,j}}\sum_{|\alpha|=1}\frac{\oin}{2^{\alpha_3}\alpha!}(\dkoijn)^{\alpha_1}(\dkzijn)^{\alpha_2}(\dtijn)^{\alpha_3}\\
    &\hspace{4cm}\times X^{\alpha_1}\frac{\partial^{|\alpha_1|+\alpha_2+2\alpha_3}\pi}{\partial h_1^{|\alpha_1|+\alpha_2+2\alpha_3}}(Y|(\koj)^{\top}X+\kzj,\tj)+R_{n,1}(Y|X)\\
    &=\sum_{j\in[k^*_1]:|\mathcal{V}_{1,j}|=1}\sum_{|\alpha_1|=0}^{1}\sum_{\ell=1_{\{|\alpha_1|=0\}}}^{2(1-|\alpha_1|)}A^{(j)}_{n,\alpha_1,\ell}\cdot X^{\alpha_1}\frac{\partial^{|\alpha_1|+\ell}\pi}{\partial h_1^{|\alpha_1|+\ell}}(Y|(\koj)^{\top}X+\kzj,\tj)+R_{n,1}(Y|X),
\end{align*}
where $R_{n,1}(Y|X)$ is a Taylor remainder such that $R_{n,1}(Y|X)/\mathcal{D}_{2n}\to0$ as $n\to\infty$, and
\begin{align*}
    A^{(j)}_{n,\alpha_1,\ell}&:=\sum_{i\in\mathcal{V}_{1,j}}\sum_{\alpha_2+2\alpha_3=\ell}\frac{\oin}{2^{\alpha_3}\alpha!}(\dkoijn)^{\alpha_1}(\dkzijn)^{\alpha_2}(\dtijn)^{\alpha_3},
\end{align*}
for all $j\in[k^*_1]$, $\alpha_1\in\mathbb{N}^{d}$ and $\ell\in\mathbb{N}$ such that $(\alpha_1,\ell)\neq (0_d,0)$. Meanwhile, by applying the Taylor expansion of the order $\brj:=r_1(|\mathcal{V}_{1,j}|)$ to the function $\pi(Y|(\koin)^{\top}X+\kzin,\tau^n_i)$ around the point $(\koj,\kzj,\tj)$, we rewrite the term $A_{n,2}(Y|X)$ as
\begin{align*}
    A_{n,2}(Y|X)&=\sum_{j\in[k^*_1]:|\mathcal{V}_{1,j}|>1}\sum_{|\alpha_1|=0}^{\brj}\sum_{\ell=1_{\{|\alpha_1|=0\}}}^{2(\brj-|\alpha_1|)}A^{(j)}_{n,\alpha_1,\ell}\cdot X^{\alpha_1}\frac{\partial^{|\alpha_1|+\ell}\pi}{\partial h_1^{|\alpha_1|+\ell}}(Y|(\koj)^{\top}X+\kzj,\tj)+R_{n,2}(Y|X),
\end{align*}
where $R_{n,2}(Y|X)$ is a Taylor remainder such that $R_{n,2}(Y|X)/\mathcal{D}_{2n}\to$ as $n\to\infty$.

\vspace{0.5 em}
\noindent
\textbf{Stage 1.2:} Next, we attempt to decompose the term $Q_n(Y|X):=\Big[\sum_{j = 1}^{k^*_2} \exp((\beta_{1j}^{*})^{\top} X + \beta_{0j}^{*})\Big]\cdot[p_{G^n_2}(Y|X)-p_{G^*_2}(Y|X)]$. By denoting $F(Y|X;\beta_1,\eta_1,\eta_0,\nu):=\exp(\beta_{1}^{\top}X)\pi(Y|(\eta_1)^{\top}X+\eta_0,\nu)$ and $H(Y|X;\beta_1):=\exp(\beta_1^{\top}X)p_{G_2}(Y|X)$, we can represent $Q_n(Y|X)$ as
\begin{align*}
    Q_n(Y|X)&=\sum_{j=1}^{k^*_2}\sum_{i\in\mathcal{V}_{2,j}}\exp(\bzin)[F(Y|X;\boin,\eoin,\ezin,\nuin)-F(Y|X;\boj,\eoj,\ezj,\nuj)]\\
    &-\sum_{j=1}^{k^*_2}\sum_{i\in\mathcal{V}_{2,j}}\exp(\bzin)[H(Y|X;\boin)-H(Y|X;\boj)]\\
    &+\sum_{j=1}^{k^*_2}\Big(\sum_{i\in\mathcal{V}_{2,j}}\exp(\bzin)-\exp(\bzj)\Big)[F(Y|X;\boj,\eoj,\ezj,\nuj)-H(Y|X;\boj)]\\
    &:=B_{n}(Y|X)-C_{n}(Y|X)+E_{n}(Y|X).
\end{align*}
\textbf{Stage 1.2.1:} In this step, we decompose the term $B_{n}(Y|X)$:
\begin{align*}
    B_{n}(Y|X)&=\sum_{j\in[k^*_2]:|\mathcal{V}_{2,j}|=1}\sum_{i\in\mathcal{V}_{2,j}}\exp(\bzin)[F(Y|X;\boin,\eoin,\ezin,\nuin)-F(Y|X;\boj,\eoj,\ezj,\nuj)]\\
    &+\sum_{j\in[k^*_2]:|\mathcal{V}_{2,j}|>1}\sum_{i\in\mathcal{V}_{2,j}}\exp(\bzin)[F(Y|X;\boin,\eoin,\ezin,\nuin)-F(Y|X;\boj,\eoj,\ezj,\nuj)]\\
    &:=B_{n,1}(Y|X) + B_{n,2}(Y|X).
\end{align*}
By applying the first-order Taylor expansion to the function $F(Y|X;\boin,\eoin,\ezin,\nuin)$ around the point $(\boj,\eoj,\ezj,\nuj)$, we have
\begin{align*}
    B_{n,1}(Y|X)&=\sum_{j\in[k^*_2]:|\mathcal{V}_{2,j}|=1}\sum_{i\in\mathcal{V}_{2,j}}\exp(\bzin)\sum_{|\alpha|=1}\frac{1}{\alpha!}(\dboijn)^{\alpha_1}(\deoijn)^{\alpha_2}(\dezijn)^{\alpha_3}(\dnuijn)^{\alpha_4}\\
    &\hspace{3cm}\times \frac{\partial^{|\alpha_1|+|\alpha_2|+\alpha_3+\alpha_4} F}{\partial\beta_1^{\alpha_1}\partial\eta_1^{\alpha_2}\partial\eta_0^{\alpha_3}\partial\nu^{\alpha_4}}(Y|X;\boj,\eoj,\ezj,\nuj)+R_{n,3}(Y|X)\\
    &=\sum_{j\in[k^*_2]:|\mathcal{V}_{2,j}|=1}\sum_{i\in\mathcal{V}_{2,j}}\sum_{|\alpha|=1}\frac{\exp(\bzin)}{2^{\alpha_4}\alpha!}(\dboijn)^{\alpha_1}(\deoijn)^{\alpha_2}(\dezijn)^{\alpha_3}(\dnuijn)^{\alpha_4}\\
    &\hspace{2cm}\times X^{\alpha_1+\alpha_2}\exp((\boj)^{\top}X)\frac{\partial^{|\alpha_2|+\alpha_3+2\alpha_4} \pi}{\partial h_2^{|\alpha_2|+\alpha_3+2\alpha_4}}(Y|(\eoj)^{\top}X+\ezj,\nuj)+R_{n,3}(Y|X)\\
    &=\sum_{j\in[k^*_2]:|\mathcal{V}_{2,j}|=1}\sum_{|\ell_1|+\ell_2=1}^{2}B^{(j)}_{n,\ell_1,\ell_2}\cdot X^{\ell_1}\exp((\boj)^{\top}X)\frac{\partial^{\ell_2} \pi}{\partial h_2^{\ell_2}}(Y|(\eoj)^{\top}X+\ezj,\nuj)+R_{n,3}(Y|X),
\end{align*}
where $R_{n,3}(Y|X)$ is the Taylor remainder such that $R_{n,3}(Y|X)/\mathcal{D}_{2n}\to0$, and
\begin{align*}
    B^{(j)}_{n,\ell_1,\ell_2}&:=\sum_{i\in\mathcal{V}_{2,j}}\sum_{\alpha\in\mathcal{I}_{\ell_1,\ell_2}}\frac{\exp(\bzin)}{2^{\alpha_4}\alpha!}(\dboijn)^{\alpha_1}(\deoijn)^{\alpha_2}(\dezijn)^{\alpha_3}(\dnuijn)^{\alpha_4},
\end{align*}
for all $j\in[k^*_2]$, $\ell_1\in\mathbb{N}^{d}$, and $\ell_2\in\mathbb{N}$ such that $(\ell_1,\ell_2)\neq (0_d,0)$, where we define
\begin{align*}
    \mathcal{I}_{\ell_1,\ell_2}:=\{\alpha=(\alpha_i)_{i=1}^{4}\in\mathbb{N}^{d}\times\mathbb{N}^{d}\times\mathbb{N}\times\mathbb{N}:\alpha_1+\alpha_2=\ell_1, \alpha_3+2\alpha_4=\ell_2-|\alpha_2|\}.
\end{align*}
By applying the Taylor expansion of the order $\trj:=r_2(|\mathcal{V}_{2,j}|)$ to the function $F(Y|X;\boin,\eoin,\ezin,\nuin)$ around the point $(\boj,\eoj,\ezj,\nuj)$, we have
\begin{align*}
    B_{n,2}(Y|X)&=\sum_{j\in[k^*_2]:|\mathcal{V}_{2,j}|=1}\sum_{|\ell_1|+\ell_2=1}^{2\trj}B^{(j)}_{n,\ell_1,\ell_2}\cdot X^{\ell_1}\exp((\boj)^{\top}X)\frac{\partial^{\ell_2} \pi}{\partial h_2^{\ell_2}}(Y|(\eoj)^{\top}X+\ezj,\nuj)+R_{n,4}(Y|X),
\end{align*}
where $R_{n,4}(Y|X)$ is the Taylor remainder such that $R_{n,4}(Y|X)/\mathcal{D}_{2n}\to0$.

\vspace{0.5 em}
\noindent
\textbf{Stage 1.2.2:} In this step, we decompose the term $C_{n}(Y|X)$:
\begin{align*}
    C_{n}(Y|X)&=\sum_{j\in[k^*_2]:|\mathcal{V}_{2,j}|=1}\sum_{i\in\mathcal{V}_{2,j}}\exp(\bzin)[H(Y|X;\boin)-H(Y|X;\boj)]\\
    &+\sum_{j\in[k^*_2]:|\mathcal{V}_{2,j}|>1}\sum_{i\in\mathcal{V}_{2,j}}\exp(\bzin)[H(Y|X;\boin)-H(Y|X;\boj)]\\
    &:=C_{n,1}(Y|X) + C_{n,2}(Y|X).
\end{align*}
By means of the first-order and second-order Taylor expansions to the function $H(Y|X;\boin)$ around the point $\boj$, the term $C_{n,1}(Y|X)$ can be represented as
\begin{align*}
    C_{n,1}(Y|X)&=\sum_{j\in[k^*_2]:|\mathcal{V}_{2,j}|=1}\sum_{i\in\mathcal{V}_{2,j}}\exp(\bzin)\sum_{|\gamma|=1}\frac{1}{\gamma!}(\dboijn)^{\gamma}\frac{\partial^{|\gamma|}H}{\partial\beta_1^{\gamma}}(Y|X;\boj)+R_{n,5}(Y|X)\\
    &=\sum_{j\in[k^*_2]:|\mathcal{V}_{2,j}|=1}\sum_{i\in\mathcal{V}_{2,j}}\sum_{|\gamma|=1}\frac{\exp(\bzin)}{\gamma!}(\dboijn)^{\gamma}\cdot X^{\gamma}\exp((\boj)^{\top}X)p_{G^n_2}(Y|X)+R_{n,5}(Y|X)\\
    &=\sum_{j\in[k^*_2]:|\mathcal{V}_{2,j}|=1}\sum_{|\gamma|=1}C^{(j)}_{n,\gamma}\cdot X^{\gamma}\exp((\boj)^{\top}X)p_{G^n_2}(Y|X)+R_{n,5}(Y|X),
\end{align*}
where $R_{n,5}(Y|X)$ is the Taylor remainder such that $R_{n,5}(Y|X)/\mathcal{D}_{2n}\to0$, and
\begin{align*}
    C^{(j)}_{n,\gamma}&:=\sum_{i\in\mathcal{V}_{2,j}}\frac{\exp(\bzin)}{\gamma!}(\dboijn)^{\gamma},
\end{align*}
for all $j\in[k^*_2]$ and $\gamma\in\mathbb{N}^{d}\setminus\{0_d\}$. Analogously, by applying the second-order Taylor expansion to the function $H(Y|X;\boin)$ around the point $\boj$, we represent the term $C_{n,2}(Y|X)$ as
\begin{align*}
    C_{n,2}(Y|X)&=\sum_{j\in[k^*_2]:|\mathcal{V}_{2,j}|=1}\sum_{|\gamma|=1}^{2}C^{(j)}_{n,\gamma}\cdot X^{\gamma}\exp((\boj)^{\top}X)p_{G^n_2}(Y|X)+R_{n,6}(Y|X),
\end{align*}
where $R_{n,6}(Y|X)$ is the Taylor remainder such that $R_{n,6}(Y|X)/\mathcal{D}_{2n}\to0$.

\vspace{0.5 em}
\noindent
Combining the above decompositions of $A_n(Y|X)$, $B_n(Y|X)$, and $C_n(Y|X)$ together, we obtain
\begin{align}
    &\Big[\sum_{j = 1}^{k^*_2} \exp((\beta_{1j}^{*})^{\top} X + \beta_{0j}^{*})\Big]\cdot[f_{G^n_1,G^n_2}(Y|X)-f_{G^*_1,G^*_2}(Y|X)]\nonumber\\
    &=\Big[\sum_{j = 1}^{k^*_2} \exp((\beta_{1j}^{*})^{\top} X + \beta_{0j}^{*})\Big]\frac{1}{2}\sum_{j\in[k^*_1]}\sum_{|\alpha_1|=0}^{\brj}\sum_{\ell=0}^{2(\brj-|\alpha_1|)}A^{(j)}_{n,\alpha_1,\ell}\cdot X^{\alpha_1}\frac{\partial^{|\alpha_1|+\ell}\pi}{\partial h_1^{|\alpha_1|+\ell}}(Y|(\koj)^{\top}X+\kzj,\tj)\nonumber\\
    &+\frac{1}{2}\sum_{j\in[k^*_2]}\sum_{|\ell_1|+\ell_2=0}^{2\trj}B^{(j)}_{n,\ell_1,\ell_2}\cdot X^{\ell_1}\exp((\boj)^{\top}X)\frac{\partial^{\ell_2} \pi}{\partial h_2^{\ell_2}}(Y|(\eoj)^{\top}X+\ezj,\nuj)\nonumber\\
    &-\frac{1}{2}\sum_{j\in[k^*_2]}\sum_{|\gamma|=0}^{1+1_{\{|\mathcal{V}_{2,j}|>1\}}}C^{(j)}_{n,\gamma}\cdot X^{\gamma}\exp((\boj)^{\top}X)p_{G^n_2}(Y|X)\nonumber\\
    &+\frac{1}{2}\Big[\sum_{j = 1}^{k^*_2} \exp((\beta_{1j}^{*})^{\top} X + \beta_{0j}^{*})\Big][R_{n,1}(Y|X)+R_{n,2}(Y|X)]\nonumber\\
    \label{eq:f-f_decompose_linear}
    &+\frac{1}{2}[R_{n,3}(Y|X)+R_{n,4}(Y|X)-R_{n,5}(Y|X)-R_{n,6}(Y|X)],
\end{align}
with a convention that $r_{1,j}=1$ for $j\in[k^*_1]:|\mathcal{V}_{1,j}|=1$ and $r_{2,j}=1$ for $j\in[k^*_2]:|\mathcal{V}_{2,j}|$, where we define
\begin{align*}
    A^{(j)}_{n,0_d,0}&:=\sum_{i\in\mathcal{V}_{1,j}}\oin-\oj, \qquad j\in[k^*_1]\\
    B^{(j)}_{n,0_d,0}&:=\sum_{i\in\mathcal{V}_{2,j}}\exp(\bzin)-\exp(\bzj), \qquad j\in[k^*_2]\\
    C^{(j)}_{n,0_d}&:=\sum_{i\in\mathcal{V}_{2,j}}\exp(\bzin)-\exp(\bzj), \qquad j\in[k^*_2].
\end{align*}
\textbf{Stage 2 - Non-vanishing coefficients:} In this stage, we demonstrate that at least one among the terms $A^{(j)}_{n,\alpha_1,\ell}/\mathcal{D}_{2n}$, $B^{(j)}_{n,\ell_1,\ell_2}/\mathcal{D}_{2n}$, and $C^{(j)}_{n,\gamma}/\mathcal{D}_{2n}$ does not converge to zero as $n\to\infty$. In particular, we assume that all these terms go to zero. Then, by looking at the terms $A^{(j)}_{n,\alpha_1,\ell}$,
\begin{itemize}
    \item For $j\in[k^*_1]$ and $|\alpha_1|=\ell=0$, we have $\frac{1}{\mathcal{D}_{1n}}\cdot\sum_{j=1}^{k^*_1}\Big|\sum_{i\in\mathcal{V}_{1,j}}\oin-\oj\Big|\to0$;
    \item For $j\in[k^*_1]:|\mathcal{V}_{1,j}|=1$, $\alpha_1\in\mathbb{N}^{d}:|\alpha_1|=1$ and $\ell=0$, we have
    \begin{align*}
        \frac{1}{\mathcal{D}_{2n}}\cdot\sum_{j\in[k^*_1]:|\mathcal{V}_{1,j}|=1}\sum_{i\in\mathcal{V}_{1,j}}\oin\|\dkoijn\|\to0;
    \end{align*}
    \item For $j\in[k^*_1]:|\mathcal{V}_{1,j}|=1$, $\alpha_1=0_d$ and $\ell=1$, we have
    \begin{align*}
        \frac{1}{\mathcal{D}_{2n}}\cdot\sum_{j\in[k^*_1]:|\mathcal{V}_{1,j}|=1}\sum_{i\in\mathcal{V}_{1,j}}\oin|\dkzijn|\to0;
    \end{align*}
    \item For $j\in[k^*_1]:|\mathcal{V}_{1,j}|=1$, $\alpha_1\in\mathbb{N}^{d}:|\alpha_1|=1$ and $\ell=2$ we have
    \begin{align*}
        \frac{1}{\mathcal{D}_{2n}}\cdot\sum_{j\in[k^*_1]:|\mathcal{V}_{1,j}|=1}\sum_{i\in\mathcal{V}_{1,j}}\oin|\dtijn|\to0;
    \end{align*}
    \item  For $j\in[k^*_1]:|\mathcal{V}_{1,j}|>1$, $\alpha_1=2e_u$, where $e_u\in\mathbb{N}^{d}$ is a one-hot vector with the $u$-th entry being one while other entries being zero, for $u\in[d]$, we have
    \begin{align*}
        \frac{1}{\mathcal{D}_{2n}}\cdot\sum_{j\in[k^*_1]:|\mathcal{V}_{1,j}|>1}\sum_{i\in\mathcal{V}_{1,j}}\oin\|\dkoijn\|^2\to0;
    \end{align*}
\end{itemize}
Next, by considering the terms $B^{(j)}_{n,\ell_1,\ell_2}$
\begin{itemize}
    \item For $j\in[k^*_2]$ and $|\ell_1|=\ell_2=0$, we have $\frac{1}{\mathcal{D}_{2n}}\cdot\sum_{j=1}^{k^*_2}\Big|\sum_{i\in\mathcal{V}_{2,j}}\exp(\bzin)-\exp(\boj)\Big|\to0$;
    \item For $j\in[k^*_2]:|\mathcal{V}_{2,j}|=1$, $\ell_1=e_u$ for $u\in[d]$, and $\ell_2=0$, we have
    \begin{align*}
        \frac{1}{\mathcal{D}_{2n}}\cdot\sum_{j\in[k^*_2]:|\mathcal{V}_{2,j}|=1}\sum_{i\in\mathcal{V}_{2,j}}\exp(\bzin)\|\dboijn\|\to0;
    \end{align*}
    \item For $j\in[k^*_2]:|\mathcal{V}_{2,j}|=1$, $\ell_1=e_u$ for $u\in[d]$, and $\ell_2=1$, we have
    \begin{align*}
        \frac{1}{\mathcal{D}_{2n}}\cdot\sum_{j\in[k^*_2]:|\mathcal{V}_{2,j}|=1}\sum_{i\in\mathcal{V}_{2,j}}\exp(\bzin)\|\deoijn\|\to0;
    \end{align*}
    \item For $j\in[k^*_2]:|\mathcal{V}_{2,j}|=1$, $\ell_1=0_d$ and $\ell_2=1$, we have
    \begin{align*}
        \frac{1}{\mathcal{D}_{2n}}\cdot\sum_{j\in[k^*_2]:|\mathcal{V}_{2,j}|=1}\sum_{i\in\mathcal{V}_{2,j}}\exp(\bzin)|\dezijn|\to0;
    \end{align*}
    \item For $j\in[k^*_2]:|\mathcal{V}_{2,j}|=1$, $\ell_1=0_d$, and $\ell_2=2$ we have
    \begin{align*}
        \frac{1}{\mathcal{D}_{2n}}\cdot\sum_{j\in[k^*_2]:|\mathcal{V}_{2,j}|=1}\sum_{i\in\mathcal{V}_{2,j}}\exp(\bzin)|\dnuijn|\to0;
    \end{align*}
\end{itemize}
Taking the sum of the above limits, we deduce 
\begin{align*}
    \frac{1}{\mathcal{D}_{2n}}\cdot\Bigg[&\sum_{j=1}^{k^*_1}\Big|\sum_{i\in\mathcal{V}_{1,j}}\oin-\oj\Big|+\sum_{j\in[k^*_2]:|\mathcal{V}_{2,j}|>1}\Big|\sum_{i\in\mathcal{V}_{2,j}}\exp(\bzin)-\exp(\bzj)\Big|\nonumber\\
    &+\sum_{j\in[k^*_1]:|\mathcal{V}_{1,j}|=1}\sum_{i\in\mathcal{V}_{1,j}}\oin(\|\dkoijn\|+|\dkzijn|+|\dtijn|)+\sum_{j\in[k^*_1]:|\mathcal{V}_{1,j}|>1}\sum_{i\in\mathcal{V}_{1,j}}\oin\|\dkoijn\|^2\nonumber\\
    &+\sum_{j\in[k^*_2]:|\mathcal{V}_{2,j}|=1}\sum_{i\in\mathcal{V}_{2,j}}\exp(\bzin)(\|\dboijn\|+\|\deoijn\|+|\dezijn|+|\dnuijn|)\Bigg]\to0,
\end{align*}
as $n\to\infty$. From the formulation of the Voronoi loss $\mathcal{D}_{2n}$ in equation~\eqref{eq:loss_2n}, it follows that
\begin{align}
    \label{eq:non_zero_limit}
    &\frac{1}{\mathcal{D}_{2n}}\Bigg[\sum_{j\in[k^*_1]:|\mathcal{V}_{1,j}|>1}\sum_{i\in\mathcal{V}_{1,j}}\oin(|\dkzijn|^{\brj}+|\dtijn|^{\brj/2})\nonumber\\
    &+\sum_{j\in[k^*_2]:|\mathcal{V}_{2,j}|>1}\sum_{i\in\mathcal{V}_{2,j}}\exp(\bzin)(\|\dboijn\|^{\trj}+\|\deoijn\|^{\trj/2}+|\dezijn|^{\trj}+|\dnuijn|^{\trj/2})\Bigg]\not\to0,
\end{align}
as $n\to\infty$. Then, we consider two following cases:

\textbf{Case I:}
$\frac{1}{\mathcal{D}_{2n}}\sum_{j\in[k^*_1]:|\mathcal{V}_{1,j}|>1}\sum_{i\in\mathcal{V}_{1,j}}\oin(|\dkzijn|^{\brj}+|\dtijn|^{\brj/2})\not\to0$ as $n\to\infty$.

In this case, there exists some index $j'\in[k^*_1]:|\mathcal{V}_{1,j'}|>1$ such that
\begin{align}
    \label{eq:non_zero_limit_1}
    \frac{1}{\mathcal{D}_{2n}}\cdot\sum_{i\in\mathcal{V}_{1,j'}}\oin(|\Delta\kappa^n_{0ij'}|^{r_{1,j'}}+|\Delta\tau^n_{ij'}|^{r_{1,j'}/2})\not\to0,
\end{align}
as $n\to\infty$. WLOG, we may assume that $j'=1$. Recall that the term $A^{(j)}_{n,\alpha_1,\ell}/\mathcal{D}_{2n}\to0$ as $n\to\infty$ for all $0\leq|\alpha_1|\leq \brj$ and $0\leq\ell\leq2(\brj-|\alpha_1|)$. Then, by dividing the ratio $A^{(1)}_{n,0_d,\ell}$ by the left hand side of equation~\eqref{eq:non_zero_limit_1}, we get
\begin{align}
    \label{eq:zero_limit_1}
    \dfrac{\sum_{i\in\mathcal{V}_{1,1}}\sum_{\alpha_2+2\alpha_3=\ell}\frac{\oin}{2^{\alpha_3}\alpha_2!\alpha_3!}(\Delta\kappa_{1i1})^{\alpha_2}(\Delta\tau_{i1})^{\alpha_3}}{\sum_{i\in\mathcal{V}_{1,1}}\oin(|\Delta\kappa^n_{0i1}|^{r_{1,1}}+|\Delta\tau^n_{i1}|^{r_{1,1}/2})}\to0,
\end{align}
as $n\to\infty$ for all $0\leq\ell\leq2r_{1,1}$. 

\vspace{0.5 em}
\noindent
Let us denote $M_{n,1}:=\max\{|\Delta\kappa^n_{0i1}|, |\Delta\tau^n_{i1}|:i\in\mathcal{V}_{1,1}\}$ and $W_{n,1}:=\max\{\oin:i\in\mathcal{V}_{1,1}\}$. Since the sequence $(\oin/W_{n,1})_{n}$ is bounded below, we can replace it by its subsequence that admits the limit $s_{1i}^2:=\lim_{n\to\infty}\oin/W_{n,1}>0$. It should be noted that at least one among the terms $s^2_{1i}$, for $i\in\mathcal{V}_{1,1}$, is equal to 1. Next, we denote $(\Delta\kappa^n_{0i1})/M_{n,1}\to s_{2i}$ and $(\Delta\tau^n_{i1})/[2M_{n,1}^2]\to s_{3i}$ for all $i\in\mathcal{V}_{1,1}$. Similarly, at least one of each of the $s_{2i}$ and $s_{3i}$ is equal to 1 or $-1$. Then, by dividing both the numerators and the denominators of the left hand side of equation~\eqref{eq:zero_limit_1} by $W_{n,1}M_{n,1}^{\ell}$, we obtain the following system of polynomial equations:
\begin{align*}
    \sum_{i\in\mathcal{V}_{1,1}}\sum_{\alpha_2+2\alpha_3=\ell}\frac{s^2_{1i}~s_{2i}^{\alpha_2}~s_{3i}^{\alpha_3}}{\alpha_2!\alpha_3!}=0, \qquad 1\leq \ell\leq r_{1,1}.
\end{align*}
According to the definition of the term $r_{1,1}$, the above system does not admit any non-trivial solutions, which is a contradiction. Thus, Case I cannot occur.

\vspace{0.5 em}
\noindent
\textbf{Case II:} $\frac{1}{\mathcal{D}_{2n}}\sum_{j\in[k^*_2]:|\mathcal{V}_{2,j}|>1}\sum_{i\in\mathcal{V}_{2,j}}\exp(\bzin)(\|\dboijn\|^{\trj}+\|\deoijn\|^{\trj/2}+|\dezijn|^{\trj}+|\dnuijn|^{\trj/2})\not\to0$ as $n\to\infty$.

In this case, we can find some index $j'\in[k^*_2]:|\mathcal{V}_{2,j'}|>1$ such that 
\begin{align}
    \label{eq:non_zero_limit_2}
    \frac{1}{\mathcal{D}_{2n}}\cdot\sum_{i\in\mathcal{V}_{2,j'}}\exp(\bzin)(\|\Delta\beta^n_{1ij'}\|^{r_{2,j'}}+\|\Delta\eta^n_{1ij'}\|^{r_{2,j'}/2}+|\Delta\eta^n_{0ij'}|^{r_{2,j'}}+|\Delta\nu^n_{ij'}|^{r_{2,j'}/2})\not\to0,
\end{align}
as $n\to\infty$. WLOG, we may assume that $j'=1$. Recall that the term $B^{(j)}_{n,\ell_1,\ell_2}/\mathcal{D}_{2n}\to0$ as $n\to\infty$ for all $j\in[k^*_2]$ and $(\ell_1,\ell_2)\in\mathbb{N}^{d}\times\mathbb{N}:0\leq|\ell_1|+\ell_2\leq 2r_{2,j}$. Then, by dividing the ratio $B^{(1)}_{n,\ell_1,\ell_2}$ by the left hand side of equation~\eqref{eq:non_zero_limit_2}, we get
\begin{align}
    \label{eq:zero_limit_2}
    \dfrac{\sum_{i\in\mathcal{V}_{2,1}}\sum_{\alpha\in\mathcal{I}_{\ell_1,\ell_2}}\frac{\exp(\bzin)}{2^{\alpha_4}\alpha!}(\Delta\beta^n_{1i1})^{\alpha_1}(\Delta\eta^n_{1i1})^{\alpha_2}(\Delta\eta^n_{0i1})^{\alpha_3}(\Delta\nu^n_{i1})^{\alpha_4}}{\sum_{i\in\mathcal{V}_{2,1}}\exp(\bzin)(\|\Delta\beta^n_{1i1}\|^{r_{2,1}}+\|\Delta\eta^n_{1i1}\|^{r_{2,1}/2}+|\Delta\eta^n_{0i1}|^{r_{2,1}}+|\Delta\nu^n_{i1}|^{r_{2,1}/2})}\to0,
\end{align}
as $n\to\infty$ for all $(\ell_1,\ell_2)\in\mathbb{N}^{d}\times\mathbb{N}: 0\leq|\ell_1|+\ell_2\leq2r_{2,1}$.

\vspace{0.5 em}
\noindent
Let us denote $M_{n,2}:=\max\{\|\Delta\beta^n_{1i1}\|, \|\Delta\eta^n_{1i1}\|, |\Delta\eta^n_{0i1}|, |\Delta\nu^n_{i1}|:i\in\mathcal{V}_{2,1}\}$ and $W_{n,2}:=\max\{\exp(\bzin):i\in\mathcal{V}_{2,1}\}$. Since the sequence $(\exp(\bzin)/W_{n,2})_{n}$ is bounded below, we can replace it by its subsequence that admits the limit $t_{5i}^2:=\lim_{n\to\infty}\exp(\bzin)/W_{n,2}>0$. It should be noted that at least one among the terms $t^2_{5i}$, for $i\in\mathcal{V}_{2,1}$, is equal to 1. Next, we denote
\begin{align*}
    (\Delta\beta^n_{1i1})/M_{n,2}\to t_{1i}&, \qquad (\Delta\eta^n_{1i1})/M_{n,2}^2\to t_{2i},\\
    (\Delta\eta^n_{0i1})/M_{n,2}\to t_{3i}&, \qquad (\Delta\nu^n_{i1})/[2M_{n,2}^2]\to t_{4i},
\end{align*}
for all $i\in\mathcal{V}_{2,1}$. Similarly, at least one of each of the $t_{1i}$, $t_{2i}$, $t_{3i}$, and $t_{4i}$, is equal to 1 or $-1$. Then, by dividing both the numerators and the denominators of the left hand side of equation~\eqref{eq:zero_limit_2} by $W_{n,2}M_{n,2}^{|\ell_1|+\ell_2}$, we obtain the following system of polynomial equations:
\begin{align*}
    \sum_{i\in\mathcal{V}_{2,1}}\sum_{\alpha\in\mathcal{I}_{\ell_1,\ell_2}}\frac{1}{\alpha!}\cdot t^2_{5i}~t_{1i}^{\alpha_1}~t_{2i}^{\alpha_2}~t_{3i}^{\alpha_3}~t_{4i}^{\alpha_4}=0, \qquad 1\leq |\ell_1|+\ell_2\leq r_{2,1}.
\end{align*}
According to the definition of the term $r_{2,1}$, the above system does not admit any non-trivial solutions, which is a contradiction. Thus, Case II cannot occur.

\vspace{0.5 em}
\noindent
The fact that both Case I and Case II cannot occur contradicts the result of equation~\eqref{eq:non_zero_limit}. Thus, not all the terms $A^{(j)}_{n,\alpha_1,\ell}/\mathcal{D}_{2n}$, $B^{(j)}_{n,\ell_1,\ell_2}/\mathcal{D}_{2n}$, and $C^{(j)}_{n,\gamma}/\mathcal{D}_{2n}$ converge to zero as $n\to\infty$.

\vspace{0.5 em}
\noindent
\textbf{Stage 3 - Fatou's lemma contradiction:} We denote by $m_n$ the maximum of the absolute values of the ratios $A^{(j)}_{n,\alpha_1,\ell}/\mathcal{D}_{2n}$, $B^{(j)}_{n,\ell_1,\ell_2}/\mathcal{D}_{2n}$, and $C^{(j)}_{n,\gamma}/\mathcal{D}_{2n}$. It follows from the result of Stage that $1/m_n\not\to\infty$ as $n\to\infty$. Then, by means of the Fatou's lemma, we have
\begin{align*}
    \lim_{n\to\infty}\dfrac{\bbE_X[V(f_{G^n_1,G^n_2}(\cdot|X),f_{G^*_1,G^*_2}(\cdot|X))]}{m_n\mathcal{D}_{2n}}\geq\int\liminf_{n\to\infty}\dfrac{|f_{G^n_1,G^n_2}(Y|X)-f_{G^*_1,G^*_2}(Y|X)|}{2m_n\mathcal{D}_{2n}}\dint (X,Y).
\end{align*}
Then, we deduce $[f_{G^n_1,G^n_2}(Y|X)-f_{G^*_1,G^*_2}(Y|X)]/[m_n\mathcal{D}_{1n}]\to0$ as $n\to\infty$ for almost surely $(X,Y)$. Since the input space is bounded and the parameter space is compact, the quantity $\sum_{j = 1}^{k^*_2} \exp((\beta_{1j}^{*})^{\top} X + \beta_{0j}^{*})$ is bounded. Thus, we also have
\begin{align*}
    \frac{1}{m_n\mathcal{D}_{2n}}\Big[\sum_{j = 1}^{k^*_2} \exp((\beta_{1j}^{*})^{\top} X + \beta_{0j}^{*})\Big][f_{G^n_1,G^n_2}(Y|X)-f_{G^*_1,G^*_2}(Y|X)]\to0,
\end{align*}
as $n\to\infty$ for almost surely $(X,Y)$. Let us denote
\begin{align*}
    &\frac{1}{m_n\mathcal{D}_{2n}}A^{(j)}_{n,\alpha_1,\ell}\to a^{(j)}_{\alpha_1,\ell},\\
    &\frac{1}{m_n\mathcal{D}_{2n}}B^{(j)}_{n,\ell_1,\ell_2}\to b^{(j)}_{\ell_1,\ell_2},\\
    &\frac{1}{m_n\mathcal{D}_{2n}}C^{(j)}_{n,\gamma}\to c^{(j)}_{\gamma},
\end{align*}
as $n\to\infty$ with a note that at least one among them is non-zero. From equation~\eqref{eq:f-f_decompose_linear}, we deduce
\begin{align*}
    &\Big[\sum_{j = 1}^{k^*_2} \exp((\beta_{1j}^{*})^{\top} X + \beta_{0j}^{*})\Big]\frac{1}{2}\sum_{j\in[k^*_1]}\sum_{|\alpha_1|=0}^{\brj}\sum_{\ell=0}^{2(\brj-|\alpha_1|)}a^{(j)}_{\alpha_1,\ell}\cdot X^{\alpha_1}\frac{\partial^{|\alpha_1|+\ell}\pi}{\partial h_1^{|\alpha_1|+\ell}}(Y|(\koj)^{\top}X+\kzj,\tj)\nonumber\\
    &+\frac{1}{2}\sum_{j\in[k^*_2]}\sum_{|\ell_1|+\ell_2=0}^{2\trj}b^{(j)}_{\ell_1,\ell_2}\cdot X^{\ell_1}\exp((\boj)^{\top}X)\frac{\partial^{\ell_2} \pi}{\partial h_2^{\ell_2}}(Y|(\eoj)^{\top}X+\ezj,\nuj)\nonumber\\
    &-\frac{1}{2}\sum_{j\in[k^*_2]}\sum_{|\gamma|=0}^{1+1_{\{|\mathcal{V}_{2,j}|>1\}}}c^{(j)}_{\gamma}\cdot X^{\gamma}\exp((\boj)^{\top}X)p_{G^*_2}(Y|X)\Big]\to0,
\end{align*}
as $n\to\infty$ for almost surely $(X,Y)$. Since the set
\begin{align*}
    &\Bigg\{X^{\alpha_1}\frac{\partial^{|\alpha_1|+\ell}\pi}{\partial h_1^{|\alpha_1|+\ell}}(Y|(\koj)^{\top}X+\kzj,\tj):j\in[k^*_1], 0\leq|\alpha_1|\leq\brj,0\leq\ell\leq2(\brj-|\alpha_1|)\Bigg\}\\
    &\cup\Bigg\{X^{\ell_1}\exp((\boj)^{\top}X)\frac{\partial^{\ell_2} \pi}{\partial h_2^{\ell_2}}(Y|(\eoj)^{\top}X+\ezj,\nuj),\ X^{\gamma}\exp((\boj)^{\top}X)p_{G^*_2}(Y|X):\\
    &j\in[k^*_2],0\leq|\ell_1|+\ell_2\leq2\trj, 0\leq|\gamma|\leq 2\Bigg\}
\end{align*}
is linearly independent w.r.t ..., we obtain $a^{(j)}_{\alpha_1,\ell}$ for all $j\in[k^*_1]$, $\alpha_1\in\mathbb{N}^{d}$, $\ell\in\mathbb{N}$, and $b^{(j)}_{\ell_1,\ell_2}=c^{(j)}_{\gamma}=0$ for all $j\in[k^*_2]$, $(\ell_1,\ell_2)\in\mathbb{N}^{d}\times\mathbb{N}$, $\gamma\in\mathbb{N}^{d}$. This result contradicts the fact that not all the terms $a^{(j)}_{\alpha_1,\ell}$, $b^{(j)}_{\ell_1,\ell_2}$, and $c^{(j)}_{\gamma}$ equal zero. Hence, we achieve the local part in equation~\eqref{eq:local_part_linear} and complete the proof.

\subsection{Proof of Theorem~\ref{theorem:strongly_identifiable_experts_sigmoid}}
\label{appendix:strongly_identifiable_experts_sigmoid}
By leveraging the proof framework in Appendix~\ref{appendix:strongly_identifiable_experts}, we also focus on demonstrating the local part
\begin{align}
    \label{eq:local_part_sigmoid}
    \lim_{\varepsilon\to0}\inf_{(G_1,G_2)\in\mathcal{G}_{k_1,k_2}(\Theta):\mathcal{D}_3((G_1,G_2),(G^*_1,G^*_2))\leq\varepsilon}\dfrac{\bbE_X[V(g_{G_1,G_2}(\cdot|X),g_{G^*_1,G^*_2}(\cdot|X))]}{\mathcal{D}_3((G_1,G_2),(G^*_1,G^*_2))}>0,
\end{align}
and the global part
\begin{align}
    \label{eq:global_part_sigmoid}
    \inf_{(G_1,G_2)\in\mathcal{G}_{k_1,k_2}(\Theta):\mathcal{D}_3((G_1,G_2),(G^*_1,G^*_2))>\varepsilon'}\dfrac{\bbE_X[V(g_{G_1,G_2}(\cdot|X),g_{G^*_1,G^*_2}(\cdot|X))]}{\mathcal{D}_3((G_1,G_2),(G^*_1,G^*_2))}>0.
\end{align}
in this appendix. Note that since the global part~\eqref{eq:global_part_sigmoid} can be argued in a similar fashion to Appendix~\ref{appendix:strongly_identifiable_experts}, its derivation is omitted here. Therefore, it is sufficient to establish the local part~\eqref{eq:local_part_sigmoid}. Suppose that the local part does not hold. Then, there exists a sequence of mixing measure pairs $(G^n_1,G^n_2)$ taking the form $G^n_1:=\sum_{i=1}^{k^n_1}\oin\delta_{(\kin,\tin)}$, $G^n_2:=\sum_{i=1}^{k^n_2}\sigma(\bzin)\delta_{(\boin,\ein,\nuin)}$ for $n\in\mathbb{N}$ such that $\mathcal{D}_{3n}:=\mathcal{D}_{3}((G^n_1,G^n_2),(G^*_1,G^*_2))\to0$ and
\begin{align}
    \label{eq:expectation_zero_sigmoid}
    \bbE_X[V(g_{G^n_1,G^n_2}(\cdot|X),g_{G^*_1,G^*_2}(\cdot|X))]/\mathcal{D}_{3n}\to0,
\end{align}
as $n\to\infty$. Here, we may assume WLOG that the number of shared experts and routed experts $k^n_1$, $k^n_2$ and Voronoi cells $\mathcal{V}_{1,j}=\mathcal{V}_{1,j}(G^n_1)$, $\mathcal{V}_{2,j}=\mathcal{V}_{2,j}(G^n_2)$ do not change with the sample size $n$. Then, the Voronoi loss $\mathcal{D}_{3n}$ can be rewritten as
\begin{align}
    \label{eq:loss_3n}
    &\mathcal{D}_{3n}=\sum_{j=1}^{k^*_1}\Big|\sum_{i\in\mathcal{V}_{1,j}}\oin-\oj\Big|+\sum_{j\in[k^*_2]:|\mathcal{V}_{2,j}|>1}\Big|\sum_{i\in\mathcal{V}_{2,j}}\sigma(\bzin)-\sigma(\bzj)\Big|\nonumber\\
    &+\sum_{j\in[k^*_1]:|\mathcal{V}_{1,j}|=1}\sum_{i\in\mathcal{V}_{1,j}}\oin(\|\dkijn\|+|\dtijn|)+\sum_{j\in[k^*_2]:|\mathcal{V}_{2,j}|=1}\sum_{i\in\mathcal{V}_{2,j}}(\|\dboijn\|+|\dbzijn|+\|\deijn\|+|\dnuijn|)\nonumber\\
    &+\sum_{j\in[k^*_1]:|\mathcal{V}_{1,j}|>1}\sum_{i\in\mathcal{V}_{1,j}}\oin(\|\dkijn\|^2+|\dtijn|^2)+\sum_{j\in[k^*_2]:|\mathcal{V}_{2,j}|>1}\sum_{i\in\mathcal{V}_{2,j}}(\|\dboijn\|^2+\|\deijn\|^2+|\dnuijn|^2),
\end{align}
where we denote $\dbzijn:=\beta^n_{0i}-\bzj$. Since $\mathcal{D}_{3n}\to0$ as $n\to\infty$, then the above formulation indicates that as $n\to\infty$, we have
\begin{itemize}
    \item For $j\in[k^*_1]$ and $i\in\mathcal{V}_{1,j}$: $\sum_{i\in\mathcal{V}_{1,j}}\oin\to\oj$, $(\kin,\tin)\to(\kj,\tj)$;
    \item For $j\in[k^*_2]:|\mathcal{V}_{2,j}|=1$ and $i\in\mathcal{V}_{2,j}$: $(\boin,\bzin,\ein,\nuin)\to(\boj,\bzj,\ej,\nuj)$;
    \item For $j\in[k^*_2]:|\mathcal{V}_{2,j}|>1$ and $i\in\mathcal{V}_{2,j}$: $\sum_{i\in\mathcal{V}_{2,j}}\sigma(\bzin)-\sigma(\bzj)$, $(\boin,\ein,\nuin)\to(\boj,\ej,\nuj)$. 
\end{itemize}
Now, we divide the proof into three main stages:

\vspace{0.5 em}
\noindent
\textbf{Stage 1 - Density Decomposition:} In this stage, we aim to decompose the density discrepancy $g_{G^n_1,G^n_2}(Y|X)-g_{G^*_1,G^*_2}(Y|X)$. For ease of presentation, we denote
\begin{align*}
    q_{G^n_1}(Y|X)&:=\sum_{i=1}^{k^n_1}\omega^n_{i}\pi(Y|h_1(X,\kappa^n_{i}),\tau^n_{i}),\\
    q_{G^*_1}(Y|X)&:=\sum_{i=1}^{k^*_1}\omega^*_{i}\pi(Y|h_1(X,\kappa^*_{i}),\tau^*_{i}),\\
    p_{G^n_2}(Y|X)&:=\sum_{i = 1}^{k^n_2} \frac{\sigma((\beta_{1i}^{n})^{\top} X + \beta_{0i}^{n})}{\sum_{j = 1}^{k^n_2} \sigma((\beta_{1j}^{n})^{\top} X + \beta_{0j}^{n})}\cdot \pi(Y|h_2(X,\eta^n_{i}), \nu_{i}^{n}),\\
    p_{G^*_2}(Y|X)&:=\sum_{i = 1}^{k^*_2} \frac{\sigma((\beta_{1i}^{*})^{\top} X + \beta_{0i}^{*})}{\sum_{j = 1}^{k^*_2} \sigma((\beta_{1j}^{*})^{\top} X + \beta_{0j}^{*})}\cdot \pi(Y|h_2(X,\eta^*_{i}), \nu_{i}^{*}).
\end{align*}
Given the above notations, we get
\begin{align*}
    g_{G^n_1,G^n_2}(Y|X)-g_{G^*_1,G^*_2}(Y|X)=\frac{1}{2}\left[(q_{G^n_1}(Y|X)-q_{G^*_1}(Y|X))+(p_{G^n_2}(Y|X)-p_{G^*_2}(Y|X))\right].
\end{align*}
\textbf{Stage 1.1:} Firstly, we decompose the term $q_{G^n_1}(Y|X)-q_{G^*_1}(Y|X)$ as
\begin{align*}
    q_{G^n_1}(Y|X)-q_{G^*_1}(Y|X)&=\sum_{j\in[k^*_1]:|\mathcal{V}_{1,j}|=1}\sum_{i\in\mathcal{V}_{1,j}}\oin[\pi(Y|h_1(X,\kin),\tin)-\pi(Y|h_1(X,\kj),\tj)]\\
    &+\sum_{j\in[k^*_1]:|\mathcal{V}_{1,j}|>1}\sum_{i\in\mathcal{V}_{1,j}}\oin[\pi(Y|h_1(X,\kin),\tin)-\pi(Y|h_1(X,\kj),\tj)]\\
    &+\sum_{j=1}^{k^*_1}\Big(\sum_{i\in\mathcal{V}_{1,j}}\oin-\oj\Big)\pi(Y|h_1(X,\kappa^*_j),\tau^*_j)\\
    &:=A_{n,1}(Y|X)+A_{n,2}(Y|X)+A_{n,0}(Y|X).
\end{align*}
By using the same arguments as in Stage 1.1 in Appendix~\ref{appendix:strongly_identifiable_experts}, the term $A_{n,1}(Y|X)$ is rewritten as
\begin{align*}
    A_{n,1}(Y|X)&=\sum_{j\in[k^*_1]:|\mathcal{V}_{1,j}|=1}\sum_{\rho=1}^{2}A^{(j)}_{n,1,\rho}(X)\frac{\partial^{\rho}\pi}{\partial h_1^{\rho}}(Y|h_1(X,\kj),\tj)+R_{n,1}(Y|X),
\end{align*}
where $R_{n,1}(Y|X)$ is a Taylor remainder such that $R_{n,1}(Y|X)/\mathcal{D}_{3n}\to$ as $n\to\infty$, and
\begin{align*}
    A^{(j)}_{n,1,1}(X)&:=\sum_{i\in\mathcal{V}_{1,j}}\oin\sum_{u_1=1}^{d_1}(\dkijn)^{(u_1)}\frac{\partial h_1}{\partial\kappa^{(u_1)}}(X,\kj),\\
    A^{(j)}_{n,1,2}(X)&:=\sum_{i\in\mathcal{V}_{1,j}}\oin\frac{1}{2}(\dtijn),
\end{align*}
for all $j\in[k^*_1]$ such that $|\mathcal{V}_{1,j}|=1$. Meanwhile, we can represent $A_{n,2}(Y|X)$ as
\begin{align*}
    A_{n,2}(Y|X)&=\sum_{j\in[k^*_1]:|\mathcal{V}_{1,j}|>1}\sum_{\rho=1}^{4}A^{(j)}_{n,1,\rho}(X)\frac{\partial^{\rho}\pi}{\partial h_1^{\rho}}(Y|h_1(X,\kj),\tj)+R_{n,2}(Y|X),
\end{align*}
where $R_{n,2}(Y|X)$ is a Taylor remainder such that $R_{n,2}(Y|X)/\mathcal{D}_{3n}\to$ as $n\to\infty$, and
\begin{align*}
    A^{(j)}_{n,2,1}(X)&:=\sum_{i\in\mathcal{V}_{1,j}}\oin\Big(\sum_{u_1=1}^{d_1}(\dkijn)^{(u_1)}\frac{\partial h_1}{\partial\kappa^{(u_1)}}(X,\kj)+\sum_{u_1,v_1=1}^{d_1}\frac{(\dkijn)^{(u_1)}(\dkijn)^{(v_1)}}{1+1_{\{u_1=v_1\}}}\frac{\partial^2h_1}{\partial\kappa^{(u_1)}\partial\kappa^{(v_1)}}(X,\kj)\Big),\\
    A^{(j)}_{n,2,2}(X)&:=\sum_{i\in\mathcal{V}_{1,j}}\oin\Big(\frac{1}{2}(\dtijn)+\sum_{u_1,v_1=1}^{d_1}\frac{(\dkijn)^{(u_1)}(\dkijn)^{(v_1)}}{1+1_{\{u_1=v_1\}}}\frac{\partial h_1}{\partial\kappa^{(u_1)}}(X,\kj)\frac{\partial h_1}{\partial\kappa^{(v_1)}}(X,\kj)\Big),\\
    A^{(j)}_{n,2,3}(X)&:=\sum_{i\in\mathcal{V}_{1,j}}\oin\sum_{u_1=1}^{d_1}\frac{1}{2}(\dkijn)^{(u_1)}(\dtijn)\frac{\partial h_1}{\partial\kappa^{(u_1)}}(X,\kj),\\
    A^{(j)}_{n,2,4}(X)&:=\sum_{i\in\mathcal{V}_{1,j}}\oin\frac{1}{8}(\dtijn)^2,
\end{align*}
for all $j\in[k^*_1]$ such that $|\mathcal{V}_{1,j}|>1$. 

\vspace{0.5 em}
\noindent
\textbf{Stage 1.2:} Next, we attempt to decompose the term $Q_n(Y|X):=\Big[\sum_{j = 1}^{k^*_2} \sigma((\beta_{1j}^{*})^{\top} X + \beta_{0j}^{*})\Big]\cdot[p_{G^n_2}(Y|X)-p_{G^*_2}(Y|X)]$ as
\begin{align*}
    Q_n(Y|X)&=\sum_{j=1}^{k^*_2}\Big[\sum_{i\in\mathcal{V}_{2,j}}\sigma((\beta_{1i}^{n})^{\top} X + \beta_{0i}^{n})\pi(Y|h_2(X,\ein),\nuin)-\sigma((\beta_{1j}^{*})^{\top} X + \beta_{0j}^{*})\pi(Y|h_2(X,\ej),\nuj)\Big]\\
    &-\sum_{j=1}^{k^*_2}\Big[\sum_{i\in\mathcal{V}_{2,j}}\sigma((\beta_{1i}^{n})^{\top} X + \beta_{0i}^{n})-\sigma((\beta_{1j}^{*})^{\top} X + \beta_{0j}^{*})\Big]p_{G^n_2}(Y|X)\\
    &:=B_{n}(Y|X)-C_{n}(Y|X).
\end{align*}
\textbf{Stage 1.2.1:} In this step, we decompose the term $B_{n}(Y|X)$ with a note that $\boj=0_d$ for all $j\in[k^*_2]:|\mathcal{V}_{2,j}|>1$:
\begin{align*}
   B_{n}(Y|X)&=\sum_{j\in[k^*_2]:|\mathcal{V}_{2,j}|=1}\Big[\sum_{i\in\mathcal{V}_{2,j}}\sigma((\beta_{1i}^{n})^{\top} X + \beta_{0i}^{n})\pi(Y|h_2(X,\ein),\nuin)-\sigma((\beta_{1j}^{*})^{\top} X + \beta_{0j}^{*})\pi(Y|h_2(X,\ej),\nuj)\Big]\\
   &+\sum_{j\in[k^*_2]:|\mathcal{V}_{2,j}|>1}\Big[\sum_{i\in\mathcal{V}_{2,j}}\sigma((\beta_{1i}^{n})^{\top} X + \beta_{0i}^{n})\pi(Y|h_2(X,\ein),\nuin)-\sigma(\beta_{0j}^{*})\pi(Y|h_2(X,\ej),\nuj)\Big]\\
   &=\sum_{j\in[k^*_2]:|\mathcal{V}_{2,j}|=1}\Big[\sum_{i\in\mathcal{V}_{2,j}}\sigma((\beta_{1i}^{n})^{\top} X + \beta_{0i}^{n})\pi(Y|h_2(X,\ein),\nuin)-\sigma((\beta_{1j}^{*})^{\top} X + \beta_{0j}^{*})\pi(Y|h_2(X,\ej),\nuj)\Big]\\
   &+\sum_{j\in[k^*_2]:|\mathcal{V}_{2,j}|>1}\sum_{i\in\mathcal{V}_{2,j}}\Big[\sigma((\beta_{1i}^{n})^{\top} X + \beta_{0i}^{n})\pi(Y|h_2(X,\ein),\nuin)-\sigma(\beta_{0i}^{n})\pi(Y|h_2(X,\ej),\nuj)\Big]\\
   &+\sum_{j\in[k^*_2]:|\mathcal{V}_{2,j}|>1}\Big[\sum_{i\in\mathcal{V}_{2,j}}\sigma(\beta_{0i}^{n})-\sigma(\beta_{0j}^{*})\Big]\pi(Y|h_2(X,\ej),\nuj)\\
   &:=B_{n,1}(Y|X)+B_{n,2}(Y|X)+B_{n,0}(Y|X).
\end{align*}
Denote $\psi(X;\beta_1,\beta_0):=\sigma(\beta_{1}^{\top}X+\beta_{0})$. By applying the first-order Taylor expansion to the function $\psi(X,\boin,\bzin)\pi(Y|h_2(X,\ein),\nuin)$ around the point $(\boj,\bzj,\ein,\nuin)$, we have
\begin{align*}
    B_{n,1}(Y|X)&=\sum_{j\in[k^*_2]:|\mathcal{V}_{2,j}|=1}\sum_{i\in\mathcal{V}_{2,j}}\sum_{|\alpha|=1}\frac{1}{\alpha!}(\dboijn)^{\alpha_1}(\dbzijn)^{\alpha_2}(\deijn)^{\alpha_3}(\dnuijn)^{\alpha_4}\\
    &\hspace{2cm}\times\frac{\partial^{|\alpha_1|+\alpha_2}\psi}{\partial\beta_1^{\alpha_1}\partial\beta_0^{\alpha_2}}(X;\boj,\bzj)\frac{\partial^{|\alpha_3|+\alpha_4}\pi}{\partial\eta^{\alpha_3}\partial\nu^{\alpha_4}}(Y|h_2(X,\ej),\nuj)+R_{n,3}(Y|X)\\
    &=\sum_{j\in[k^*_2]:|\mathcal{V}_{2,j}|=1}\sum_{\rho=0}^{2}B^{(j)}_{n,1,\rho}(X)\cdot\frac{\partial^{\rho}\pi}{\partial h_2^{\rho}}(Y|h_2(X,\ej),\nuj)+R_{n,3}(Y|X),
\end{align*}
where $R_{n,3}(Y|X)$ is a Taylor remainder such that $R_{n,3}(Y|X)/\mathcal{D}_{3n}\to0$ as $n\to\infty$ and
\begin{align*}
    B^{(j)}_{n,1,0}(X)&:=\sum_{i\in\mathcal{V}_{2,j}}\Big[\sum_{u=1}^{d}(\dboijn)^{(u)}\frac{\partial\psi}{\partial\beta_1^{(u)}}(X;\boj,\bzj)+(\dbzijn)\frac{\partial\psi}{\partial\beta_0}(X;\boj,\bzj)\Big],\\
    B^{(j)}_{n,1,1}(X)&:=\sum_{i\in\mathcal{V}_{2,j}}\sum_{u_2=1}^{d_2}(\deijn)^{(u_2)}\frac{\partial h_2}{\partial\eta^{(u_2)}}(X,\ej)\psi(X;\boj,\bzj),\\
    B^{(j)}_{n,1,2}(X)&:=\sum_{i\in\mathcal{V}_{2,j}}\frac{1}{2}(\dnuijn)\psi(X;\boj,\bzj),
\end{align*}
for all $j\in[k^*_2]$ such that $|\mathcal{V}_{2,j}|=1$. Next, by means of the second-order Taylor expansion to the function $\psi(X;\boj,\bzin)\pi(Y|h_2(X,\ej),\nuj)$ around the point $(\boj,\ej,\nuj)$ with a note that $\boj=0_d$ for all $j\in[k^*_2]:|\mathcal{V}_{2,j}|>1$, we decompose the term $B_{n,2}(Y|X)$ as  
\begin{align*}
    B_{n,2}(Y|X)&=\sum_{j\in[k^*_2]:|\mathcal{V}_{2,j}|>1}\sum_{i\in\mathcal{V}_{2,j}}\sum_{|\alpha|=1}^{2}\frac{1}{\alpha!}(\dboijn)^{\alpha_1}(\deijn)^{\alpha_2}(\dnuijn)^{\alpha_3}\\
    &\hspace{2cm}\times\frac{\partial^{|\alpha_1|}\psi}{\partial\beta_1^{\alpha_1}}(X;0_d,\bzin)\frac{\partial^{|\alpha_2|+\alpha_3}\pi}{\partial\eta^{\alpha_2}\partial\nu^{\alpha_3}}(Y|h_2(X,\ej),\nuj)+R_{n,4}(Y|X)\\
    &=\sum_{j\in[k^*_2]:|\mathcal{V}_{2,j}|>1}\sum_{\rho=0}^{4}B^{(j)}_{n,2,\rho}(X)\cdot\frac{\partial^{\rho}\pi}{\partial h_2^{\rho}}(Y|h_2(X,\ej),\nuj)+R_{n,4}(Y|X),
\end{align*}
where $R_{n,4}(Y|X)$ is a Taylor remainder such that $R_{n,4}(Y|X)/\mathcal{D}_{3n}\to0$ as $n\to\infty$ and
\begin{align*}
    B^{(j)}_{n,2,0}(X)&:=\sum_{i\in\mathcal{V}_{2,j}}\Big[\sum_{u=1}^{d}(\dboijn)^{(u)}\frac{\partial\psi}{\partial\beta_1^{(u)}}(X;0_d,\bzin)+\sum_{u,v=1}^{d}\frac{(\dboijn)^{(u)}(\dboijn)^{(v)}}{1+1_{\{u=v\}}}\frac{\partial^2\psi}{\partial\beta_1^{(u)}\partial\beta_1^{(v)}}(X;0_d,\bzin)\Big],\\
    B^{(j)}_{n,2,1}(X)&:=\sum_{i\in\mathcal{V}_{2,j}}\Big[\sum_{u_2=1}^{d_2}(\deijn)^{(u_2)}\frac{\partial h_2}{\partial\eta^{(u_2)}}(X.\ej)\psi(X;0_d,\bzin)+\sum_{u_2,v_2=1}^{d_2}\frac{(\deijn)^{(u_2)}(\deijn)^{(v_2)}}{1+1_{\{u_2=v_2\}}}\frac{\partial^2h_2}{\partial\eta^{(u_2)}\partial\eta^{(v_2)}}(X,\ej)\\
    &\times\psi(X;0_d,\bzin)+\sum_{u=1}^{d}\sum_{u_2=1}^{d_2}(\dboijn)^{(u)}(\deijn)^{(u_2)}\frac{\partial h_2}{\partial\eta^{(u_2)}}(X.\ej)\frac{\partial\psi}{\partial\beta_1^{(u)}}(X;0_d,\bzin)\Big],\\
    B^{(j)}_{n,2,2}(X)&:=\sum_{i\in\mathcal{V}_{2,j}}\Big[\frac{1}{2}(\dnuijn)\psi(X;0_d,\bzin)+\sum_{u=1}^{d}(\dboijn)^{(u)}\frac{1}{2}(\dnuijn)\frac{\partial\psi}{\partial\beta_1^{(u)}}(X;0_d,\bzin)\\
    &\hspace{2cm}+\sum_{u_2,v_2=1}^{d_2}\frac{(\deijn)^{(u_2)}(\deijn)^{(v_2)}}{1+1_{\{u_2=v_2\}}}\frac{\partial^2h_2}{\partial\eta^{(u_2)}\partial\eta^{(v_2)}}(X,\ej)\psi(X;0_d,\bzin)\Big],\\
    B^{(j)}_{n,2,3}(X)&:=\sum_{i\in\mathcal{V}_{2,j}}\Big[\sum_{u_2=1}^{d_2}(\deijn)^{(u_2)}\frac{1}{2}(\dnuijn)\frac{\partial h_2}{\partial\eta^{(u_2)}}(X,\ej)\psi(X;0_d,\bzin)\Big],\\
    B^{(j)}_{n,2,4}(X)&:=\sum_{i\in\mathcal{V}_{2,j}}\frac{1}{8}(\dnuijn)^2\psi(X;0_d,\bzin),
\end{align*}
for all $j\in[k^*_2]$ such that $|\mathcal{V}_{2,j}|>1$.

\vspace{0.5 em}
\noindent
\textbf{Stage 1.2.2:} In this step, we decompose the term $C_{n}(Y|X)$ as
\begin{align*}
    C_{n}(Y|X)&=\sum_{j\in[k^*_2]:|\mathcal{V}_{2,j}|=1}\Big[\sum_{i\in\mathcal{V}_{2,j}}\psi(X;\boin,\bzin)-\psi(X;\boj,\bzj)\Big]p_{G^n_2}(Y|X)\\
    &+\sum_{j\in[k^*_2]:|\mathcal{V}_{2,j}|>1}\Big[\sum_{i\in\mathcal{V}_{2,j}}\psi(X;\boin,\bzin)-\psi(X;\boj,\bzj)\Big]p_{G^n_2}(Y|X)\\
    &=\sum_{j\in[k^*_2]:|\mathcal{V}_{2,j}|=1}\Big[\sum_{i\in\mathcal{V}_{2,j}}\psi(X;\boin,\bzin)-\psi(X;\boj,\bzj)\Big]p_{G^n_2}(Y|X)\\
    &+\sum_{j\in[k^*_2]:|\mathcal{V}_{2,j}|>1}\sum_{i\in\mathcal{V}_{2,j}}\Big[\psi(X;\boin,\bzin)-\psi(X;0_d,\bzin)\Big]p_{G^n_2}(Y|X)\\
    &+\sum_{j\in[k^*_2]:|\mathcal{V}_{2,j}|>1}\Big[\sum_{i\in\mathcal{V}_{2,j}}\psi(X;0_d,\bzin)-\psi(X;0_d,\bzj)\Big]p_{G^n_2}(Y|X)\\
    &:=C_{n,1}(Y|X)+C_{n,2}(Y|X)+C_{n,0}(Y|X).
\end{align*}
By applying the first-order Taylor expansion to the function $\psi(X;\boin,\bzin)$ around the point $(\boj,\bzj)$, we have
\begin{align*}
    C_{n,1}(Y|X)&=\sum_{j\in[k^*_2]:|\mathcal{V}_{2,j}|=1}\sum_{i\in\mathcal{V}_{2,j}}\Big[\sum_{u=1}^{d}(\dboijn)^{(u)}\frac{\partial\psi}{\beta_1^{(u)}}(X;\boj,\bzj)+(\dbzijn)\frac{\partial\psi}{\partial\beta_0}(X;\boj,\bzj)\Big]p_{G^n_2}(Y|X)\\
    &\hspace{12cm}+R_{n,5}(Y|X),
\end{align*}
where $R_{n,5}(Y|X)$ is a Taylor remainder such that $R_{n,5}(Y|X)/\mathcal{D}_{3n}\to0$ as $n\to\infty$. Next, by means of the second-order Taylor expansion to the function $\psi(X;\boin,\bzin)$ around the point $\beta^*_{1j}=0_d$ for $j\in[k^*_2]:|\mathcal{V}_{2,j}|>1$, we have
\begin{align*}
    C_{n,2}(Y|X)&=\sum_{j\in[k^*_2]:|\mathcal{V}_{2,j}|>1}\sum_{i\in\mathcal{V}_{2,j}}\Big[\sum_{u=1}^{d}(\dboijn)^{(u)}\frac{\partial\psi}{\partial\beta_1^{(u)}}(X;0_d,\bzin)\\
    &\hspace{1cm}+\sum_{u,v=1}^{d}\frac{(\dboijn)^{(u)}(\dboijn)^{(v)}}{1+1_{\{u=v\}}}\frac{\partial^2\psi}{\partial\beta_1^{(u)}\partial\beta_1^{(v)}}(X;0_d,\bzin)\Big]p_{G^n_2}(Y|X)+R_{n,6}(Y|X),
\end{align*}
where $R_{n,6}(Y|X)$ is a Taylor remainder such that $R_{n,6}(Y|X)/\mathcal{D}_{3n}\to0$ as $n\to\infty$.

\vspace{0.5 em}
\noindent
Combining the above decompositions, we can view $A_{n,0}(Y|X)/\mathcal{D}_{3n}$, $[A_{n,1}(Y|X)-R_{n,1}(Y|X)]/\mathcal{D}_{3n}$, $[A_{n,2}(Y|X)-R_{n,2}(Y|X)]/\mathcal{D}_{3n}$, $B_{n,0}(Y|X)/\mathcal{D}_{3n}$, $[B_{n,1}(Y|X)-R_{n,3}(Y|X)]/\mathcal{D}_{3n}$, $[B_{n,2}(Y|X)-R_{n,4}(Y|X)]/\mathcal{D}_{3n}$, $C_{n,0}(Y|X)/\mathcal{D}_{3n}$, $[C_{n,1}(Y|X)-R_{n,5}(Y|X)]/\mathcal{D}_{3n}$ and $[C_{n,2}(Y|X)-R_{n,6}(Y|X)]/\mathcal{D}_{3n}$ as a combination of elements from the following sets
\begin{align*}
    \mathcal{S}_{0,j}&:=\{\pi(Y|h_1(X,\kj),\tj)\},\\
    \mathcal{S}_{1,j}&:=\Bigg\{\frac{\partial h_1}{\partial\kappa^{(u_1)}}(X,\kj)\frac{\partial\pi}{\partial h_1}(Y|h_1(X,\kj),\tj), \ \frac{\partial^2 h_1}{\partial\kappa^{(u_1)}\partial\kappa^{(v_1)}}(X,\kj)\frac{\partial\pi}{\partial h_1}(Y|h_1(X,\kj),\tj):u_1,v_1\in[d_1]\Bigg\},\\
    \mathcal{S}_{2,j}&:=\Bigg\{\frac{\partial^2\pi}{\partial h_1^2}(Y|h_1(X,\kj),\tj), \ \frac{\partial h_1}{\partial\kappa^{(u_1)}}(X,\kj)\frac{\partial h_1}{\partial\kappa^{(v_1)}}(X,\kj)\frac{\partial^2\pi}{\partial h_1^2}(Y|h_1(X,\kj),\tj):u_1,v_1\in[d_1]\Bigg\},\\
    \mathcal{S}_{3,j}&:=\Bigg\{ \frac{\partial h_1}{\partial\kappa^{(u_1)}}(X,\kj)\frac{\partial^3\pi}{\partial h_1^3}(Y|h_1(X,\kj),\tj):u_1,v_1\in[d_1]\Bigg\},\\
    \mathcal{S}_{4,j}&:=\Bigg\{ \frac{\partial^4\pi}{\partial h_1^4}(Y|h_1(X,\kj),\tj):u_1,v_1\in[d_1]\Bigg\},
\end{align*}
for all $j\in[k^*_1]$, and
\begin{align*}
    \mathcal{T}_{0,j}&:=\Bigg\{\pi(Y|h_2(X,\ej),\nuj),\ \frac{\partial\psi}{\partial\beta_1^{(u)}}(X;0_d,\bzin)\pi(Y|h_2(X,\ej),\nuj), \\
    &\hspace{4cm}\frac{\partial^2\psi}{\partial\beta_1^{(u)}\partial\beta_1^{(v)}}(X;0_d,\bzin)\pi(Y|h_2(X,\ej),\nuj):u,v\in[d]\Bigg\},\\
    \mathcal{T}_{1,j}&:=\Bigg\{\frac{\partial h_2}{\partial\eta^{(u_2)}}(X.\ej)\psi(X;0_d,\bzin)\frac{\partial\pi}{\partial h_2}(Y|h_2(X,\ej),\nuj), \\
    &\hspace{4cm}\frac{\partial h_2}{\partial\eta^{(u_2)}}(X.\ej)\frac{\partial\psi}{\partial\beta_1^{(u)}}(X;0_d,\bzin)\frac{\partial\pi}{\partial h_2}(Y|h_2(X,\ej),\nuj):u\in[d], \ u_2\in[d_2]\Bigg\},\\
    \mathcal{T}_{2,j}&:=\Bigg\{\psi(X;0_d,\bzin)\frac{\partial^2\pi}{\partial h_2^2}(Y|h_2(X,\ej),\nuj), \ \frac{\partial\psi}{\partial\beta_1^{(u)}}(X;0_d,\bzin)\frac{\partial^2\pi}{\partial h_2^2}(Y|h_2(X,\ej),\nuj),\\
    &\hspace{3cm}\frac{\partial^2h_2}{\partial\eta^{(u_2)}\partial\eta^{(v_2)}}(X,\ej)\psi(X;0_d,\bzin)\frac{\partial^2\pi}{\partial h_2^2}(Y|h_2(X,\ej),\nuj):u\in[d], \ u_2,v_2\in[d_2]\Bigg\},\\
    \mathcal{T}_{3,j}&:=\Bigg\{\frac{\partial h_2}{\partial\eta^{(u_2)}}(X,\ej)\psi(X;0_d,\bzin)\frac{\partial^3\pi}{\partial h_2^3}(Y|h_2(X,\ej),\nuj):u_2\in[d_2]\Bigg\},\\
    \mathcal{T}_{4,j}&:=\Bigg\{\psi(X;0_d,\bzin)\frac{\partial^4\pi}{\partial h_2^4}(Y|h_2(X,\ej),\nuj)\Bigg\},\\
    \mathcal{T}_{5,j}&:=\Bigg\{\frac{\partial\psi}{\beta_1^{(u)}}(X;\boj,\bzj)p_{G^n_2}(Y|X), \ \frac{\partial\psi}{\partial\beta_0}(X;\boj,\bzj)p_{G^n_2}(Y|X), \ \frac{\partial\psi}{\partial\beta_1^{(u)}}(X;0_d,\bzin)p_{G^n_2}(Y|X),\\
    &\hspace{4cm}\frac{\partial^2\psi}{\partial\beta_1^{(u)}\partial\beta_1^{(v)}}(X;0_d,\bzin)p_{G^n_2}(Y|X):u\in[d]\Bigg\},
\end{align*}
for all $j\in[k^*_2]$.

\vspace{0.5 em}
\noindent
\textbf{Stage 2 - Non-vanishing coefficients:} In this stage, we demonstrate that not all the coefficients in the representations of $A_{n,0}(Y|X)/\mathcal{D}_{3n}$, $[A_{n,1}(Y|X)-R_{n,1}(Y|X)]/\mathcal{D}_{3n}$, $[A_{n,2}(Y|X)-R_{n,2}(Y|X)]/\mathcal{D}_{3n}$, $B_{n,0}(Y|X)/\mathcal{D}_{3n}$, $[B_{n,1}(Y|X)-R_{n,3}(Y|X)]/\mathcal{D}_{3n}$, $[B_{n,2}(Y|X)-R_{n,4}(Y|X)]/\mathcal{D}_{3n}$, $C_{n,0}(Y|X)/\mathcal{D}_{3n}$, $[C_{n,1}(Y|X)-R_{n,5}(Y|X)]/\mathcal{D}_{3n}$ and $[C_{n,2}(Y|X)-R_{n,6}(Y|X)]/\mathcal{D}_{3n}$ go to zero when $n\to\infty$. Assume by contrary that all these coefficients converge to zero. By using the same arguments as in Stage 2 in Appendix~\ref{appendix:strongly_identifiable_experts}, we have
\begin{align*}
    \frac{1}{\mathcal{D}_{3n}}\Big[\sum_{j=1}^{k^*_1}\Big|\sum_{i\in\mathcal{V}_{1,j}}\oin-\oj\Big|&+\sum_{j\in[k^*_1]:|\mathcal{V}_{1,j}|=1}\sum_{i\in\mathcal{V}_{1,j}}\oin(\|\dkijn\|+|\dtijn|)\\
    &+\sum_{j\in[k^*_1]:|\mathcal{V}_{1,j}|>1}\sum_{i\in\mathcal{V}_{1,j}}\oin(\|\dkijn\|^2+|\dtijn|^2)\Big]\to0,
\end{align*}
as $n\to\infty$. Additionally, by considering the coefficients of the terms:
\begin{itemize}
    \item $\frac{\partial\psi}{\partial\beta_1^{(u)}}(X;\boj,\bzj)\pi(Y|h_2(X,\ej),\nuj)$ for $j\in[k^*_2]:|\mathcal{V}_{2,j}|=1$, we get
    \begin{align*}
        \frac{1}{\mathcal{D}_{3n}}\sum_{j\in[k^*_2]:|\mathcal{V}_{2,j}|=1}\sum_{i\in\mathcal{V}_{2,j}}\|\dboijn\|\to0;
    \end{align*}
    \item $\frac{\partial\psi}{\partial\beta_0}(X;\boj,\bzj)\pi(Y|h_2(X,\ej),\nuj)$ for $j\in[k^*_2]:|\mathcal{V}_{2,j}|=1$, we get
    \begin{align*}
        \frac{1}{\mathcal{D}_{3n}}\sum_{j\in[k^*_2]:|\mathcal{V}_{2,j}|=1}\sum_{i\in\mathcal{V}_{2,j}}|\dbzijn|\to0;
    \end{align*}
    \item $\frac{\partial h_2}{\partial\eta^{(u_2)}}(X,\ej)\psi(X;\boj,\bzj)\frac{\partial\pi}{\partial h_2}(Y|h_2(X,\ej),\nuj)$ for $j\in[k^*_2]:|\mathcal{V}_{2,j}|=1$, we get
    \begin{align*}
        \frac{1}{\mathcal{D}_{3n}}\sum_{j\in[k^*_2]:|\mathcal{V}_{2,j}|=1}\sum_{i\in\mathcal{V}_{2,j}}\|\deijn\|\to0;
    \end{align*}
    \item $\psi(X;\boj,\bzj)\frac{\partial\pi}{\partial h_2}(Y|h_2(X,\ej),\nuj)$ for $j\in[k^*_2]:|\mathcal{V}_{2,j}|=1$, we get
    \begin{align*}
        \frac{1}{\mathcal{D}_{3n}}\sum_{j\in[k^*_2]:|\mathcal{V}_{2,j}|=1}\sum_{i\in\mathcal{V}_{2,j}}|\dnuijn|\to0;
    \end{align*}
    \item $\pi(Y|h_2(X,\ej),\nuj)$ for $j\in[k^*_2]:|\mathcal{V}_{2,j}|>1$, we get
    \begin{align*}
        \frac{1}{\mathcal{D}_{3n}}\sum_{j\in[k^*_2]:|\mathcal{V}_{2,j}|>1}\Big|\sum_{i\in\mathcal{V}_{2,j}}\sigma(\bzin)-\sigma(\bzj)\Big|\to0;
    \end{align*}
    \item $\frac{\partial^2\psi}{\partial\beta_1^{(u)}\partial\beta_1^{(v)}}(X;0_d,\bzin)\pi(Y|h_2(X,\ej),\nuj)$ for $j\in[k^*_2]:|\mathcal{V}_{2,j}|>1$, we get
    \begin{align*}
         \frac{1}{\mathcal{D}_{3n}}\sum_{j\in[k^*_2]:|\mathcal{V}_{2,j}|>1}\sum_{i\in\mathcal{V}_{2,j}}\|\dboijn\|^2\to0;
    \end{align*}
    \item $\frac{\partial^2h_2}{\partial\eta^{(u_2)}\partial\eta^{(v_2)}}(X,\ej)\psi(X;0_d,\bzin)\frac{\partial^2\pi}{\partial h_2^2}(Y|h_2(X,\ej),\nuj)$ for $j\in[k^*_2]:|\mathcal{V}_{2,j}|>1$, we get
    \begin{align*}
         \frac{1}{\mathcal{D}_{3n}}\sum_{j\in[k^*_2]:|\mathcal{V}_{2,j}|>1}\sum_{i\in\mathcal{V}_{2,j}}\|\deijn\|^2\to0;
    \end{align*}
    \item $\psi(X;0_d,\bzin)\frac{\partial^4\pi}{\partial h_2^4}(Y|h_2(X,\ej),\nuj)$ for $j\in[k^*_2]:|\mathcal{V}_{2,j}|>1$, we get
    \begin{align*}
         \frac{1}{\mathcal{D}_{3n}}\sum_{j\in[k^*_2]:|\mathcal{V}_{2,j}|>1}\sum_{i\in\mathcal{V}_{2,j}}|\dnuijn|^2\to0.
    \end{align*}
\end{itemize}
Putting the above limits together, we deduce $1=\frac{\mathcal{D}_{3n}}{\mathcal{D}_{3n}}\to0$ as $n\to\infty$, which is a contradiction. Therefore, at least one among the coefficients in the representations of $A_{n,0}(Y|X)/\mathcal{D}_{3n}$, $[A_{n,1}(Y|X)-R_{n,1}(Y|X)]/\mathcal{D}_{3n}$, $[A_{n,2}(Y|X)-R_{n,2}(Y|X)]/\mathcal{D}_{3n}$, $B_{n,0}(Y|X)/\mathcal{D}_{3n}$, $[B_{n,1}(Y|X)-R_{n,3}(Y|X)]/\mathcal{D}_{3n}$, $[B_{n,2}(Y|X)-R_{n,4}(Y|X)]/\mathcal{D}_{3n}$, $C_{n,0}(Y|X)/\mathcal{D}_{3n}$, $[C_{n,1}(Y|X)-R_{n,5}(Y|X)]/\mathcal{D}_{3n}$ and $[C_{n,2}(Y|X)-R_{n,6}(Y|X)]/\mathcal{D}_{3n}$ does not go to zero.

\vspace{0.5 em}
\noindent
\textbf{Stage 3 - Fatou's lemma contradiction:} In this stage, we use the Fatou's lemma to show a contradiction to the result of Stage 2. For that purpose, let us denote $m_n$ as the maximum of the absolute values of the coefficients in the representations of $A_{n,0}(Y|X)/\mathcal{D}_{3n}$, $[A_{n,1}(Y|X)-R_{n,1}(Y|X)]/\mathcal{D}_{3n}$, $[A_{n,2}(Y|X)-R_{n,2}(Y|X)]/\mathcal{D}_{3n}$, $B_{n,0}(Y|X)/\mathcal{D}_{3n}$, $[B_{n,1}(Y|X)-R_{n,3}(Y|X)]/\mathcal{D}_{3n}$, $[B_{n,2}(Y|X)-R_{n,4}(Y|X)]/\mathcal{D}_{3n}$, $C_{n,0}(Y|X)/\mathcal{D}_{3n}$, $[C_{n,1}(Y|X)-R_{n,5}(Y|X)]/\mathcal{D}_{3n}$ and $[C_{n,2}(Y|X)-R_{n,6}(Y|X)]/\mathcal{D}_{3n}$. It follows from the result of Stage 2 that $1/m_n\not\to\infty$ as $n\to\infty$. In addition, we also denote
\begin{align*}
    \frac{1}{m_n\mathcal{D}_{3n}}\cdot\sum_{i\in\mathcal{V}_{1,j}}\oin(\dkijn)^{(u_1)}\to s^{(u_1)}_{1,j},& \quad \frac{1}{m_n\mathcal{D}_{3n}}\cdot\sum_{i\in\mathcal{V}_{1,j}}\oin(\dtijn)\to s_{2,j},\\
    \frac{1}{m_n\mathcal{D}_{3n}}\cdot\sum_{i\in\mathcal{V}_{1,j}}\oin(\dkijn)^{(u_1)}(\dkijn)^{(v_1)}\to s^{(u_1v_1)}_{3,j},& \quad \frac{1}{m_n\mathcal{D}_{3n}}\cdot\sum_{i\in\mathcal{V}_{1,j}}\oin(\dtijn)^2\to s_{4,j},\\
    \frac{1}{m_n\mathcal{D}_{3n}}\cdot\sum_{i\in\mathcal{V}_{1,j}}\oin(\dkijn)^{(u_1)}(\dtijn)\to s^{(u_1)}_{5,j},& \quad \frac{1}{m_n\mathcal{D}_{3n}}\cdot\Big(\sum_{i\in\mathcal{V}_{1,j}}\oin-\oj\Big)\to s_{0,j},\\
\end{align*}
for all $j\in[k^*_1]$ and
\begin{align*}
    \frac{1}{m_n\mathcal{D}_{3n}}\cdot\sum_{i\in\mathcal{V}_{2,j}}(\dbzijn)\to t_{0,j}, j\in[k^*_2]:|\mathcal{V}_{2,j}|=1,& \quad \frac{1}{m_n\mathcal{D}_{3n}}\cdot\sum_{i\in\mathcal{V}_{2,j}}(\dboijn)^{(u)}\to t^{(u)}_{1,j},\\
    \frac{1}{m_n\mathcal{D}_{3n}}\cdot\sum_{i\in\mathcal{V}_{2,j}}(\deijn)^{(u_2)}\to t^{(u_2)}_{2,j},& \quad \frac{1}{m_n\mathcal{D}_{3n}}\cdot\sum_{i\in\mathcal{V}_{2,j}}(\dnuijn)\to t_{3,j},\\
\end{align*}
for all $j\in[k^*_2]:|\mathcal{V}_{2,j}|=1$, and
\begin{align*}
    \frac{1}{m_n\mathcal{D}_{3n}}\cdot\Big(\sum_{i\in\mathcal{V}_{2,j}}\sigma(\bzin)-\sigma(\bzj)\Big)\to t_{0,j},& \quad \frac{1}{m_n\mathcal{D}_{3n}}\cdot(\dboijn)^{(u)}\to t^{(u)}_{1,j,i},\\
    \frac{1}{m_n\mathcal{D}_{3n}}\cdot(\deijn)^{(u_2)}\to t^{(u_2)}_{2,j,i},& \quad \frac{1}{m_n\mathcal{D}_{3n}}\cdot(\dnuijn)\to t_{3,j,i},\\
    \frac{1}{m_n\mathcal{D}_{3n}}\cdot(\dboijn)^{(u)}(\dboijn)^{(v)}\to t^{(uv)}_{4,j,i},& \quad \frac{1}{m_n\mathcal{D}_{3n}}\cdot=(\deijn)^{(u_2)}(\deijn)^{(v_2)}\to t^{(u_2v_2)}_{5,j,i},\\
    \frac{1}{m_n\mathcal{D}_{3n}}\cdot(\dnuijn)^2\to t_{6,j,i},& \quad \frac{1}{m_n\mathcal{D}_{3n}}\cdot(\dboijn)^{(u)}(\deijn)^{(v_2)}\to t^{(uv_2)}_{7,j,i},\\
    \frac{1}{m_n\mathcal{D}_{3n}}\cdot(\dboijn)^{(u)}(\dnuijn)\to t^{(u)}_{8,j,i},& \quad \frac{1}{m_n\mathcal{D}_{3n}}\cdot(\deijn)^{(u_2)}(\dnuijn)\to t^{(u_2)}_{9,j,i},
\end{align*}
for all $j\in[k^*_2]:|\mathcal{V}_{2,j}|>1$ as $n\to\infty$.
Due to the result of Stage 2, at least one among the above limits is non-zero. Recall from equation~\eqref{eq:expectation_zero_sigmoid} that we get
\begin{align*}
    \bbE_X[V(g_{G^n_1,G^n_2}(\cdot|X),g_{G^*_1,G^*_2}(\cdot|X))]/\mathcal{D}_{3n}\to0,
\end{align*}
Moreover, by means of the Fatou's lemma, we have
\begin{align*}
    \lim_{n\to\infty}\dfrac{\bbE_X[V(g_{G^n_1,G^n_2}(\cdot|X),g_{G^*_1,G^*_2}(\cdot|X))]}{m_n\mathcal{D}_{3n}}\geq\int\liminf_{n\to\infty}\dfrac{|g_{G^n_1,G^n_2}(Y|X)-g_{G^*_1,G^*_2}(Y|X)|}{2m_n\mathcal{D}_{3n}}\dint (X,Y).
\end{align*}
Then, we deduce $[g_{G^n_1,G^n_2}(Y|X)-g_{G^*_1,G^*_2}(Y|X)]/[m_n\mathcal{D}_{3n}]\to0$ as $n\to\infty$ for almost surely $(X,Y)$. Since the input space is bounded and the parameter space is compact, the quantity $\sum_{j = 1}^{k^*_2} \sigma((\beta_{1j}^{*})^{\top} X + \beta_{0j}^{*})$ is bounded. Thus, we also have
\begin{align*}
    \Big[\sum_{j = 1}^{k^*_2} \sigma((\beta_{1j}^{*})^{\top} X + \beta_{0j}^{*})\Big][g_{G^n_1,G^n_2}(Y|X)-g_{G^*_1,G^*_2}(Y|X)]/[m_n\mathcal{D}_{3n}]\to0,
\end{align*}
implying that 
\begin{align*}
    \frac{1}{2}\Big[\sum_{j = 1}^{k^*_2} \sigma((\beta_{1j}^{*})^{\top} X + \beta_{0j}^{*})\Big]\cdot\dfrac{q_{G^n_1}(Y|X)-q_{G^*_1}(Y|X)}{m_n\mathcal{D}_{3n}}+\frac{1}{2}\dfrac{Q_n(Y|X)}{m_n\mathcal{D}_{3n}}\to0.
\end{align*}
as $n\to\infty$ for almost surely $(X,Y)$. From the decomposition of the terms $q_{G^n_1}(Y|X)-q_{G^*_1}(Y|X)$ and $Q_n(Y|X)$ in Stage 1, we have
\begin{align}
    \frac{1}{2}\Big[\sum_{j = 1}^{k^*_2} \sigma((\beta_{1j}^{*})^{\top} X + \beta_{0j}^{*})\Big]\cdot\dfrac{A_{n,2}(Y|X)+A_{n,1}(Y|X)+A_{n,0}(Y|X)}{m_n\mathcal{D}_{3n}}\nonumber\\
    \label{eq:zero_limit_sigmoid}
    +\frac{1}{2}\dfrac{B_{n,1}(Y|X)+B_{n,2}(Y|X)+B_{n,3}(Y|X)-C_{n,1}(Y|X)-C_{n,2}(Y|X)-C_{n,3}(Y|X)}{m_n\mathcal{D}_{3n}}\to0.
\end{align}
We have
\begin{align*}
    &\lim_{n\to\infty}\frac{A_{n,0}(Y|X)}{m_n\mathcal{D}_{3n}}=\sum_{j=1}^{k^*_1}s_{0,j}\pi(Y|h_1(X,\kappa^*_j),\tau^*_j),\\
    &\lim_{n\to\infty}\frac{A_{n,1}(Y|X)}{m_n\mathcal{D}_{3n}}=\sum_{j\in[k^*_1]:|\mathcal{V}_{1,j}|=1}\Big[\sum_{u_1=1}^{d_1}s_{1,j}^{(u_1)}\frac{\partial h_1}{\partial\kappa^{(u_1)}}(X,\kj)\frac{\partial\pi}{\partial h_1}(Y|h_1(X,\kj),\tj)\\
    &\hspace{10cm}+\frac{1}{2}s_{2,j}\frac{\partial^2\pi}{\partial h_1^2}(Y|h_1(X,\kj),\tj)\Big],\\
    &\lim_{n\to\infty}\frac{A_{n,2}(Y|X)}{m_n\mathcal{D}_{3n}}=\sum_{j\in[k^*_1]:|\mathcal{V}_{1,j}|>1}\Big[\Big(\sum_{u_1=1}^{d_1}s_{1,j}^{(u_1)}\frac{\partial h_1}{\partial\kappa^{(u_1)}}(X,\kj)+\sum_{u_1,v_1=1}^{d_1}\frac{s_{3,j}^{(u_1v_1)}}{1+1_{\{u_1=v_1\}}}\frac{\partial^2h_1}{\partial\kappa^{(u_1)}\partial\kappa^{(v_1)}}(X,\kj)\Big)\\
    &\times \frac{\partial\pi}{\partial h_1}(Y|h_1(X,\kj),\tj)+\Big(\frac{1}{2}s_{2,j}+\sum_{u_1,v_1=1}^{d_1}\frac{s_{3,j}^{(u_1v_1)}}{1+1_{\{u_1=v_1\}}}\frac{\partial h_1}{\partial\kappa^{(u_1)}}(X,\kj)\frac{\partial h_1}{\partial\kappa^{(v_1)}}(X,\kj)\Big)\frac{\partial^2\pi}{\partial h_1^2}(Y|h_1(X,\kj),\tj)\\
    &\hspace{3cm}+\Big(\frac{1}{2}\sum_{u_1=1}^{d_1}s_{5,j}^{(u_1)}\frac{\partial h_1}{\partial\kappa^{(u_1)}}(X,\kj)\Big)\frac{\partial^3\pi}{\partial h_1^3}(Y|h_1(X,\kj),\tj)+\frac{1}{8}s_{4,j}\frac{\partial^4\pi}{\partial h_1^4}(Y|h_1(X,\kj),\tj)\Big],
\end{align*}
and
\begin{align*}
    &\lim_{n\to\infty}\frac{B_{n,0}(Y|X)}{m_n\mathcal{D}_{3n}}=\sum_{j\in[k^*_2]:|\mathcal{V}_{2,j}|>1}t_{0,j}\pi(Y|h_2(X,\ej),\nuj),\\
    &\lim_{n\to\infty}\frac{B_{n,1}(Y|X)}{m_n\mathcal{D}_{3n}}=\sum_{j\in[k^*_2]:|\mathcal{V}_{2,j}|=1}\Big[\Big(\sum_{u=1}^{d}t_{1,j}^{(u)}\frac{\partial\psi}{\partial\beta_1^{(u)}}(X;\boj,\bzj)+t_{0,j}\frac{\partial\psi}{\partial\beta_0}(X;\boj,\bzj)\Big)\pi(Y|h_2(X,\ej),\nuj)\\
    &+\sum_{u_2=1}^{d_2}t_{2,j}^{(u_2)}\frac{\partial h_2}{\partial\eta^{(u_2)}}(X,\ej)\psi(X;\boj,\bzj)\frac{\partial\pi}{\partial h_2}(Y|h_2(X,\ej),\nuj)+\frac{1}{2}t_{3,j}\psi(X;\boj,\bzj)\frac{\partial^2\pi}{\partial h_2^2}(Y|h_2(X,\ej),\nuj)\Big],\\
    &\lim_{n\to\infty}\frac{B_{n,2}(Y|X)}{m_n\mathcal{D}_{3n}}=\sum_{j\in[k^*_2]:|\mathcal{V}_{2,j}|>1}\sum_{i\in\mathcal{V}_{2,j}}\Big[\Big(\sum_{u,v=1}^{d}\frac{t_{4,j,i}^{(uv)}}{1+1_{\{u=v\}}}\frac{\partial^2\psi}{\partial\beta_1^{(u)}\partial\beta_1^{(v)}}(X;0_d,\bar{\beta}_{0i})\\
    &+\sum_{u=1}^{d}t_{1,j,i}^{(u)}\frac{\partial\psi}{\partial\beta_1^{(u)}}(X;0_d,\bar{\beta}_{0i})\Big)\pi(Y|h_2(X,\ej),\nuj)+\Big(\sum_{u=1}^{d}\sum_{u_2=1}^{d_2}t_{7,j,i}^{(uu_2)}\frac{\partial h_2}{\partial\eta^{(u_2)}}(X.\ej)\frac{\partial\psi}{\partial\beta_1^{(u)}}(X;0_d,\bar{\beta}_{0i})\\
    &+\sum_{u_2=1}^{d_2}t_{2,j,i}^{(u_2)}\frac{\partial h_2}{\partial\eta^{(u_2)}}(X.\ej)\psi(X;0_d,\bar{\beta}_{0i})+\sum_{u_2,v_2=1}^{d_2}t_{5,j,i}^{(u_2v_2)}\frac{\partial^2h_2}{\partial\eta^{(u_2)}\partial\eta^{v_2)}}(X,\ej)\psi(X;0_d,\bar{\beta}_{0i})\Big)\frac{\partial\pi}{\partial h_2}(Y|h_2(X,\ej),\nuj)\\
    &+\Big(\sum_{u=1}^{d}\frac{1}{2}t_{8,j,i}^{(u)}\frac{\partial\psi}{\partial\beta_1^{(u)}}(X;0_d,\bar{\beta}_{0i})+\frac{1}{2}t_{3,j,i}\psi(X;0_d,\bar{\beta}_{0i})+\sum_{u_2,v_2=1}^{d_2}\frac{t_{5,j,i}^{(u_2v_2)}}{1+1_{\{u_2=v_2\}}}\frac{\partial h_2}{\partial\eta^{(u_2)}}(X,\ej)\frac{\partial h_2}{\partial\eta^{(v_2)}}(X,\ej)\\
    &\times\psi(X;0_d,\bar{\beta}_{0i})\Big)\frac{\partial^2\pi}{\partial h_2^2}(Y|h_2(X,\ej),\nuj)+\sum_{u_2=1}^{d_2}\frac{1}{2}t_{9,j,i}^{(u_2)}\frac{\partial h_2}{\partial\eta^{(u_2)}}(X,\ej)\psi(X;0_d,\bar{\beta}_{0i})\frac{\partial^3\pi}{\partial h_2^3}(Y|h_2(X,\ej),\nuj)\\
    &\hspace{8cm}+\frac{1}{8}t_{6,j,i}\psi(X;0_d,\bar{\beta}_{0i})\frac{\partial^4\pi}{\partial h_2^4}(Y|h_2(X,\ej),\nuj)\Big],
\end{align*}
and
\begin{align*}
    &\lim_{n\to\infty}\frac{C_{n,0}(Y|X)}{m_n\mathcal{D}_{3n}}=\sum_{j\in[k^*_2]:|\mathcal{V}_{2,j}|>1}t_{0,j}p_{G^*_2}(Y|X),\\
    &\lim_{n\to\infty}\frac{C_{n,1}(Y|X)}{m_n\mathcal{D}_{3n}}=\sum_{j\in[k^*_2]:|\mathcal{V}_{2,j}|=1}\Big[\sum_{u=1}^{d}t_{1,j}^{(u)}\frac{\partial\psi}{\beta_1^{(u)}}(X;\boj,\bzj)+t_{0,j}\frac{\partial\psi}{\partial\beta_0}(X;\boj,\bzj)\Big]p_{G^*_2}(Y|X),\\
    &\lim_{n\to\infty}\frac{C_{n,2}(Y|X)}{m_n\mathcal{D}_{3n}}=\sum_{j\in[k^*_2]:|\mathcal{V}_{2,j}|>1}\sum_{i\in\mathcal{V}_{2,j}}\Big[\sum_{u=1}^{d}t_{1,j,i}^{(u)}\frac{\partial\psi}{\beta_1^{(u)}}(X;0_d,\bar{\beta}_{0i})\\
    &\hspace{6cm}+\sum_{u,v=1}^{d}\frac{t_{4,j,i}^{(uv)}}{1+1_{\{u=v\}}}\frac{\partial^2\psi}{\partial\beta_1^{(u)}\partial\beta_1^{(v)}}(X;0_d,\bar{\beta}_{0i})\Big]p_{G^*_2}(Y|X).
\end{align*}
Note that for almost every $X$, the set
\begin{align*}
    &\Bigg\{\Big[\sum_{j = 1}^{k^*_2} \sigma((\beta_{1j}^{*})^{\top} X + \beta_{0j}^{*})\Big]\frac{\partial^{\rho}\pi}{\partial h_1^{\rho}}(Y|h_1(X,\kj),\tj):0\leq\rho\leq 4, \  j\in[k^*_1]\Bigg\}\\
    \cup~&\Bigg\{\frac{\partial\psi}{\partial\beta_1^{(u)}}(X;\boj,\bzj)\pi(Y|h_2(X,\ej),\nuj), \ \frac{\partial\psi}{\partial\beta_0}(X;\boj,\bzj)\pi(Y|h_2(X,\ej),\nuj), \\
    &\quad \frac{\partial\psi}{\partial\beta_1^{(u)}}(X;\boj,\bzj)p_{G^*_2}(Y|X), \ \frac{\partial\psi}{\partial\beta_0}(X;\boj,\bzj)p_{G^*_2}(Y|X), \ \psi(X;\boj,\bzj)\frac{\partial\pi}{\partial h_2}(Y|h_2(X,\ej),\nuj), \\
    &\hspace{4cm}\psi(X;\boj,\bzj)\frac{\partial^2\pi}{\partial h_2^2}(Y|h_2(X,\ej),\nuj)
    :u\in[d], \ j\in[k^*_2]:|\mathcal{V}_{2,j}|=1\Bigg\}\\
    \cup~&\Bigg\{\frac{\partial\psi}{\partial\beta_1^{(u)}}(X;0_d,\bar{\beta}_{0i})\pi(Y|h_2(X,\ej),\nuj), \ \frac{\partial^2\psi}{\partial\beta_1^{(u)}\partial\beta_1^{(v)}}(X;0_d,\bar{\beta}_{0i})\pi(Y|h_2(X,\ej),\nuj), \ \pi(Y|h_2(X,\ej),\nuj)\\
    &\quad \frac{\partial\psi}{\partial\beta_1^{(u)}}(X;0_d,\bar{\beta}_{0i})\frac{\partial\pi}{\partial h_2}(Y|h_2(X,\ej),\nuj), \ \psi(X;0_d,\bar{\beta}_{0i})\frac{\partial\pi}{\partial h_2}(Y|h_2(X,\ej),\nuj),\\
    &\quad \frac{\partial\psi}{\partial\beta_1^{(u)}}(X;0_d,\bar{\beta}_{0i})\frac{\partial^2\pi}{\partial h_2^2}(Y|h_2(X,\ej),\nuj), \ \psi(X;0_d,\bar{\beta}_{0i})\frac{\partial^2\pi}{\partial h_2^2}(Y|h_2(X,\ej),\nuj),\\
    &\quad \psi(X;0_d,\bar{\beta}_{0i})\frac{\partial^3\pi}{\partial h_2^3}(Y|h_2(X,\ej),\nuj), \ \psi(X;0_d,\bar{\beta}_{0i})\frac{\partial^4\pi}{\partial h_2^4}(Y|h_2(X,\ej),\nuj),\\
    &\quad \frac{\partial\psi}{\partial\beta_1^{(u)}}(X;0_d,\bar{\beta}_{0i})p_{G^*_2}(Y|X), \ \frac{\partial^2\psi}{\partial\beta_1^{(u)}\partial\beta_1^{(v)}}(X;0_d,\bar{\beta}_{0i})p_{G^*_2}(Y|X):u,v\in[d], j\in[k^*_2]:|\mathcal{V}_{2,j}|>1, i\in\mathcal{V}_{2,j}\Bigg\}
\end{align*}
is linearly independent w.r.t $Y$, implying that the coefficients of those terms in the limit in equation~\eqref{eq:zero_limit_sigmoid} are equal to zero. 

\vspace{0.5 em}
\noindent
For $j\in[k^*_1]$, by looking at the coefficient of the term $\Big[\sum_{j = 1}^{k^*_2} \sigma((\beta_{1j}^{*})^{\top} X + \beta_{0j}^{*})\Big]\pi(Y|h_1(X,\kj),\tj)$, we have $s_{0,j}=0$. 

\vspace{0.5 em}
\noindent
For $j\in[k^*_1]$ such that $|\mathcal{V}_{1,j}|=1$, by considering the coefficients of 
\begin{itemize}
    % \item $\Big[\sum_{j = 1}^{k^*_2} \exp((\beta_{1j}^{*})^{\top} X + \beta_{0j}^{*})\Big]\pi(Y|h_1(X,\kj),\tj)$, we have $s_{0,j}=0$;
    \item $\Big[\sum_{j = 1}^{k^*_2} \sigma((\beta_{1j}^{*})^{\top} X + \beta_{0j}^{*})\Big]\frac{\partial\pi}{\partial h_1}(Y|h_1(X,\kj),\tj)$, we have $\sum_{u_1=1}^{d_1}s_{1,j}^{(u_1)}\frac{\partial h_1}{\partial\kappa^{(u_1)}}(X,\kj)=0$ for almost every $X$. Since the expert function $h_1$ is strongly identifiable, we get $s_{1,j}^{(u_1)}=0$ for all $u_1\in[d_1]$;
    \item $\Big[\sum_{j = 1}^{k^*_2} \sigma((\beta_{1j}^{*})^{\top} X + \beta_{0j}^{*})\Big]\frac{\partial^2\pi}{\partial h_1^2}(Y|h_1(X,\kj),\tj)$, we have $s_{2,j}=0$.
\end{itemize}
\noindent
For $j\in[k^*_1]$ such that $|\mathcal{V}_{1,j}|>1$, by taking into account the coefficients of 
\begin{itemize}
    \item $\Big[\sum_{j = 1}^{k^*_2} \sigma((\beta_{1j}^{*})^{\top} X + \beta_{0j}^{*})\Big]\frac{\partial\pi}{\partial h_1}(Y|h_1(X,\kj),\tj)$, we have
    \begin{align*}
        \sum_{u_1=1}^{d_1}s_{1,j}\frac{\partial h_1}{\partial\kappa^{(u_1)}}(X,\kj)+\sum_{u_1,v_1=1}^{d_1}\frac{s_{3,j}^{(u_1v_1)}}{1+1_{\{u_1=v_1\}}}\frac{\partial^2h_1}{\partial\kappa^{(u_1)}\partial\kappa^{(v_1)}}(X,\kj)=0,
    \end{align*}
    for almost every $X$. Since the expert function $h_1$ satisfies the strong identifiability condition, we get $s_{1,j}^{(u_1)}=s_{3,j}^{(u_1v_1)}=0$ for all $u_1,v_1\in[d_1]$;
    \item $\Big[\sum_{j = 1}^{k^*_2} \sigma((\beta_{1j}^{*})^{\top} X + \beta_{0j}^{*})\Big]\frac{\partial^2\pi}{\partial h_1^2}(Y|h_1(X,\kj),\tj)$, we have
    \begin{align*}
        \frac{1}{2}s_{2,j}+\sum_{u_1,v_1=1}^{d_1}\frac{s_{3,j}^{(u_1v_1)}}{1+1_{\{u_1=v_1\}}}\frac{\partial h_1}{\partial\kappa^{(u_1)}}(X,\kj)\frac{\partial h_1}{\partial\kappa^{(v_1)}}(X,\kj)=0,
    \end{align*}
    for almost every $X$. Since $s_{3,j}^{(u_1v_1)}=0$ for all $u_1,v_1\in[d_1]$, we deduce $s_{2,j}=0$;
    \item $\Big[\sum_{j = 1}^{k^*_2} \sigma((\beta_{1j}^{*})^{\top} X + \beta_{0j}^{*})\Big]\frac{\partial^3\pi}{\partial h_1^3}(Y|h_1(X,\kj),\tj)$, we have $\frac{1}{2}\sum_{u_1=1}^{d_1}s_{5,j}^{(u_1)}\frac{\partial h_1}{\partial\kappa^{(u_1)}}(X,\kj)=0$, for almost every $X$. As the expert function $h_1$ meets the strong identifiability condition, we get $s_{5,j}^{(u_1)}=0$ for all $u_1\in[d_1]$;
    \item $\Big[\sum_{j = 1}^{k^*_2} \sigma((\beta_{1j}^{*})^{\top} X + \beta_{0j}^{*})\Big]\frac{\partial^4\pi}{\partial h_1^4}(Y|h_1(X,\kj),\tj)$, we have $s_{4,j}=0$.
\end{itemize}
For $j\in[k^*_2]$ such that $|\mathcal{V}_{2,j}|=1$, by considering the coefficients of 
\begin{itemize}
    \item $\frac{\partial\psi}{\partial\beta_1^{(u)}}(X;\boj,\bzj)\pi(Y|h_2(X,\ej),\nuj)$, we have $t_{1,j}^{(u)}=0$ for all $u\in[d]$;
    \item $\frac{\partial\psi}{\partial\beta_0}(X;\boj,\bzj)\pi(Y|h_2(X,\ej),\nuj)$, we have $t_{0,j}=0$;
    \item $\psi(X;\boj,\bzj)\frac{\partial\pi}{\partial h_2}(Y|h_2(X,\ej),\nuj)$, we have $\sum_{u_2=1}^{d_2}t_{2,j}^{(u_2)}\frac{\partial h_2}{\partial\eta^{(u_2)}}(X,\ej)=0$. Since the expert function $h_2$ is strongly identifiable, we deduce $t_{2,j}^{(u_2)}=0$ for all $u_2\in[d_2]$;
    \item $\psi(X;\boj,\bzj)\frac{\partial^2\pi}{\partial h_2^2}(Y|h_2(X,\ej),\nuj)$, we have $t_{3,j}=0$.
\end{itemize}
For $j\in[k^*_2]$ such that $|\mathcal{V}_{2,j}|>1$, by considering the coefficients of 
\begin{itemize}
    \item $\pi(Y|h_2(X,\ej),\nuj)$, we have $t_{0,j}=0$;
    \item $\frac{\partial\psi}{\partial\beta_1^{(u)}}(X;0_d,\bar{\beta}_{0i})\pi(Y|h_2(X,\ej),\nuj)$, we have $t_{1,j,i}^{(u)}=0$ for all $u\in[d]$ and $i\in\mathcal{V}_{2,j}$;
    \item $\frac{\partial^2\psi}{\partial\beta_1^{(u)}\partial\beta_1^{(v)}}(X;0_d,\bar{\beta}_{0i})\pi(Y|h_2(X,\ej),\nuj)$, we have $t_{4,j,i}^{(uv)}=0$ for all $u,v\in[d]$ and $i\in\mathcal{V}_{2,j}$;
    \item $\psi(X;0_d,\bar{\beta}_{0i})\frac{\partial\pi}{\partial h_2}(Y|h_2(X,\ej),\nuj)$, we have
    \begin{align*}
        \sum_{u_2=1}^{d_2}t_{2,j,i}^{(u_2)}\frac{\partial h_2}{\partial\eta^{(u_2)}}(X,\ej)+\sum_{u_2,v_2=1}^{d_2}t_{5,j,i}^{(u_2v_2)}\frac{\partial^2 h_2}{\partial\eta^{(u_2)}\partial\eta^{(v_2)}}(X,\ej)=0.
    \end{align*}
    As the expert function $h_2$ satisfies the strong identifiability condition, we deduce $t_{2,j,i}^{(u_2)}=t_{5,j,i}^{(u_2v_2)}=0$ for all $u_2,v_2\in[d_2]$ and $i\in\mathcal{V}_{2,j}$;
    \item $\frac{\partial\psi}{\partial\beta_1^{(u)}}(X;0_d,\bar{\beta}_{0i})\frac{\partial\pi}{\partial h_2}(Y|h_2(X,\ej),\nuj)$, we have $\sum_{u_2=1}^{d_2}t_{7,j,i}^{(uu_2)}\frac{\partial h_2}{\partial\eta^{(u_2)}}(X,\ej)=0$. Since the expert function $h_2$ is strongly identifiable, we deduce $t_{7,j,i}^{(uu_2)}=0$ for all $u\in[d]$, $u_2\in[d_2]$ and $i\in\mathcal{V}_{2,j}$;
    \item $\psi(X;0_d,\bar{\beta}_{0i})\frac{\partial^2\pi}{\partial h_2^2}(Y|h_2(X,\ej),\nuj)$, we have
    \begin{align*}
        \frac{1}{2}t_{3,j,i}+\sum_{u_2,v_2=1}^{d_2}\frac{t_{5,j,i}^{(u_2v_2)}}{1+1_{\{u_2=v_2\}}}\frac{\partial h_2}{\partial\eta^{(u_2)}}(X,\ej)\frac{\partial h_2}{\partial\eta^{(v_2)}}(X,\ej)=0.
    \end{align*}
    Note that $t_{5,j,i}^{(u_2v_2)}=0$ for all $u_2,v_2\in[d_2]$ and $i\in\mathcal{V}_{2,j}$, we deduce $t_{3,j,i}=0$ for all $i\in\mathcal{V}_{2,j}$;
    \item $\frac{\partial\psi}{\partial\beta_1^{(u)}}(X;0_d,\bar{\beta}_{0i})\frac{\partial^2\pi}{\partial h_2^2}(Y|h_2(X,\ej),\nuj)$, we have $t_{8,j,i}^{(u)}=0$ for all $u\in[d]$ and $i\in\mathcal{V}_{2,j}$;
    \item $\psi(X;0_d,\bar{\beta}_{0i})\frac{\partial^3\pi}{\partial h_2^3}(Y|h_2(X,\ej),\nuj)$, we have $\sum_{u_2=1}^{d_2}\frac{1}{2}t_{9,j,i}^{(u_2)}\frac{\partial h_2}{\partial\eta^{(u_2)}}(X,\ej)=0$. Since the expert function $h_2$ meets the strong identifiability, we deduce $t_{9,j,i}^{(u_2)}$ for all $u_2\in[d_2]$ and $i\in\mathcal{V}_{2,j}$;
    \item $\psi(X;0_d,\bar{\beta}_{0i})\frac{\partial^4\pi}{\partial h_2^4}(Y|h_2(X,\ej),\nuj)$, we have $t_{6,j,i}=0$ for all $i\in\mathcal{V}_{2,j}$.
\end{itemize}
Putting the above results together, we have (i) $s_{0,j}=s_{1,j}^{(u_1)}=s_{2,j}=s_{3,j}^{(u_1v_1)}=s_{4,j}=s_{5,j}^{(u_1)}=0$ for all $j\in[k^*_1]$ and $u_1,v_1\in[d_1]$; (ii) $t_{0,j}=t_{1,j}^{(u)}=t_{2,j}^{(u_2)}=t_{3,j}=0$ for all $j\in[k^*_2]:|\mathcal{V}_{2,j}|=1$, $u\in[d]$ and $u_2\in[d_2]$; (iii) $t_{0,j}=t_{1,j,i}^{(u)}=t_{2,j,i}^{(u_2)}=t_{3,j,i}=t_{4,j,i}^{(uv)}=t_{5,j,i}^{(u_2v_2)}=t_{6,j,i}=t_{7,j,i}^{uv_2}=t_{8,j,i}^{(u)}=t_{9,j,i}^{(u_2)}$ for all $j\in[k^*_2]:|\mathcal{V}_{2,j}|>1$, $u,v\in[d]$ and $u_2,v_2\in[d_2]$. This contradicts to the fact that at least one among them is non-zero. Consequently, we achieve the local part in equation~\eqref{eq:local_part_sigmoid} and complete the proof.

\subsection{Proof of Theorem~\ref{theorem:weakly_identifiable_experts_sigmoid}}
\label{appendix:weakly_identifiable_experts_sigmoid}
Note that it is sufficient to demonstrate that
\begin{align*}
    \inf_{(G_1,G_2)\in\mathcal{G}_{k_1,k_2}(\Theta)}\dfrac{\bbE_X[V(g_{G_1,G_2}(\cdot|X),g_{G^*_1,\check{G}_2}(\cdot|X))]}{\mathcal{D}_4((G_1,G_2),(G^*_1,\check{G}_2))}>0,
\end{align*}
for any pair of mixing measures $(G^*_1,\check{G}_2)\in\check{\mathcal{G}}_{k^*_1,k_2}(\Theta)$. For that purpose, given an arbitrary mixing measure $\check{G}_2:=\sum_{i=1}^{k_2}\sigma(\check{\beta}_{0i})\delta_{(\check{\beta}_{1i},\check{\eta}_{i},\check{\nu}_{i})}$, we need to establish its local part 
\begin{align}
    \label{eq:local_part_sigmoid_dense}
    \lim_{\varepsilon\to0}\inf_{(G_1,G_2)\in\mathcal{G}_{k_1,k_2}(\Theta):\mathcal{D}_4((G_1,G_2),(G^*_1,\check{G}_2))\leq\varepsilon}\dfrac{\bbE_X[V(g_{G_1,G_2}(\cdot|X),g_{G^*_1,\check{G}_2}(\cdot|X))]}{\mathcal{D}_4((G_1,G_2),(G^*_1,\check{G}_2))}>0,
\end{align}
and its global part
\begin{align}
    \label{eq:global_part_sigmoid_dense}
    \inf_{(G_1,G_2)\in\mathcal{G}_{k_1,k_2}(\Theta):\mathcal{D}_4((G_1,G_2),(G^*_1,\check{G}_2))>\varepsilon'}\dfrac{\bbE_X[V(g_{G_1,G_2}(\cdot|X),g_{G^*_1,\check{G}_2}(\cdot|X))]}{\mathcal{D}_4((G_1,G_2),(G^*_1,\check{G}_2))}>0.
\end{align}
Since the global part~\eqref{eq:global_part_sigmoid_dense} can be demonstrated analogously to that in Appendix~\ref{appendix:strongly_identifiable_experts}, we will focus only on proving the local part~\eqref{eq:local_part_sigmoid_dense} in this appendix. Assume by contrary that the above local part is not true. Then, we can find a sequence $(G^n_1,G^n_2)$ of the form $G^n_1:=\sum_{i=1}^{k^n_1}\oin\delta_{(\kin,\tin)}$, $G^n_2:=\sum_{i=1}^{k^n_2}\sigma(\bzin)\delta_{(\boin,\ein,\nuin)}$ for $n\in\mathbb{N}$ satisfying $\mathcal{D}_{4n}:=\mathcal{D}_{4}((G^n_1,G^n_2),(G^*_1,\check{G}_2))\to0$ and
\begin{align}
    \label{eq:expectation_zero_sigmoid_dense}
    \bbE_X[V(g_{G^n_1,G^n_2}(\cdot|X),g_{G^*_1,\check{G}_2}(\cdot|X))]/\mathcal{D}_{4n}\to0,
\end{align}
as $n\to\infty$. Moreover, we may assume WLOG that the number of shared experts $k^n_1$, the number of routed experts $k^n_2$, and Voronoi cells $\mathcal{V}_{1,j}=\mathcal{V}_{1,j}(G^n_1)$, $\mathcal{V}_{2,j}=\mathcal{V}_{2,j}(G^n_2)$ are independent of the sample size $n$. In addition, since $G^n_2$ and $\check{G}_2$ have the same number of atoms $k_2$, we may assume WLOG that the Voronoi cell $\mathcal{V}_{2,j}$ admits only one element, that is, $\mathcal{V}_{2,j}=\{j\}$ for all $j\in[k_2]$. Thus, we can represent the Voronoi loss $\mathcal{D}_{4n}$ as
\begin{align}
    \label{eq:loss_4n}
    &\mathcal{D}_{4n}=\sum_{j=1}^{k^*_1}\Big|\sum_{i\in\mathcal{V}_{1,j}}\oin-\oj\Big|+\sum_{i=1}^{k^*_2}(\|\cdboin\|+|\cdbzin|+\|\cdein\|+|\cdnuin|)\nonumber\\
    &\hspace{1cm}+\sum_{j\in[k^*_1]:|\mathcal{V}_{1,j}|=1}\sum_{i\in\mathcal{V}_{1,j}}\oin(\|\dkijn\|+|\dtijn|)+\sum_{j\in[k^*_1]:|\mathcal{V}_{1,j}|>1}\sum_{i\in\mathcal{V}_{1,j}}\oin(\|\dkijn\|^2+|\dtijn|^2),
\end{align}
where we denote $\cdboin:=\boin-\check{\beta}_{1i}$, $\cdbzin:=\bzin-\check{\beta}_{0i}$, $\cdein:=\ein-\check{\eta}_{i}$, and $\cdnuin:=\nuin-\check{\nu}_{i}$ for all $i\in[k_2]$.
Recall that $\mathcal{D}_{4n}\to0$ as $n\to\infty$, then equation~\eqref{eq:loss_4n} implies that as $n\to\infty$, we have
\begin{itemize}
    \item For $j\in[k^*_1]$ and $i\in\mathcal{V}_{1,j}$: $\sum_{i\in\mathcal{V}_{1,j}}\oin\to\oj$, $(\kin,\tin)\to(\kj,\tj)$;
    \item For $i\in[k^*_2]$: $(\boin,\bzin,\ein,\nuin)\to(\cboi,\cbzi,\cei,\cnui)$.
\end{itemize}
Now, we divide the proof into three main stages:

\vspace{0.5 em}
\noindent
\textbf{Stage 1 - Density Decomposition:} In this step, we reuse the following decomposition of the density discrepancy $g_{G^n_1,G^n_2}(Y|X)-g_{G^*_1,G^*_2}(Y|X)$ in Appendix~\ref{appendix:strongly_identifiable_experts_sigmoid}
\begin{align*}
    g_{G^n_1,G^n_2}(Y|X)-g_{G^*_1,G^*_2}(Y|X)=\frac{1}{2}\left[(q_{G^n_1}(Y|X)-q_{G^*_1}(Y|X))+(p_{G^n_2}(Y|X)-p_{G^*_2}(Y|X))\right],
\end{align*}
where we denote
\begin{align*}
    q_{G^n_1}(Y|X)&:=\sum_{i=1}^{k^n_1}\omega^n_{i}\pi(Y|h_1(X,\kappa^n_{i}),\tau^n_{i}),\\
    q_{G^*_1}(Y|X)&:=\sum_{i=1}^{k^*_1}\omega^*_{i}\pi(Y|h_1(X,\kappa^*_{i}),\tau^*_{i}),\\
    p_{G^n_2}(Y|X)&:=\sum_{i = 1}^{k^n_2} \frac{\sigma((\beta_{1i}^{n})^{\top} X + \beta_{0i}^{n})}{\sum_{j = 1}^{k^n_2} \sigma((\beta_{1j}^{n})^{\top} X + \beta_{0j}^{n})}\cdot \pi(Y|h_2(X,\eta^n_{i}), \nu_{i}^{n}),\\
    p_{\check{G}_2}(Y|X)&:=\sum_{i = 1}^{k_2} \frac{\sigma((\cboi)^{\top} X + \cbzi)}{\sum_{j = 1}^{k_2} \sigma((\check{\beta}_{1j})^{\top} X + \check{\beta}_{0j})}\cdot \pi(Y|h_2(X,\cei), \cnui).
\end{align*}
\textbf{Stage 1.1:} We also utilize the decomposition of the term $q_{G^n_1}(Y|X)-q_{G^*_1}(Y|X)$ in Appendix~\ref{appendix:strongly_identifiable_experts_sigmoid} as follows:
\begin{align*}
    q_{G^n_1}(Y|X)-q_{G^*_1}(Y|X)&=\sum_{j\in[k^*_1]:|\mathcal{V}_{1,j}|=1}\sum_{i\in\mathcal{V}_{1,j}}\oin[\pi(Y|h_1(X,\kin),\tin)-\pi(Y|h_1(X,\kj),\tj)]\\
    &+\sum_{j\in[k^*_1]:|\mathcal{V}_{1,j}|>1}\sum_{i\in\mathcal{V}_{1,j}}\oin[\pi(Y|h_1(X,\kin),\tin)-\pi(Y|h_1(X,\kj),\tj)]\\
    &+\sum_{j=1}^{k^*_1}\Big(\sum_{i\in\mathcal{V}_{1,j}}\oin-\oj\Big)\pi(Y|h_1(X,\kappa^*_j),\tau^*_j)\\
    &:=A_{n,1}(Y|X)+A_{n,2}(Y|X)+A_{n,0}(Y|X).
\end{align*}
Above, the quantity $A_{n,1}(Y|X)$ is expanded as
\begin{align*}
    A_{n,1}(Y|X)&=\sum_{j\in[k^*_1]:|\mathcal{V}_{1,j}|=1}\sum_{\rho=1}^{2}A^{(j)}_{n,1,\rho}(X)\frac{\partial^{\rho}\pi}{\partial h_1^{\rho}}(Y|h_1(X,\kj),\tj)+R_{n,1}(Y|X),
\end{align*}
where $R_{n,1}(Y|X)$ is a Taylor remainder such that $R_{n,1}(Y|X)/\mathcal{D}_{4n}\to$ as $n\to\infty$, and
\begin{align*}
    A^{(j)}_{n,1,1}(X)&:=\sum_{i\in\mathcal{V}_{1,j}}\oin\sum_{u_1=1}^{d_1}(\dkijn)^{(u_1)}\frac{\partial h_1}{\partial\kappa^{(u_1)}}(X,\kj),\\
    A^{(j)}_{n,1,2}(X)&:=\sum_{i\in\mathcal{V}_{1,j}}\oin\frac{1}{2}(\dtijn),
\end{align*}
for all $j\in[k^*_1]$ such that $|\mathcal{V}_{1,j}|=1$. In addition, we can rewrite $A_{n,2}(Y|X)$ as
\begin{align*}
    A_{n,2}(Y|X)&=\sum_{j\in[k^*_1]:|\mathcal{V}_{1,j}|>1}\sum_{\rho=1}^{4}A^{(j)}_{n,1,\rho}(X)\frac{\partial^{\rho}\pi}{\partial h_1^{\rho}}(Y|h_1(X,\kj),\tj)+R_{n,2}(Y|X),
\end{align*}
where $R_{n,2}(Y|X)$ is a Taylor remainder such that $R_{n,2}(Y|X)/\mathcal{D}_{4n}\to$ as $n\to\infty$, and
\begin{align*}
    A^{(j)}_{n,2,1}(X)&:=\sum_{i\in\mathcal{V}_{1,j}}\oin\Big(\sum_{u_1=1}^{d_1}(\dkijn)^{(u_1)}\frac{\partial h_1}{\partial\kappa^{(u_1)}}(X,\kj)+\sum_{u_1,v_1=1}^{d_1}\frac{(\dkijn)^{(u_1)}(\dkijn)^{(v_1)}}{1+1_{\{u_1=v_1\}}}\frac{\partial^2h_1}{\partial\kappa^{(u_1)}\partial\kappa^{(v_1)}}(X,\kj)\Big),\\
    A^{(j)}_{n,2,2}(X)&:=\sum_{i\in\mathcal{V}_{1,j}}\oin\Big(\frac{1}{2}(\dtijn)+\sum_{u_1,v_1=1}^{d_1}\frac{(\dkijn)^{(u_1)}(\dkijn)^{(v_1)}}{1+1_{\{u_1=v_1\}}}\frac{\partial h_1}{\partial\kappa^{(u_1)}}(X,\kj)\frac{\partial h_1}{\partial\kappa^{(v_1)}}(X,\kj)\Big),\\
    A^{(j)}_{n,2,3}(X)&:=\sum_{i\in\mathcal{V}_{1,j}}\oin\sum_{u_1=1}^{d_1}\frac{1}{2}(\dkijn)^{(u_1)}(\dtijn)\frac{\partial h_1}{\partial\kappa^{(u_1)}}(X,\kj),\\
    A^{(j)}_{n,2,4}(X)&:=\sum_{i\in\mathcal{V}_{1,j}}\oin\frac{1}{8}(\dtijn)^2,
\end{align*}
for all $j\in[k^*_1]$ such that $|\mathcal{V}_{1,j}|>1$. 

\vspace{0.5 em}
\noindent
\textbf{Stage 1.2:} Next, we decompose the term $Q_n(Y|X):=\Big[\sum_{j = 1}^{k_2} \sigma((\check{\beta}_{1j})^{\top} X + \check{\beta}_{0j})\Big]\cdot[p_{G^n_2}(Y|X)-p_{\check{G}_2}(Y|X)]$ as
\begin{align*}
    Q_n(Y|X)&=\sum_{i=1}^{k_2}\Big[\sigma((\beta_{1i}^{n})^{\top} X + \beta_{0i}^{n})\pi(Y|h_2(X,\ein),\nuin)-\sigma((\cboi)^{\top} X + \cbzi)\pi(Y|h_2(X,\cei),\cnui)\Big]\\
    &-\sum_{i=1}^{k_2}\Big[\sigma((\beta_{1i}^{n})^{\top} X + \beta_{0i}^{n})-\sigma((\cboi)^{\top} X + \cbzi)\Big]p_{G^n_2}(Y|X)\\
    &=\sum_{i=1}^{k_2}\Big[\psi(X;\beta_{1i}^{n},\beta_{0i}^{n})\pi(Y|h_2(X,\ein),\nuin)-\psi(X;\cboi,\cbzi)\pi(Y|h_2(X,\cei),\cnui)\Big]\\
    &-\sum_{i=1}^{k_2}\Big[\psi(X;\beta_{1i}^{n},\beta_{0i}^{n})-\psi(X;\cboi,\cbzi)\Big]p_{G^n_2}(Y|X)\\
    &:=B_{n}(Y|X)-C_{n}(Y|X),
\end{align*}
where we denote $\psi(X;\beta_1,\beta_0):=\sigma(\beta_1^{\top}X+\beta_0)$.

\vspace{0.5 em}
\noindent
\textbf{Stage 1.2.1:} In this step, we decompose $B_n(Y|X)$ by applying the first-order Taylor expansion to the function $\psi(X;\beta_{1i}^{n},\beta_{0i}^{n})\pi(Y|h_2(X,\ein),\nuin)$ around the point $(\cboi,\cbzi,\cei,\cnui)$ as follows:
\begin{align*}
   B_{n}(Y|X)&=\sum_{i=1}^{k_2}\sum_{|\alpha|=1}(\cdboin)^{\alpha_1}(\cdbzin)^{\alpha_2}(\cdein)^{\alpha_3}(\cdnuin)^{\alpha_4}\\
   &\hspace{3cm}\times\frac{\partial^{|\alpha_1|+\alpha_2}\psi}{\partial\beta_1^{\alpha_1}\partial\beta_0^{\alpha_2}}(X;\cboi,\cbzi)\frac{\partial^{|\alpha_3|+\alpha_4}\pi}{\partial\eta^{\alpha_3}\partial\nu^{\alpha_4}}(Y|h_2(X,\cei),\cnui)+R_{n,3}(Y|X)\\
   &=\sum_{i=1}^{k_2}\sum_{\rho=0}^{2}B^{(i)}_{n,\rho}(X)\frac{\partial^{\rho}\pi}{\partial h_2^{\rho}}(Y|h_2(X,\cei),\cnui)+R_{n,3}(Y|X),
\end{align*}
where $R_{n,3}(Y|X)$ is a Taylor remainder such that $R_{n,3}(Y|X)/\mathcal{D}_{4n}\to$ as $n\to\infty$, and
\begin{align*}
    B_{n,0}^{(i)}&:=\sum_{u=1}^{d}(\cdboin)^{(u)}\frac{\partial\psi}{\partial\beta_1^{(u)}}(X;\cboi,\cbzi)+(\cdbzin)\frac{\partial\psi}{\partial\beta_0}(X;\cboi,\cbzi),\\
    B_{n,1}^{(i)}&:=\sum_{u_2=1}^{d_2}(\cdein)^{(u_2)}\frac{\partial h_2}{\partial\eta^{(u_2)}}(X,\cei)\psi(X;\cboi,\cbzi),\\
    B_{n,2}^{(i)}&:=\frac{1}{2}(\cdnuin)\psi(X;\cboi,\cbzi),
\end{align*}
for all $i\in[k_2]$.

\vspace{0.5 em}
\noindent
\textbf{Stage 1.2.2:} Next, we proceed to decompose $C_{n}(Y|X)$ by applying the first-order Taylor expansion to the function $\psi(X;\boin,\bzin)$ around the point $(\cboi,\cbzi)$ as
\begin{align*}
    C_{n}(Y|X)&=\sum_{i=1}^{k_2}\sum_{|\alpha|=1}(\cdboin)^{\alpha_1}(\cdbzin)^{\alpha_2}\frac{\partial^{|\alpha_1|+\alpha_2}\psi}{\partial\beta_1^{\alpha_1}\partial\beta_0^{\alpha_2}}(X;\cboi,\cbzi)p_{G^n_2}(Y|X)+R_{n,4}(Y|X)\\
   &=\sum_{i=1}^{k_2}\Big[\sum_{u=1}^{d}(\cdboin)^{(u)}\frac{\partial\psi}{\partial\beta_1^{(u)}}(X;\cboi,\cbzi)+(\cdbzin)\frac{\partial\psi}{\partial\beta_0}(X;\cboi,\cbzi)\Big]p_{G^n_2}(Y|X)+R_{n,4}(Y|X),
\end{align*}
where $R_{n,4}(Y|X)$ is a Taylor remainder such that $R_{n,4}(Y|X)/\mathcal{D}_{4n}\to$ as $n\to\infty$.

\vspace{0.5 em}
\noindent
Combining the above decompositions, we can view $A_{n,0}(Y|X)/\mathcal{D}_{4n}$, $[A_{n,1}(Y|X)-R_{n,1}(Y|X)]/\mathcal{D}_{4n}$, $[A_{n,2}(Y|X)-R_{n,2}(Y|X)]/\mathcal{D}_{4n}$, $[B_{n}(Y|X)-R_{n,3}(Y|X)]/\mathcal{D}_{4n}$, $[C_{n}(Y|X)-R_{n,4}(Y|X)]/\mathcal{D}_{4n}$ as a combination of elements from the following sets
\begin{align*}
    \mathcal{S}_{0,j}&:=\{\pi(Y|h_1(X,\kj),\tj)\},\\
    \mathcal{S}_{1,j}&:=\Bigg\{\frac{\partial h_1}{\partial\kappa^{(u_1)}}(X,\kj)\frac{\partial\pi}{\partial h_1}(Y|h_1(X,\kj),\tj), \ \frac{\partial^2 h_1}{\partial\kappa^{(u_1)}\partial\kappa^{(v_1)}}(X,\kj)\frac{\partial\pi}{\partial h_1}(Y|h_1(X,\kj),\tj):u_1,v_1\in[d_1]\Bigg\},\\
    \mathcal{S}_{2,j}&:=\Bigg\{\frac{\partial^2\pi}{\partial h_1^2}(Y|h_1(X,\kj),\tj), \ \frac{\partial h_1}{\partial\kappa^{(u_1)}}(X,\kj)\frac{\partial h_1}{\partial\kappa^{(v_1)}}(X,\kj)\frac{\partial^2\pi}{\partial h_1^2}(Y|h_1(X,\kj),\tj):u_1,v_1\in[d_1]\Bigg\},\\
    \mathcal{S}_{3,j}&:=\Bigg\{ \frac{\partial h_1}{\partial\kappa^{(u_1)}}(X,\kj)\frac{\partial^3\pi}{\partial h_1^3}(Y|h_1(X,\kj),\tj):u_1,v_1\in[d_1]\Bigg\},\\
    \mathcal{S}_{4,j}&:=\Bigg\{ \frac{\partial^4\pi}{\partial h_1^4}(Y|h_1(X,\kj),\tj):u_1,v_1\in[d_1]\Bigg\},
\end{align*}
for all $j\in[k^*_1]$, and
\begin{align*}
    \mathcal{T}_{0,j}&:=\Bigg\{\frac{\partial\psi}{\partial\beta_1^{(u)}}(X;\cboi,\cbzi)\pi(Y|h_2(X,\cei),\cnui), \ \frac{\partial\psi}{\partial\beta_0}(X;\cboi,\cbzi)\pi(Y|h_2(X,\cei),\cnui),\\
    &\hspace{2cm}\frac{\partial\psi}{\partial\beta_1^{(u)}}(X;\cboi,\cbzi)p_{G^n_2}(Y|X), \ \frac{\partial\psi}{\partial\beta_0}(X;\cboi,\cbzi)p_{G^n_2}(Y|X):u\in[d]\Bigg\},\\
    \mathcal{T}_{1,j}&:=\Bigg\{\frac{\partial h_2}{\partial\eta^{(u_2)}}(X,\ej)\psi(X;\cboi,\cbzi)\frac{\partial\pi}{\partial h_2}(Y|h_2(X,\cei),\cnui):u\in[d], \ u_2\in[d_2]\Bigg\},\\
    \mathcal{T}_{2,j}&:=\Bigg\{\psi(X;\cboi,\cbzi)\frac{\partial^2\pi}{\partial h_2^2}(Y|h_2(X,\cei),\cnui)\Bigg\},
\end{align*}
for all $j\in[k^*_2]$.

\vspace{0.5 em}
\noindent
\textbf{Stage 2 - Non-vanishing coefficients:} In this stage, we show that at least one among the coefficients in the representations of $A_{n,0}(Y|X)/\mathcal{D}_{4n}$, $[A_{n,1}(Y|X)-R_{n,1}(Y|X)]/\mathcal{D}_{4n}$, $[A_{n,2}(Y|X)-R_{n,2}(Y|X)]/\mathcal{D}_{4n}$, $[B_{n}(Y|X)-R_{n,3}(Y|X)]/\mathcal{D}_{4n}$, $[C_{n}(Y|X)-R_{n,4}(Y|X)]/\mathcal{D}_{4n}$ does not converge to zero when $n\to\infty$. Suppose that all these coefficients go to zero. By using the same arguments as in Stage 2 in Appendix~\ref{appendix:strongly_identifiable_experts}, we have
\begin{align*}
    \frac{1}{\mathcal{D}_{4n}}\Big[\sum_{j=1}^{k^*_1}\Big|\sum_{i\in\mathcal{V}_{1,j}}\oin-\oj\Big|&+\sum_{j\in[k^*_1]:|\mathcal{V}_{1,j}|=1}\sum_{i\in\mathcal{V}_{1,j}}\oin(\|\dkijn\|+|\dtijn|)\\
    &+\sum_{j\in[k^*_1]:|\mathcal{V}_{1,j}|>1}\sum_{i\in\mathcal{V}_{1,j}}\oin(\|\dkijn\|^2+|\dtijn|^2)\Big]\to0,
\end{align*}
as $n\to\infty$. Additionally, by considering the coefficients of the terms:
\begin{itemize}
    \item $\frac{\partial\psi}{\partial\beta_1^{(u)}}(X;\cboi,\cbzi)\pi(Y|h_2(X,\cei),\cnui)$ for $i\in[k_2]$, we get $\frac{1}{\mathcal{D}_{4n}}\sum_{i=1}^{k_2}\|\dboijn\|\to0$;
    \item $\frac{\partial\psi}{\partial\beta_0}(X;\cboi,\cbzi)\pi(Y|h_2(X,\cei),\cnui)$ for $i\in[k_2]$, we get $\frac{1}{\mathcal{D}_{4n}}\sum_{i=1}^{k_2}|\dbzijn|\to0$;
    \item $\frac{\partial h_2}{\partial\eta^{(u_2)}}(X,\cei)\psi(X;\cboi,\cbzi)\frac{\partial\pi}{\partial h_2}(Y|h_2(X,\cei),\cnui)$ for $i\in[k_2]$, we get $\frac{1}{\mathcal{D}_{4n}}\sum_{i=1}^{k_2}\|\deijn\|\to0$;
    \item $\psi(X;\cboi,\cbzi)\frac{\partial\pi}{\partial h_2}(Y|h_2(X,\cei),\cnui)$ for $i\in[k_2]$, we get $ \frac{1}{\mathcal{D}_{4n}}\sum_{i=1}^{k_2}|\dnuijn|\to0$.
    % \item $\pi(Y|h_2(X,\cei),\nuj)$ for $j\in[k^*_2]:|\mathcal{V}_{2,j}|>1$, we get
    % \begin{align*}
    %     \frac{1}{\mathcal{D}_{3n}}\sum_{j\in[k^*_2]:|\mathcal{V}_{2,j}|>1}\Big|\sum_{i\in\mathcal{V}_{2,j}}\sigma(\bzin)-\sigma(\bzj)\Big|\to0;
    % \end{align*}
    % \item $\frac{\partial^2\psi}{\partial\beta_1^{(u)}\partial\beta_1^{(v)}}(X;0_d,\bzin)\pi(Y|h_2(X,\ej),\nuj)$ for $j\in[k^*_2]:|\mathcal{V}_{2,j}|>1$, we get
    % \begin{align*}
    %      \frac{1}{\mathcal{D}_{3n}}\sum_{j\in[k^*_2]:|\mathcal{V}_{2,j}|>1}\sum_{i\in\mathcal{V}_{2,j}}\|\dboijn\|^2\to0;
    % \end{align*}
    % \item $\frac{\partial^2h_2}{\partial\eta^{(u_2)}\partial\eta^{(v_2)}}(X,\ej)\psi(X;0_d,\bzin)\frac{\partial^2\pi}{\partial h_2^2}(Y|h_2(X,\ej),\nuj)$ for $j\in[k^*_2]:|\mathcal{V}_{2,j}|>1$, we get
    % \begin{align*}
    %      \frac{1}{\mathcal{D}_{3n}}\sum_{j\in[k^*_2]:|\mathcal{V}_{2,j}|>1}\sum_{i\in\mathcal{V}_{2,j}}\|\deijn\|^2\to0;
    % \end{align*}
    % \item $\psi(X;0_d,\bzin)\frac{\partial^4\pi}{\partial h_2^4}(Y|h_2(X,\ej),\nuj)$ for $j\in[k^*_2]:|\mathcal{V}_{2,j}|>1$, we get
    % \begin{align*}
    %      \frac{1}{\mathcal{D}_{3n}}\sum_{j\in[k^*_2]:|\mathcal{V}_{2,j}|>1}\sum_{i\in\mathcal{V}_{2,j}}|\dnuijn|^2\to0.
    % \end{align*}
\end{itemize}
Taking the summation of the above limits, we deduce $1=\frac{\mathcal{D}_{4n}}{\mathcal{D}_{4n}}\to0$ as $n\to\infty$, which is a contradiction. Thus, not all the coefficients in the representations of $A_{n,0}(Y|X)/\mathcal{D}_{4n}$, $[A_{n,1}(Y|X)-R_{n,1}(Y|X)]/\mathcal{D}_{4n}$, $[A_{n,2}(Y|X)-R_{n,2}(Y|X)]/\mathcal{D}_{4n}$, $[B_{n}(Y|X)-R_{n,3}(Y|X)]/\mathcal{D}_{4n}$, $[C_{n}(Y|X)-R_{n,4}(Y|X)]/\mathcal{D}_{4n}$ converge to zero as $n\to\infty$.

\vspace{0.5 em}
\noindent
\textbf{Stage 3 - Fatou's lemma contradiction:} In this stage, we attempto to show a contradiction to the result of Stage 2 using the Fatou's lemma. Firstly, we denote $m_n$ as the maximum of the absolute values of the coefficients in the representations of $A_{n,0}(Y|X)/\mathcal{D}_{4n}$, $[A_{n,1}(Y|X)-R_{n,1}(Y|X)]/\mathcal{D}_{4n}$, $[A_{n,2}(Y|X)-R_{n,2}(Y|X)]/\mathcal{D}_{4n}$, $[B_{n}(Y|X)-R_{n,3}(Y|X)]/\mathcal{D}_{4n}$, $[C_{n}(Y|X)-R_{n,4}(Y|X)]/\mathcal{D}_{4n}$. The result of Stage 2 implies that $1/m_n\not\to\infty$ as $n\to\infty$. In addition, we also denote
\begin{align*}
    \frac{1}{m_n\mathcal{D}_{4n}}\cdot\sum_{i\in\mathcal{V}_{1,j}}\oin(\dkijn)^{(u_1)}\to s^{(u_1)}_{1,j},& \quad \frac{1}{m_n\mathcal{D}_{4n}}\cdot\sum_{i\in\mathcal{V}_{1,j}}\oin(\dtijn)\to s_{2,j},\\
    \frac{1}{m_n\mathcal{D}_{4n}}\cdot\sum_{i\in\mathcal{V}_{1,j}}\oin(\dkijn)^{(u_1)}(\dkijn)^{(v_1)}\to s^{(u_1v_1)}_{3,j},& \quad \frac{1}{m_n\mathcal{D}_{4n}}\cdot\sum_{i\in\mathcal{V}_{1,j}}\oin(\dtijn)^2\to s_{4,j},\\
    \frac{1}{m_n\mathcal{D}_{4n}}\cdot\sum_{i\in\mathcal{V}_{1,j}}\oin(\dkijn)^{(u_1)}(\dtijn)\to s^{(u_1)}_{5,j},& \quad \frac{1}{m_n\mathcal{D}_{4n}}\cdot\Big(\sum_{i\in\mathcal{V}_{1,j}}\oin-\oj\Big)\to s_{0,j},\\
\end{align*}
for all $j\in[k^*_1]$ and
\begin{align*}
   \frac{1}{m_n\mathcal{D}_{4n}}\cdot(\cdbzin)\to t_{0,i},& \quad \frac{1}{m_n\mathcal{D}_{4n}}\cdot(\cdboin)^{(u)}\to t^{(u)}_{1,i},\\
    \frac{1}{m_n\mathcal{D}_{4n}}\cdot(\cdein)^{(u_2)}\to t^{(u_2)}_{2,i},& \quad \frac{1}{m_n\mathcal{D}_{4n}}\cdot(\cdnuin)\to t_{3,i},\\
\end{align*}
for all $i\in[k_2]$. Due to the result of Stage 2, at least one among the above limits is different from zero. Recall from equation~\eqref{eq:expectation_zero_sigmoid_dense} that we have
\begin{align*}
    \bbE_X[V(g_{G^n_1,G^n_2}(\cdot|X),g_{G^*_1,\check{G}_2}(\cdot|X))]/\mathcal{D}_{4n}\to0,
\end{align*}
Furthermore, according to the Fatou's lemma, we get
\begin{align*}
    \lim_{n\to\infty}\dfrac{\bbE_X[V(g_{G^n_1,G^n_2}(\cdot|X),g_{G^*_1,\check{G}_2}(\cdot|X))]}{m_n\mathcal{D}_{4n}}\geq\int\liminf_{n\to\infty}\dfrac{|g_{G^n_1,G^n_2}(Y|X)-g_{G^*_1,\check{G}_2}(Y|X)|}{2m_n\mathcal{D}_{4n}}\dint (X,Y).
\end{align*}
Then, it follows that $[g_{G^n_1,G^n_2}(Y|X)-g_{G^*_1,\check{G}_2}(Y|X)]/[m_n\mathcal{D}_{4n}]\to0$ as $n\to\infty$ for almost surely $(X,Y)$. As the input space is bounded and the parameter space is compact, the quantity $\sum_{j = 1}^{k_2} \sigma((\check{\beta}_{1j})^{\top} X + \check{\beta}_{0j})$ is bounded. Therefore, we deduce
\begin{align*}
    \Big[\sum_{j = 1}^{k_2} \sigma((\check{\beta}_{1j})^{\top} X + \check{\beta}_{0j})\Big][g_{G^n_1,G^n_2}(Y|X)-g_{G^*_1,\check{G}_2}(Y|X)]/[m_n\mathcal{D}_{4n}]\to0,
\end{align*}
as $n\to\infty$. This result indicates
\begin{align*}
    \frac{1}{2}\Big[\sum_{j = 1}^{k_2} \sigma((\check{\beta}_{1j})^{\top} X + \check{\beta}_{0j})\Big]\cdot\dfrac{q_{G^n_1}(Y|X)-q_{G^*_1}(Y|X)}{m_n\mathcal{D}_{4n}}+\frac{1}{2}\dfrac{Q_n(Y|X)}{m_n\mathcal{D}_{4n}}\to0.
\end{align*}
as $n\to\infty$ for almost surely $(X,Y)$. From the decomposition of the terms $q_{G^n_1}(Y|X)-q_{G^*_1}(Y|X)$ and $Q_n(Y|X)$ in Stage 1, we have
\begin{align}
     \label{eq:zero_limit_sigmoid_dense}
    \frac{1}{2}\Big[\sum_{j = 1}^{k_2} \sigma((\check{\beta}_{1j})^{\top} X + \check{\beta}_{0j})\Big]\cdot\dfrac{A_{n,2}(Y|X)+A_{n,1}(Y|X)+A_{n,0}(Y|X)}{m_n\mathcal{D}_{4n}}
    +\frac{1}{2}\dfrac{B_{n}(Y|X)-C_{n}(Y|X)}{m_n\mathcal{D}_{4n}}\to0.
\end{align}
We have
\begin{align*}
    &\lim_{n\to\infty}\frac{A_{n,0}(Y|X)}{m_n\mathcal{D}_{4n}}=\sum_{j=1}^{k^*_1}s_{0,j}\pi(Y|h_1(X,\kappa^*_j),\tau^*_j),\\
    &\lim_{n\to\infty}\frac{A_{n,1}(Y|X)}{m_n\mathcal{D}_{4n}}=\sum_{j\in[k^*_1]:|\mathcal{V}_{1,j}|=1}\Big[\sum_{u_1=1}^{d_1}s_{1,j}^{(u_1)}\frac{\partial h_1}{\partial\kappa^{(u_1)}}(X,\kj)\frac{\partial\pi}{\partial h_1}(Y|h_1(X,\kj),\tj)\\
    &\hspace{10cm}+\frac{1}{2}s_{2,j}\frac{\partial^2\pi}{\partial h_1^2}(Y|h_1(X,\kj),\tj)\Big],\\
    &\lim_{n\to\infty}\frac{A_{n,2}(Y|X)}{m_n\mathcal{D}_{4n}}=\sum_{j\in[k^*_1]:|\mathcal{V}_{1,j}|>1}\Big[\Big(\sum_{u_1=1}^{d_1}s_{1,j}^{(u_1)}\frac{\partial h_1}{\partial\kappa^{(u_1)}}(X,\kj)+\sum_{u_1,v_1=1}^{d_1}\frac{s_{3,j}^{(u_1v_1)}}{1+1_{\{u_1=v_1\}}}\frac{\partial^2h_1}{\partial\kappa^{(u_1)}\partial\kappa^{(v_1)}}(X,\kj)\Big)\\
    &\times \frac{\partial\pi}{\partial h_1}(Y|h_1(X,\kj),\tj)+\Big(\frac{1}{2}s_{2,j}+\sum_{u_1,v_1=1}^{d_1}\frac{s_{3,j}^{(u_1v_1)}}{1+1_{\{u_1=v_1\}}}\frac{\partial h_1}{\partial\kappa^{(u_1)}}(X,\kj)\frac{\partial h_1}{\partial\kappa^{(v_1)}}(X,\kj)\Big)\frac{\partial^2\pi}{\partial h_1^2}(Y|h_1(X,\kj),\tj)\\
    &\hspace{3cm}+\Big(\frac{1}{2}\sum_{u_1=1}^{d_1}s_{5,j}^{(u_1)}\frac{\partial h_1}{\partial\kappa^{(u_1)}}(X,\kj)\Big)\frac{\partial^3\pi}{\partial h_1^3}(Y|h_1(X,\kj),\tj)+\frac{1}{8}s_{4,j}\frac{\partial^4\pi}{\partial h_1^4}(Y|h_1(X,\kj),\tj)\Big],
\end{align*}
and
\begin{align*}
    &\lim_{n\to\infty}\frac{B_{n}(Y|X)}{m_n\mathcal{D}_{4n}}=\sum_{i=1}^{k_2}\Big[\Big(\sum_{u=1}^{d}t_{1,i}^{(u)}\frac{\partial\psi}{\partial\beta_1^{(u)}}(X;\cboi,\cbzi)+t_{0,i}\frac{\partial\psi}{\partial\beta_0}(X;\cboi,\cbzi)\Big)\pi(Y|h_2(X,\cei),\cnui)\\
    &\hspace{4cm}+\sum_{u_2=1}^{d_2}t_{2,i}^{(u_2)}\frac{\partial h_2}{\partial\eta^{(u_2)}}(X,\cei)\psi(X;\cboi,\cbzi)\frac{\partial\pi}{\partial h_2}(Y|h_2(X,\cei),\cnui)\\
    &\hspace{4cm}+\frac{1}{2}(\cdnuin)\psi(X;\cboi,\cbzi)\frac{\partial^2\pi}{\partial h_2^2}(Y|h_2(X,\cei),\cnui)\Big],\\
    &\lim_{n\to\infty}\frac{C_{n}(Y|X)}{m_n\mathcal{D}_{4n}}=\sum_{i=1}^{k_2}\Big[\sum_{u=1}^{d}t_{1,i}^{(u)}\frac{\partial\psi}{\partial\beta_1^{(u)}}(X;\cboi,\cbzi)+t_{0,i}\frac{\partial\psi}{\partial\beta_0}(X;\cboi,\cbzi)\Big]p_{\check{G}_2}(Y|X).
\end{align*}
Note that for almost every $X$, the set
\begin{align*}
    &\Bigg\{\Big[\sum_{j = 1}^{k_2} \sigma((\check{\beta}_{1j})^{\top} X + \check{\beta}_{0j})\Big]\frac{\partial^{\rho}\pi}{\partial h_1^{\rho}}(Y|h_1(X,\kj),\tj):0\leq\rho\leq 4, \  j\in[k^*_1]\Bigg\}\\
    \cup~&\Bigg\{\frac{\partial\psi}{\partial\beta_1^{(u)}}(X;\cboi,\cbzi)\pi(Y|h_2(X,\cei),\cnui), \ \frac{\partial\psi}{\partial\beta_0}(X;\cboi,\cbzi)\pi(Y|h_2(X,\cei),\cnui), \\
    &\quad \frac{\partial\psi}{\partial\beta_1^{(u)}}(X;\cboi,\cbzi)p_{\check{G}_2}(Y|X), \ \frac{\partial\psi}{\partial\beta_0}(X;\cboi,\cbzi)p_{\check{G}_2}(Y|X), \\
    &\quad\psi(X;\cboi,\cbzi)\frac{\partial\pi}{\partial h_2}(Y|h_2(X,\cei),\cnui), \ \psi(X;\cboi,\cbzi)\frac{\partial^2\pi}{\partial h_2^2}(Y|h_2(X,\cei),\cnui)
    :u\in[d], \ i\in[k_2]\Bigg\}
\end{align*}
is linearly independent w.r.t $Y$, implying that the coefficients of those terms in the limit in equation~\eqref{eq:zero_limit_sigmoid_dense} are equal to zero. 

\vspace{0.5 em}
\noindent
Since the expert function $h_1$ is strongly identifiable, then by employing the same arguments as in the Stage 3 of Appendix~\ref{appendix:strongly_identifiable_experts_sigmoid}, we get $s_{0,j}=s_{1,j}^{(u_1)}=s_{2,j}=s_{3,j}^{(u_1v_1)}=s_{4,j}=s_{5,j}^{(u_1)}=0$ for all $j\in[k^*_1]$ and $u_1,v_1\in[d_1]$.
For $i\in[k_2]$, by considering the coefficients of 
\begin{itemize}
    \item $\frac{\partial\psi}{\partial\beta_1^{(u)}}(X;\cboi,\cbzi)\pi(Y|h_2(X,\cei),\cnui)$, we get $t_{1,i}^{(u)}=0$ for all $u\in[d]$;
    \item $\frac{\partial\psi}{\partial\beta_0}(X;\cboi,\cbzi)\pi(Y|h_2(X,\cei),\cnui)$, we get $t_{0,i}=0$;
    \item $\psi(X;\cboi,\cbzi)\frac{\partial\pi}{\partial h_2}(Y|h_2(X,\cei),\cnui)$, we get $\sum_{u_2=1}^{d_2}t_{2,i}^{(u_2)}\frac{\partial h_2}{\partial\eta^{(u_2)}}(X,\cei)=0$. Since the expert function $h_2$ is weakly identifiable, we deduce $t_{2,i}^{(u_2)}=0$ for all $u_2\in[d_2]$;
    \item $\psi(X;\cboi,\cbzi)\frac{\partial^2\pi}{\partial h_2^2}(Y|h_2(X,\cei),\cnui)$, we get $t_{3,i}=0$.
\end{itemize}
From the above results, it follows that (i) $s_{0,j}=s_{1,j}^{(u_1)}=s_{2,j}=s_{3,j}^{(u_1v_1)}=s_{4,j}=s_{5,j}^{(u_1)}=0$ for all $j\in[k^*_1]$ and $u_1,v_1\in[d_1]$; (ii) $t_{0,i}=t_{1,i}^{(u)}=t_{2,i}^{(u_2)}=t_{3,i}=0$ for all $i\in[k_2]$, $u\in[d]$ and $u_2\in[d_2]$. This contradicts to the fact that not all of them equal to zero. As a consequence, we obtain the local part in equation~\eqref{eq:local_part_sigmoid_dense}. Hence, the proof is completed.

\section{Proof of Auxiliary Results}

\subsection{Proof of Proposition~\ref{prop:density_rate}}
\label{appendix:density_rate}
In this proof, we will leverage fundamental results on density estimation for M-estimators in \cite{vandeGeer-00}. Before streamlining our arguments, let us introduce some concepts from the empirical process theory adapted to the setting of the model~\eqref{eq:density}.

\vspace{0.5 em}
\noindent
Firstly, we denote by $\mathcal{F}_{k_1,k_2}(\Theta):=\{f_{G_1,G_2}(Y|X):(G_1,G_2)\in\mathcal{G}_{k_1,k_2}(\Theta)\}$ the set of conditional density functions of interest. Furthermore, we also consider two variants of this set defined as 
\begin{align*}
   \widetilde{\mathcal{F}}_{k_1,k_2}(\Theta)&:=\Big\{\frac{1}{2}f_{(G_1,G_2)}(Y|X)+\frac{1}{2}f_{(G_1,G_2)}(Y|X):(G^*_1,G^*_2)\in\mathcal{G}_{k_1,k_2}(\Theta)\Big\},\\
   \widetilde{\mathcal{F}}^{1/2}_{k_1,k_2}(\Theta)&:=\{\tilde{f}^{1/2}:\tilde{f}\in\widetilde{\mathcal{F}}_{k_1,k_2}(\Theta)\}.
\end{align*}
For any $\delta>0$, the Hellinger ball centered around the the true density $f_{G^*_1,G^*_2}(Y|X)$ and intersected with $\widetilde{\mathcal{F}}_{k_1,k_2}(\Theta)$ is defined as 
\begin{align*}
    \widetilde{\mathcal{F}}^{1/2}_{k_1,k_2}(\Theta,\delta):=\{p^{1/2}\in\widetilde{\mathcal{F}}^{1/2}_{k_1,k_2}(\Theta):h(p,f_{G^*_1,G^*_2})\leq\delta\}.
\end{align*}
The size of the above Hellinger ball is determined by the  quantity \cite{vandeGeer-00}
\begin{align}
    \label{eq:integral_hellinger_ball}
    \mathcal{J}_B(\delta,\widetilde{\mathcal{F}}^{1/2}_{k_1,k_2}(\Theta,\delta),\|\cdot\|_{2}):=\int_{\delta^2/2^{13}}^{\delta}H_B^{1/2}(t,\widetilde{\mathcal{F}}^{1/2}_{k_1,k_2}(\Theta,t),\|\cdot\|_{2})\dint t\vee\delta,
\end{align}
where $H_B(t,\widetilde{\mathcal{F}}^{1/2}_{k_1,k_2}(\Theta,t),\|\cdot\|_{2})$ stands for the bracketing entropy of $\widetilde{\mathcal{F}}^{1/2}_{k_1,k_2}(\Theta,t)$ under the $L^2(m)$-norm with $m$ being the Lebesgue measure, and $t\vee\delta:=\max\{t,\delta\}$. Equipped with these notations, we are ready to present a standard result on density estimation for M-estimators in the following lemma:
\begin{lemma}[Theorem 7.4, \cite{vandeGeer-00}]
    \label{lemma:vandegeer}
    Let $\delta \in (0,1)$ and take $\Psi(\delta)\geq\mathcal{J}_B(\delta,\widetilde{\mathcal{F}}^{1/2}_{k_1,k_2}(\Theta,\delta))$ such that $\Psi(\delta)/\delta^2$ is a non-increasing function of $\delta $. Then, for a universal constant $c$ and for some sequence $(\delta_n)$ satisfying $\sqrt{n}\delta^2_n\geq c\Psi(\delta_n)$, the following holds for all $\delta\geq\delta_n$:
    \begin{align*}
        \mathbb{P}\Big(\bbE_X\Big[h(f_{\widetilde{G}^n_1,\widetilde{G}^n_2}(\cdot|X),f_{G^*_1,G^*_2}(\cdot|X))>\delta\Big]\Big)\leq c\exp\Big(-\frac{n\delta^2}{c^2}\Big).
    \end{align*}
\end{lemma}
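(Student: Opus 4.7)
The plan is to follow the classical M-estimation scheme from empirical process theory, transforming the MLE condition into an empirical process inequality and then applying a peeling argument combined with chaining through the bracketing entropy integral $\mathcal{J}_B$.

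First, I would set up the basic inequality. By definition of the MLE, $\frac{1}{n}\sum_i \log\bigl(f_{\widehat{G}_1^n,\widehat{G}_2^n}(Y_i|X_i)/f_{G^*_1,G^*_2}(Y_i|X_i)\bigr)\geq 0$. Since $\log$ is concave, averaging with $f_{G^*_1,G^*_2}$ and using $\log\bar f\geq \tfrac12\log f_{\widehat G}+\tfrac12\log f^*$ yields the same inequality for the convexified density $\bar f_{\widehat G}:=\tfrac12(f_{\widehat G_1^n,\widehat G_2^n}+f_{G^*_1,G^*_2})$. Combining with the elementary bound $\mathbb E_{f^*}\log(\bar f/f^*)\leq -2h^2(\bar f,f^*)$ (a consequence of $\log u\leq 2(\sqrt u-1)$), I obtain the key reduction
\begin{align*}
2h^2(\bar f_{\widehat G},f_{G^*_1,G^*_2})
\;\leq\;\frac{1}{n}\sum_{i=1}^n\Bigl[\log\tfrac{\bar f_{\widehat G}(Y_i|X_i)}{f^*(Y_i|X_i)}-\mathbb E\log\tfrac{\bar f_{\widehat G}(Y_i|X_i)}{f^*(Y_i|X_i)}\Bigr],
\end{align*}
so that the Hellinger risk is dominated by a centered empirical process indexed by the convexified class $\widetilde{\mathcal F}^{1/2}_{k_1,k_2}(\Theta)$.

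Second, I would bound the supremum of this empirical process over a Hellinger shell $\widetilde{\mathcal F}^{1/2}_{k_1,k_2}(\Theta,\delta)$ using generic chaining in terms of brackets. A standard truncation turns the unbounded log-ratio into a square-integrable envelope comparable to $h(\bar f,f^*)$, and a Bernstein-type maximal inequality (van de Geer, Lemma 5.13) applied along a dyadic chain of brackets gives, for every $u$ in an appropriate range,
\begin{align*}
\mathbb P\Bigl(\sup_{\sqrt{\bar f}\in\widetilde{\mathcal F}^{1/2}_{k_1,k_2}(\Theta,\delta)}\bigl|\nu_n(\log\tfrac{\bar f}{f^*})\bigr|>u\,\mathcal J_B(\delta,\widetilde{\mathcal F}^{1/2}_{k_1,k_2}(\Theta,\delta))\Bigr)
\;\leq\; c\exp(-u^2/c).
\end{align*}
This is the modulus-of-continuity estimate underlying the whole argument.

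Third, I would convert this into the stated bound via a peeling (slicing) argument. Decompose $\{h(\bar f_{\widehat G},f^*)>\delta\}$ into dyadic shells $\mathcal S_s:=\{h(\bar f_{\widehat G},f^*)\in(2^{s-1}\delta,2^s\delta]\}$, $s\geq 1$. On $\mathcal S_s$, the reduction forces the centered empirical process to exceed $c_1 4^s\delta^2 n$, while the chaining bound of step two, together with the monotonicity of $\Psi(\delta)/\delta^2$ (which gives $\mathcal J_B(2^s\delta,\cdot)\leq 4^s\Psi(\delta)$) and the hypothesis $\sqrt n\,\delta_n^2\geq c\Psi(\delta_n)$, controls the tail of this event by $c\exp(-n\,4^s\delta^2/c^2)$. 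A geometric sum over $s$ gives the claimed exponential inequality, and the final step is to transfer from $\bar f_{\widehat G}$ back to $f_{\widehat G}$ using $h^2(f_{\widehat G},f^*)\lesssim h^2(\bar f_{\widehat G},f^*)$ (adjusting $c$).

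The main obstacle is the chaining/modulus-of-continuity step: the log-density ratios have no uniform envelope, so I must work with appropriately truncated brackets and with a Bernstein-type concentration inequality rather than a sub-Gaussian one. Making the constants compatible — so that the exponent $n\delta^2/c^2$ survives the summation over scales and the assumption $\sqrt n\,\delta_n^2\geq c\Psi(\delta_n)$ absorbs the bracketing integral uniformly — is the delicate calibration that the non-increasing condition on $\Psi(\delta)/\delta^2$ is designed to permit.
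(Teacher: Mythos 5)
This lemma is not proved in the paper at all: it is quoted verbatim as Theorem 7.4 of van de Geer (2000), and your proposal is essentially a faithful reconstruction of that reference's argument --- the basic inequality for the convexified density $\bar f=\tfrac12(f_{\widehat G}+f^{*})$ obtained from concavity of $\log$ and $\log u\leq 2(\sqrt u-1)$, the bracketing-entropy chaining bound for the centered log-ratio process, the peeling over Hellinger shells calibrated by the monotonicity of $\Psi(\delta)/\delta^{2}$, and the final transfer $h(f_{\widehat G},f^{*})\lesssim h(\bar f,f^{*})$. The only cosmetic imprecision is attributing the integrability of the log-ratios to ``truncation'' rather than to the lower bound $\bar f/f^{*}\geq 1/2$ supplied by the convexification itself, but this does not affect the correctness of the outline.
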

\noindent
Given the above result, we will provide below the proof for Proposition~\ref{prop:density_rate}.
\begin{proof}[Main proof of Proposition~\ref{prop:density_rate}]
    Since $\widetilde{\mathcal{F}}^{1/2}_{k_1,k_2}(\Theta,t)\subset\widetilde{\mathcal{F}}^{1/2}_{k_1,k_2}(\Theta)$ for any $t>0$, we have
    \begin{align}
        \label{eq:bracketing_inequality}
        H_B(t,\widetilde{\mathcal{F}}^{1/2}_{k_1,k_2}(\Theta,t),\|\cdot\|_{2})&\leq H_B(t,\widetilde{\mathcal{F}}^{1/2}_{k_1,k_2}(\Theta),\|\cdot\|_{2})=H_B(t/\sqrt{2},\widetilde{\mathcal{F}}_{k_1,k_2}(\Theta),h),
    \end{align}
    where the last equality is due to the relationship between the Hellinger distance $h$ and the $L^2$-norm. Note that for any two mixing measure pairs $(G_1,G_2)$ and $(G^{\prime}_1,G^{\prime}_2)$, Lemma 4.2 in \cite{vandeGeer-00} shows that
    \begin{align*}
        h^2\Big(\frac{1}{2}f_{G_1,G_2}+\frac{1}{2}f_{G^*_1,G^*_2},\frac{1}{2}f_{G^{\prime}_1,G^{\prime}_2}+\frac{1}{2}f_{G^*_1,G^*_2}\Big)\leq\frac{1}{2}h^2(f_{G_1,G_2},f_{G^{\prime}_1,G^{\prime}_2}),
    \end{align*}
    which yields that $H_B(t/\sqrt{2},\widetilde{\mathcal{F}}_{k_1,k_2}(\Theta),h)\leq H_B(t,\mathcal{F}_{k_1,k_2}(\Theta),h)$. This result together with equation~\eqref{eq:bracketing_inequality} implies that 
    \begin{align*}
        H_B(t,\widetilde{\mathcal{F}}^{1/2}_{k_1,k_2}(\Theta,t),\|\cdot\|_{2})\leq H_B(t,\mathcal{F}_{k_1,k_2}(\Theta),h).
    \end{align*}
    From the definition of the Hellinger ball size in equation~\eqref{eq:integral_hellinger_ball}, we have that
    \begin{align*}
\mathcal{J}_B(\delta,\widetilde{\mathcal{F}}^{1/2}_{k_1,k_2}(\Theta,\delta),\|\cdot\|_{2})&=\int_{\delta^2/2^{13}}^{\delta}H_B^{1/2}(t,\widetilde{\mathcal{F}}^{1/2}_{k_1,k_2}(\Theta,t),\|\cdot\|_{2})\dint t\vee\delta\\
        &\leq\int_{\delta^2/2^{13}}^{\delta}H_B^{1/2}(t,\mathcal{F}_{k_1,k_2}(\Theta),h)\dint t\vee\delta\\
        &\lesssim\int_{\delta^2/2^{13}}^{\delta}[\log(1/t)]^{1/2}\dint t\vee\delta,
    \end{align*}
    where the last inequality is due to  Lemma~\ref{lemma:bracketing_entropy_bound} below. Let $\Psi(\delta):=\delta\sqrt{\log(1/\delta)}$, it can be verified that $\Psi(\delta)/\delta^2$ is a non-increasing function of $\delta$. Furthermore, the above result indicates that $\Psi(\delta)\geq \mathcal{J}_B(\delta,\widetilde{\mathcal{F}}^{1/2}_{k_1,k_2}(\Theta,\delta),\|\cdot\|_{2})$. By considering the sequence $(\delta_n)$ defined as $\delta_n:=\sqrt{\log(n)/n}$, we have $\sqrt{n}\delta_n^2\geq c\Psi(\delta_n)$ for some universal constant $c>0$. Then, according to Lemma~\ref{lemma:vandegeer}, we get 
    \begin{align*}
        \mathbb{P}\Big(\bbE_X\Big[h(f_{\widetilde{G}^n_1,\widetilde{G}^n_2}(\cdot|X),f_{G^*_1,G^*_2}(\cdot|X))>C\sqrt{\log(n)/n}\Big]\Big)\lesssim \exp(-c\log(n)),
    \end{align*}
    for some universal constant $C$ depending on $\Theta$.
\end{proof}
\begin{lemma}
    \label{lemma:bracketing_entropy_bound}
    The following holds for any $0<\epsilon<1/2$:
    \begin{align*}
        H_B(\epsilon,\mathcal{F}_{k_1,k_2}(\Theta),h)\lesssim\log(1/\epsilon).
    \end{align*}
\end{lemma}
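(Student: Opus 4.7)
The plan is to reduce the bracketing-entropy bound on the density class $\mathcal{F}_{k_1,k_2}(\Theta)$ to a covering-number bound on the parameter space, exploiting the fact that, by Assumption (A.1), $\Theta$ is compact with fixed dimension and $\mathcal{X}$ is bounded. Write $D$ for the total dimension of the parameters $(\omega_i,\kappa_i,\tau_i)_{i\in[k_1]}$ and $(\beta_{0i},\beta_{1i},\eta_i,\nu_i)_{i\in[k_2]}$. Standard covering arguments give an $\eta$-net of $\Theta$ in $\ell_2$-norm of cardinality at most $C\,\eta^{-D}$.

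The first step is to show that the conditional density $f_{G_1,G_2}(y|x)$ depends Lipschitz-continuously on the parameters, uniformly in $x\in\mathcal{X}$ and up to Gaussian tail factors in $y$. Indeed, because $\mathcal{X}$ is bounded, the weights $\omega_i$ and the normalized softmax weights are Lipschitz in the gating parameters; because $h_1,h_2$ are bounded Lipschitz in their parameters (Assumption (A.3)), the Gaussian means $h_1(x,\kappa_i)$ and $h_2(x,\eta_i)$ are uniformly Lipschitz; and the Gaussian density $\pi(y|\mu,\sigma^2)$ is smooth in $(\mu,\sigma^2)$ on the compact set where $\sigma^2$ ranges. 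Combining these yields, for any two parameter configurations $(G_1,G_2)$ and $(G'_1,G'_2)$ with parameter distance at most $\eta$,
\begin{align*}
  |f_{G_1,G_2}(y|x)-f_{G'_1,G'_2}(y|x)| \;\le\; K\,\eta\,\Phi(y|x),
\end{align*}
for some bounded envelope $\Phi(y|x)$ that is a mixture of Gaussian densities with slightly inflated variance (chosen once, depending only on $\Theta$), so that $\int \Phi(y|x)\,\dint y$ is bounded uniformly in $x$.

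Given this Lipschitz estimate, the second step is to build explicit brackets around each element of the $\eta$-net: for each net point $(G_1^{(k)},G_2^{(k)})$ take
\begin{align*}
  \ell_k(y|x) := \bigl(f_{G_1^{(k)},G_2^{(k)}}(y|x) - K\eta\Phi(y|x)\bigr)_{+},
  \qquad
  u_k(y|x) := f_{G_1^{(k)},G_2^{(k)}}(y|x) + K\eta\Phi(y|x).
\end{align*}
By construction these brackets cover $\mathcal{F}_{k_1,k_2}(\Theta)$. The third step is to estimate the Hellinger size of a bracket. Using $h^2(u,\ell)\le \tfrac12\int(u-\ell)^2/(u+\ell)\,\dint y\,\dint P_X \lesssim \int(u-\ell)\,\dint y\,\dint P_X$ (since $u-\ell$ is bounded above by a constant multiple of the majorant $\Phi$ and $u+\ell$ is bounded below by $\ell$ on the support of interest, with the tail region handled via a direct $L^1$-bound), we obtain $h(u_k,\ell_k)\lesssim \sqrt{\eta}$. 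Choosing $\eta\asymp \epsilon^2$ therefore yields brackets of Hellinger size $\lesssim \epsilon$, at a total count of at most $C\,\epsilon^{-2D}$. Taking logarithms gives $H_B(\epsilon,\mathcal{F}_{k_1,k_2}(\Theta),h)\lesssim D\log(1/\epsilon)\lesssim \log(1/\epsilon)$, which is the desired conclusion.

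The main obstacle will be the envelope construction in the second step: because $y\in\mathbb{R}$ is unbounded, a naive $L^\infty$-Lipschitz bound in $y$ fails. The trick is to choose the envelope $\Phi$ as a Gaussian (or mixture of Gaussians) with variance slightly larger than the maximum $\tau_i^*,\nu_i^*$ allowed by the compact $\Theta$, shifted to cover every possible mean $h_1(x,\kappa),h_2(x,\eta)$ over the bounded input space; the ratio $\pi(y|\mu,\sigma^2)/\Phi(y)$ is then uniformly bounded over the parameter set, which lets the Lipschitz-in-parameter estimate carry through without blowing up at the tails. Once this envelope is in place, the remaining arithmetic to pass from $L^1$-closeness to Hellinger-closeness of the brackets is routine.
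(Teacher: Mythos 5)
Your proposal is correct and follows essentially the same route as the paper: reduce to a polynomial $\eta$-covering of the compact parameter space, establish a Lipschitz-in-parameters estimate for the density with a fixed integrable Gaussian-type envelope to absorb the unboundedness in $y$, convert covering to bracketing via the envelope, and pass from $L^1$-size of the brackets to Hellinger size using $h^2(u,\ell)\le\|u-\ell\|_1$. The paper organizes the same ingredients slightly differently: it first bounds each Gaussian factor $\pi(Y|h_1(X,\kappa),\tau)$ and $\pi(Y|h_2(X,\eta),\nu)$ pointwise by a universal envelope $E(Y|X)$, then builds an $L^\infty$-cover of $\mathcal{F}_{k_1,k_2}(\Theta)$ from parameter nets of $\Delta_1,\Delta_2,\Omega_1,\Omega_2$, and finally forms brackets $p_i^L=(p_i-\lambda)_+$, $p_i^U=\min\{p_i+\lambda,E\}$. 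Your version is a bit more explicit on the $L^1\to$ Hellinger step and on why the bracket Hellinger size scales as $\sqrt{\eta}$ rather than $\eta$ (hence the reparametrization $\eta\asymp\epsilon^2$); the paper's computation of $\|p_i^U-p_i^L\|_2$ is stated somewhat loosely at that step, but since only the $\log(1/\epsilon)$ order matters, both arguments land in the same place. No gap in your plan; the one point worth fleshing out if you write it fully is the Lipschitz-with-envelope estimate itself — i.e., showing that $|\partial_\mu\pi(y|\mu,\sigma^2)|=\tfrac{|y-\mu|}{\sigma^2}\pi(y|\mu,\sigma^2)$ is dominated by a fixed Gaussian with inflated variance uniformly over the compact parameter set — which is the analogue of the paper's construction of $E_1,E_2$.
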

\begin{proof}[Proof of Lemma~\ref{lemma:bracketing_entropy_bound}]
    Recall that for any mixing measure pair $(G_1,G_2)$, we have
    \begin{align*}
        f_{G_1,G_2}(Y|X)=\frac{1}{2}\sum_{i=1}^{k_1}\omega_{i}\pi(Y|h_1(X,\kappa_{i}),\tau_{i})+\frac{1}{2}\sum_{i = 1}^{k_2} \frac{\exp((\beta_{1i})^{\top} X + \beta_{0i})}{\sum_{j = 1}^{k_2} \exp((\beta_{1j})^{\top} X + \beta_{0j})}\cdot \pi(Y|h_2(X,\eta_{i}), \nu_{i}). 
    \end{align*}
    Firstly, we will establish upper bounds for the Gaussian densities $\pi(Y|h_1(X,\kappa),\tau)$ and $\pi(Y|h_2(X,\eta), \nu)$, respectively. Indeed, since the expert function $h_1$ is bounded and the parameter space is compact, we have $|h_1(X,\kappa)|\leq M_1$ for all $X\in\mathcal{X}$ for some constant $M_1>0$, and $\ell_1\leq \tau\leq u_1$ for some $\ell_1,u_1>0$. Therefore, for $|Y|\geq 2M_1$, since $\frac{(Y-h_1(X,\kappa))^2}{2\tau}\geq\frac{Y^2}{8u_1}$ for all $X\in\mathcal{X}$, we have
    \begin{align*}
        \pi(Y|h_1(X,\kappa),\tau)=\frac{1}{\sqrt{2\boldsymbol{\pi}\tau}}\exp\Big(-\frac{(Y-h_1(X,\kappa))^2}{2\tau}\Big)\leq\frac{1}{\sqrt{2\boldsymbol{\pi}\ell_1}}\exp\Big(-\frac{Y^2}{8u_1}\Big).
    \end{align*}
    Next, for $|Y|<2M_1$, it follows that
    \begin{align*}
        \pi(Y|h_1(X,\kappa),\tau)=\frac{1}{\sqrt{2\boldsymbol{\pi}\tau}}\exp\Big(-\frac{(Y-h_1(X,\kappa))^2}{2\tau}\Big)\leq\frac{1}{\sqrt{2\boldsymbol{\pi}\tau}}\leq \frac{1}{\sqrt{2\boldsymbol{\pi}\ell_1}}.
    \end{align*}
    Combine the above results together, we deduce $ \pi(Y|h_1(X,\kappa),\tau)\leq E_1(Y|X)$ for all $(X,Y)$ where
    \begin{align*}
        E_1(Y|X):=\begin{cases}
            \frac{1}{\sqrt{2\boldsymbol{\pi}\ell_1}}\exp\Big(-\frac{Y^2}{8u_1}\Big), \quad \text{for } |Y|\geq 2M_1\\
            \frac{1}{\sqrt{2\boldsymbol{\pi}\ell_1}}, \hspace{2.6cm} \text{for } |Y|<2M_1.
        \end{cases}
    \end{align*}
    By arguing in similar fashion based on the assumptions that $|h_2(X,\eta)|\leq M_2$ for all $X\in\mathcal{X}$ for some constant $M_2>0$, and $\ell_2\leq\nu\leq u_2$ for some $\ell_2,u_2>0$, we also get $\pi(Y|h_2(X,\eta), \nu)\leq E_2(Y|X)$, where
    \begin{align*}
        E_2(Y|X):=\begin{cases}
            \frac{1}{\sqrt{2\boldsymbol{\pi}\ell_2}}\exp\Big(-\frac{Y^2}{8u_2}\Big), \quad \text{for } |Y|\geq 2M_2\\
            \frac{1}{\sqrt{2\boldsymbol{\pi}\ell_2}}, \hspace{2.6cm} \text{for } |Y|<2M_2.
        \end{cases}
    \end{align*}
    Now, let $\lambda\leq\epsilon$ be some constant that we will choose later, we denote $p_1,p_2,\ldots, p_N$ as an $\lambda$-cover of the set $\mathcal{F}_{k_1,k_2}(\Theta)$, where $N:=N(\lambda,\mathcal{F}_{k_1,k_2}(\Theta),\|\cdot\|_{\infty})$ stands for the $\lambda$-covering number of the set $\mathcal{F}_{k_1,k_2}(\Theta)$ under the $L^{\infty}$-norm. Then, we take into account the brackets $[p^L_i,p^U_i]$ given by
    \begin{align*}
        p^L_i(Y|X)&:=\max\{p_i(Y|X)-\lambda,0\},\\
        p^U_i(Y|X)&:=\max\{p_i(Y|X)+\lambda,E(Y|X)\},
    \end{align*}
    for all $i\in[N]$, where $E(Y|X):=\frac{1}{2}E_1(Y|X)+\frac{1}{2}E_2(Y|X)$. It can be justified that $\mathcal{F}_{k_1,k_2}(\Theta)\subseteq\cup_{i=1}^{N}[p^L_i,p^U_i]$ and $p^U_i(Y|X)-p^L_i(Y|X)\leq\min\{2\lambda,E(Y|X)\}$. Furthermore, we have
    \begin{align*}
        \|p^U_i-p^L_i\|_{2}=\Big(\int[p^U_i(Y|X)-p^L_i(Y|X)]^2\dint(X,Y)\Big)^{1/2}\leq2\lambda. 
    \end{align*}
    By definition of the bracketing entropy, we get
\begin{align*}
    H_B(2\lambda,\mathcal{F}_{k_1,k_2}(\Theta),\|\cdot\|_{2})\leq \log N=\log N(\lambda,\mathcal{F}_{k_1,k_2}(\Theta),\|\cdot\|_{\infty}).
\end{align*}
Thus, we need to derive an upper bound for the covering number $N(\lambda,\mathcal{F}_{k_1,k_2}(\Theta),\|\cdot\|_{\infty})$. Let us denote $\Delta:=\Delta_1\times\Delta_2$ and $\Omega:=\Omega_1\times\Omega_2$, where
\begin{align*}
    \Delta_1&:=\{\omega_i\in\mathbb{R}_+:(\omega,\kappa,\tau)\in\Theta_1\}, \\
    \Delta_2&:=\{(\kappa,\tau)\in\mathbb{R}^{d_1}\times\mathbb{R}_+:(\omega,\kappa,\tau)\in\Theta_1\},\\
    \Omega_1&:=\{(\beta_{0},\beta_{1})\in\mathbb{R}\times\mathbb{R}^{d}:(\beta_{0},\beta_{1},\eta,\nu)\in\Theta_2\},\\
    \Omega_2&:=\{(\eta,\nu)\in\mathbb{R}^{d_2}\times\mathbb{R}_+:(\beta_{0},\beta_{1},\eta,\nu)\in\Theta_2\}.
\end{align*}
Since $\Theta_1$ and $\Theta_2$ are compact, the sets $\Delta_1,\Delta_2$ and $\Omega_1,\Omega_2$ are also compact. Thus, there exist $\lambda$-covers $\Delta_{1,\lambda},\Delta_{2,\lambda}$ and $\Omega_{1,\lambda},\Omega_{2,\lambda}$ for those sets, respectively. Moreover, the cardinalities of those $\lambda$-covers are bounded as follows:
\begin{align*}
    |\Delta_{1,\lambda}|\leq\mathcal{O}(\lambda^{-k_1}),& \quad |\Delta_{2,\lambda}|\leq\mathcal{O}(\lambda^{-(d_1+1)k_1}),\\
    |\Omega_{1,\lambda}|\leq\mathcal{O}(\lambda^{-(d+1)k_2}),& \quad |\Omega_{2,\lambda}|\leq\mathcal{O}(\lambda^{-(d_2+1)k_2}).
\end{align*}
For each pair of mixing measure $(G_1,G_2)\in\mathcal{G}_{k_1,k_2}(\Theta)$, we consider two other mixing measure pairs $(G'_1,G'_2)$ and $(\overline{G}_1,\overline{G}_2)$ given by
\begin{align*}
    G'_1:=\sum_{i=1}^{k_1}\omega_i\delta_{(\bar{\kappa}_i,\bar{\tau}_i)},& \hspace{2cm} G'_2:=\sum_{i=1}^{k_2}\bar{\omega}_i\delta_{(\bar{\kappa}_i,\bar{\tau}_i)},\\
    \overline{G}_1:=\sum_{i=1}^{k_2}\exp(\beta_{0i})\delta_{(\beta_{1i},\bar{\eta}_i,\bar{\tau}_i)},& \hspace{2cm} \overline{G}_2:=\sum_{i=1}^{k_2}\exp(\bar{\beta}_{0i})\delta_{(\bar{\beta}_{1i},\bar{\eta}_i,\bar{\nu}_i)}.
\end{align*}
Above, $\bar{\omega}_i\in\Delta_{1,\lambda}$ is the closest point to $\omega_i$ in that set, $(\bar{\kappa}_i,\bar{\tau}_i)\in\Delta_{2,\lambda}$ is the closest point to $(\kappa_i,\tau_i)$ in that set, $(\bar{\beta}_{0i},\bar{\beta}_{1i})\in\Omega_{1,\lambda}$ is the closest point to $(\beta_{0i},\beta_{1i})$ in that set, $(\bar{\eta}_i,\bar{\nu}_i)\in\Omega_{2,\lambda}$ is the closest point to $(\eta_i,\nu_i)$ in that set. Subsequently, we aim to upper bound the term $\|f_{G_1,G_2}-f_{\overline{G}_1,\overline{G}_2}\|_{\infty}$. By the triangle inequality, we have
\begin{align*}
    \|f_{G_1,G_2}-f_{\overline{G}_1,\overline{G}_2}\|_{\infty}\leq \|f_{G_1,G_2}-f_{G'_1,G'_2}\|_{\infty}+\|f_{G'_1,G'_2}-f_{\overline{G}_1,\overline{G}_2}\|_{\infty}.
\end{align*}
We aim to upper bound the two terms in the above right hand sides, respectively. For ease of presentation, for any mixing measure pair $(G_1,G_2)$, we denote
\begin{align*}
    q_{G_1}(Y|X)&:=\sum_{i=1}^{k_1}\omega_i\pi(Y|h_1(X,\kappa_i),\tau_i),\\
    p_{G_2}(Y|X)&:=\sum_{i=1}^{k_2}\frac{\exp(\beta_{1i}^{\top}X+\beta_{0i})}{\sum_{j=1}^{k_2}\exp(\beta_{1j}^{\top}X+\beta_{0j})}\pi(Y|h_2(X,\eta_i),\nu_i).
\end{align*}
We start with bounding the term $\|f_{G_1,G_2}-f_{G'_1,G'_2}\|_{\infty}$ as follows:
\begin{align*}
    \|f_{G_1,G_2}-f_{G'_1,G'_2}\|_{\infty}\leq\frac{1}{2}\|q_{G_1}-q_{G'_1}\|_{\infty}+\frac{1}{2}\|p_{G_2}-p_{G'_2}\|_{\infty}.
\end{align*}
In particular, we have
\begin{align*}
    \|q_{G_1}-q_{G'_1}\|_{\infty}&=\sup_{(X,Y)\in\mathcal{X}\times\mathcal{Y}}\Bigg|\sum_{i=1}^{k_1}\omega_i\Big[\pi(Y|h_1(X,\kappa_i),\tau_i)-\pi(Y|h_1(X,\bar{\kappa}_i),\bar{\tau}_i)\Big]\Bigg|\\
    &\leq\sum_{i=1}^{k_1}\omega_i\sup_{(X,Y)\in\mathcal{X}\times\mathcal{Y}}~\Big|\pi(Y|h_1(X,\kappa_i),\tau_i)-\pi(Y|h_1(X,\bar{\kappa}_i),\bar{\tau}_i)\Big|\\
    &\lesssim\sum_{i=1}^{k_1}\omega_i(\|\kappa_i-\bar{\kappa}_i\|+|\tau_i-\bar{\tau}_i|)\\
    &\leq\sum_{i=1}^{k_1}\omega_i(\lambda+\lambda)=2\lambda\lesssim\lambda,
\end{align*}
and
\begin{align*}
    &\|p_{G_2}-p_{G'_2}\|_{\infty}\\
    &=\sup_{(X,Y)\in\mathcal{X}\times\mathcal{Y}}\Bigg|\sum_{i=1}^{k_2}\frac{\exp(\beta_{1i}^{\top}X+\beta_{0i})}{\sum_{j=1}^{k_2}\exp(\beta_{1j}^{\top}X+\beta_{0j})}\Big[\pi(Y|h_2(X,\eta_i),\nu_i)-\pi(Y|h_1(X,\bar{\eta}_i),\bar{\nu}_i)\Big]\Bigg|\\
    &\leq\sum_{i=1}^{k_2}\sup_{(X,Y)\in\mathcal{X}\times\mathcal{Y}}~\frac{\exp(\beta_{1i}^{\top}X+\beta_{0i})}{\sum_{j=1}^{k_2}\exp(\beta_{1j}^{\top}X+\beta_{0j})}\Big|\pi(Y|h_2(X,\eta_i),\nu_i)-\pi(Y|h_1(X,\bar{\eta}_i),\bar{\nu}_i)\Big|\\
    &\leq\sum_{i=1}^{k_2}\sup_{(X,Y)\in\mathcal{X}\times\mathcal{Y}}~\Big|\pi(Y|h_2(X,\eta_i),\nu_i)-\pi(Y|h_1(X,\bar{\eta}_i),\bar{\nu}_i)\Big|\\
    &\lesssim\sum_{i=1}^{k_2}(\|\eta_i-\bar{\eta}_i\|+|\nu_i-\bar{\nu}_i|)\leq\sum_{i=1}^{k_2}(\lambda+\lambda)\lesssim\lambda,
\end{align*}
which implies that
\begin{align}
    \label{eq:first_term_bound}
    \|f_{G_1,G_2}-f_{G'_1,G'_2}\|_{\infty}\lesssim\frac{1}{2}\lambda+\frac{1}{2}\lambda=\lambda.
\end{align}
Next, we continue with bounding the term $\|f_{G'_1,G'_2}-f_{\overline{G}_1,\overline{G}_2}\|_{\infty}$ as
\begin{align*}
    \|f_{G'_1,G'_2}-f_{\overline{G}_1,\overline{G}_2}\|_{\infty}\leq \frac{1}{2}\|q_{G'_1}-q_{\overline{G}_1}\|_{\infty}+\frac{1}{2}\|p_{G'_2}-p_{\overline{G}_2}\|_{\infty}.
\end{align*}
By looking into each term in the above right hand side, we have
\begin{align*}
    \|q_{G'_1}-q_{\overline{G}_1}\|_{\infty}&=\sup_{(X,Y)\in\mathcal{X}\times\mathcal{Y}}\Bigg|\sum_{i=1}^{k_1}[\omega_i-\bar{\omega}_i]\pi(Y|h_1(X,\bar{\kappa}_i),\bar{\tau}_i)\Bigg|\\
    &\leq\sum_{i=1}^{k_1}|\omega_i-\bar{\omega}_i|\sup_{(X,Y)\in\mathcal{X}\times\mathcal{Y}}~|\pi(Y|h_1(X,\bar{\kappa}_i),\bar{\tau}_i)|\\
    &\lesssim\sum_{i=1}^{k_1}|\omega_i-\bar{\omega}_i|\leq\sum_{i=1}^{k_1}\lambda\lesssim\lambda,
\end{align*}
and
\begin{align*}
    &\|p_{G'_2}-p_{\overline{G}_2}\|_{\infty}\\
    &=\sup_{(X,Y)\in\mathcal{X}\times\mathcal{Y}}\Bigg|\sum_{i=1}^{k_2}\Bigg[\frac{\exp(\beta_{1i}^{\top}X+\beta_{0i})}{\sum_{j=1}^{k_2}\exp(\beta_{1j}^{\top}X+\beta_{0j})}-\frac{\exp(\bar{\beta}_{1i}^{\top}X+\bar{\beta}_{0i})}{\sum_{j=1}^{k_2}\exp(\bar{\beta}_{1j}^{\top}X+\bar{\beta}_{0j})}\Bigg]\pi(Y|h_1(X,\bar{\kappa}_i),\bar{\tau}_i)\Bigg|\\
    &\leq\sum_{i=1}^{k_2}\sup_{(X,Y)\in\mathcal{X}\times\mathcal{Y}}~\Bigg|\frac{\exp(\beta_{1i}^{\top}X+\beta_{0i})}{\sum_{j=1}^{k_2}\exp(\beta_{1j}^{\top}X+\beta_{0j})}-\frac{\exp(\bar{\beta}_{1i}^{\top}X+\bar{\beta}_{0i})}{\sum_{j=1}^{k_2}\exp(\bar{\beta}_{1j}^{\top}X+\bar{\beta}_{0j})}\Bigg|\cdot|\pi(Y|h_1(X,\bar{\kappa}_i),\bar{\tau}_i)|\\
    &\lesssim\sum_{i=1}^{k_2}\sup_{(X,Y)\in\mathcal{X}\times\mathcal{Y}}~\Bigg|\frac{\exp(\beta_{1i}^{\top}X+\beta_{0i})}{\sum_{j=1}^{k_2}\exp(\beta_{1j}^{\top}X+\beta_{0j})}-\frac{\exp(\bar{\beta}_{1i}^{\top}X+\bar{\beta}_{0i})}{\sum_{j=1}^{k_2}\exp(\bar{\beta}_{1j}^{\top}X+\bar{\beta}_{0j})}\Bigg|\\
    &\lesssim\sum_{i=1}^{k_2}\sup_{X\in\mathcal{X}}~\Big(\|\beta_{1i}-\bar{\beta}_{1i}\|\cdot\|X\|+\|\beta_{0i}-\bar{\beta}_{0i}\|\Big)\\
    &\lesssim\sum_{i=1}^{k_2}\Big(\lambda\cdot\sup_{X\in\mathcal{X}}~\|X\|+\lambda\Big)\lesssim\lambda.
\end{align*}
Putting these bounds together, we deduce
\begin{align}
    \label{eq:second_term_bound}
    \|f_{G'_1,G'_2}-f_{\overline{G}_1,\overline{G}_2}\|_{\infty}\lesssim\frac{1}{2}\lambda+\frac{1}{2}\lambda=\lambda.
\end{align}
From equations~\eqref{eq:first_term_bound} and \eqref{eq:second_term_bound}, we obtain 
\begin{align*}
    \|f_{G_1,G_2}-f_{\overline{G}_1,\overline{G}_2}\|_{\infty}\leq\lambda+\lambda\lesssim\lambda.
\end{align*}
By definition of the covering number, we get 
\begin{align*}
    N(\lambda,\mathcal{F}_{k_1,k_2}(\Theta),\|\cdot\|_{\infty})&\leq|\Delta_{1,\lambda}|\cdot|\Delta_{2,\lambda}|\cdot|\Omega_{1,\lambda}|\cdot|\Omega_{2,\lambda}|\\
    &\leq\mathcal{O}(\lambda^{-k_1})\cdot\mathcal{O}(\lambda^{-(d_1+1)k_1})\cdot\mathcal{O}(\lambda^{-(d+1)k_2})\cdot\mathcal{O}(\lambda^{-(d_2+1)k_2})\\
    &\leq\mathcal{O}(\lambda^{-(d_1+2)k_1-(d_2+d+2)k_2}).
\end{align*}
As a result, we deduce
\begin{align*}
     H_B(2\lambda,\mathcal{F}_{k_1,k_2}(\Theta),\|\cdot\|_{2})\leq \log N(\lambda,\mathcal{F}_{k_1,k_2}(\Theta),\|\cdot\|_{\infty})\lesssim\log(1/\lambda).
\end{align*}
Let $\lambda=\epsilon/2$, we achieve the desired result that $H_B(\epsilon,\mathcal{F}_{k_1,k_2}(\Theta),\|\cdot\|_{2})\lesssim\log(1/\epsilon)$. Hence, the proof is completed.
\end{proof}

\subsection{Identifiability of DeepSeekMoE}

\begin{proposition}[Identifiability]
    \label{prop:identifiability}
    For any pair of mixing measures $(G_1,G_2)$, if the equation $f_{G_1,G_2}(Y|X)=f_{G^*_1,G^*_2}(Y|X)$ holds for almost surely $(X,Y)$, then we obtain $(G_1,G_2)\equiv(G^*_1,G^*_2)$.
\end{proposition}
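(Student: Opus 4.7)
The plan is to prove $(G_1,G_2)\equiv(G^*_1,G^*_2)$ by a three-step strategy: first apply the classical identifiability of finite Gaussian mixtures pointwise in $x$, then use the fact that shared and routed components have qualitatively different dependence on $x$ to separate the two contributions, and finally invoke standard identifiability of the softmax gating under (A.2).

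\textbf{Step 1 (Pointwise Gaussian-mixture identifiability).} For each fixed $x$, both $f_{G_1,G_2}(\cdot|x)$ and $f_{G^*_1,G^*_2}(\cdot|x)$ are finite mixtures of univariate Gaussians in $y$, with at most $k_1+k_2$ and $k^*_1+k^*_2$ components respectively. By the classical identifiability theorem for finite Gaussian mixtures, after merging components with identical mean-variance pairs, the two collections of distinct $(\mu,\sigma^2)$ values and their aggregate weights must coincide for almost every $x$.

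\textbf{Step 2 (Separating shared from routed).} The weight of any shared component is the constant $\omega_i/2$, whereas the weight of any routed component is the $x$-dependent ratio $\frac{1}{2}\exp(\beta_{1i}^{\top}x+\beta_{0i})/\sum_j\exp(\beta_{1j}^{\top}x+\beta_{0j})$, which is non-constant in $x$ because at least one $\beta^*_{1i}$ is non-zero by (A.2). Combined with the continuity in $x$ of $h_1,h_2$ and the distinctness of the ground-truth parameters $\kappa^*_i$ and $\eta^*_j$ given by (A.3), I would exhibit a generic $x_0$ at which all the pairs $(h_1(x_0,\kappa^*_i),\tau^*_i)$ and $(h_2(x_0,\eta^*_j),\nu^*_j)$ are distinct, producing a bijective matching of components on a neighborhood of $x_0$. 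Letting $x$ vary and comparing the functional forms of the weights then forces this bijection to map shared components to shared components and routed to routed.

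\textbf{Step 3 (Final identification).} Once the two mixtures are separated, the shared part reduces to $\sum_i\omega_i\pi(Y|h_1(x,\kappa_i),\tau_i)=\sum_i\omega^*_i\pi(Y|h_1(x,\kappa^*_i),\tau^*_i)$, from which $G_1=G^*_1$ follows using the distinctness of $\kappa^*$ and the non-degeneracy of $h_1$ in (A.3). For the routed part, what remains is equality of two softmax-gated Gaussian mixtures; the standard identifiability of softmax gating under the translation-removing constraint (A.2), together with the matching of means and variances, yields $G_2=G^*_2$.

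\textbf{Main obstacle.} The delicate step is Step 2: at an individual $x$, nothing prevents the values $h_1(x,\kappa^*_i)$ and $h_2(x,\eta^*_j)$, or their associated variance pairs, from accidentally coinciding, so a non-trivial genericity argument is needed to find an $x_0$ where the components fully separate; this, in turn, relies crucially on the distinctness of the ground-truth parameters and the regularity of $h_1,h_2$. Once the correct matching is in place, the remaining identification of $G_1$ and $G_2$ follows from well-known mixture identifiability results.
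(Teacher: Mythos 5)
Your proposal is correct and follows essentially the same route as the paper's own proof: invoke classical identifiability of finite location-scale Gaussian mixtures, separate shared from routed components by the fact that shared weights are constant in $x$ while routed (softmax) weights are $x$-dependent, remove the softmax translation ambiguity via Assumption (A.2), and finish using the distinctness and identifiability of the expert functions from (A.3). The only difference is one of execution rather than substance—the paper matches the weights as functions of $X$ for almost every $X$ instead of performing a pointwise matching at a generic $x_0$—and the genericity issue you flag as the main obstacle is handled with the same brevity in the paper itself.
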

\begin{proof}[Proof of Proposition~\ref{prop:identifiability}]
    First of all, we expand the equation $f_{G_1,G_2}(Y|X)=f_{G^*_1,G^*_2}(Y|X)$ for almost surely $(X,Y)$ as follows:
    \begin{align}
        \frac{1}{2}\sum_{i=1}^{k_1}\omega_i\pi(Y|h_1(X,\kappa_i),\tau_i)+\frac{1}{2}\sum_{i=1}^{k_2}\frac{\exp(\beta_{1i}^{\top}X+\beta_{0i})}{\sum_{j=1}^{k_2}\exp(\beta_{1j}^{\top}X+\beta_{0j})}\pi(Y|h_2(X,\eta_i),\nu_i)\nonumber\\
        \label{eq:identifiable_equation}
        =\frac{1}{2}\sum_{i=1}^{k^*_1}\omega^*_{i}\pi(Y|h_1(X,\kappa^*_{i}),\tau^*_{i})+\frac{1}{2}\sum_{i = 1}^{k^*_2} \frac{\exp((\beta_{1i}^{*})^{\top} X + \beta_{0i}^{*})}{\sum_{j = 1}^{k^*_2} \exp((\beta_{1j}^{*})^{\top} X + \beta_{0j}^{*})}\cdot \pi(Y|h_2(X,\eta^*_{i}), \nu_{i}^{*}).
    \end{align}
    Since the location-scale Gaussian mixtures are identifiable \cite{Teicher-63}, the above equation implies that $k_1+k_2=k^*_1+k^*_2$ and
    \begin{align*}
        \Bigg\{\omega_{i'},\frac{\exp(\beta_{1i}^{\top}X+\beta_{0i})}{\sum_{j=1}^{k_2}\exp(\beta_{1j}^{\top}X+\beta_{0j})}&:i'\in[k_1], \ i\in[k_2]\Bigg\}\\
        &=\Bigg\{\omega^*_{i'},\frac{\exp((\beta^*_{1i})^{\top}X+\beta^*_{0i})}{\sum_{j=1}^{k^*_2}\exp((\beta^*_{1j})^{\top}X+\beta^*_{0j})}:i'\in[k^*_1], \ i\in[k^*_2]\Bigg\},
    \end{align*}
    for almost surely $X$. As the weights $\omega_{i'}$ and $\omega^*_{i'}$ are independent of $X$ for all $i'\in[k^*_1]$, we deduce $k_1=k^*_1$ and $\{\omega_{i'}:i'\in[k^*_1]\}=\{\omega^*_{i'}:i'\in[k^*_1]\}$. For simplicity, we assume WLOG that $\omega_{i'}=\omega^*_{i'}$ for all $i'\in[k^*_1]$. Furthermore, we also get $k_2=k^*_2$ and
    \begin{align*}
        \Bigg\{\frac{\exp(\beta_{1i}^{\top}X+\beta_{0i})}{\sum_{j=1}^{k^*_2}\exp(\beta_{1j}^{\top}X+\beta_{0j})}&:i\in[k^*_2]\Bigg\}=\Bigg\{\frac{\exp((\beta^*_{1i})^{\top}X+\beta^*_{0i})}{\sum_{j=1}^{k^*_2}\exp((\beta^*_{1j})^{\top}X+\beta^*_{0j})}:i\in[k^*_2]\Bigg\},
    \end{align*}
    for almost surely $X$. Again, we assume WLOG that $\frac{\exp(\beta_{1i}^{\top}X+\beta_{0i})}{\sum_{j=1}^{k^*_2}\exp(\beta_{1j}^{\top}X+\beta_{0j})}=\frac{\exp((\beta^*_{1i})^{\top}X+\beta^*_{0i})}{\sum_{j=1}^{k^*_2}\exp((\beta^*_{1j})^{\top}X+\beta^*_{0j})}$ for almost surely $X$ for all $i\in[k^*_2]$. Due to the invariance to translation of the softmax function, this result indicates that $\beta_{1i}=\beta^*_{1i}+c_1$ and $\beta_{0i}=\beta^*_{0i}+c_0$ for some $c_1\in\mathbb{R}^{d}$ and $c_0\in\mathbb{R}$. Then, it follows from the assumption $\beta_{1k^*_2}=\beta^*_{1k^*_2}=0_{d}$ and $\beta_{0k^*_2}=\beta^*_{0k^*_2}=0$ that $c_1=0_{d}$ and $c_0=0$. Therefore, we obtain $\beta_{1i}=\beta^*_{1i}$ and $\beta_{0i}=\beta^*_{0i}$ for all $i\in[k^*_2]$. 

    \vspace{0.5 em}
\noindent
    Subsequently, we partition the index set $[k^*_1]$ into disjoint subsets $U_1,U_2,\ldots,U_{m_1}$ such that for each $\ell\in[m_1]$, we have (i) $\omega_i=\omega^*_{i'}$ for $i,i'\in U_{\ell}$ and (ii) $\omega_i\neq\omega^*_{i'}$ if $i$ and $i'$ dot not belong to the same set $U_{\ell}$. Similarly, we also partition the index set $[k^*_2]$ into disjoint subsets $V_1,V_2,\ldots,V_{m_2}$ such that for each $\ell\in[m_2]$, we have (i) $\exp(\beta_{0i})=\exp(\beta^*_{0i'})$ for $i,i'\in V_{\ell}$ and (ii) $\exp(\beta_{0i})\neq\exp(\beta^*_{0i'})$ if $i$ and $i'$ dot not belong to the same set $V_{\ell}$.
    As a consequence, we can rewrite equation~\eqref{eq:identifiable_equation} as
    \begin{align*}
        \frac{1}{2}\sum_{\ell=1}^{m_1}\sum_{i\in U_{\ell}}\omega_i\pi(Y|h_1(X,\kappa_i),\tau_i)+\frac{1}{2S}\sum_{\ell=1}^{m_2}\sum_{i\in V_{\ell}}\exp(\beta_{0i})\exp(\beta_{1i}^{\top}X)\pi(Y|h_2(X,\eta_i),\nu_i)\nonumber\\
        =\frac{1}{2}\sum_{\ell=1}^{m_1}\sum_{i\in U_{\ell}}\omega^*_{i}\pi(Y|h_1(X,\kappa^*_{i}),\tau^*_{i})+\frac{1}{2S}\sum_{\ell=1}^{m_2}\sum_{i\in V_{\ell}}\exp(\beta_{0i}^{*})\exp((\beta_{1i}^{*})^{\top} X ) \pi(Y|h_2(X,\eta^*_{i}), \nu_{i}^{*}),
    \end{align*}
    for almost surely $(X,Y)$, where we denote $S:=\sum_{j = 1}^{k^*_2} \exp((\beta_{1j}^{*})^{\top} X + \beta_{0j}^{*})$. The above equation implies that
    \begin{align*}
        \{(h_1(X,\kappa_i),\tau_i):i\in U_{\ell}\}&=\{(h_1(X,\kappa^*_i),\tau^*_i):i\in U_{\ell}\}, \quad \forall\ell\in[m_1]\\
        \{(h_2(X,\eta_i),\nu_i):i\in V_{\ell}\}&=\{(h_2(X,\eta^*_i),\nu^*_i):i\in V_{\ell}\}, \quad \forall\ell\in[m_2],
    \end{align*}
    for almost surely $X$. As the expert functions $h_1$ and $h_2$ are identifiable, we deduce 
    \begin{align*}
        \{(\kappa_i,\tau_i):i\in U_{\ell}\}&=\{(\kappa^*_i,\tau^*_i):i\in U_{\ell}\}, \quad \forall\ell\in[m_1]\\
        \{(\eta_i,\nu_i):i\in V_{\ell}\}&=\{(\eta^*_i,\nu^*_i):i\in V_{\ell}\}, \quad \forall\ell\in[m_2].
    \end{align*}
    Therefore, we obtain 
    \begin{align*}
        &G_1=\sum_{\ell=1}^{m_1}\sum_{i\in U_{\ell}}\omega_i\delta_{(\kappa_i,\tau_i)}=\sum_{\ell=1}^{m_1}\sum_{i\in U_{\ell}}\omega^*_i\delta_{(\kappa^*_i,\tau^*_i)}=G^*_1,\\
        &G_2=\sum_{\ell=1}^{m_2}\sum_{i\in V_{\ell}}\exp(\beta_{0i})\delta_{(\beta_{1i},\eta_i,\nu_i)}=\sum_{\ell=1}^{m_2}\sum_{i\in V_{\ell}}\exp(\beta^*_{0i})\delta_{(\beta^*_{1i},\eta^*_i,\nu^*_i)}=G^*_2.
    \end{align*}
    Hence, the proof is completed. 
\end{proof}

\section{Extended Theoretical Results for Sparse Gating MoE}
\label{appendix:sparse_gating}
In this appendix, we extend the convergence analysis of parameter and expert estimations presented in Theoreom~\ref{theorem:strongly_identifiable_experts} to the setting of a Top-$K$ sparse gating function. Our main arguments rely on fundamental techniques for dealing with the sparse gating function proposed in \cite{nguyen2024statistical}. Since the results of Theorems~\ref{theorem:linear_experts}, \ref{theorem:strongly_identifiable_experts_sigmoid}, and \ref{theorem:weakly_identifiable_experts_sigmoid} can be extended in a similar fashion, we will omit their extension here.

\vspace{0.5 em}
\noindent
\textbf{Problem setting:} Assume that $(X_{1}, Y_{1}), \ldots, (X_{n}, Y_{n}) \in \mathbb{R}^{d} \times \mathbb{R}$ are i.i.d. samples drawn from the softmax gating Gaussian mixture of experts of order $k_{*}$ whose conditional density function $s_{G^{*}_1,G^*_2}(y|x)$ is given by:
\begin{align}
    \label{eq:sparse_density}
    s_{G^*_1,G^*_2}(y|x):=\frac{1}{2}&\sum_{i=1}^{k^*_1}\omega^*_{i}\pi(y|h_1(x,\kappa^*_{i}),\tau^*_{i})\nonumber\\
    &+\frac{1}{2}\sum_{i = 1}^{k^*_2} \softmax(\topK((\boi)^{\top}x;\bzi)) \pi(y|h_2(x,\eta^*_{i}), \nu_{i}^{*}), 
\end{align}
where the pair of ground-truth mixing measures $(G^*_1,G^*_2)$ are given by $G^*_1 : = \sum_{i=1}^{k^*_1}\omega^*_i\delta_{(\kappa^*_{i},\tau^*_{i})}$ and $G^*_2:=\sum_{i = 1}^{k^*_2} \exp(\beta_{0i}^{*}) \delta_{(\beta_{1i}^{*}, \eta_{i}^{*}, \nu_{i}^{*})}$. Additionally, for any natural number $k$ and vectors $(v_i)_{i=1}^{k}$ and $(u_i)_{i=1}^{}$ in $\mathbb{R}^{k}$, the $\topK$ sparse function is defined as
\begin{align*}
   \topK(v_i,K;u_i):=\begin{cases}
        v_i +u_{i},\hspace{0.82cm} \text{if } v_i \text{ is in the top } K \text{ elements of } v;\\
        -\infty, \hspace{1.2cm} \text{otherwise},
    \end{cases}
\end{align*}
while the softmax function is formulated as $\softmax(v_i):={\exp(v_i)}/{\sum_{j=1}^{k}\exp(v_j)}$. 

\vspace{0.5 em}
\noindent
In practice, since the number of shared experts $k^*_1$ and routed experts $k^*2$ are typically unknown, we have to fit the ground-truth model~\eqref{eq:sparse_density} with $k_1>k^*_1$ shared experts and $k_2>k^*_2$ routed experts. Thus, some ground-truth shared experts and routed experts will be fitted by more than one estimated expert. As a result, since there are $K$ routed experts activated per input in the ground-truth density $s_{G^*_1,G^*_2}$, it is necessary to activate $\bar{K}>K$ experts in the density estimation in order to ensure its convergence to the true density. For that purpose, let us introduce the formulation of the density estimation as follows:
\begin{align*}
    \bar{s}_{G^n_1,G^n_2}(Y|X):=\frac{1}{2}&\sum_{i=1}^{k^n_1}\omega^n_{i}\pi(y|h_1(x,\kappa^n_{i}),\tau^n_{i})\nonumber\\
    &+\frac{1}{2}\sum_{i = 1}^{k^n_2} \softmax(\topKbar((\boin)^{\top}x;\bzin)) \pi(y|h_2(x,\eta^n_{i}), \nu_{i}^{n}),
\end{align*}
where $K<\bar{K}\leq k_2$ and the pair of mixing measure estimations $(G^n_1,G^n_2)$ are defined as
\begin{align}
    \label{eq:MLE_sparse}
    (\widehat{G}^n_1,\widehat{G}^n_2)\in\argmax_{(G_1,G_2)\in\mathcal{G}_{k_1,k_2}(\Theta)}\frac{1}{n}\sum_{i=1}^{n}\log(\bar{s}_{G_1,G_2}(Y_i|X_i)),
\end{align}
where the set of mixing measures $\mathcal{G}_{k_1,k_2}(\Theta):=\mathcal{G}_{k_1}(\Theta_1)\times\mathcal{G}_{k_2}(\Theta)$ is defined below equation~\eqref{eq:MLE}.

\vspace{0.5 em}
\noindent
\textbf{Input space partition w.r.t the true density.} In order that the density estimation $s_{G^n_1,G^n_2}$ converges to the true density $s_{G^*_1,G^*_2}$, we must ensure that for each input, the $\bar{K}$ routed experts activated in the density estimation converge to the $K$ routed experts activated in the true density. Since the activated experts vary with the input value, we need to partition the input space $\mathcal{X}$ into $M:=\binom{k^*_2}{K}$ regions $\mathcal{X}^*_{m}$ corresponding to $\binom{k^*_2}{K}$ choices of activated experts in the true density. For each $m\in[M]$, let us denote $\{m_1,m_2,\ldots,m_{K}\}$ as an $K$-element subset of the index set $[k^*_2]$, and $\{m_{K+1},\ldots,m_{k^*_2}\}:=[k^*_2]\setminus\{m_1,m_2,\ldots,m_{K}\}$. Then, the $m$-th region of the input space is defined as  
\begin{align*}
    \mathcal{X}^*_{m}:=\Big\{x\in\mathcal{X}:(\beta^*_{1i})^{\top}x\geq(\beta^*_{1i'})^{\top}x, \ \forall i\in\{m_1,m_2,\ldots,m_{K}\},i'\in\{m_{K+1},\ldots,m_{k^*_2}\}\Big\},
\end{align*}
for any $m\in[M]$. For example, suppose that $X\in\mathcal{X}^*_m$ where $m\in[M]$ such that $\{m_1,m_2,\ldots,m_K\}=\{1,2,\ldots,K\}$. Then, it follows that 
\begin{align*}
    \topK((\boi)^{\top}X;\bzi)=(\boi)^{\top}X+\bzi,
\end{align*}
for all $i\in[K]$. In other words, $h_2(X,\eta^*_1),h_2(X,\eta^*_2),\ldots,h_2(X,\eta^*_K)$ are the $K$ routed experts activated in the true density $s_{G^*_1,G^*_2}(y|x)$, which is reduced to
\begin{align}
    \label{eq:sparse_density_activated}
    s_{G^*_1,G^*_2}(y|x):=\frac{1}{2}&\sum_{i=1}^{k^*_1}\omega^*_{i}\pi(y|h_1(x,\kappa^*_{i}),\tau^*_{i})\nonumber\\
    &+\frac{1}{2}\sum_{i = 1}^{K} \frac{\exp((\beta_{1i}^{*})^{\top} x + \beta_{0i}^{*})}{\sum_{j = 1}^{k^*_2} \exp((\beta_{1j}^{*})^{\top} x + \beta_{0j}^{*})}\cdot \pi(y|h_2(x,\eta^*_{i}), \nu_{i}^{*}). 
\end{align}
\textbf{Input space partition w.r.t the density estimation.}
Next, with the same input $X\in\mathcal{X}^*_{m}$, we need to guarantee that the routed expert estimations converging to the above $K$ routed experts activated in the true density $s_{G^*_1,G^*_2}(Y|X)$ are also activated in the density estimation $s_{G^n_1,G^n_2}(Y|X)$. For that purpose, it is necessary to partition the input space with respect to the density estimation. In particular, we partition the input space into $\bar{M}:=\binom{k_2}{\bar{K}}$ regions $\bar{\mathcal{X}}_{m}$ corresponding to $\binom{k_2}{\bar{K}}$ choices of activated experts in the true density. For each $\bar{m}\in[\bar{M}]$, we denote $\{\bar{m}_1,\bar{m}_2,\ldots,\bar{m}_{\bar{K}}\}$ as an $\bar{K}$-element subset of the index set $[k_2]$, and $\{\bar{m}_{\bar{K}+1},\ldots,\bar{m}_{k_2}\}:=[k_2]\setminus\{\bar{m}_1,\bar{m}_2,\ldots,\bar{m}_{\bar{K}}\}$. 
% Then, the $\bar{m}$-th region of the input space is defined as  
% \begin{align*}
%     \bar{\mathcal{X}}_{\bar{m}}:=\Big\{x\in\mathcal{X}:(\beta^*_{1i})^{\top}x\geq(\beta^*_{1i'})^{\top}x, \ \forall i\in\{\bar{m}_1,\bar{m}_2,\ldots,\bar{m}_{\bar{K}}\},i'\in\{\bar{m}_{\bar{K}+1},\ldots,\bar{m}_{k^*_2}\}\Big\},
% \end{align*}
% for any $\bar{m}\in[\bar{M}]$. 
Given these notations, we are ready to show that the input partition w.r.t the density estimation aligns with the input space partition w.r.t the true density in the following lemma whose proof will be provided in Appendix~\ref{appendix:partition_input_space}:
\begin{lemma}
    \label{lemma:partition_input_space}
    For any $j\in[k^*_2]$, $i\in\mathcal{V}_{2,j}$ and $\beta_{1i},\beta^*_{1j}\in\mathbb{R}^{d}$, assume that there exist sufficiently small $\varepsilon_j>0$ satisfying $\|\beta_{1i}-\beta^*_{1j}\|\leq\varepsilon_j$. Moreover, suppose that there exist $m\in[M]$ and $\bar{m}\in[\bar{M}]$ such that $\{\bar{m}_1,\bar{m}_2,\ldots,\bar{m}_{\bar{K}}\}=\mathcal{V}_{2,m_1}\cup\mathcal{V}_{2,m_2}\ldots\cup\mathcal{V}_{2,m_K}$. Then, for any $m\in[M]$, if the input region $\mathcal{X}^*_{m}$ has non-zero measure, we have $\mathcal{X}^*_m=\bar{\mathcal{X}}_{\bar{m}}$, 
    where
    \begin{align*}
    \bar{\mathcal{X}}_{\bar{m}}:=\Big\{x\in\mathcal{X}:(\beta_{1i})^{\top}x\geq(\beta_{1i'})^{\top}x, \ \forall i\in\{\bar{m}_1,\bar{m}_2,\ldots,\bar{m}_{\bar{K}}\},i'\in\{\bar{m}_{K+1},\ldots,\bar{m}_{k_2}\}\Big\}.
    \end{align*}
\end{lemma}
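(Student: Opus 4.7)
The approach is a perturbation/stability argument. The hypothesis that $\{\bar{m}_1,\ldots,\bar{m}_{\bar{K}}\}$ equals the union $\mathcal{V}_{2,m_1}\cup\cdots\cup\mathcal{V}_{2,m_K}$ pairs each fitted activated index with a ground-truth activated index and (by complementation) each fitted inactive index with a ground-truth inactive index, in such a way that the matched gating parameters $\beta_{1i}$ and $\beta^*_{1j}$ are $\varepsilon_j$-close. Combined with boundedness of $\mathcal{X}$, this forces the hyperplane arrangements defining $\mathcal{X}^*_m$ and $\bar{\mathcal{X}}_{\bar{m}}$ to carve out the same positive-measure region.

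\textbf{First step: exploit the partition structure.} Since $\{\mathcal{V}_{2,j}\}_{j\in[k^*_2]}$ partitions $[k_2]$ and $\{\bar{m}_1,\ldots,\bar{m}_{\bar{K}}\}=\mathcal{V}_{2,m_1}\cup\cdots\cup\mathcal{V}_{2,m_K}$ by hypothesis, the complement satisfies $\{\bar{m}_{\bar{K}+1},\ldots,\bar{m}_{k_2}\}=\mathcal{V}_{2,m_{K+1}}\cup\cdots\cup\mathcal{V}_{2,m_{k^*_2}}$. In particular, each $\mathcal{V}_{2,m_s}$ in the relevant range is nonempty (otherwise the cardinality count $\bar{K}=\sum_{s=1}^{K}|\mathcal{V}_{2,m_s}|$ fails to match).

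\textbf{Second step: forward inclusion $\mathcal{X}^*_m\subseteq\bar{\mathcal{X}}_{\bar{m}}$.} Take $x\in\mathcal{X}^*_m$ and pick any $i\in\{\bar{m}_1,\ldots,\bar{m}_{\bar{K}}\}$ and $i'\in\{\bar{m}_{\bar{K}+1},\ldots,\bar{m}_{k_2}\}$. By Step 1, $i\in\mathcal{V}_{2,m_s}$ for some $s\in[K]$ and $i'\in\mathcal{V}_{2,m_t}$ for some $t\in\{K+1,\ldots,k^*_2\}$. I would then decompose
\begin{align*}
(\beta_{1i}-\beta_{1i'})^{\top}x=(\beta^*_{1m_s}-\beta^*_{1m_t})^{\top}x+(\beta_{1i}-\beta^*_{1m_s})^{\top}x-(\beta_{1i'}-\beta^*_{1m_t})^{\top}x.
\end{align*}
The first term is nonnegative because $x\in\mathcal{X}^*_m$, while the last two are bounded in absolute value by $\varepsilon_{m_s}R$ and $\varepsilon_{m_t}R$ via Cauchy-Schwarz, where $R:=\sup_{x\in\mathcal{X}}\|x\|<\infty$. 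Choosing the $\varepsilon_j$ small enough relative to the minimal strictly positive gap on the interior of $\mathcal{X}^*_m$ yields $(\beta_{1i}-\beta_{1i'})^{\top}x\geq 0$, hence $x\in\bar{\mathcal{X}}_{\bar{m}}$. The reverse inclusion is symmetric: for $x\in\bar{\mathcal{X}}_{\bar{m}}$ and any $s\in[K]$, $t\in\{K+1,\ldots,k^*_2\}$, pick representatives $i\in\mathcal{V}_{2,m_s}$ and $i'\in\mathcal{V}_{2,m_t}$ (nonempty by Step 1) and apply the same three-term decomposition in reverse.

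\textbf{Main obstacle.} The argument sketched above directly compares strict-inequality interiors, but boundary points of $\mathcal{X}^*_m$ where $(\beta^*_{1m_s}-\beta^*_{1m_t})^{\top}x=0$ become tight can in principle be tipped either way by the $\varepsilon$-perturbation, so the exact set equality must be understood up to a finite union of lower-dimensional hyperplanes of Lebesgue measure zero. This is the natural reading for the subsequent density analysis, which integrates against Lebesgue measure on $\mathcal{X}$. The hypothesis that $\mathcal{X}^*_m$ has positive measure guarantees a nonempty interior so that the two regions coincide on their positive-measure parts; picking $\varepsilon_j$ uniformly small for all $j\in[k^*_2]$ (which is possible by compactness of $\Theta_2$) then makes the perturbation bounds in Step 2 hold simultaneously across all pairs $(i,i')$ and all regions $\mathcal{X}^*_m$.
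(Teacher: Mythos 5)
Your inclusion mechanics coincide with the paper's: the same three-term decomposition of $(\beta_{1i}-\beta_{1i'})^{\top}x$ into the ground-truth gap plus two perturbation terms bounded by $\varepsilon_j\,\sup_{x\in\mathcal{X}}\|x\|$, and a symmetric argument for the reverse inclusion. What is missing is the step that makes the stated conclusion — exact set equality under a single choice of $\varepsilon_j$ — reachable: a \emph{uniform} strictly positive margin valid on all of $\mathcal{X}^*_m$. The paper writes $\varepsilon_j=N_j\eta$, defines $c^*_m\eta:=\min\big[(\beta^*_{1j})^{\top}x-(\beta^*_{1j'})^{\top}x\big]$ over $x\in\mathcal{X}^*_m$, $j$ activated and $j'$ not, and argues $c^*_m>0$ precisely from the non-zero-measure hypothesis: if the minimum vanished, $\mathcal{X}^*_m$ would be forced into the hyperplane $\{x:(\beta^*_{1m_K}-\beta^*_{1m_{K+1}})^{\top}x=0\}$, a measure-zero set since the gating parameters are distinct and $X$ has a continuous distribution. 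With this margin it takes $N_j\leq c^*_m/(2B)$ (where $\|x\|\leq B$) to get $\mathcal{X}^*_m\subseteq\bar{\mathcal{X}}_{\bar m}$ pointwise on the whole region, introduces an analogous margin $c_m$ for the fitted arrangement to get $\bar{\mathcal{X}}_{\bar m}\subseteq\mathcal{X}^*_m$, and concludes with $N_j=\frac{1}{2B}\min\{c^*_m,c_m\}$. This is where the positive-measure assumption actually does work; in your proposal it only appears as an appeal to "nonempty interior".

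Because you never establish such a margin, your Step 2 does not close as written: the infimum of the ground-truth gap over the interior of $\mathcal{X}^*_m$ can still be zero (it degenerates as $x$ approaches the separating hyperplanes), so there is no single $\varepsilon_j>0$ "small relative to the minimal strictly positive gap on the interior" in the sense you invoke. Your fallback in the "main obstacle" paragraph — equality up to exceptional sets — is a legitimate worry about boundary behavior, but it proves strictly less than the lemma asserts and also misdescribes the exceptional set: for a fixed $\varepsilon_j>0$ the points whose assignment can flip form a slab of width of order $\varepsilon_j$ around each hyperplane, a set of small but positive Lebesgue measure, not a finite union of hyperplanes. Two smaller points: the reverse inclusion needs its own margin $c_m$ computed from the fitted arrangement (it is not literally the same constant), and the uniformity of the thresholds comes from these margins together with the bound on $\|x\|$, not from compactness of $\Theta_2$. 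To prove the lemma as stated you need the paper's margin argument (or an equivalent quantitative separation step) before the perturbation bound.
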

\noindent
Suppose that the expert estimation $h(X,\hat{\eta}^n_{i})$ converges to the ground-truth expert $h(X,\eta^*_j)$ for some $j\in[k^*_2]$ and $i\in\mathcal{V}_{2,j}$. Then, Lemma~\ref{lemma:partition_input_space} reveals that for almost surely $X$, if the expert $h(X,\eta^*_j)$ is activated in the true density, then the expert $h(X,\hat{\eta}^n_i)$ is also activated in the density estimation. Mathematically, we have $\topK((\boj)^{\top}X;\bzj)=(\boj)^{\top}X+\bzj$ occurs holds if and only if $\topKbar((\hat{\beta}^n_{1i})^{\top}X;\hat{\beta}^n_{0i})=(\hat{\beta}^n_{1i})^{\top}X+\hat{\beta}^n_{0i}$.

\vspace{0.5 em}
\noindent
\textbf{Density estimation convergence.} Given the above input partition w.r.t the density estimation, we exhibit in Proposition~\ref{prop:K_bar_bound} an interesting phenomenon that the density estimation $\bar{s}_{\widehat{G}^n_1,\widehat{G}^n_2}$ converges to the true density $s_{G^*_1,G^*_2}$ under the Total Variation distance only if the number of routed experts activated in the density estimation is bounded below as $\bar{K}\geq\max_{\{m_1,m_2,\ldots,m_K\}\subset[k^*_2]}\sum_{j=1}^{K}|\mathcal{V}_{2,m_j}|$. 
\begin{proposition}
    \label{prop:K_bar_bound}
    If $\bar{K}<\max_{\{m_1,m_2,\ldots,m_K\}\subset[k^*_2]}\sum_{j=1}^{K}|\mathcal{V}_{2,m_j}|$, then the following holds:
    \begin{align*}
        \inf_{(G_1,G_2)\in\mathcal{G}_{k_1,k_2}(\Theta)}\mathbb{E}_X[V(\bar{s}_{G_1,G_2}(\cdot|X),s_{G^*_1,G^*_2}(\cdot|X))]>0.
    \end{align*}
\end{proposition}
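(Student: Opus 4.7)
The plan is a proof by contradiction. Suppose
$\inf_{(G_1, G_2) \in \mathcal{G}_{k_1, k_2}(\Theta)} \mathbb{E}_X[V(\bar{s}_{G_1, G_2}(\cdot|X), s_{G^*_1, G^*_2}(\cdot|X))] = 0$
and take a minimizing sequence $(G^n_1, G^n_2)$. The compactness of $\Theta$ and Fatou's lemma (applied as in the global-part argument in Appendix~\ref{appendix:strongly_identifiable_experts}) produce a limit $(G_1, G_2) \in \mathcal{G}_{k_1, k_2}(\Theta)$ satisfying $\bar{s}_{G_1, G_2}(y|x) = s_{G^*_1, G^*_2}(y|x)$ for almost every $(x, y)$. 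The task is then to derive a contradiction from this pointwise density equality alone.

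Fix a subset $\{m^*_1, \ldots, m^*_K\} \subset [k^*_2]$ attaining the maximum, so that $\sum_{j=1}^K |\mathcal{V}_{2, m^*_j}(G_2)| > \bar{K}$, and consider the input region $\mathcal{X}^*_{m^*}$. I would first verify that this region has positive Lebesgue measure by invoking Assumption~(A.2) to obtain a non-trivial arrangement of the hyperplanes $\{(\beta^*_{1i} - \beta^*_{1i'})^\top x = 0\}_{i \neq i'}$, choosing $\{m^*_1, \ldots, m^*_K\}$ among realizable orderings (the maximizer may be refined to such a realizable choice since removing an unrealizable ordering from the candidate set does not decrease the max below $\bar{K}$ under the stated strict inequality). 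For $X$ in the open interior of $\mathcal{X}^*_{m^*}$, the fitted logits $(\beta_{1i})^\top X + \beta_{0i}$ corresponding to indices $i \in \bigcup_{j=1}^K \mathcal{V}_{2, m^*_j}(G_2)$ strictly exceed those outside the union: this follows because each $\beta_{1i}$ in a Voronoi cell $\mathcal{V}_{2, m^*_j}(G_2)$ is closest to $\beta^*_{1m^*_j}$, the parameter space is compact so the offsets $\beta_{0i}$ are bounded, and $\beta^*_{1m^*_j}$ is strictly top-$K$ on the interior of $\mathcal{X}^*_{m^*}$. Hence the top-$\bar{K}$ active set $A_X$ of $\bar{s}_{G_1, G_2}(\cdot|X)$ lies inside $\bigcup_{j=1}^K \mathcal{V}_{2, m^*_j}(G_2)$; since $|A_X| = \bar{K}$ is strictly less than the size of this union, at least one index $i_0$ in the union is \emph{not} activated at $X$.

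The final step is to convert this missing-atom phenomenon into a density mismatch. Applying Gaussian mixture identifiability to $\bar{s}_{G_1, G_2}(\cdot|X) = s_{G^*_1, G^*_2}(\cdot|X)$ for $X$ in the interior of $\mathcal{X}^*_{m^*}$, one groups the active routed components on both sides by their $(\eta, \nu)$-values: the grouped aggregate gating weights of $\bar{s}_{G_1, G_2}$ must reproduce the true softmax weights on the $K$ components $\{(\eta^*_{m^*_j}, \nu^*_{m^*_j}) : j \in [K]\}$. The excluded index $i_0$ belongs to some $\mathcal{V}_{2, m^*_{j_0}}(G_2)$, so $(\eta_{i_0}, \nu_{i_0})$ is closest to the activated ground-truth parameter $(\eta^*_{m^*_{j_0}}, \nu^*_{m^*_{j_0}})$. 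The principal obstacle is to show that the absence of $i_0$ from $A_X$ forces a strict deficit in the aggregate fitted weight assigned to $(\eta^*_{m^*_{j_0}}, \nu^*_{m^*_{j_0}})$ that cannot be compensated by rebalancing the remaining active atoms. I would handle this by an argument in the spirit of Stages~2--3 of Appendix~\ref{appendix:strongly_identifiable_experts}: rescaling the difference of densities, extracting a non-vanishing linear combination of terms that are linearly independent in $X$, and leveraging the distinctness of the $\eta^*_j$ from Assumption~(A.3) together with the strict positivity and real-analyticity of the softmax weights. Once this strict deficit is exhibited, the equality $\bar{s}_{G_1, G_2} = s_{G^*_1, G^*_2}$ fails on a positive-measure subset of $\mathcal{X}^*_{m^*}$, yielding the desired contradiction.
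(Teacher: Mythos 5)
Your overall strategy---assume a minimizing sequence, pass to a limit via compactness and Fatou, and convert the excess cardinality of $\bigcup_{j}\mathcal{V}_{2,m^*_j}$ over $\bar{K}$ into a density mismatch---matches the intent of the paper's proof. But the paper structures the argument as a \emph{local part} over $\{\mathcal{D}_5\leq\varepsilon\}$ plus a \emph{global part} over $\{\mathcal{D}_5>\varepsilon'\}$ (mirroring the theorem proofs), whereas you collapse everything into a single limit argument. That is exactly where the main gap opens. In your step asserting that the fitted logits $(\beta_{1i})^{\top}X+\beta_{0i}$ for $i\in\bigcup_j\mathcal{V}_{2,m^*_j}(G_2)$ strictly dominate those outside the union, you justify the inequality by saying that each $\beta_{1i}$ in cell $\mathcal{V}_{2,m^*_j}$ is \emph{closest} to $\beta^*_{1m^*_j}$. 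Voronoi-cell membership only says that $(\beta_{1i},\eta_i,\nu_i)$ is nearer to $(\beta^*_{1m^*_j},\eta^*_{m^*_j},\nu^*_{m^*_j})$ than to the other ground-truth atoms; it gives no quantitative bound on $\|\beta_{1i}-\beta^*_{1m^*_j}\|$, and the offsets $\beta_{0i}$ do not even enter the Voronoi-cell definition~\eqref{eq:Voronoi_cells}. Consequently the top-$\bar{K}$ set of $\bar{s}_{G_1,G_2}(\cdot|X)$ need not lie inside $\bigcup_j\mathcal{V}_{2,m^*_j}$ for an arbitrary limit $(G_1,G_2)$. To obtain the required closeness you must either restrict to the regime $\mathcal{D}_5\to 0$ (which is precisely what the paper's local part does, since the $\mathcal{D}_5$-loss directly bounds the intra-cell parameter distances and thereby activates Lemma~\ref{lemma:partition_input_space}), or first invoke Gaussian-mixture identifiability region-by-region on the a.s. density equality $\bar{s}_{G_1,G_2}=s_{G^*_1,G^*_2}$ to pin the limiting atoms to ground-truth locations. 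Your proposal does neither explicitly.

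Beyond that, the paper's local part cites Lemma~\ref{lemma:partition_input_space} to align the estimated partition $\bar{\mathcal{X}}_{\bar{m}}$ with the true partition $\mathcal{X}^*_m$, after which the contradiction is a one-line cardinality observation (no $\bar{m}\in[\bar{M}]$ of size $\bar{K}$ can equal a union of size $>\bar{K}$); the underlying density-mismatch phenomenon that your final step tries to spell out is left implicit in the paper. Your attempt to make that implicit step explicit is a legitimate refinement, but as written it remains a sketch---you would need to instantiate the Stage-2/Stage-3 machinery from Appendix~\ref{appendix:strongly_identifiable_experts} (rescaling, linear-independence of derivatives in $X$, Fatou) rather than merely gesture at it.
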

\noindent
Proof of Proposition~\ref{prop:K_bar_bound} will be provided in Appendix~\ref{appendix:K_bar_bound}. Following from the result of Proposition~\ref{prop:K_bar_bound}, we will assume $\max_{\{m_1,m_2,\ldots,m_K\}\subset[k^*_2]}\sum_{j=1}^{K}|\mathcal{V}_{2,m_j}|\leq \bar{K}\leq k_2$ in the rest of this appendix unless stating otherwise to ensure the convergence of density estimation. Next, by combining the above results and the arguments used to prove Proposition~\ref{prop:density_rate}, we arrive at the following density estimation rate.
 \begin{proposition}
    \label{prop:sparse_density_rate}
    The density estimation $\bar{s}_{\widehat{G}^n_1,\widehat{G}^n_2}(Y|X)$ converges to the true density $s_{G^*_1,G^*_2}(Y|X)$ at the following rate:
    \begin{align*}
        \bbE_X[V(\bar{s}_{\widehat{G}^n_1,\widehat{G}^n_2}(\cdot|X),s_{G^*_1,G^*_2}(\cdot|X))]=\mathcal{O}_P([\log(n)/n]^{\frac{1}{2}}).
    \end{align*}
\end{proposition}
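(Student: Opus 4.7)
The plan is to mirror the M-estimation argument used in Appendix~\ref{appendix:density_rate} for Proposition~\ref{prop:density_rate}. That is, I will invoke Lemma~\ref{lemma:vandegeer} with the class $\bar{\mathcal{S}}_{k_1,k_2}(\Theta):=\{\bar{s}_{G_1,G_2}(Y|X):(G_1,G_2)\in\mathcal{G}_{k_1,k_2}(\Theta)\}$ in place of $\mathcal{F}_{k_1,k_2}(\Theta)$, take $\Psi(\delta):=\delta\sqrt{\log(1/\delta)}$, and choose the sequence $\delta_n:=\sqrt{\log(n)/n}$. Once the bracketing entropy bound
$H_B(\epsilon,\bar{\mathcal{S}}_{k_1,k_2}(\Theta),\|\cdot\|_2)\lesssim \log(1/\epsilon)$
is established for all $0<\epsilon<1/2$, exactly the same computation as in the main proof of Proposition~\ref{prop:density_rate} gives
$\mathbb{P}(\mathbb{E}_X[h(\bar{s}_{\widehat{G}^n_1,\widehat{G}^n_2}(\cdot|X),s_{G^*_1,G^*_2}(\cdot|X))]>C\sqrt{\log(n)/n})\lesssim \exp(-c\log n)$,
and the claim follows from $V\leq h$.

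The work therefore reduces to the bracketing entropy bound, which I would obtain in analogy with Lemma~\ref{lemma:bracketing_entropy_bound} by building a $\lambda$-cover of the parameter product space with $\lambda\leq \epsilon$ and cardinality $\mathcal{O}(\lambda^{-(d_1+2)k_1-(d_2+d+2)k_2})$. The Gaussian envelope $E(Y|X):=\tfrac12 E_1(Y|X)+\tfrac12 E_2(Y|X)$ constructed in Lemma~\ref{lemma:bracketing_entropy_bound} still dominates $\bar{s}_{G_1,G_2}(Y|X)$ uniformly in $(G_1,G_2)$, because the normalized top-$\bar{K}$ softmax weights are probability weights bounded by $1$. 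The shared-expert part is identical to the dense case. For the routed-expert part, I would decompose the input space using the partition $\{\bar{\mathcal{X}}_{\bar{m}}\}_{\bar{m}=1}^{\bar{M}}$ associated with a parameter choice and note that, restricted to each region $\bar{\mathcal{X}}_{\bar{m}}$, the routed-expert mixture coincides with an ordinary softmax gating Gaussian MoE over the $\bar{K}$ selected indices — a smooth, Lipschitz function of $(\beta_{0i},\beta_{1i},\eta_i,\nu_i)_{i\in\{\bar m_1,\ldots,\bar m_{\bar K}\}}$. This yields the same $\lambda$-Lipschitz bound on each region as in the softmax case.

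The genuine obstacle is that the Top-$\bar{K}$ operation is discontinuous across the boundaries of the cells $\bar{\mathcal{X}}_{\bar{m}}$: two parameter choices within $\lambda$ of each other may classify a given $x$ into different cells, so a uniform $L^{\infty}(X,Y)$ bound of order $\lambda$ is not available. I would circumvent this by working directly in $L^2$ brackets rather than $L^{\infty}$. Specifically, for any fitted center $(G_1,G_2)$ I would build the bracket
\begin{align*}
p^L(Y|X)&:=\max\{\bar{s}_{G_1,G_2}(Y|X)-\lambda-\Delta(X),0\},\\
p^U(Y|X)&:=\min\{\bar{s}_{G_1,G_2}(Y|X)+\lambda+\Delta(X),E(Y|X)\},
\end{align*}
where $\Delta(X):=\mathbf{1}\{X\in\mathcal{B}_\lambda\}\cdot E(Y|X)$ and $\mathcal{B}_\lambda:=\bigcup_{i<i'}\{x:|(\beta_{1i}-\beta_{1i'})^\top x-\text{gap}|\leq C\lambda\}$ is the $O(\lambda)$ boundary strip where the active set can differ between the center and any $(\overline{G}_1,\overline{G}_2)$ in a $\lambda$-ball. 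Away from $\mathcal{B}_\lambda$, Lemma~\ref{lemma:partition_input_space} guarantees that the active index set is stable and the smooth softmax calculation from Lemma~\ref{lemma:bracketing_entropy_bound} applies, so the bracket width is $O(\lambda)$. Inside $\mathcal{B}_\lambda$, which has $\mathbb{P}_X$-measure of order $\lambda$ by an anti-concentration/bounded-density argument on $\mathcal{X}$ (combined with the smoothness of the induced hyperplane arrangement), the contribution to $\|p^U-p^L\|_2^2$ is bounded by $\int_{\mathcal{B}_\lambda}E(Y|X)^2\,dX\,dY\lesssim \lambda$, i.e.\ an $L^2$ bracket width of order $\sqrt\lambda$. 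Re-parametrizing $\lambda\mapsto \epsilon^2$ in the cover therefore yields $H_B(\epsilon,\bar{\mathcal{S}}_{k_1,k_2}(\Theta),\|\cdot\|_2)\lesssim \log(1/\epsilon)$, which completes the argument. The delicate step that will require the most care is showing the $\mathcal{O}(\lambda)$ bound on $\mathbb{P}_X(\mathcal{B}_\lambda)$ uniformly over parameter centers; I expect to obtain this from the assumption that $\mathcal{X}$ is bounded together with a distributional assumption (or worst-case $d$-dimensional Lebesgue bound) on $X$, and a union bound over the $\binom{k_2}{2}$ candidate boundary hyperplanes.
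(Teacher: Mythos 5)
The paper does not actually spell out a proof of Proposition~\ref{prop:sparse_density_rate}: it is stated as following ``by combining the above results and the arguments used to prove Proposition~\ref{prop:density_rate}.'' You are right that this is not automatic, and you have put your finger on exactly the place where the argument of Lemma~\ref{lemma:bracketing_entropy_bound} breaks: the $L^\infty$ estimate $\|f_{G_1,G_2}-f_{\overline{G}_1,\overline{G}_2}\|_\infty\lesssim\lambda$, which is the engine of that lemma, relies on the smooth (Lipschitz) dependence of the softmax gating on the parameters uniformly in $x$, and the Top-$\bar K$ selection destroys this uniformity. Recasting the bracket bound in $L^2$ and isolating the boundary strips where the active index set can flip is a reasonable repair and is indeed the extra idea one needs here; in that sense your proposal goes beyond what the paper writes down.

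However, the fix as stated has a real gap in the step you yourself flag as delicate. The strip associated to a pair $(i,i')$ has the form $\{x:|(\beta_{1i}-\beta_{1i'})^\top x|\le C\lambda\}\cap\mathcal{X}$, whose width (hence its Lebesgue or $\mathbb{P}_X$ measure) scales like $\lambda/\|\beta_{1i}-\beta_{1i'}\|$, not like $\lambda$. The cover ranges over all of $\mathcal{G}_{k_2}(\Theta_2)$, which contains centers with $\beta_{1i}$ arbitrarily close to, or equal to, $\beta_{1i'}$; for such centers the boundary set has measure $\Theta(1)$ while the density jump across the flip, $|w_i\pi(y|h_2(x,\eta_i),\nu_i)-w_{i'}\pi(y|h_2(x,\eta_{i'}),\nu_{i'})|$, can itself be $\Theta(1)$ because $\eta_i,\eta_{i'}$ and $\beta_{0i},\beta_{0i'}$ are unconstrained. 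So the union bound over the $\binom{k_2}{2}$ pairs cannot deliver a uniform $\mathcal{O}(\lambda)$ bound; it only does so if the $\beta_{1i}$ are pairwise separated. Repairing this would require either an explicit separation constraint on $\Theta_2$, a localization of the bracketing bound to a Hellinger ball around the (pairwise-distinct) truth, or a refinement of the cover that treats near-degenerate routing parameters by merging the corresponding experts — none of which is contained in your sketch. You should also note that even the non-degenerate strip-measure bound implicitly assumes a bounded density for $X$ (or works with Lebesgue reference measure on the bounded $\mathcal{X}$), which is a mild but genuine addition to what the paper assumes.
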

\noindent
\textbf{Voronoi loss.} In align with the above input partition w.r.t the true density, we need to modify the formulation of the Voronoi loss previously defined in equation~\eqref{eq:loss_1} as follows:
\begin{align*}
    &\mathcal{D}_{5}((G_1,G_2),(G^*_1,G^*_2)):=\max_{\{m_1,\ldots,m_K\}\subset[k^*_2]}\Bigg\{\sum_{j=1}^{k^*_1}\Big|\sum_{i\in\mathcal{V}_{1,j}}\omega_{i}-\oj\Big|+\sum_{j=1}^{K}\Big|\sum_{i\in\mathcal{V}_{2,m_j}}\exp(\beta_{0i})-\exp(\bzj)\Big|\nonumber\\
    &+\sum_{\substack{j\in[k^*_1],\\|\mathcal{V}_{1,j}|=1}}\sum_{i\in\mathcal{V}_{1,j}}\omega_{i}(\|\Delta\kappa_{ij}\|+|\Delta\tau_{ij}|)+\sum_{\substack{j\in[K],\\ |\mathcal{V}_{2,m_j}|=1}}\sum_{i\in\mathcal{V}_{2,m_j}}\exp(\beta_{0i})(\|\Delta\beta_{1im_j}\|+\|\Delta\eta_{im_j}\|+|\Delta\nu_{im_j}|)\nonumber\\
    &+\sum_{\substack{j\in[k^*_1],\\|\mathcal{V}_{1,j}|>1}}\sum_{i\in\mathcal{V}_{1,j}}\omega_{i}(\|\Delta\kappa_{ij}\|^2+|\Delta\tau_{ij}|^2)+\sum_{\substack{j\in[K],\\ |\mathcal{V}_{2,m_j}|>1}}\sum_{i\in\mathcal{V}_{2,m_j}}\exp(\beta_{0i})(\|\Delta\beta_{1im_j}\|^2+\|\Delta\eta_{im_j}\|^2+|\Delta\nu_{im_j}|^2)\Bigg\}.
\end{align*}
The maximum operator in the above formulation helps capture the convergence behavior of the parameter estimation in different input regions partitioned w.r.t the true density. Given the loss function $\mathcal{D}_5(G,G_*)$, it is sufficient to establish parameter and expert estimation rates in the following theorem:
\begin{theorem}
    \label{theorem:strongly_identifiable_experts_sparse}
    Suppose that the expert functions $h_1$ and $h_2$ are strongly identifiable. Then, the lower bound $\bbE_X[V(\bar{s}_{G_1,G_2}(\cdot|X),s_{G^*_1,G^*_2}(\cdot|X))]\gtrsim \mathcal{D}_5((G_1,G_2),(G^*_1,G^*_2))$ holds for any $(G_1,G_2)\in\mathcal{G}_{k_1,k_2}(\Theta)$. As a consequence, we have 
    \begin{align*}
        \mathcal{D}_5(\widehat{G}^n_1,\widehat{G}^n_2),(G^*_1,G^*_2))=\mathcal{O}_P([\log(n)/n]^{\frac{1}{2}}).
    \end{align*}
\end{theorem}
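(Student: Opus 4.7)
The plan is to adapt the three-stage framework of Theorem~\ref{theorem:strongly_identifiable_experts} (density decomposition, non-vanishing Taylor coefficients, and Fatou's lemma contradiction) to the sparse gating regime by localizing the analysis to each input region $\mathcal{X}^*_m$ of the partition induced by the true gating. As in Appendix~\ref{appendix:strongly_identifiable_experts}, I would split the desired Total-Variation lower bound into a local part and a global part, and then invoke Proposition~\ref{prop:sparse_density_rate} to translate the bound on $\mathcal{D}_5$ into the stated $\widetilde{\mathcal{O}}_P(n^{-1/2})$ rate. Since $\mathcal{D}_5$ is a maximum over the $\binom{k^*_2}{K}$ choices of activated routed experts, it suffices to show, for every fixed subset $\{m_1,\ldots,m_K\}\subset[k^*_2]$ whose associated region $\mathcal{X}^*_m$ has positive probability, a lower bound of the form
\begin{align*}
\int_{\mathcal{X}^*_m} V(\bar{s}_{G_1,G_2}(\cdot|x),s_{G^*_1,G^*_2}(\cdot|x))\,dP_X(x) \gtrsim \mathcal{D}_5^{(m)}((G_1,G_2),(G^*_1,G^*_2)),
\end{align*}
where $\mathcal{D}_5^{(m)}$ is the summand inside the max corresponding to subset $m$; then taking the maximum over $m$ yields the claim.

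For the local part, I would assume by contradiction a sequence $(G^n_1,G^n_2)$ with $\mathcal{D}_{5n}\to 0$ but $\bbE_X[V]/\mathcal{D}_{5n}\to 0$, and by pigeonhole pass to a subsequence on which a single subset $\{m_1,\ldots,m_K\}$ (WLOG $\{1,\ldots,K\}$) attains the max defining $\mathcal{D}_5$. Restricting to $x\in\mathcal{X}^*_m$, the true conditional density reduces to the dense form~\eqref{eq:sparse_density_activated} involving only the $K$ activated routed experts and the shared experts. By Lemma~\ref{lemma:partition_input_space}, for sufficiently large $n$ the estimated gating activates exactly the union $\mathcal{V}_{2,m_1}\cup\cdots\cup\mathcal{V}_{2,m_K}$ on the same region $\bar{\mathcal{X}}_{\bar m}=\mathcal{X}^*_m$, so $\bar{s}_{G^n_1,G^n_2}(y|x)$ also reduces to a dense softmax MoE on this region, with gating normalization over the same index set.

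Once restricted to $\mathcal{X}^*_m$, the analysis becomes a direct transcription of Stage~1 of Appendix~\ref{appendix:strongly_identifiable_experts}: decompose $\bar{s}_{G^n_1,G^n_2}(Y|X)-s_{G^*_1,G^*_2}(Y|X)$ on this region into the shared expert contribution (identical decomposition using Taylor expansions of $\pi(Y|h_1(X,\kappa),\tau)$) and the routed expert contribution summed over $j\in\{1,\ldots,K\}$ using Taylor expansions of $F(Y|X;\beta_1,\eta,\nu)=\exp(\beta_1^\top X)\pi(Y|h_2(X,\eta),\nu)$ and $H(Y|X;\beta_1)=\exp(\beta_1^\top X)p_{G^n_2}(Y|X)$. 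Stages~2 and 3 then proceed verbatim after replacing the unrestricted integrals in Fatou's lemma with integrals over $\mathcal{X}^*_m$, which has positive probability by assumption; strong identifiability of $h_1$ and $h_2$ forces all normalized Taylor coefficients to vanish, contradicting the normalization from Stage~2. The global part follows as in Appendix~\ref{appendix:strongly_identifiable_experts} via compactness and an identifiability argument analogous to Proposition~\ref{prop:identifiability}, applied region by region; this is where I would also need an identifiability lemma for the sparse model, obtainable by noting that on each $\mathcal{X}^*_m$ the model is a standard softmax MoE.

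The main obstacle is the matching of activated indices between the estimated and true models across regions: without Lemma~\ref{lemma:partition_input_space}, the decomposition on $\mathcal{X}^*_m$ would not be a clean dense MoE difference, because $\topKbar$ could activate experts whose Voronoi cells converge to non-activated ground-truth experts. The hypothesis $\bar{K}\geq\max_{\{m_1,\ldots,m_K\}}\sum_{j=1}^K|\mathcal{V}_{2,m_j}|$, combined with the lemma, precisely rules this out and allows the region-wise dense analysis to go through. A secondary technical point is the handling of the maximum: one must ensure that the lower bound on each $\mathcal{X}^*_m$ scales uniformly with $P(X\in\mathcal{X}^*_m)$, which in turn is bounded below by a constant independent of $n$ because the ground-truth gating parameters are fixed. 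Once both parts of the lower bound are established, combining with Proposition~\ref{prop:sparse_density_rate} yields the parametric-up-to-log rate on $\mathcal{D}_5$ stated in the theorem.
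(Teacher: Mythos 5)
Your proposal is correct and follows essentially the same route as the paper: localize to the input region $\mathcal{X}^*_m$ associated with the subset attaining the maximum in $\mathcal{D}_5$, invoke Lemma~\ref{lemma:partition_input_space} (together with the requirement on $\bar{K}$ from Proposition~\ref{prop:K_bar_bound}) to match the estimated gating region to $\mathcal{X}^*_m$ so that both $\bar{s}_{G^n_1,G^n_2}$ and $s_{G^*_1,G^*_2}$ reduce to dense softmax MoE forms there, rerun the three-stage argument of Theorem~\ref{theorem:strongly_identifiable_experts} with the Fatou integral restricted to $\mathcal{X}^*_m$, and finish via the local/global split and Proposition~\ref{prop:sparse_density_rate}. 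Your added remark about the positive-measure lower bound on $P(X\in\mathcal{X}^*_m)$ is a sound observation that the paper leaves implicit.
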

\begin{proof}[Proof of Theorem~\ref{theorem:strongly_identifiable_experts_sparse}]
    Analogous to Appendix~\ref{appendix:strongly_identifiable_experts}, it suffices to derive the local part
\begin{align}
    \label{eq:local_part_sparse}
    \lim_{\varepsilon\to0}\inf_{(G_1,G_2)\in\mathcal{G}_{k_1,k_2}(\Theta):\mathcal{D}_5((G_1,G_2),(G^*_1,G^*_2))\leq\varepsilon}\dfrac{\bbE_X[V(\bar{s}_{G_1,G_2}(\cdot|X),s_{G^*_1,G^*_2}(\cdot|X))]}{\mathcal{D}_5((G_1,G_2),(G^*_1,G^*_2))}>0,
\end{align}
and the global part
\begin{align}
    \label{eq:global_part_sparse}
    \inf_{(G_1,G_2)\in\mathcal{G}_{k_1,k_2}(\Theta):\mathcal{D}_5((G_1,G_2),(G^*_1,G^*_2))>\varepsilon'}\dfrac{\bbE_X[V(\bar{s}_{G_1,G_2}(\cdot|X),s_{G^*_1,G^*_2}(\cdot|X))]}{\mathcal{D}_5((G_1,G_2),(G^*_1,G^*_2))}>0.
\end{align}
in this appendix. However, since the global part~\eqref{eq:global_part_linear} can be established in the same fashion as in Appendix~\ref{appendix:strongly_identifiable_experts}, its proof is omitted here. Thus, we will focus on showing only the local part~\eqref{eq:local_part_sparse}. Suppose that the local part does not hold. Then, we can find a sequence of mixing measure pairs $(G^n_1,G^n_2)$ of the form $G^n_1:=\sum_{i=1}^{k^n_1}\oin\delta_{(\koin,\kzin,\tin)}$, $G^n_2:=\sum_{i=1}^{k^n_2}\exp(\bzin)\delta_{(\boin,\eoin,\ezin,\nuin)}$ for $n\in\mathbb{N}$ satisfying $\mathcal{D}_{5n}:=\mathcal{D}_{5}((G^n_1,G^n_2),(G^*_1,G^*_2))\to0$ and
\begin{align}
    \label{eq:expectation_zero_sparse}
    \bbE_X[V(\bar{s}_{G^n_1,G^n_2}(\cdot|X),s_{G^*_1,G^*_2}(\cdot|X))]/\mathcal{D}_{5n}\to0,
\end{align}
as $n\to\infty$. Here, we may assume WLOG that the number of shared experts and routed experts $k^n_1$, $k^n_2$ and Voronoi cells $\mathcal{V}_{1,j}=\mathcal{V}_{1,j}(G^n_1)$, $\mathcal{V}_{2,j}=\mathcal{V}_{2,j}(G^n_2)$ do not change with the sample size $n$. WLOG, we may assume that the Voronoi loss $\mathcal{D}_{5n}$ is reduced to
\begin{align}
    \label{eq:loss_5n}
    &\mathcal{D}_{5n}=\sum_{j=1}^{k^*_1}\Big|\sum_{i\in\mathcal{V}_{1,j}}\oin-\oj\Big|+\sum_{j=1}^{K}\Big|\sum_{i\in\mathcal{V}_{2,j}}\exp(\bzin)-\exp(\bzj)\Big|\nonumber\\
    &+\sum_{\substack{j\in[k^*_1],\\|\mathcal{V}_{1,j}|=1}}\sum_{i\in\mathcal{V}_{1,j}}\oin(\|\dkijn\|+|\dtijn|)+\sum_{\substack{j\in[K],\\|\mathcal{V}_{2,j}|=1}}\sum_{i\in\mathcal{V}_{2,j}}\exp(\bzin)(\|\dboijn\|+\|\deijn\|+|\dnuijn|)\nonumber\\
    &+\sum_{\substack{j\in[k^*_1],\\|\mathcal{V}_{1,j}|>1}}\sum_{i\in\mathcal{V}_{1,j}}\oin(\|\dkijn\|^2+|\dtijn|^2)+\sum_{\substack{j\in[K],\\|\mathcal{V}_{2,j}|>1}}\sum_{i\in\mathcal{V}_{2,j}}\exp(\bzin)(\|\dboijn\|^2+\|\deijn\|^2+|\dnuijn|^2).
\end{align}
Recall that we partition the input space w.r.t the true density into $M=\binom{k^*_2}{K}$ regions. For each $m\in[M]$, we denote $\{m_1,m_2,\ldots,m_K\}$ as a subset of the index set $[k^*_2]$ and $\{m_{K+1},\ldots,m_{k^*_2}\}=[k^*_2]\setminus\{m_1,m_2,\ldots,m_K\}$. Then, the $m$-th region is given by
\begin{align*}
    \mathcal{X}^*_{m}:=\Big\{x\in\mathcal{X}:(\beta^*_{1i})^{\top}x\geq(\beta^*_{1i'})^{\top}x, \ \forall i\in\{m_1,m_2,\ldots,m_{K}\},i'\in\{m_{K+1},\ldots,m_{k^*_2}\}\Big\},
\end{align*}
for any $m\in[M]$. Let $\bar{K}\in\mathbb{N}$ such that $\max\{\sum_{j=1}^{K}|\mathcal{V}_{2,j}|:\{m_1,\ldots,m_K\}\subset[k^*_2]\}\leq\bar{K}\leq k_2$ and let $\bar{M}:=\binom{k_2}{\bar{K}}$. Next, for any $\bar{m}\in[\bar{M}]$, we denote $\{\bar{m}_1,\bar{m}_2,\ldots,\bar{m}_{\bar{K}}\}$ as a subset of the index set $[k_2]$ and $\{\bar{m}_{\bar{K}+1},\ldots,\bar{m}_{k_2}\}:=[k_2]\setminus\{\bar{m}_1,\bar{m}_2,\ldots,\bar{m}_{\bar{K}}\}$. Then, we partition the input space w.r.t the density estimation $s_{G^n_1,G^n_2}(Y|X)$ as $\mathcal{X}=\cup_{\bar{m}=1}^{\bar{M}}\mathcal{X}^n_{\bar{m}}$, where the $\bar{m}$-th region is defined as
\begin{align*}
    \mathcal{X}^n_{\bar{m}}:=\Big\{x\in\mathcal{X}:(\beta^n_{1i})^{\top}x\geq(\beta^n_{1i'})^{\top}x, \ \forall i\in\{\bar{m}_1,\bar{m}_2,\ldots,\bar{m}_{\bar{K}}\},i'\in\{\bar{m}_{\bar{K}+1},\ldots,\bar{m}_{k_2}\}\Big\}
\end{align*}
for any $\bar{m}\in[\bar{M}]$. Let $X\mathcal{X}^*_{m}$ for $m\in[M]$ such that $\{m_1,m_2,\ldots,m_K\}=\{1,2,\ldots,K\}$. If there does not exist $\bar{m}\in[\bar{M}]$ such that $\{\bar{m}_1,\bar{m}_2,\ldots,\bar{m}_{\bar{K}}\}=\mathcal{V}_{2,1}\cup\mathcal{V}_{2,2}\cup\ldots\cup\mathcal{V}_{2,K}$, then the ratio $\bbE_X[V(\bar{s}_{G^n_1,G^n_2}(\cdot|X),s_{G^*_1,G^*_2}(\cdot|X))]/\mathcal{D}_{5n}$ does not converge to zero, which contradicts the result in equation~\eqref{eq:expectation_zero_sparse}. Thus, we can find $\bar{m}\in[\bar{M}]$ such that $\{\bar{m}_1,\bar{m}_2,\ldots,\bar{m}_{\bar{K}}\}=\mathcal{V}_{2,1}\cup\mathcal{V}_{2,2}\cup\ldots\cup\mathcal{V}_{2,K}$.

\vspace{0.5 em}
\noindent
Since the Voronoi loss $\mathcal{D}_{5n}$ converges to zero, it follows that $\boin\to\boj$ for all $j\in[K]$ and $i\in\mathcal{V}_{2,j}$. Then, by means of Lemma~\ref{lemma:partition_input_space}, we deduce $\mathcal{X}^*_{m}=\mathcal{X}^n_{\bar{m}}$ for sufficiently large $n$, implying that $X\in\mathcal{X}^n_{\bar{m}}$. Therefore, we can represent the true density and the density estimation when the sample size $n$ is sufficiently large as follows:
\begin{align*}
    s_{G^*_1,G^*_2}(y|x):=\frac{1}{2}&\sum_{i=1}^{k^*_1}\omega^*_{i}\pi(y|h_1(x,\kappa^*_{i}),\tau^*_{i})+\frac{1}{2}\sum_{i = 1}^{K} \frac{\exp((\beta_{1i}^{*})^{\top} x + \beta_{0i}^{*})}{\sum_{j = 1}^{k^*_2} \exp((\beta_{1j}^{*})^{\top} x + \beta_{0j}^{*})}\cdot \pi(y|h_2(x,\eta^*_{i}), \nu_{i}^{*}),\\
    s_{G^n_1,G^n_2}(y|x):=\frac{1}{2}&\sum_{i=1}^{k^n_1}\omega^*_{i}\pi(y|h_1(x,\kappa^n_{i}),\tau^n_{i})+\frac{1}{2}\sum_{i = 1}^{\bar{K}} \frac{\exp((\beta_{1i}^{n})^{\top} x + \beta_{0i}^{n})}{\sum_{j = 1}^{\bar{K}} \exp((\beta_{1j}^{n})^{\top} x + \beta_{0j}^{n})}\cdot \pi(y|h_2(x,\eta^n_{i}), \nu_{i}^{n}).
\end{align*}
Given the above formulations, we can achieve the local part~\eqref{eq:local_part_sparse} by employing the same arguments used in Appendix~\ref{appendix:strongly_identifiable_experts}. Hence, the proof is completed.
\end{proof}

\subsection{Proof of Lemma~\ref{lemma:partition_input_space}}
\label{appendix:partition_input_space}
Let us consider $\varepsilon_j=N_j\eta$, where $\eta>0$ is some fixed constant, and $N_j>0$ will be chosen later. Since the input space $\mathcal{X}$ and the parameter space $\Theta$ are bounded, there exists a constant $c^*_{m}\geq 0$ such that
\begin{align}
    \label{eq:lowest_dist}
    \min_{x,j,j'}\Big[(\beta^*_{1j})^{\top}x-(\beta^*_{1j'})^{\top}x\Big]=c^*_{m}\eta,
\end{align}
where the above minimum is subject to $x\in\mathcal{X}^*_{m},j\in\{m_1,m_2,\ldots,m_K\}$ and $j'\in\{m_{K+1},\ldots,m_{k^*_2}\}$. We will show by contradiction that $c^*_{m}>0$. Suppose that $c^*_{m}=0$. For $x\in\mathcal{X}^*_{m}$, we may assume for any $1\leq i<j\leq k^*_2$ that
\begin{align*}
    (\beta^*_{1m_i})^{\top}x\geq (\beta^*_{1m_j})^{\top}x.
\end{align*}
As $c^*_{m}=0$, the result in equation~\eqref{eq:lowest_dist} indicates that $(\beta^*_{1m_{K}})^{\top}x-(\beta^*_{1m_{K+1}})^{\top}x=0$, or equivalently
\begin{align*}
    (\beta^*_{1m_K}-\beta^*_{1m_{K+1}})^{\top}x=0.
\end{align*}
In other words, $\mathcal{X}^*_{m}$ is a subset of
\begin{align*}
    \mathcal{N}:=\{x\in\mathcal{X}:(\beta^*_{1m_K}-\beta^*_{1m_{K+1}})^{\top}x=0\}.
\end{align*}
Since the difference $\beta_{1m_K}-\beta_{1m_{K+1}}$ is non-zero and the input $X$ follows a continuous distribution, then the set $\mathcal{N}$ has measure zero. Furthermore, as $\mathcal{X}^*_{m}\subseteq \mathcal{N}$, it follows that $\mathcal{X}^*_{m}$ also has measure zero, which contradicts the fact that it has non-zero measure. Thus, we must have $c^*_{m}>0$.

\vspace{0.5 em}
\noindent
Subsequently, let $x\in\mathcal{X}^*_{m}$ and $\bar{m}\in[\bar{M}]$ such that $\{\bar{m}_{1},\bar{m}_{2},\ldots,\bar{m}_{\bar{K}}\}=\mathcal{V}_{2,m_1}\cup\mathcal{V}_{2,m_2}\cup\ldots\cup\mathcal{V}_{2,m_K}$. We will demonstrate that $x\in\bar{\mathcal{X}}_{\bar{m}}$. Indeed, recall that the input space $\mathcal{X}$ is bounded, then we may assume that $\|x\|\leq B$ for any $x\in\mathcal{X}$, where $B>0$ is some constant. Then, for any $i\in\{\bar{m}_1,\bar{m}_2,\ldots,\bar{m}_{\bar{K}}\}$ and $i'\in\{\bar{m}_{\bar{K}+1},\ldots,\bar{m}_{k_2}\}$, we have
\begin{align*}
    {\beta}_{1i}^{\top}x&=({\beta}_{1i}-\beta^*_{1j})^{\top}x+(\beta^*_{1j})^{\top}x\\
    &\geq -N_j\eta B+(\beta^*_{1j'})^{\top}x+c^*_{m}\eta\\
    &=-N_j\eta B+c^*_{m}\eta+(\beta^*_{1j'}-{\beta}_{1i'})^{\top}x+{\beta}_{1i'}^{\top}x\\
    &\geq -2N_j\eta B+c^*_{m}\eta+{\beta}_{1i'}^{\top}x,
\end{align*}
where $j\in\{m_1,m_2,\ldots,m_K\}$ and $j'\in\{m_{K+1},\ldots,m_{k^*_2}\}$ such that $i\in\mathcal{V}_{2,j}$ and $i'\in\mathcal{V}_{2,j'}$. Note that if $N_j\leq\dfrac{c^*_{m}}{2B}$, then we obtain $x\in{\mathcal{X}}_{\bar{m}}$, which implies that $\mathcal{X}^*_{m}\subseteq\bar{\mathcal{X}}_{\bar{m}}$. 

\vspace{0.5 em}
\noindent
Analogously, assume that there exists some constant $c_{m}\geq 0$ such that
\begin{align*}
    \min_{x,j,j'}\Big[(\beta^*_{1j})^{\top}x-(\beta^*_{1j'})^{\top}x\Big]=c^*_{m}\eta,
\end{align*}
where the above minimum is subject to $x\in\bar{\mathcal{X}}_{\bar{m}}$, $i\in\{\bar{m}_1,\bar{m}_2,\ldots,\bar{m}_{\bar{K}}\}$ and $i'\in\{\bar{m}_{\bar{K}+1},\ldots,\bar{m}_{k}\}$. Then, if $N_j\leq \dfrac{c_{m}}{2B}$, we have $\bar{\mathcal{X}}_{\bar{m}}\subseteq\mathcal{X}^*_{m}$. Consequently, by setting $N_j=\dfrac{1}{2B}\min\{c^*_{m},c_{m}\}$, we reach the conclusion that $\bar{\mathcal{X}}_{\bar{m}}=\mathcal{X}^*_{m}$. Hence, the proof is completed.

\subsection{Proof of Proposition~\ref{prop:K_bar_bound}}
\label{appendix:K_bar_bound}
To begin with, we show that
\begin{align}
    \label{eq:prop_local}
    \lim_{\varepsilon\to0}\inf_{(G_1,G_2)\in\mathcal{G}_{k_1,k_2}(\Theta):\mathcal{D}_5((G_1,G_2),(G^*_1,G^*_2))\leq\varepsilon}\bbE_X[V(\bar{s}_{G_1,G_2}(\cdot|X),s_{G^*_1,G^*_2}(\cdot|X))]>0.
\end{align}
Suppose that the above inequality does not hold, then there exist a sequence of pairs of mixing measures $(G^n_1,G^n_2)$ in $\mathcal{G}_{k_1,k_2}(\Theta)$ given by $G^n_1=\sum_{i=1}^{k^n_1}\oin\delta_{(\kin,\tin)}$ and  $G^n_2=\sum_{i=1}^{k^n_2}\exp(\beta^n_{0i})\delta_{(\beta^n_{1i},\eta^n_i,\nu^n_i)}$ that satisfies $\mathcal{D}_5((G^n_1,G^n_2),(G^*_1,G^*_2))\to0$ and
\begin{align*}
    \bbE_X[V(\bar{s}_{G^n_1,G^n_2}(\cdot|X),s_{G^*_1,G^*_2}(\cdot|X))]\to0
\end{align*}
as $n\to\infty$. According to the Fatou's lemma, we have
\begin{align}
    \label{eq:prop_fatou}
    0&=\lim_{n\to\infty}\bbE_X[V(\bar{s}_{G^n_1,G^n_2}(\cdot|X),s_{G^*_1,G^*_2}(\cdot|X))]\nonumber\\
    &\geq \frac{1}{2}\int_{\mathcal{X}\times\mathcal{Y}}\liminf_{n\to\infty}|\bar{s}_{G^n_1,G^n_2}(Y|X)-s_{G^*_1,G^*_2}(Y|X)|\dint(X,Y),
\end{align}
implying that $\bar{s}_{G^n_1,G^n_2}(Y|X)-s_{G^*_1,G^*_2}(Y|X)\to0$ as $n\to\infty$ for almost surely $(X,Y)$. WLOG, we may assume that 
\begin{align*}
    \max_{\{m_1,m_2,\ldots,m_K\}}\sum_{j=1}^{K}|\mathcal{V}_{2,m_j}|=|\mathcal{V}_{2,1}|+|\mathcal{V}_{2,2}|+\ldots+|\mathcal{V}_{2,K}|.
\end{align*}
Let $X\in\mathcal{X}^*_{m}$, where $m\in[M]$ such that $\{m_1,m_2,\ldots,m_K\}=\{1,2,\ldots,K\}$. Since the Voronoi loss $\mathcal{D}_5((G^n_1,G^n_2),(G^*_1,G^*_2))$ goes to zero, it follows that $\beta^n_{1i}\to\beta^*_{1j}$ as $n\to\infty$ for any $j\in[k^*_2]$ and $i\in\mathcal{V}_{2,j}$. By means of Lemma~\ref{lemma:partition_input_space}, we deduce $X\in\bar{\mathcal{X}}_{\bar{m}}$, where $\bar{m}\in[\bar{q}]$ such that $\{\bar{m}_1,\bar{m}_2,\ldots,\bar{m}_{\bar{K}}\}=\mathcal{V}_{2,1}\cup\mathcal{V}_{2,2}\cup\ldots\cup\mathcal{V}_{2,K}$. 
%Otherwise, $\bar{s}_{G^n_1,G^n_2}(Y|X)-s_{G^*_1,G^*_2}(Y|X)\not\to 0$, which is a contradiction. 
However, as $\bar{K}<\sum_{j=1}^{K}|\mathcal{V}_{2,j}|$, the fact that $\{\bar{m}_1,\bar{m}_2,\ldots,\bar{m}_{\bar{K}}\}=\mathcal{V}_{2,1}\cup\mathcal{V}_{2,2}\cup\ldots\cup\mathcal{V}_{2,K}$ cannot occur. Thus, we obtain the result in equation~\eqref{eq:prop_local}. As a consequence, we can find a positive constant $\varepsilon'$ such that
\begin{align*}
    \inf_{(G_1,G_2)\in\mathcal{G}_{k_1,k_2}(\Theta):\mathcal{D}_5((G_1,G_2),(G^*_1,G^*_2))\leq\varepsilon'}\bbE_X[V(\bar{s}_{G_1,G_2}(\cdot|X),s_{G^*_1,G^*_2}(\cdot|X))]>0.
\end{align*}
Given the above result, it is sufficient to show that
\begin{align}
    \label{eq:prop_global}
    \inf_{(G_1,G_2)\in\mathcal{G}_{k_1,k_2}(\Theta):\mathcal{D}_5((G_1,G_2),(G^*_1,G^*_2))>\varepsilon'}\bbE_X[V(\bar{s}_{G_1,G_2}(\cdot|X),s_{G^*_1,G^*_2}(\cdot|X))]>0.
\end{align}
Assume by contrary that the inequality~\eqref{eq:prop_global} does not hold, then we can find a sequence $(\tilde{G}^n_1,\tilde{G}^n_2)\in\mathcal{G}_{k_1,k_2}(\Theta)$ such that $\mathcal{D}_5((\tilde{G}^n_1,\tilde{G}^n_2),(G^*_1,G^*_2))>\varepsilon'$ and
\begin{align*}
    \bbE_X[V(\bar{s}_{\tilde{G}^n_1,\tilde{G}^n_2}(\cdot|X),s_{G^*_1,G^*_2}(\cdot|X))]\to0.
\end{align*}
Again, by utilizing the Fatou's lemma as in equation~\eqref{eq:prop_fatou}, we get $\bar{s}_{\tilde{G}^n_1,\tilde{G}^n_2}(Y|X)-s_{G^*_1,G^*_2}(Y|X)\to 0$ as $n\to\infty$ for almost surely $(X,Y)$. Since the parameter space $\Theta$ is compact, we can substitute the sequence $(\tilde{G}^n_1,\tilde{G}^n_2)$ with its subsequence which converges to some pair of mixing measures $(\tilde{G}_1,\tilde{G}_2)$ in $\mathcal{G}_{k_1,k_2}(\Theta)$. This result leads to $\bar{s}_{\tilde{G}_1,\tilde{G}_2}(Y|X)=s_{G^*_1,G^*_2}(Y|X)$ for almost surely $(X,Y)$. As the Top-$K$ sparse gating MoE is identifiable, we deduce $(\tilde{G}_1,\tilde{G}_2)\equiv (G^*_1,G^*_2)$, or equivalently, $\mathcal{D}_5((\tilde{G}_1,\tilde{G}_2),(G^*_1,G^*_2))=0$.
On the other hand, due to the fact that $\mathcal{D}_5((\tilde{G}^n_1,\tilde{G}^n_2),(G^*_1,G^*_2))>\varepsilon'$ for any $n\in\mathbb{N}$, we obtain $\mathcal{D}_5((\tilde{G}_1,\tilde{G}_2),(G^*_1,G^*_2))>\varepsilon'>0$, which contradicts the previous result. Hence, we reach the result in equation~\eqref{eq:prop_global} and complete the proof.

\section{Experimental Details}  \label{appendix:experiment_settings}

\subsection{Language Modeling}  \label{appx:lm_setting}

\subsubsection{Datasets}

% LAMBADA \cite{paperno2016lambada}, BLiMP \cite{warstadt2020blimp}, Children’s Book Test \cite{hill2015goldilocks}, HellaSwag \cite{zellers2019hellaswag}, PIQA\cite{bisk2020piqa},  ARC-Challenge \cite{clark2018think}, RACE \cite{lai-etal-2017-race}, SIQA \cite{sap2019socialiqa} and CommonSenseQA \cite{talmor2018commonsenseqa}. Full experimental details are provided in Appendix \ref{appx:lm_setting}.

\noindent
\textbf{SlimPajama.} The SlimPajama \cite{cerebras2023slimpajama} dataset is a filtered and deduplicated corpus of the 1.2T token RedPajama dataset \cite{weber2024redpajama} designed for language model pretraining. It contains around 627B tokens across diverse sources. 

\vspace{0.5 em}
\noindent
\textbf{LAMBADA.} The LAMBADA \cite{paperno2016lambada} dataset evaluates a model's ability to predict the final word of a passage, requiring understanding of broad discourse context. Each instance comprises a narrative where the target word is only predictable when considering the entire passage, challenging models to perform deep contextual comprehension beyond sentence-level cues 

\vspace{0.5 em}
\noindent
\textbf{BLiMP.}  The Benchmark of Linguistic Minimal Pairs (BLiMP) \cite{warstadt2020blimp} assesses language models' grasp of English grammar through 67 sub-datasets, each containing 1,000 minimal pairs. These pairs differ subtly to test specific syntactic, morphological, or semantic phenomena, enabling fine-grained evaluation of linguistic competence 

\vspace{0.5 em}
\noindent
\textbf{Children's Book Test (CBT).} CBT \cite{hill2015goldilocks} measures a model's ability to utilize wider linguistic context by providing passages from children's books with a missing word to predict. The dataset distinguishes between predicting syntactic function words and semantically rich content words, emphasizing the importance of context in language understanding

\vspace{0.5 em}
\noindent
\textbf{HellaSwag.} HellaSwag \cite{zellers2019hellaswag} challenges models with sentence completion tasks that require commonsense reasoning. Each instance presents a context and multiple plausible continuations, with only one being correct. The dataset is adversarially filtered to be trivial for humans but difficult for models, highlighting gaps in machine commonsense understanding.

\vspace{0.5 em}
\noindent
\textbf{PIQA.} The Physical Interaction Question Answering (PIQA) \cite{bisk2020piqa} dataset tests models on physical commonsense reasoning. It comprises questions about everyday tasks, requiring knowledge of physical properties and affordances, challenging models to reason about the physical world without direct sensory experience.

\vspace{0.5 em}
\noindent
\textbf{ARC-Challenge.} The AI2 Reasoning Challenge (ARC) \cite{clark2018think} presents grade-school level multiple-choice science questions that necessitate reasoning and external knowledge. The Challenge set includes questions that are particularly difficult for models, serving as a benchmark for advanced question-answering capabilities .

\vspace{0.5 em}
\noindent
\textbf{OpenBookQA.} OpenBookQA \cite{lai-etal-2017-race} consists of multiple-choice questions derived from a curated set of science facts, resembling open-book exams. Answering requires combining the provided facts with external commonsense knowledge, testing a model's ability to integrate information from multiple sources.

\vspace{0.5 em}
\noindent
\textbf{RACE.} The Reading Comprehension Dataset from Examinations (RACE) \cite{sap2019socialiqa} contains passages and questions from English exams for Chinese middle and high school students. With nearly 100,000 questions, it evaluates a model's reading comprehension and reasoning skills across diverse topics.

\vspace{0.5 em}
\noindent
\textbf{SIQA.} Social IQa (SIQA) \cite{sap2019socialiqa} focuses on social commonsense reasoning, presenting questions about everyday social interactions. Models must infer motivations, reactions, and social dynamics, challenging their understanding of human social behavior.

\vspace{0.5 em}
\noindent
\textbf{CommonSenseQA.} CommonSenseQA \cite{talmor2018commonsenseqa} is a multiple-choice question-answering dataset that requires models to apply commonsense knowledge. Each question is designed to probe a specific aspect of commonsense reasoning, with distractor answers carefully crafted to be plausible yet incorrect.

\subsubsection{Model Settings, Training Settings and Evaluation}

\begin{table}[htbp]
  \centering
  \caption{Comprehensive Model Configurations for Experimental Evaluation. SMoE refers to settings applied for both Vanilla SMoE and SMoE Sigmoid Gating, whereas DeepSeek corresponds to configurations used for DeepSeek-V2 and DeepSeek-V3 models.}
  \label{tab:lm_model_setting}
  % reduce horizontal padding, increase row height
  \setlength{\tabcolsep}{4pt}
  \renewcommand{\arraystretch}{1.2}

  \resizebox{\textwidth}{!}{%
    \begin{tabular}{*{14}{c}}
      \toprule
      % Scale & Model     & \# params & \# act.\ params & \# trained tokens
      %       & $d_{\mathrm{model}}$ & H & $d_{\mathrm{head}}$
      %       & $N_r$ & $K_r$ & $N_s$ & Expert dim
      %       & $N_{\mathrm{warmup}}$ & $\kappa$ \\
      % \midrule
     \makecell{Scale} & \makecell{Model} & \makecell{\# params} & \makecell{\# act.\\params} & \makecell{\# trained \\ tokens} & \makecell{$d_{\mathrm{model}}$} & \makecell{H} & \makecell{$d_{\mathrm{head}}$} & \makecell{$N_E$} & \makecell{$K_r$} & \makecell{$N_s$} & \makecell{Expert \\ dim} & \makecell{\textbf{$N_{\mathrm{warmup}}$}} & \makecell{\textbf{$\kappa$}} \\
     \midrule
      
      \multirow{2}{*}{\textbf{Small}}
            & \textbf{SMoE}      & 158M & 36M  & 6.5B & 512  & 4 & 82  & 66  & 8  & 0 & 128  & 0    & 0.1  \\
            & \textbf{DeepSeek}  & 158M & 36M  & 6.5B & 512  & 4 & 82  & 64  & 6  & 2 & 128  & 0    & 0.1  \\ 
      \addlinespace
      \multirow{2}{*}{\textbf{Large}}
            & \textbf{SMoE}      & 679M & 131M & 26.2B  & 1024 & 4 & 128 & 66  & 8  & 0 & 256  & 4000 & 0.25 \\
            & \textbf{DeepSeek}  & 679M & 131M & 26.2B  & 1024 & 4 & 128 & 64  & 6  & 2 & 256  & 4000 & 0.25 \\
      \bottomrule
      \end{tabular}%
  }
\end{table}

% \noindent
% \textbf{Model Setting.} In our experimental setup, we evaluate model configurations across two scales - small and large, with 158M and 679M parameters respectively, each model activates 8 experts among 66 experts, where DeepSeek variants setting utilize top-6 experts plus 2 shared experts routing scheme. In each scale, we conduct the experiment with 4 variance models: Traditional SMoE, DeepSeek, DeepSeek Shared Only and SMoE with Norm Sigmoid Router Only. Detailed configuration of the model in Table~\ref{tab:lm_model_setting}. All the experiments used the conventional auxiliary-loss controlled method to keep load balance between experts, we set the auxiliary loss coefficient $\alpha$ to $0.01$.

\noindent
\textbf{Training datasets.} 
We conduct the experiments on language modeling using the popular SLimPajama \cite{cerebras2023slimpajama} dataset.
%According to the Chinchilla scaling law \cite{hoffmann2022training}, the computationally optimal number of training tokens for models with 158M and 679M parameters is approximately 3.18B and 13.58B tokens, respectively, assuming a parameter-to-token ratio of $D:N \sim 20:1$. However, this scaling law is derived for dense models and does not account for the increased sample requirements introduced by the sparse routing mechanism in Mixture-of-Experts (MoE) architectures. Thus, their predictions may not extend to our setup. 
Due to the limited computational resource, we utilize only subsets of the SlimPajama \cite{cerebras2023slimpajama} dataset containing 6.5B and 26.2B tokens to train our 158M and 679M parameter models, respectively.

\vspace{0.5 em}
\noindent
\textbf{Model Settings.} Table~\ref{tab:lm_model_setting} summarizes the comprehensive set of hyperparameters and configurations for both scales and the two model variants evaluated in our experiments. All models employ a total of $N_r=66$ experts. For routing schemes, the baseline SMoE utilizes a top-8 expert routing strategy ($K_r = 8$), while the DeepSeek variants adopt a mixed routing approach comprising top-6 expert selection ($K_r = 6$) plus $N_s=2$ shared experts. To align with the fine-grained expert segmentation proposed in DeepSeekMoE~\cite{dai2024deepseekmoe}, we set the expert dimensionality to $1 / 4 \ d_{model}$ and increase the expert count to 66 instead of the common settings with 16 experts. Additionally, the number of attention heads is uniformly set to $H=4$ across both model scales. All models leverage Rotary Positional Embedding (RoPE)~\cite{su2024roformer}, PyTorch’s optimized attention implementation, and employ pre-layernorm Transformers.  To ensure balanced expert utilization, we use the standard load balancing loss defined in Switch Transformers \cite{fedus2022switch}.

% For our model with 159M and 679M parameters, the Chinchilla scaling law \cite{hoffmann2022training} predicts that the optimal trained tokens ($D:N \sim 20:1$) are 3.18B and 13.58B tokens, respectively. However, the Chinchilla scaling law only predicts the computationally optimal for a dense model, while our experiment with sparse mixture-of-experts model requires much more training tokens because of the sparse mechanism. Thus, their predictions may not extend to our setup, we choose the training set with 6.5B and 26.2B tokens for our two model scales.

\vspace{0.5 em}
\noindent
\textbf{Training Settings.} All models are trained in PyTorch using a batch size of $64$, context length of $1024$, and a learning rate of $2.5e-4$. We apply $4000$ linear warm-up steps specifically for the larger-scale models and utilize the AdamW optimizer~\cite{loshchilov2017decoupled} with its default hyperparameters and a weight decay of $0.01$. Gradient clipping is performed with threshold $\kappa$, and the precise number of linear warm-up steps ($N_{warmup}$) per model variant is provided in Table~\ref{tab:lm_model_setting}. We tokenize the input using SentencePiece~\cite{kudo2018sentencepiece}, configured with a vocabulary size of 8000 tokens, which is trained on a representative subset of the SlimPajama dataset~\cite{cerebras2023slimpajama}. 
This choice was intentionally made to align the vocabulary size with the limited representational capacity of our model, ensuring suitability and computational efficiency within our experimental constraints.

\vspace{0.5 em}
\noindent
\textbf{Evaluation.} We evaluate our model with the Perplexity score (PPL) and zero-shot performance with nine different downstream tasks: LAMBADA \cite{paperno2016lambada}, BLiMP \cite{warstadt2020blimp}, Children’s Book Test \cite{hill2015goldilocks}, HellaSwag \cite{zellers2019hellaswag}, PIQA\cite{bisk2020piqa},  ARC-Challenge \cite{clark2018think}, RACE \cite{lai-etal-2017-race}, SIQA \cite{sap2019socialiqa} and CommonSenseQA \cite{talmor2018commonsenseqa}. For LAMBADA, we use the detokenized version from OpenAI, and we evaluate the top-1 accuracy of the last word (it can span multiple tokens; here we use greedy decoding). For CBT, BLiMP, and RACE, we measure the accuracy of each task and report the average accuracy of the tasks.

\vspace{0.5 em}
\noindent
\textbf{Compute Resource.} All models are trained and evaluated on a single node equipped with 4 NVIDIA A100 80GB CoWoS HBM2e PCIe 4.0 employing data-parallelism.
\subsection{Vision Language Modeling} \label{appx:vlm_setting}

% These benchmarks include AI2D \cite{kembhavi2016diagram}, MMStar \cite{chen2024we}, POPE \cite{li2023evaluating}, ScienceQA \cite{lu2022learn}, TextVQA \cite{singh2019towards}, GQA \cite{hudson2019gqa}, MME-RealWorld-Lite \cite{zhang2024mme}, and MMMU-Pro (Standard) \cite{yue2024mmmu}. Full experimental configurations are detailed in Appendix \ref{appx:vlm_setting}.

\subsubsection{Datasets}

\textbf{LLaVA-558K.} The LLaVA 558K \cite{liu2023llava} dataset is a curated subset of 558,000 image-text pairs derived from the LAION/CC/SBU dataset. It is designed for the pretraining stage of visual instruction tuning, facilitating the alignment between visual and language modalities. This dataset includes BLIP-generated captions and synthetic multimodal conversations, serving as a foundational resource for training models like LLaVA towards enhanced vision-language capabilities.

\vspace{0.5 em}
\noindent
\textbf{ALLaVA.} ALLaVA \cite{chen2024allava} is a large-scale synthetic dataset comprising approximately 1.3 million samples, generated using GPT-4V. It includes fine-grained image annotations and complex reasoning visual question-answering pairs. The dataset aims to bridge the performance gap between traditional large vision-language models and more resource-efficient lite versions by providing high-quality training data for visual instruction tuning.

\vspace{0.5 em}
\noindent
\textbf{LLaVA-665K.} The LLaVA-665K \cite{liu2024improved} dataset is an expanded and refined version of the original LLaVA instruction tuning dataset, containing 665,000 multimodal instruction-following samples. It integrates diverse sources such as VQAv2, GQA, OCR-VQA, and RefCOCO, among others, to enhance the model's performance across various vision-language tasks. This comprehensive dataset supports improved visual instruction tuning for models like LLaVA-1.5 \cite{liu2024improved}.

\vspace{0.5 em}
\noindent
\textbf{AI2D.} The AI2D (AI2 Diagrams) \cite{kembhavi2016diagram} dataset comprises over 5000 grade school science diagrams, annotated with more than 150,000 rich annotations and over 15000 corresponding multiple-choice questions. It serves as a resource for evaluating models' abilities in diagram understanding and visual reasoning within educational contexts. 

\vspace{0.5 em}
\noindent
\textbf{MMStar.} MMStar \cite{chen2024we} is a meticulously curated benchmark designed to evaluate large vision-language models (LVLMs) on vision-indispensable tasks. It includes 1,500 samples across six core capabilities and 18 detailed axes, ensuring each sample necessitates visual understanding and minimizes data leakage.

\vspace{0.5 em}
\noindent
\textbf{POPE.} The POPE (Polling-based Object Probing Evaluation) \cite{li2023evaluating} dataset is developed to assess object hallucination in LVLMs. It provides a systematic approach to evaluate the consistency of object descriptions generated by models, highlighting tendencies to generate objects not present in the input images.

\vspace{0.5 em}
\noindent
\textbf{ScienceQA.} ScienceQA \cite{lu2022learn} is a large-scale multimodal dataset featuring science questions enriched with lectures and explanations. It spans diverse subjects, including natural science, language science, and social science, aiming to evaluate models' abilities in multimodal reasoning and explanatory question answering.

\vspace{0.5 em}
\noindent
\textbf{TextVQA.} The TextVQA \cite{singh2019towards} dataset focuses on visual question answering tasks that require reading and reasoning about text within images. It contains 45,336 questions over 28,408 images, challenging models to integrate textual and visual information effectively.

\vspace{0.5 em}
\noindent
\textbf{GQA.} GQA (Graph Question Answering) \cite{hudson2019gqa} is a large-scale dataset designed for real-world visual reasoning and compositional question answering. It includes 22 million questions based on 113,000 images, each accompanied by scene graphs detailing objects, attributes, and relationships, facilitating structured reasoning evaluations. 

\vspace{0.5 em}
\noindent
\textbf{MME-RealWorld-Lite.} MME-RealWorld-Lite \cite{zhang2024mme} is a streamlined version of the MME-RealWorld benchmark, offering a subset of 50 samples per task to accelerate inference. It maintains the benchmark's focus on evaluating multimodal models in real-world scenarios with high-resolution images and complex tasks.

\vspace{0.5 em}
\noindent
\textbf{MMMU Pro.} MMMU Pro \cite{yue2024mmmu} is an enhanced benchmark for assessing multimodal models' understanding and reasoning across multiple disciplines. It filters out questions answerable by text-only models, augments candidate options, and introduces vision-only input settings, thereby rigorously evaluating models' true multimodal capabilities.

\vspace{0.5 em}
\noindent
\textbf{OCRBench.} OCRBench \cite{liu2024ocrbench} is a comprehensive evaluation benchmark for optical character recognition (OCR) capabilities in large multimodal models. It encompasses 29 datasets covering tasks like text recognition, scene text-centric VQA, document-oriented VQA, key information extraction, and handwritten mathematical expression recognition, providing a thorough assessment of OCR performance. 

\subsubsection{Model Settings, Training Settings and Evaluation}

% \textbf{Experimental Setup.} We conduct experiments on the visual instruction tuning tasks \cite{liu2024improved} using the popular LLaVA architecture \cite{liu2023llava}. Building upon the LIBMoE framework \cite{nguyen2024libmoe}, we adopt Phi3.5-mini \cite{abdin2024phi} as the language model and SigLIP \cite{zhai2023sigmoid} as the vision encoder. Unlike LIBMoE, we upcycle only the MLP Connector into 8 experts, employing a top-4 expert routing strategy, while the DeepSeek variants adopt a top-3 expert routing scheme with an additional shared expert, making our model approximately 4.4B parameters. We train our model during the Visual Instruction Tuning (VIT) stage on a dataset comprising 332K examples, totaling approximately 287M tokens. Evaluation covers diverse benchmarks containing various vision-language capabilities, including perception, reasoning, OCR, instruction following, and more \cite{kembhavi2016diagram,chen2024we,li2023evaluating,lu2022learn,singh2019towards,hudson2019gqa,zhang2024mme,yue2024mmmu,liu2024ocrbench}. See Appendix \ref{appx:vlm_setting}.

\noindent
\textbf{Model Settings.} We embrace the vision-language pre-training task \cite{liu2024improved}, a challenging problem setting that enables effective model training with relatively limited data. We adopt the experiment setting in LIBMoE \cite{nguyen2024libmoe} with LLaVA architecture \cite{liu2023llava}, which includes three modules: pre-trained Large Language Model, pre-trained visual encoder, and randomly initialized MLP connector. We employ the pre-trained SigLIP (Patch14-224) \cite{zhai2023sigmoid} as the vision encoder, pre-trained Phi-3.5-mini-instruct \cite{abdin2024phi} as the LLM, and a randomly initialized MLP connector. In the Visual Instruction Tuning (VIT) stage, we adopt a sparse upcycling approach \cite{komatsuzaki2022sparse} and upcycle only the MLP Connector into 8 experts, employing a top-4 expert routing strategy, while the DeepSeek variants adopt a top-3 expert routing scheme with an additional shared expert. Thus, our model has approximately 4.4B parameters. 

\vspace{0.5 em}
\noindent
\textbf{Training Settings.} We follow LIBMoE \cite{nguyen2024libmoe} for the training settings. Specifically, our training recipe with three stages of training: pre-training, pre-finetuning, and Visual Instruction Tuning (VIT). In the first stage, we only pretrain the MLP connector for better alignment using LLaVA 558K dataset \cite{liu2023llava}. During the second pre-finetuning stage, we train all parameters using high-quality caption data with the ALLaVA \cite{chen2024allava} dataset with 708K samples, aiming to warm up the entire model. In the third stage, we upcycle the MLP Connector to MoE block and trained on visual instruction tuning data (a subset of LLaVA-665K \cite{liu2024improved} with 332K samples). The learning rate is set to $1e-3$ for pre-training the MLP connector and reduced to $2e-6$ for pre-finetuning and $4e-6$ for the final stage. All models are trained in PyTorch using a batch size of 4 and AdamW optimizer~\cite{loshchilov2017decoupled} with its default hyperparameters. We use Zero Redundancy Optimizer (ZeRO) \cite{rajbhandari2020zero} for memory optimization with Zero2 for the first stage and Zero3 for both pre-finetuning and VIT stages.

\vspace{0.5 em}
\noindent
\textbf{Evaluation.} Our model is evaluated under the zero-shot setting across a diverse set of benchmarks encompassing various vision-language capabilities, such as perception, reasoning, OCR, instruction following, etc. The benchmarks considered include AI2D \cite{kembhavi2016diagram}, MMStar \cite{chen2024we}, POPE \cite{li2023evaluating}, ScienceQA \cite{lu2022learn}, TextVQA \cite{singh2019towards}, GQA \cite{hudson2019gqa}, MME-RealWorld-Lite \cite{zhang2024mme}, MMMU Pro \cite{yue2024mmmu}, OCRBench \cite{liu2024ocrbench}.

\vspace{0.5 em}
\noindent
\textbf{Compute Resource.} All models are trained and evaluated on a single node equipped with 4 NVIDIA A100 80GB CoWoS HBM2e PCIe 4.0 employing data-parallelism.

\subsection{Training Time and Resource Allocation}

Table~\ref{tab:time_and_resource} summarizes the training time and resource utilization across all experimental settings.

\begin{table}[htbp]
  \centering
  \caption{Training Time and GPU Resource Allocation Across All Experimental Settings.}
  \label{tab:time_and_resource}
  % reduce horizontal padding, increase row height
  \setlength{\tabcolsep}{4pt}
  \renewcommand{\arraystretch}{1.2}

  \resizebox{\textwidth}{!}{%
    \begin{tabular}{*{5}{c}}
      \toprule
        \multicolumn{3}{c}{\textbf{Model}} & \makecell{\textbf{Training Time} \\ (hours)} & \textbf{Resourse} \\
      \midrule
        \multirow{6}{*}{\makecell{Vision Language \\ Modeling}} & \multicolumn{2}{c}{Pre-Training} & 5.5 & 4xA100 \\
      \cmidrule{2-5}
         & \multicolumn{2}{c}{Pre-FineTuning} & 18 & 4xA100 \\
      \cmidrule{2-5}
         & \multirow{4}{*}{\makecell{Visual Instruction\\ Tuning}} & SMoE & 10 & 4xA100 \\
         &  & SMoE Sigmoid Gating & 10 & 4xA100 \\
         &  & DeepSeek-V2 & 10.5 & 4xA100 \\
         &  & DeepSeek-V3 & 10.5 & 4xA100 \\

      \midrule
        \multirow{8}{*}{\makecell{Language \\ Modeling}} & \multirow{4}{*}{158M parametes} & SMoE & 9.5 & 4xA100 \\
         &  & SMoE Sigmoid Gating & 10 & 4xA100 \\
         &  & DeepSeek-V2 & 10.5 & 4xA100 \\
         &  & DeepSeek-V3 & 10.5 & 4xA100 \\
      \cmidrule{2-5}
         & \multirow{4}{*}{679M parametes} & SMoE & 65 & 4xA100 \\
         &  & SMoE Sigmoid Gating & 65 & 4xA100 \\
         &  & DeepSeek-V2 & 71 & 4xA100 \\
         &  & DeepSeek-V3 & 71.5 & 4xA100 \\
      \bottomrule
      \end{tabular}%
   }
\end{table}

\newpage

\begin{figure}[t!]
    \centering
    \includegraphics[width=\linewidth]{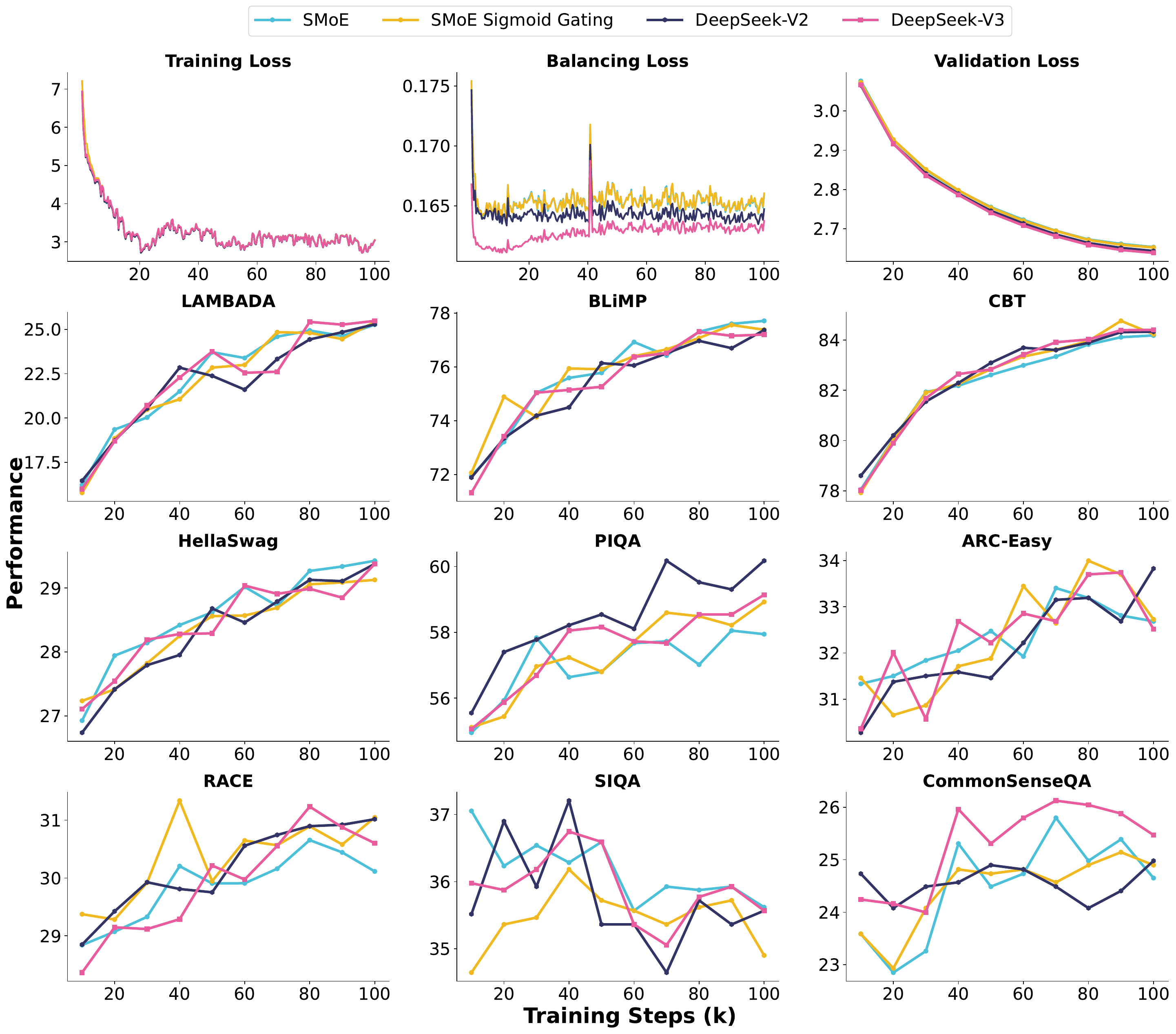}
    \caption{Benchmark curves during training in language modeling tasks for model with 158M parameters.}
    \label{fig:lm_each_benchmark_158m}
\end{figure}

\begin{figure}[t!]
    \centering
    \includegraphics[width=\linewidth]{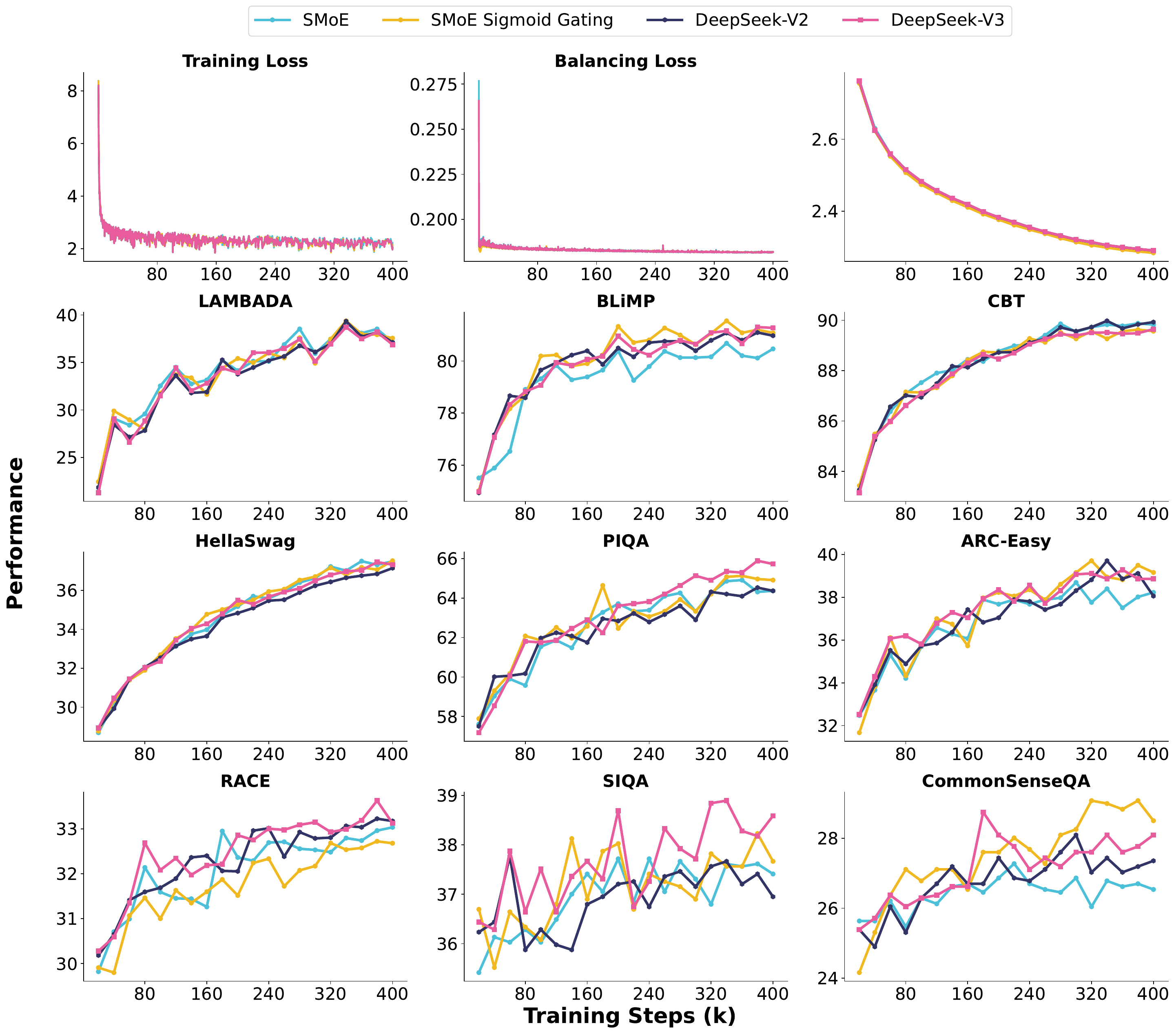}
    \caption{Benchmark curves during training in language modeling tasks for model with 679M parameters.}
    \label{fig:lm_each_benchmark_679m}
\end{figure}

\begin{figure}[t!]
    \centering
    \includegraphics[width=\linewidth]{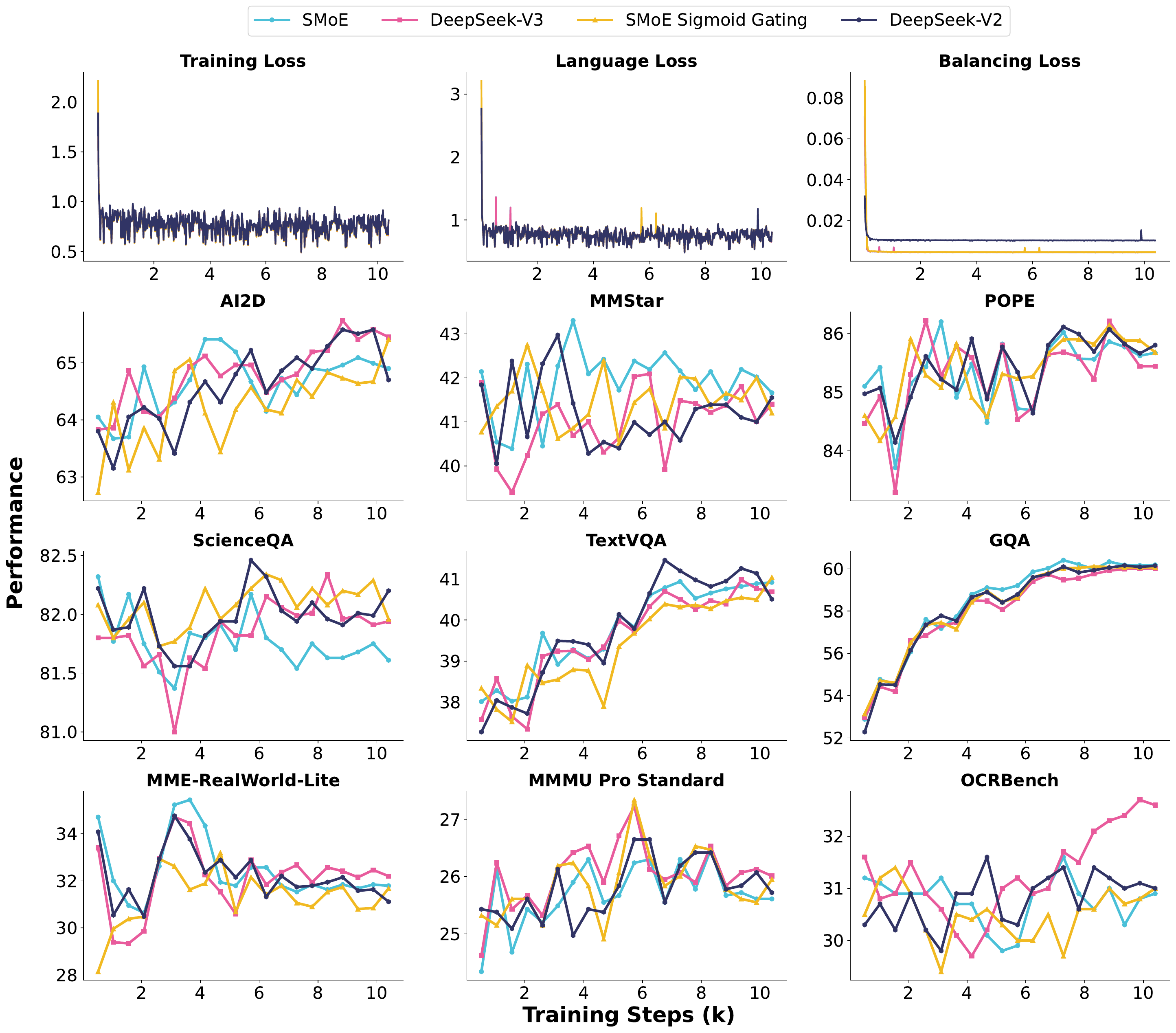}
    \caption{Benchmark curves during training in vision-language modeling tasks.}
    \label{fig:vlm_each_benchmark}
\end{figure}

\newpage
%\pagebreak
\bibliography{references}
\bibliographystyle{abbrv}
\end{document}